\newtheorem{theorem}{Theorem}
\newtheorem{corollary}{Corollary}
\newtheorem{definition}{Definition}
\newtheorem{example}{Example}
\newcommand{\zo}{{\rm zo}}
\newcommand{\ab}{{\rm ab}}
\newcommand{\sq}{{\rm sq}}
\newcommand{\logi}{{\rm logi}}
\newcommand{\hing}{{\rm hing}}
\newcommand{\ramp}{{\rm ramp}}
\newcommand{\smhi}{{\rm smhi}}
\newcommand{\sqhi}{{\rm sqhi}}
\newcommand{\expo}{{\rm expo}}
\newcommand{\abso}{{\rm abso}}
\newcommand{\squa}{{\rm squa}}
\newcommand{\logiit}{{\rm logi\text{-}it}}
\newcommand{\hingit}{{\rm hing\text{-}it}}
\newcommand{\expoit}{{\rm expo\text{-}it}}
\newcommand{\squait}{{\rm squa\text{-}it}}
\newcommand{\logiat}{{\rm logi\text{-}at}}
\newcommand{\hingat}{{\rm hing\text{-}at}}
\newcommand{\expoat}{{\rm expo\text{-}at}}
\newcommand{\absoat}{{\rm abso\text{-}at}}
\newcommand{\squaat}{{\rm squa\text{-}at}}
\newcommand{\tot}{{\rm to}}
\newcommand{\ord}{{\rm ord}}
\newcommand{\thr}{{\rm thr}}
\newcommand{\cl}{{\rm cl}}
\newcommand{\acl}{{\rm acl}}
\DeclareMathOperator{\argmax}{{arg\,max}}
\DeclareMathOperator{\argmin}{{arg\,min}}
\newcommand{\bb}{{\bm{b}}}
\newcommand{\bc}{{\bm{c}}}
\newcommand{\bp}{{\bm{p}}}
\newcommand{\bt}{{\bm{t}}}
\newcommand{\bw}{{\bm{w}}}
\newcommand{\bx}{{\bm{x}}}
\newcommand{\bI}{{\mathbf{I}}}
\newcommand{\bD}{{\mathbf{D}}}
\newcommand{\bX}{{\bm{X}}}
\newcommand{\calA}{{\mathcal{A}}}
\newcommand{\calB}{{\mathcal{B}}}
\newcommand{\calO}{{\mathcal{O}}}
\newcommand{\calS}{{\mathcal{S}}}
\newcommand{\calX}{{\mathcal{X}}}
\DeclareMathOperator{\Uni}{{\mathrm{Uni}}}
\DeclareMathOperator{\Cat}{{\mathrm{Cat}}}
\DeclareMathOperator{\bbE}{{\mathbb{E}}}
\DeclareMathOperator{\bbI}{{\mathbbm{1}}}
\newcommand{\bbR}{{\mathbb{R}}}
\newcommand{\bbN}{{\mathbb{N}}}
\newcommand{\scs}{\scriptsize}
\newcommand{\tcr}[1]{\textcolor{red}{#1}}
\newcommand{\tcb}[1]{\textcolor{blue}{#1}}
\newcommand{\hyl}[1]{{\rm(\hyperlink{#1}{\tcb{#1}})}}
\newcommand{\hyt}[1]{{\hypertarget{#1}{{\rm(\tcb{#1})}}}}
\title{Remarks on Loss Function of Threshold Method for Ordinal Regression Problem}
\author{Ryoya Yamasaki\And Toshiyuki Tanaka}
\begin{document}
\maketitle
\begin{abstract}
Threshold methods are popular for ordinal regression problems, 
which are classification problems for data with a natural ordinal relation.
They learn a one-dimensional transformation (1DT) of observations 
of the explanatory variable, and then assign label predictions to 
the observations by thresholding their 1DT values.
In this paper, we study the influence of the underlying data 
distribution and of the learning procedure of the 1DT on 
the classification performance of the threshold method 
via theoretical considerations and numerical experiments.
Consequently, for example, we found that threshold methods 
based on typical learning procedures may perform poorly 
when the probability distribution of the target variable conditioned on 
an observation of the explanatory variable tends to be non-unimodal.
Another instance of our findings is that learned 1DT values are
concentrated at a few points under the learning procedure based on 
a piecewise-linear loss function, 
which can make difficult to classify data well.
\end{abstract}
\section{Introduction}
\label{sec:Introduction}
\emph{Ordinal regression} (\emph{OR}, or called ordinal classification) is 
the classification of \emph{ordinal data} in which the underlying target variable is 
labeled from a categorical \emph{ordinal scale} that is considered to be equipped 
with a \emph{natural ordinal relation} for the underlying explanatory variable,
as formalized in Section~\ref{sec:ORT-OD}.
The ordinal scale is typically formed as 
a graded summary of objective indicators like age groups 
\{`0--9', `10--19', \ldots, `90--99', `100--'\} 
or graded evaluation of subjectivity like human rating 
\{`excellent', `good', `average', `bad', `terrible'\}.
OR techniques are employed in a variety of practical applications,
for example, age estimation \citep{niu2016ordinal, cao2020rank}, 
information retrieval \citep{liu2009learning},
movie rating \citep{yu2006collaborative}, and
questionnaire survey \citep{burkner2019ordinal}.

\emph{Threshold methods} are popular for OR problems 
as a simple way to capture the ordinal relation of ordinal data,
and have been studied vigorously in machine learning research
\citep{shashua2003ranking, lin2006large, chu2007support, lin2012reduction, 
li2007ordinal, pedregosa2017consistency, yamasaki2022optimal}.
Those methods learn a \emph{one-dimensional transformation (1DT)} 
of the observation of the explanatory variable so that an observation 
with a larger class label tends to have a larger 1DT value;
they then assign a label prediction to the learned 1DT according to 
the rank of an interval to which the 1DT belongs among intervals 
on the real line separated by \emph{threshold parameters}.
See also Section~\ref{sec:Threshold} for their formulation.

Despite its popularity, the question of what data distribution 
threshold methods will work well or poorly for is not sufficiently elucidated.
For example, \citet{rennie2005loss, gutierrez2015ordinal} 
performed numerical experiments to compare the classification 
performance of OR methods including threshold methods, 
but did not attempt to elucidate factors behind the difference in those performances.
Therefore, with the aim of providing an answer to this question,
we study the influence of the underlying data 
distribution and of the learning procedure of the 1DT on 
the classification performance of the threshold method
(see Section~\ref{sec:Policy} for a mathematical
formulation of the research objectives of this work).
We present theoretical analysis and conjectures in Section~\ref{sec:AC},
and verify these results via numerical experiments as described 
in Sections~\ref{sec:NE}
and Appendix~\ref{sec:SER}.\footnote{%
See \url{https://github.com/yamasakiryoya/RLTM} 
for program codes for all the experiments in this work.}
These construct qualitative understanding of what properties 
of the ordinal data or of the learning procedure 
contribute to the success or failure of a threshold method.
%
Refer to Section~\ref{sec:Conclusion} for summary of 
results obtained by this work and for future works, and 
Appendix~\ref{sec:Proof} for proofs of theoretical results.

\section{Preparation}
\label{sec:Preparation}
\subsection{Formulation of Ordinal Regression Tasks and Ordinal Data}
\label{sec:ORT-OD}
\subsubsection{Ordinal Regression Tasks}
\label{sec:ORT}
Suppose that we are interested in behaviors of the categorical target variable 
$Y\in[K]:=\{1,\ldots,K\}$ given a value of the explanatory variable $\bX\in\bbR^d$,
and that we can access data $(\bx_1,y_1),\ldots,(\bx_n,y_n)\in\bbR^d\times[K]$
that are considered to be independent and identically distributed observations 
of the pair $(\bX,Y)$ and have a natural ordinal relation like examples described 
in Section~\ref{sec:Introduction}, where $K, d, n\in\bbN$ such that $K\ge3$;
we call such data ordinal data.
An OR task is a classification task for ordinal data 
and often formulated as searching for a classifier $f:\bbR^d\to[K]$ 
that is good in minimization of the \emph{task risk} $\bbE[\ell(f(\bX),Y)]$ 
for a user-specified \emph{task loss function} $\ell:[K]^2\to[0,+\infty)$, 
where the expectation $\bbE[\cdot]$ is taken for 
all included random variables (here $\bX$ and $Y$).\footnote{%
For evaluation in OR tasks, one may use criteria that cannot be 
decomposed into a sum of losses for each sample point such as 
quadratic weighted kappa \citep{cohen1960coefficient, cohen1968weighted}. 
One of the main topics in this paper concerns troubles caused in relation 
to the learning procedure (defined later in Section~\ref{sec:Learning}), 
and those troubles are there similarly even for such other criteria.
Our formulation is not comprehensive for the OR task, but 
restricting the formulation does not seriously affect our discussion.}
Popular task loss functions for OR tasks include not only 
the zero-one task loss $\ell_\zo(k,l):=\bbI(k\neq l)$ 
for minimization of misclassification rate, but also V-shaped losses 
for cost-sensitive classification tasks that reflect user's preference 
of smaller prediction errors over larger ones such as 
the absolute task loss $\ell_\ab(k,l):=|k-l|$,
squared task loss $\ell_\sq(k,l):=(k-l)^2$, and 
$\ell_{\zo,\epsilon}(k,l):=\bbI(|k-l|>\epsilon)$ with $\epsilon>0$,
where $\bbI(c)$ is 1 if the condition $c$ is true and 0 otherwise.

\subsubsection{Ordinal Data}
\label{sec:OD}
OR tasks assume that ordinal data have a natural ordinal relation.
As the natural ordinal relation, in this work, 
we basically suppose the \emph{unimodality hypothesis} adopted in 
\citet{da2008unimodal, beckham2017unimodal, yamasaki2022unimodal}.
Namely, we assume that many ordinal data are almost unimodal, 
on the basis of the following terminologies:
For a \emph{probability mass function} (\emph{PMF}) $\bp=(p_k)_{k\in[K]}\in\Delta_{K-1}
:=\{(q_k)_{k\in[K]}\mid\sum_{k=1}^K q_k=1; q_k\in[0,1], \forall k=[K]\}$, 
we call $M(\bp)\in\argmax_k(p_k)_{k\in[K]}$ a \emph{mode} of $\bp$, 
and say that $\bp$ is \emph{unimodal} if it satisfies $p_1\le\cdots\le p_{M(\bp)}$
and $p_{M(\bp)}\ge\cdots\ge p_K$ for every mode $M(\bp)$.
Moreover, if the \emph{conditional probability distribution} (\emph{CPD}) 
$(\Pr(Y=y|\bX=\bx))_{y\in[K]}$ of the target variable $Y$ conditioned 
on an observation $\bx$ of the explanatory variable $\bX$
is unimodal for any $\bx$ in the support $\calX$ 
of the probability distribution of $\bX$,
we say that the data is \emph{unimodal};
if the CPD $(\Pr(Y=y|\bX=\bx))_{y\in[K]}$ is unimodal at any $\bx$ in 
a sub-domain $\calX_0\subseteq\calX$ with a large or small probability $\Pr(\bX\in\calX_0)$, 
we say that the data is \emph{almost unimodal} or \emph{almost non-unimodal}.
\citet{yamasaki2022unimodal} verified that many data, 
which have been treated as ordinal data by other previous OR 
studies \citep{chu2005gaussian, gutierrez2015ordinal},
tend almost unimodal.

As further preparation, we introduce qualitative notions regarding scale of the data as well:
For a PMF $\bp=(p_k)_{k\in[K]}\in\Delta_{K-1}$ having a mode $m\in[K]$,
we call the degree of spread of $\bp$ around $m$ the \emph{scale} of $\bp$.
Refer to Figure~\ref{fig:Scale};
we say that a more right PMF in Figure~\ref{fig:Scale} 
has a larger scale than a more left one.
If the scale of the CPD $\bp(\bx)=(\Pr(Y=y|\bX=\bx))_{y\in[K]}$ tends to be large or small
over whole the support of the probability distribution of the explanatory variable $\bX$, 
we say that the data has large scale or small scale.
Also, if the scale of the CPD $\bp(\bx)$ is similar or dissimilar over whole the support, 
we say that the data is \emph{homoscedastic} or \emph{heteroscedastic}.
In particular, we refer to heteroscedastic data as 
\emph{mode-wise heteroscedastic} or \emph{overall heteroscedastic} data,
if the scale of their CPD $\bp(\bx)$ is similar or can be dissimilar in 
each domain where the conditional mode $M(\bp(\bx))$ is the same.

\begin{figure}[t]
\renewcommand{\arraystretch}{0.1}\renewcommand{\tabcolsep}{5pt}\centering%
\begin{tabular}{ccc}\hskip-6pt
\includegraphics[width=4.8cm, bb=0 0 946 442]{./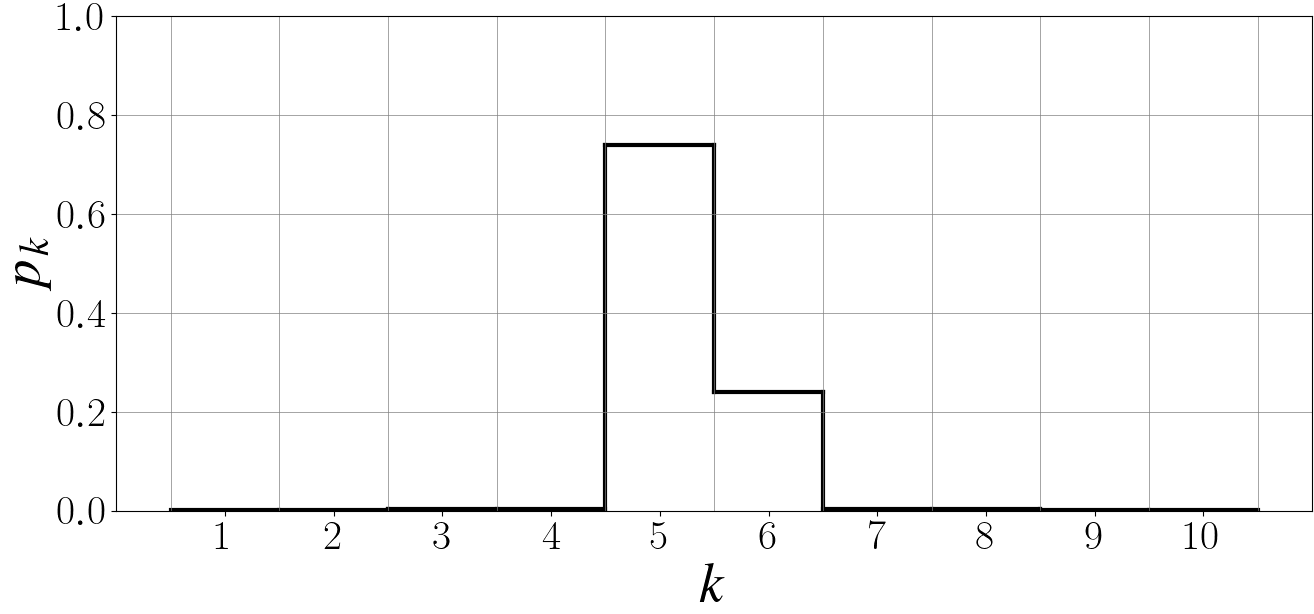}&\hskip-6pt
\includegraphics[width=4.8cm, bb=0 0 946 442]{./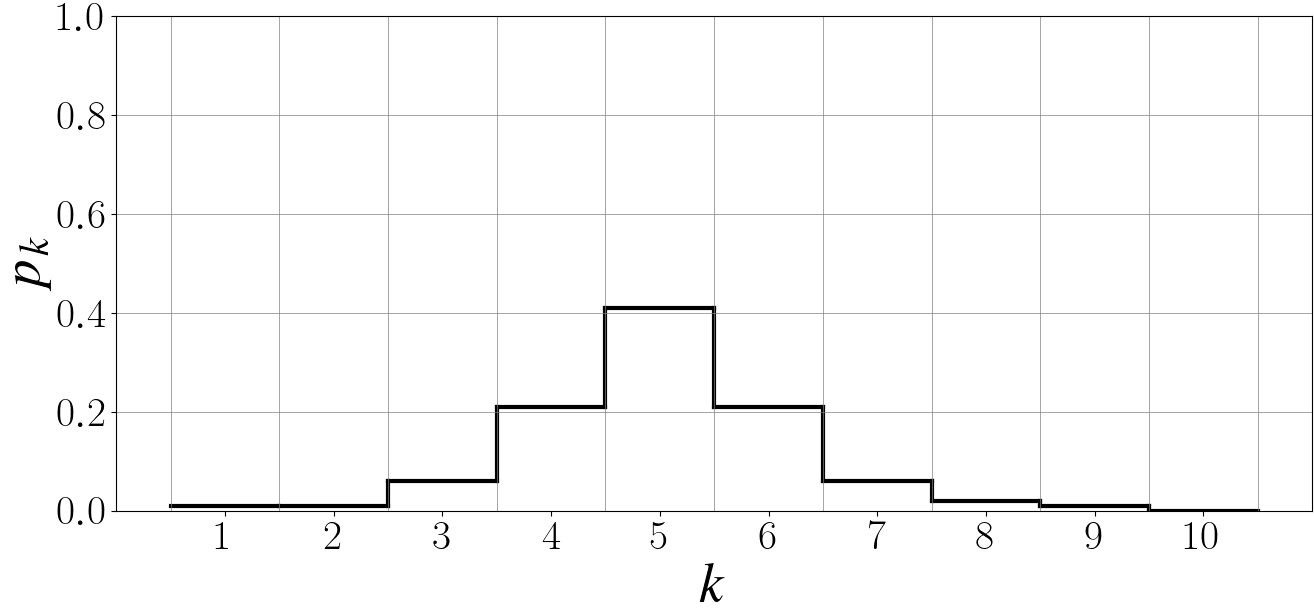}&\hskip-6pt
\includegraphics[width=4.8cm, bb=0 0 946 442]{./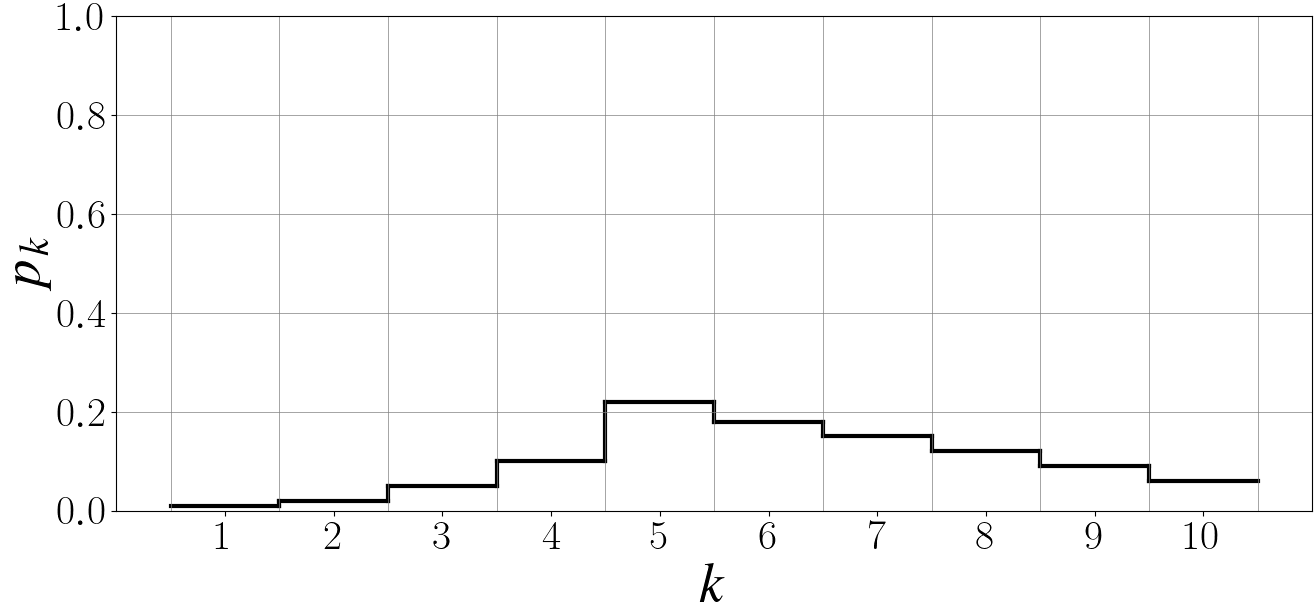}
\end{tabular}
\caption{%
Instances of the 10-dimensional unimodal PMF $\bp=(p_k)_{k\in[10]}$ with the mode 5.}
\label{fig:Scale}
\end{figure}

\subsection{Formulation of Threshold Methods}
\label{sec:Threshold}
\subsubsection{Overview}
\label{sec:OverviewTM}
A \emph{1DT-based method} \citep{yamasaki2022optimal} is formulated with two procedures, 
\emph{learning procedure} and \emph{labeling procedure} of the 1DT.
In the learning procedure, 1DT-based methods learn 
a 1DT (a real-valued function defined on $\bbR^d$) of 
the explanatory variable $\bX$ based on the data $(\bx_1,y_1),\ldots,(\bx_n,y_n)$.
In the labeling procedure, 
1DT-based methods construct a classifier $f$ as $f=h\circ\hat{a}$ 
with a learned 1DT $\hat{a}$ and a \emph{labeling function} $h:\bbR\to[K]$.
Many of those methods predict a label for $\bX=\bx$ as $f(\bx)=h_\thr(\hat{a}(\bx);\bt)$ 
using a \emph{threshold labeling function},
\begin{align}
\label{eq:ThrLab}
    h_\thr(u;\bt):= 1+\sum_{k=1}^{K-1}\bbI(u\ge t_k)
\end{align}
with a \emph{threshold parameter vector} $\bt=(t_k)_{k\in[K-1]}\in\bbR^{K-1}$.
In particular, such 1DT-based methods that employ a threshold labeling 
function are called threshold methods or threshold models, 
and have been studied actively in machine learning research
\citep{shashua2002taxonomy, chu2005new, rennie2005loss, lin2006large, 
lin2012reduction, pedregosa2017consistency, cao2020rank, yamasaki2022optimal}.
The succeeding two sections detail the learning and labeling procedures further.

\subsubsection{Learning Procedure}
\label{sec:Learning}
The learning procedure of many existing threshold methods is designed 
according to the idea of the \emph{surrogate risk minimization} 
\citep{pedregosa2017consistency} that allows for easier 
continuous optimization instead of directly minimizing the task risk.
In this work, we study threshold methods that can be seen 
to learn a 1DT $a:\bbR^d\to\bbR$ from a function class $\calA$ and a 
\emph{bias parameter vector} $\bb=(b_k)_{k\in[K-1]}$ from a class $\calB$ 
by minimizing the \emph{empirical surrogate risk} 
for a specified \emph{surrogate loss function} $\phi$,
\begin{align}
\label{eq:SRM}
    \min_{a\in\calA, \bb\in\calB}\frac{1}{n}\sum_{i=1}^n\phi(a(\bx_i),\bb,y_i)
\end{align}
(we denote the minimizer $(\hat{a}, \hat{\bb})$ 
with $\hat{\bb}:=(\hat{b}_k)_{k\in[K-1]}$)
or its regularized version.
Discussion in the main part of this paper focuses on 
the non-regularized (empirical) surrogate risk minimization;
see Appendix~\ref{sec:Proof} for supplemental discussion on the regularized version.

In this work, we consider popular surrogate loss functions 
in the class of \emph{all-threshold (AT) losses} or
\emph{immediate-threshold (IT) losses} \citep{pedregosa2017consistency}.
An AT loss is representable as
\begin{align}
\label{eq:AT}
    \phi(a(\bx),\bb,y)
    =\begin{cases}
    \sum_{k=1}^{K-1}\varphi(b_k-a(\bx))&\text{if~}y=1,\\
    \sum_{k=1}^{K-1}\varphi(a(\bx)-b_k)&\text{if~}y=K,\\
    \sum_{k=1}^{y-1}\varphi(a(\bx)-b_k)+\sum_{k=y}^{K-1}\varphi(b_k-a(\bx))&\text{otherwise},
    \end{cases}
\end{align}
and an IT loss is representable as
\begin{align}
\label{eq:IT}
    \phi(a(\bx),\bb,y)
    =\begin{cases}
    \varphi(b_1-a(\bx))&\text{if~}y=1,\\
    \varphi(a(\bx)-b_{K-1})&\text{if~}y=K,\\
    \varphi(a(\bx)-b_{y-1})+\varphi(b_y-a(\bx))&\text{otherwise},
    \end{cases}
\end{align}
with a certain function $\varphi:\bbR\to[0,\infty)$.
As a function $\varphi$, one often applies a convex loss function 
that is an upper bound of the function $u\mapsto\bbI(u\le 0)$
and commonly used in binary classification.
Table~\ref{tab:Loss} summarizes instances of the function $\varphi$ that we 
consider in this paper, corresponding AT and IT losses, and reference papers for them.
To represent these functions, we subscript their abbreviation name 
in lowercase to $\varphi$ and $\phi$, like $\varphi_\logi(u)=\log(1+e^{-u})$ 
and $\phi_\logiat$ and $\phi_\logiit$ for the corresponding AT and IT losses.

\begin{table}[t]
\renewcommand{\arraystretch}{0.75}\renewcommand{\tabcolsep}{5pt}\centering%
\caption{Name in this paper (its abbreviation) and functional form of $\varphi$, 
and reference and name therein of the corresponding AT/IT loss $\phi$.}
\label{tab:Loss}\scalebox{0.75}{\begin{tabular}{lll}
\toprule
\multicolumn{1}{c}{\scs Name of $\varphi$ in this paper (abbr.)} & 
\multicolumn{1}{c}{\scs Functional form of $\varphi(u)$} & 
\multicolumn{1}{c}{\scs Reference and name therein of AT/IT loss $\phi$} \\
\midrule
Logistic (logi) & $\log(1+e^{-u})$ & \citep{rennie2005loss}, CORAL \citep{cao2020rank} \\
Hinge (hing) & $(1-u)_+$ & SVOR-EXC/SVOR-IMC \citep{chu2005new} \\
Ramp (ramp) & $\min\{(1-u)_+,s\}$ with $s>0$ & \citep{pfannschmidt2020feature} \\
Smoothed-Hinge (smhi) & $1-2u~(u\le0), \{(1-u)_+\}^2~(u>0)$ & \citep{rennie2005loss} \\
Squared-Hinge (sqhi) & $\{(1-u)_+\}^2$ & AT/IT construction with modified least squares \citep{rennie2005loss} \\
Exponential (expo) & $e^{-u}$ & ORBoost-LR/ORBoost-ALL \citep{lin2006large, lin2012reduction, li2007ordinal} \\
Absolute (abso) & $|1-u|$ & -- \\
Squared (squa) & $(1-u)^2$ & Squared AT \citep{pedregosa2017consistency} \\
\bottomrule
\end{tabular}}
\end{table}

As a class of the 1DT $\calA$, one applies, for example, a linear model class 
$\{\bx\mapsto\bw^\top\bx+w_0\mid \bw\in\bbR^d, w_0\in\bbR\}$ and 
a neural network model with a fixed network architecture 
and learnable bias and weight parameters.
In this paper, we do not discuss the design of the class of the 1DT, 
and simply assume $\calA\subseteq\{a:\bbR^d\to\bbR\}$ 
in the following discussion.

The AT and IT surrogate losses 
$\phi(a(\bx),\bb,y)$ have the translation invariance:
$\phi(a(\bx),\bb,y)=\phi(a(\bx)+c,\bb+c\bm{1}_{K-1},y)$ with 
all-1 vector $\bm{1}_{K-1}\in\bbR^{K-1}$ for any $c\in\bbR$.
Also, many methods incorporate 
the order constraint $b_1\le\cdots\le b_{K-1}$ on the bias parameter vector $\bb$.
Thus, we study the popular instances of the class $\calB$,
$\calB_0:=\{\bb\in\bbR^{K-1}\mid b_1=0\}$ (\emph{non-ordered class}) and 
$\calB^\ord_0:=\{\bb\in\bbR^{K-1}\mid b_1=0\le b_2\le\cdots\le b_{K-1}\}$ (\emph{ordered class}).
The order constraint on the bias parameter vector $\bb$ can be implemented, 
for example, by $b_{k+1}=b_k+c_k^2$ for $k\in[K-2]$ with other parameters 
$c_1, \ldots,c_{K-2}\in\bbR$ \citep{franses2001quantitative}.

\subsubsection{Labeling Procedure}
\label{sec:Labeling}
The threshold labeling $h_\thr(u;\bt)$ is non-decreasing in $u\in\bbR$ 
and the simplest labeling function in the sense that the resulting 
1DT-based classifier has only $(K-1)$ decision boundaries 
for the learned 1DT at most \citep{yamasaki2022optimal}.
Partly because of this simplicity, the threshold labeling 
is preferred for many 1DT-based methods.
\citet{pedregosa2017consistency, cao2020rank} adopt
the learned bias parameter vector $\hat{\bb}$ to the threshold parameter vector $\bt$,
and \citet{shashua2003ranking, chu2007support} use 
the labeling function $h(u)=\min(\{k\in[K-1]\mid u<\hat{b}_k\}\cup\{K\})$.
The latter labeling function is also a certain threshold labeling,
which is same as (resp.\;different from) the former $h_\thr(\cdot;\hat{\bb})$ 
when $\hat{\bb}$ are ordered (resp.\;non-ordered)
\citep[Proposition 2]{yamasaki2022optimal}.
\citet{yamasaki2022optimal} proposed to use the 
\emph{empirical optimal threshold parameter vector} 
that minimizes the empirical task risk, 
$\hat{\bt}\in\argmin_{\bt\in\bbR^{K-1}}
\frac{1}{n}\sum_{i=1}^n\ell(h_\thr(\hat{a}(\bx_i);\bt), y_i)$ 
for the learned 1DT $\hat{a}$ and specified task loss $\ell$.
This minimization problem can be solved by 
a dynamic-programming-based algorithm \citep{lin2006large} 
or computationally efficient parallel algorithm \citep{yamasaki2024parallel} 
for tasks with a task loss function 
that is convex with respect to the first argument.
Also, \citet{yamasaki2022optimal} experimentally confirmed that 
this labeling function can lead to better classification performance
than other labeling functions including the above-mentioned two previous ones.
Therefore, we suppose to employ the empirical optimal 
threshold parameter vector $\hat{\bt}$ in this work.

\subsection{Research Objectives of This Work and Related Works}
\label{sec:Policy}
We are ultimately interested in the \emph{generalization error} 
(or \emph{prediction error}) $\bbE[\ell(f(\bX),Y)]$ for the classifier 
$f(\cdot)=\hat{f}(\cdot):= h_\thr(\hat{a}(\cdot);\hat{\bt})$ 
of the threshold method, where the learned 1DT $\hat{a}$ 
and learned threshold parameter vector $\hat{\bt}$ 
are obtained by the two-step optimization
\begin{align}
\label{eq:Estiab}
    (\hat{a},\hat{\bb})\in\underset{a\in\calA,\bb\in\calB}{\argmin}\,
    \frac{1}{n}\sum_{i=1}^n\phi(a(\bx_i),\bb,y_i),\quad
    \hat{\bt}\in\underset{\bt\in\bbR^{K-1}}{\argmin}\,
    \frac{1}{n}\sum_{i=1}^n\ell(h_\thr(\hat{a}(\bx_i);\bt), y_i).
\end{align}
The generalization error $\bbE[\ell(\hat{f}(\bX),Y)]$ is 
often understood through the decomposition into 
the \emph{approximation error} and \emph{estimation error},
\begin{align}
    \bbE[\ell(\hat{f}(\bX),Y)]
    =\underbrace{\bbE[\ell(\bar{f}(\bX),Y)]}_{\text{approximation error}}
    +\underbrace{\bbE[\ell(\hat{f}(\bX),Y)]-\bbE[\ell(\bar{f}(\bX),Y)]}_{\text{estimation error}},
\end{align}
like \citet[Section 5.2]{shalev2014understanding}, where 
the classifier $\bar{f}(\cdot):= h_\thr(\bar{a}(\cdot);\bar{\bt})$ 
is defined with the surrogate risk minimizer $(\bar{a},\bar{\bb})$
and the optimal threshold parameter vector $\bar{\bt}$:
\begin{align}
\label{eq:Idolab}
    (\bar{a},\bar{\bb})\in\underset{a\in\calA,\bb\in\calB}{\argmin}\,
    \bbE[\phi(a(\bX),\bb,Y)],\quad
    \bar{\bt}\in\underset{\bt\in\bbR^{K-1}}{\argmin}\,
    \bbE[\ell(h_\thr(\bar{a}(\bX);\bt),Y)].
\end{align}

Behaviors of the generalization error $\bbE[\ell(f(\bX),Y)]$ for 
a classifier of the threshold method have not been sufficiently understood.
One important theoretical analysis of the threshold method 
is \citet{lin2006large, lin2012reduction, li2007ordinal}:
they gave a sample-based probably-approximately-correct 
bound of the generalization error $\bbE[\ell(f(\bX),Y)]$ 
for a classifier $f(\cdot)=h_\thr(a(\cdot);\bt)$ 
with a boosting-based 1DT $a$ and ordered threshold parameter vector $\bt$.
Their bound has a term of the order $\calO(\log(n)/\sqrt{n})$ regarding 
the sample size $n$ and another sample-based constant-order term,
which are interpretable as corresponding to the estimation and approximation errors respectively.
However, the influence of the learning procedure of the threshold method on 
that constant-order approximation error term has not been sufficiently studied.

Therefore, in this work, we focus only on the approximation error as a first step.
In general, as the training sample size $n$ increases, 
the estimation error is expected to approach zero, 
and in such cases the approximation error becomes 
dominant in the generalization error.
Thus, we expect that our consideration will be 
suggestive for behaviors of the generalization error of 
the threshold method with the sufficiently large sample size $n$.

Typical binary or multi-class classification methods such as 
logistic regression have a guarantee that their surrogate risk 
minimization with the most representable model class possible 
(counterpart of $(\calA,\calB)$) yields a \emph{Bayes optimal classifier} 
(i.e., approximation error becomes the so-called \emph{Bayes error}
$\min_{f:\bbR^d\to[K]}\bbE[\ell(f(\bX),Y)]$) 
for any probability distribution of $(\bX,Y)$;
such a guarantee is known as \emph{Fisher consistency} or classification calibration
\citep{bartlett2006convexity, tewari2007consistency, liu2007fisher, pires2013cost}.
In contrast, most threshold methods do not, 
as described in \citet[Section 4]{pedregosa2017consistency},
namely, $\bbE[\ell(\bar{f}(\bX),Y)]$ defined via \eqref{eq:Idolab} is 
not necessarily equal to $\min_{f:\bbR^d\to[K]}\bbE[\ell(f(\bX),Y)]$
even if $\calA=\{a:\bbR^d\to\bbR\}$.
In other words, deviation of the approximation error of a threshold method from 
the Bayes error generally varies depending on the underlying data distribution
and its learning procedure.

Therefore, we study the dependence of the underlying data distribution and of 
the learning procedure of the 1DT on the approximation error of the threshold method, 
in order to qualitatively understand what threshold methods work well or poorly for what data.
It aims to advance the previous theoretical result \citep[Section 4]{pedregosa2017consistency} 
that just suggests that many threshold methods do not have Fisher consistency, 
and refine the experimental comparison of loss functions 
regarding the classification performance of the classifier $h_\thr(\hat{a}(\cdot);\hat{\bb})$ 
by \citet{rennie2005loss, gutierrez2015ordinal} 
to a theoretical comparison regarding the classification performance of the classifier $\hat{f}$.
Also, this study is the first attempt to discuss properties of threshold methods 
under a more in-depth characterization of the natural ordinal relation 
(here, the unimodal hypothesis) as far as we know.

\section{Analyses and Conjectures}
\label{sec:AC}
\subsection{All-Threshold (AT) Loss}
\label{sec:AT-Loss}
\subsubsection{Order of Optimized Bias Parameters}
\label{sec:AT-BP}
In Section~\ref{sec:AT-Loss}, 
we give analysis of the surrogate risk minimization of 
the threshold method that uses an AT surrogate loss function,
and conjecture about what data distributions 
the resulting classifier will perform well or poorly for.

We first discuss behaviors of the optimized bias parameter vector in Section~\ref{sec:AT-BP}.
The following theorem states that, for many popular AT surrogate loss functions, 
minimization of the (empirical) surrogate risk yields the bias parameter vector 
that satisfies the order condition even without imposing the order condition.
\begin{theorem}
\label{thm:AT-BP-Ord}
Let $\calA\subseteq\{a:\bbR^d\to\bbR\}$ and $\calB_0^\ord\subseteq\calB\subseteq\calB_0$
and introduce the conditions
\begin{itemize}\setlength{\parskip}{0pt}\setlength{\itemindent}{0pt}
\item[\hyt{a1}] $\varphi(u_1)-\varphi(-u_1)-\varphi(u_2)+\varphi(-u_2)\ge0$ for any $u_1,u_2\in\bbR$ s.t.\;$u_1<u_2$;
\item[\hyt{a2}] $\varphi(u)$ is non-increasing in $u\in\bbR$ (e.g., $\varphi=\varphi_\ramp$);
\item[\hyt{a3}] $\varphi(u)-\varphi(-u)$ is non-increasing in $u\in\bbR$ (e.g., $\varphi=\varphi_\abso$);
\item[\hyt{a4}] $\varphi(u_1)-\varphi(-u_1)-\varphi(u_2)+\varphi(-u_2)>0$ for any $u_1,u_2\in\bbR$ s.t.\;$u_1<u_2$;
\item[\hyt{a5}] $\varphi(u)$ is decreasing in $u\in\bbR$ (e.g., $\varphi=\varphi_\logi, \varphi_\expo$);
\item[\hyt{a6}] $\varphi(u)-\varphi(-u)$ is decreasing in $u\in\bbR$ 
(e.g., $\varphi=\varphi_\hing, \varphi_\smhi, \varphi_\sqhi, \varphi_\squa$);
\item[\hyt{a7}] $\{i\in[n]\mid y_i=y\}\neq\emptyset$ for all $y\in[K]$;
\item[\hyt{a8}] $\Pr(Y=y)>0$ for all $y\in[K]$.
\end{itemize}
Then, it holds that
\begin{itemize}\setlength{\parskip}{0pt}\setlength{\itemindent}{0pt}
\item[\hyt{a9}] 
\hyl{a1} holds if \hyl{a2} or \hyl{a3} holds,
and \hyl{a4} holds if \hyl{a5} or \hyl{a6} holds;
\item[\hyt{a10}] 
when $\phi$ is an AT loss \eqref{eq:AT} with $\varphi$ satisfying \hyl{a1},
there exists $(\hat{a},\hat{\bb})$ defined by \eqref{eq:Estiab} that 
satisfies $\hat{b}_1\le\cdots\le\hat{b}_{K-1}$ if \hyl{a7} holds,
and there exists $(\bar{a},\bar{\bb})$ defined by \eqref{eq:Idolab} that 
satisfies $\bar{b}_1\le\cdots\le\bar{b}_{K-1}$ if \hyl{a8} holds;
\item[\hyt{a11}] 
when $\phi$ is an AT loss \eqref{eq:AT} with $\varphi$ satisfying \hyl{a4},
any $(\hat{a},\hat{\bb})$ defined by \eqref{eq:Estiab} satisfies 
$\hat{b}_1\le\cdots\le\hat{b}_{K-1}$ if \hyl{a7} holds,
and any $(\bar{a},\bar{\bb})$ defined by \eqref{eq:Idolab} satisfies 
$\bar{b}_1\le\cdots\le\bar{b}_{K-1}$ if \hyl{a8} holds.
\end{itemize}
\end{theorem}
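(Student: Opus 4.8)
The plan is to reduce everything to a one-dimensional statement by exploiting that, for a fixed $a$, an AT loss is \emph{separable} across the bias coordinates. I would first clear (a9), which is purely elementary: writing $g(u):=\varphi(u)-\varphi(-u)$, condition (a1) is literally the assertion that $g(u_1)\ge g(u_2)$ for $u_1<u_2$, i.e.\ that $g$ is non-increasing, which is (a3); and (a4) is literally (a6). Moreover, if (a2) (resp.\ (a5)) holds then $u\mapsto\varphi(u)$ and $u\mapsto-\varphi(-u)$ are both non-increasing (resp.\ both decreasing), so their sum $g$ is non-increasing (resp.\ decreasing); hence (a2)$\Rightarrow$(a1) and (a5)$\Rightarrow$(a4).

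For (a10)--(a11) the structural core is the identity obtained by grouping the summands of \eqref{eq:AT} by the bias index:
\begin{align}
\frac{1}{n}\sum_{i=1}^n\phi(a(\bx_i),\bb,y_i)=\sum_{k=1}^{K-1}G_k(b_k;a),\qquad
G_k(b;a):=\frac{1}{n}\Bigl(\sum_{i:\,y_i>k}\varphi(a(\bx_i)-b)+\sum_{i:\,y_i\le k}\varphi(b-a(\bx_i))\Bigr),
\end{align}
and likewise $\bbE[\phi(a(\bX),\bb,Y)]=\sum_k\tilde G_k(b_k;a)$ with $\tilde G_k(b;a):=\bbE[\bbI(Y>k)\varphi(a(\bX)-b)+\bbI(Y\le k)\varphi(b-a(\bX))]$ in the population case. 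Thus, once $a$ is fixed, minimizing over the unconstrained bias coordinates decouples into independent scalar minimizations of the $G_k$ (resp.\ $\tilde G_k$). Conditions (a7) (resp.\ (a8)) ensure both groups of terms are present for every $k\in[K-1]$ (the labels $1$ and $K$ occur), keeping these scalar problems non-degenerate, and since the theorem posits a joint minimizer, for that $\hat a$ (resp.\ $\bar a$) each relevant $\argmin_b G_k(\cdot;\hat a)$ is nonempty.

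The step I expect to be the real obstacle is the monotone-comparative-statics argument. The link to the hypotheses is the single-difference computation
\begin{align}
G_{k+1}(b;a)-G_k(b;a)=\frac{1}{n}\sum_{i:\,y_i=k+1}\bigl(\varphi(b-a(\bx_i))-\varphi(a(\bx_i)-b)\bigr)=\frac{1}{n}\sum_{i:\,y_i=k+1}g\bigl(b-a(\bx_i)\bigr),
\end{align}
which under (a1) is non-increasing in $b$ and under (a4) is decreasing. By the standard two-point inequality — if a minimizer $b'$ of $G_{k+1}(\cdot;a)$ were strictly below $m_k\in\argmin_b G_k(\cdot;a)$, then $G_{k+1}(b';a)<G_{k+1}(m_k;a)$ together with $G_k(m_k;a)\le G_k(b';a)$ would make $G_{k+1}(\cdot;a)-G_k(\cdot;a)$ strictly larger at $b'$ than at $m_k>b'$, contradicting monotonicity — one obtains: under (a1) the sets $\argmin_b G_k(\cdot;a)$ are non-decreasing in $k$ in the strong set order, hence admit a non-decreasing selection $m_1\le\cdots\le m_{K-1}$; under (a4) every element of $\argmin_b G_k(\cdot;a)$ lies below every element of $\argmin_b G_{k+1}(\cdot;a)$. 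The same holds verbatim with $\tilde G_k$. Keeping the weak conclusion under (a1) cleanly separate from the strict one under (a4), and handling non-coercive choices such as $\varphi_\ramp$ (where an $\argmin$ may be an unbounded plateau, which is harmless since only the existence of a minimizer $\ge m_k$ is needed, not attainment of a supremum), is the main delicacy.

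To assemble, note the case $\calB=\calB_0^\ord$ is vacuous. For (a10), take a joint minimizer $(\hat a,\hat\bb)$, replace $\hat\bb$ by a non-decreasing selection $(m_k)_k$ of minimizers of $(G_k(\cdot;\hat a))_k$, and re-normalize its first coordinate to $0$ using the translation invariance of the AT loss (folding the constant shift into $\hat a$, which is without loss for the usual classes — linear or neural networks with an output bias — that the $b_1=0$ normalization is there to accommodate); the result lies in $\calB_0^\ord\subseteq\calB$ and, by the decomposition, attains the optimal value, so it is an ordered joint minimizer. For (a11), translation invariance forces $0\in\argmin_b G_1(\cdot;\hat a)$ at any joint minimizer (otherwise replacing $\hat a$ by $\hat a-b^\ast$ with $b^\ast\in\argmin_b G_1(\cdot;\hat a)$ shifts every $G_k(\cdot;\hat a)$ by $b^\ast$ and strictly lowers the objective), while each remaining $\hat b_k$ is a minimizer of $G_k(\cdot;\hat a)$, so the strict comparative statics yields $0=\hat b_1\le\hat b_2\le\cdots\le\hat b_{K-1}$ for every joint minimizer. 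The population assertions follow identically with $\tilde G_k$ and (a8) in place of $G_k$ and (a7).
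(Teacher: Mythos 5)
Your proposal is correct in substance but takes a genuinely different route from the paper's. The paper works from the cumulative representation \eqref{eq:ATSR} and applies an exchange (sorting) argument directly to the bias vector of a given minimizer: for an out-of-order pair $\bar{b}_l>\bar{b}_m$ with $l<m$, swapping the two entries changes the risk by $\bbE\bigl[\Pr(Y\in\{l+1,\ldots,m\}\mid\bX)\,\{g(\bar{a}(\bX)-\bar{b}_l)-g(\bar{a}(\bX)-\bar{b}_m)\}\bigr]$ with $g(u)=\varphi(u)-\varphi(-u)$, which is nonnegative under \hyl{a1} and positive under \hyl{a4} together with \hyl{a7} or \hyl{a8}; iterating the swaps sorts $\bb$ without increasing (resp.\ while strictly decreasing) the risk. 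You instead decouple the AT risk into the per-threshold scalar functions $G_k$ and run a monotone-comparative-statics (strong-set-order) argument on $\argmin_b G_k(\cdot;a)$, the hypotheses entering through the monotonicity of $b\mapsto G_{k+1}(b;a)-G_k(b;a)$, i.e.\ of the same function $g$; your treatment of \hyl{a9} coincides with the paper's. Your route is arguably more transparent about where \hyl{a1}/\hyl{a4} act, and it makes explicit something the paper glosses over: because $\calB\subseteq\calB_0$ pins $b_1=0$, both arguments must at some point absorb a constant shift into $a$ (your normalization step; in the paper, a swap involving the first coordinate already produces a vector with $b_1\neq0$), so some richness of $\calA$ (closure under adding constants) is tacitly required for the theorem as stated — you flag this, the paper does not. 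Your remark that \hyl{a1} and \hyl{a3} (and \hyl{a4} and \hyl{a6}) are literally the same condition is also accurate.

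The one point where your argument falls short of the stated generality is the intermediate classes $\calB$ with $\calB_0^\ord\subsetneq\calB\subsetneq\calB_0$. Your key step for \hyl{a11}, ``each remaining $\hat{b}_k$ is a minimizer of $G_k(\cdot;\hat{a})$,'' uses that the coordinates $b_2,\ldots,b_{K-1}$ are unconstrained, which is true for $\calB=\calB_0$ but not for an arbitrary sandwiched $\calB$: a minimizer sitting at an unordered point of such a $\calB$ need not be coordinate-wise optimal, so the comparative statics does not apply to it directly (for \hyl{a10} your construction lands in $\calB_0^\ord\subseteq\calB$, so that part is fine for any admissible $\calB$). The paper's exchange argument sidesteps this because it only rearranges the existing coordinates of the given minimizer and compares with the fully sorted vector, which lies in $\calB_0^\ord\subseteq\calB$. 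Within your framework the repair is a short dichotomy: at any joint minimizer either every $\hat{b}_k$ attains $\inf_b G_k(\cdot;\hat{a})$, in which case your strict comparative statics already forces $\hat{b}_1\le\cdots\le\hat{b}_{K-1}$, or some coordinate is strictly suboptimal, in which case the normalized ordered selection (or the sorted-and-shifted copy of $\hat{\bb}$ itself) is a feasible point with strictly smaller risk, contradicting optimality. Relatedly, your \hyl{a10} construction presumes $\argmin_b G_1(\cdot;\hat{a})$ is nonempty; this is not automatic for a general bounded $\varphi$ satisfying only \hyl{a1}, but it follows from your own shift argument (under translation closure, $0$ must attain $\inf_b G_1(\cdot;\hat{a})$ at any joint minimizer, so one may take $m_1=0$), so it is a presentational rather than substantive gap. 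Finally, in the strict case the correct conclusion is that no element of $\argmin_b G_{k+1}$ lies strictly below any element of $\argmin_b G_k$ (weak ordering of the sets), which is all your assembly uses.
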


There are existing results similar to this theorem:
\citet{chu2005new} showed \hyl{a10}
for the Hinge-AT surrogate loss $\phi=\phi_\hingat$ in their Theorem 1,
and \citet{li2007ordinal} showed \hyl{a10} for $\varphi$ satisfying \hyl{a2}
in their Theorem 2.
We found more general conditions \hyl{a1} and \hyl{a4} and 
other sufficient conditions \hyl{a3} and \hyl{a6} for the general ones.
Consequently, Theorem~\ref{thm:AT-BP-Ord} newly 
covers the losses $\phi=\phi_\absoat, \phi_\squaat$.

This theoretical guarantee differs from \citet[Lemma 1]{crammer2002pranking} 
for a perceptron-based algorithm as also pointed out by \citet{chu2005new}: 
%
this lemma guarantees that a learned bias parameter vector 
always satisfies the order condition on the basis of the nature of 
their optimization procedure (initialization and update rule for perceptron),
while Theorem~\ref{thm:AT-BP-Ord} guarantees it on the basis of 
the global optimality in the (empirical) surrogate risk minimization.
Thus, it should be noted that, for example, when the bias parameter vector 
is learned by some iterative algorithm that adopts the empirical surrogate risk 
as the objective function under the setting of Theorem~\ref{thm:AT-BP-Ord}, 
it may be trapped by a local optimum and may not satisfy the order condition in practice.
%
So as to evade such an unexpected behavior, we hereafter adopt 
the ordered class of the bias parameter vector $\calB=\calB^\ord_0$ 
when we adopt an AT loss that satisfies \hyl{a1} or \hyl{a4}.

\subsubsection{Analysis for Special Instances: Logistic- and Exponential-AT Losses}
\label{sec:AT-LEloss}
We are interested in what data distributions a classifier obtained via the 
surrogate risk minimization with an AT loss will perform well or poorly for.
As one answer to this question, in this section,
we clarify special instances of the data distributions 
for which classifiers of the threshold methods based on 
the Logistic-AT or Exponential-AT loss perform well,
and analyze those special instances.

The following theorem shows that the threshold methods 
based on the Logistic-AT or Exponential-AT loss perform 
well when the data follow a certain likelihood model, 
which is well known as the (\emph{proportional odds}) 
\emph{cumulative logit} (\emph{CL}) \emph{model} 
in the OR research \citep{mccullagh1980regression, agresti2010analysis}.
\begin{theorem}
\label{thm:AT-Consistency}
Assume that the random variable $(\bX,Y)$ has 
the conditional probability of $Y=y$ at $\bX=\bx$,
\begin{align}
\label{eq:CL}
    \Pr(Y=y|\bX=\bx)=P_\cl(y;\tilde{a}(\bx), \tilde{\bb})
    :=
    \begin{cases}
    \frac{1}{1+e^{-(\tilde{b}_1-\tilde{a}(\bx))}}&\text{if }y=1,\\
    1-\frac{1}{1+e^{-(\tilde{b}_{K-1}-\tilde{a}(\bx))}}&\text{if }y=K,\\
    \frac{1}{1+e^{-(\tilde{b}_y-\tilde{a}(\bx))}}-
    \frac{1}{1+e^{-(\tilde{b}_{y-1}-\tilde{a}(\bx))}}&\text{otherwise},
    \end{cases}
\end{align}
with $\tilde{a}:\bbR^d\to\bbR$ and $\tilde{\bb}\in\calB_0^\ord$, 
for any $\bx$ in the support of the probability distribution of $\bX$ and every $y\in[K]$.
Let $\phi$ be the Logistic-AT loss ($\phi=\phi_\logiat$) or Exponential-AT loss ($\phi=\phi_\expoat$), 
and $\calA\times\calB\subseteq\{a:\bbR^d\to\bbR\}\times\calB_0^\ord$ 
include $(\tilde{a},\tilde{\bb})$ or $(\tilde{a}(\cdot)/2,\tilde{\bb}/2)$ 
respectively when $\phi=\phi_\logiat$ or $\phi=\phi_\expoat$.
Then, it holds that
\begin{itemize}\setlength{\parskip}{0pt}\setlength{\itemindent}{0pt}
\item[\hyt{b1}]
$(\bar{a},\bar{\bb})$ defined by \eqref{eq:Idolab} satisfies 
$(\bar{a}(\bX),\bar{\bb})=(\tilde{a}(\bX),\tilde{\bb})$ or 
$(\bar{a}(\bX),\bar{\bb})=(\tilde{a}(\bX)/2,\tilde{\bb}/2)$ almost surely
respectively when $\phi=\phi_\logiat$ or $\phi=\phi_\expoat$;
\item[\hyt{b2}]
$\bar{f}(\cdot)=h_\thr(\bar{a}(\cdot);\bar{\bt})$ 
with $\bar{a}$ and $\bar{\bt}$ defined by \eqref{eq:Idolab} satisfies 
$\bbE[\ell(\bar{f}(\bX),Y)]=\min_{f:\bbR^d\to[K]}\bbE[\ell(f(\bX),Y)]$,
if $\ell=\ell_\zo, \ell_{\zo,\epsilon}$ with $\epsilon\in[0,\lfloor K/2\rfloor)$
or if $\ell$ satisfies \hyl{b3} (e.g., $\ell=\ell_\ab, \ell_\sq$) described below.
\end{itemize}
Here, $\lfloor u\rfloor$ for $u\in\bbR$ is the greatest integer less than or equal to $u$,
and the condition \hyl{b3} is as follows:
\begin{itemize}\setlength{\parskip}{0pt}\setlength{\itemindent}{0pt}
\item[\hyt{b3}]
$\ell(k,l)$ at each fixed $k\in[K]$ is non-increasing in $l$ for $l\le k$ 
and non-decreasing in $l$ for $l\ge k$, and 
$\ell(k,j)-\ell(k,j+1)-\ell(l,j)+\ell(l,j+1)$ at each fixed 
$k,l\in[K]$ s.t.\;$k<l$ is non-positive for all $j\in[K-1]$.
\end{itemize}
\end{theorem}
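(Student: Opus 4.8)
The plan is to establish \hyl{b1} first and then derive \hyl{b2} from it. For \hyl{b1}, the key observation is that the AT loss decomposes as a sum over the $K-1$ ``threshold comparisons'', and for the logistic choice $\varphi_\logi(u)=\log(1+e^{-u})$ the $k$-th summand, viewed as a function of $a(\bx)$ and $b_k$, is exactly the binary logistic (cross-entropy) loss for predicting the event $\{Y\ge k+1\}$ versus $\{Y\le k\}$ with ``score'' $a(\bx)-b_k$. Indeed, one checks that under the CL model \eqref{eq:CL} one has $\Pr(Y\le k\mid \bX=\bx)=1/(1+e^{-(\tilde b_k-\tilde a(\bx))})$, i.e.\ the induced conditional probability of each binarized event is precisely a sigmoid of $\tilde a(\bx)-\tilde b_k$. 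Hence $\bbE[\phi_\logiat(a(\bX),\bb,Y)]=\sum_{k=1}^{K-1}\bbE[\text{binary-CE of }(a(\bX)-b_k)\text{ for }\{Y\ge k+1\}]$, and by the standard strict properness of logistic loss each term is uniquely minimized (pointwise in $\bx$, and hence in expectation by a measurable-selection / Jensen-type argument) when $a(\bx)-b_k$ equals the true log-odds $\tilde a(\bx)-\tilde b_k$. Since $(\tilde a,\tilde\bb)$ is assumed to lie in $\calA\times\calB$ and realizes all these log-odds simultaneously, it is the essentially unique minimizer, giving $(\bar a(\bX),\bar\bb)=(\tilde a(\bX),\tilde\bb)$ a.s.

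For $\phi_\expoat$, the same decomposition holds but with $\varphi_\expo(u)=e^{-u}$, which is the loss underlying boosting; its population minimizer over a score $v$ for a binary label with positive-class probability $q$ is $v=\tfrac12\log\frac{q}{1-q}$, i.e.\ half the log-odds. This is exactly why the hypothesis places $(\tilde a(\cdot)/2,\tilde\bb/2)$ (rather than $(\tilde a,\tilde\bb)$) in $\calA\times\calB$: with that scaling, $a(\bx)-b_k=\tfrac12(\tilde a(\bx)-\tilde b_k)$ matches the minimizer of each exponential-loss term, so again the assumed point is the minimizer and $(\bar a(\bX),\bar\bb)=(\tilde a(\bX)/2,\tilde\bb/2)$ a.s. In both cases strict convexity (logistic) or strict convexity of $e^{-v}$ gives uniqueness of the per-coordinate minimizer, so the identification of $\bar\bb$ is unambiguous; note $\bar\bb$ automatically lies in $\calB_0^\ord$ since $\tilde\bb$ does.

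For \hyl{b2}, once \hyl{b1} is in hand the learned 1DT $\bar a$ is (a positive multiple of) the CL score, so thresholding it is equivalent to thresholding $\tilde a(\bX)$, and the induced conditional law of $Y$ given $\bar a(\bX)=u$ is the CL model evaluated at that value. The CL model is unimodal in $y$ for every $\bx$ (the increments $P_\cl(y;\cdot)$ are single-peaked because $1/(1+e^{-(t-s)})$ is increasing and concave-then-convex in the cutpoints), and more importantly the conditional CDF $\Pr(Y\le y\mid\bar a(\bX)=u)$ is monotone in $u$ — a stochastic-ordering (MLR) property of the CL family. The remaining step is the ``threshold consistency'' argument: for a task loss $\ell$ that is V-shaped in the second argument and satisfies the supermodularity-type condition \hyl{b3} (and likewise for $\ell_\zo$ and $\ell_{\zo,\epsilon}$ in the stated range), the Bayes-optimal classifier $\bx\mapsto\argmin_l\sum_y \ell(l,y)\Pr(Y=y\mid\bX=\bx)$ is itself a monotone function of $\bar a(\bx)$, i.e.\ can be written as $h_\thr(\bar a(\bx);\bt^\star)$ for some cutpoints $\bt^\star$; this is a standard consequence of the fact that, under an MLR family, the optimal action is nondecreasing in the sufficient statistic when the loss is supermodular. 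Since $\bar\bt$ is by \eqref{eq:Idolab} the minimizer of $\bbE[\ell(h_\thr(\bar a(\bX);\bt),Y)]$ over all $\bt$, it attains the Bayes risk, proving $\bbE[\ell(\bar f(\bX),Y)]=\min_f\bbE[\ell(f(\bX),Y)]$. I expect the main obstacle to be the last step — carefully verifying that condition \hyl{b3} (together with the monotone likelihood ratio of the CL family) really does force the Bayes classifier to be a threshold function of $\bar a$, and handling the $\ell_{\zo,\epsilon}$ case, which is not covered by \hyl{b3} and needs the explicit restriction $\epsilon<\lfloor K/2\rfloor$ to rule out degenerate optimal rules; the properness arguments for \hyl{b1} are routine by comparison.
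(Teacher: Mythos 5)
Your proposal is correct and follows essentially the same route as the paper: for \hyl{b1} the paper likewise rewrites the AT surrogate risk as a sum of binarized terms weighted by $\Pr(Y\le y|\bX)$ and $\Pr(Y>y|\bX)$ (its Eq.~\eqref{eq:ATSR}) and identifies the per-threshold population minimizers (the log-odds for the logistic loss, half the log-odds for the exponential loss, established there by a stationarity-plus-convexity computation rather than by invoking properness), and for \hyl{b2} it, like you, reduces the claim to the fact that the Bayes rule is a threshold function of the CL score $u$, a step it delegates to Theorems~3 and 4 of \citet{yamasaki2022optimal} just as you leave it as the acknowledged remaining obstacle. The one nuance worth adding is that strict convexity only pins down the differences $a(\bx)-b_k$ (the AT loss is translation invariant), so the exact identification $(\bar a,\bar\bb)=(\tilde a,\tilde\bb)$ (resp.\ the halved pair) also uses the normalization $b_1=\tilde b_1=0$ enforced by $\calB\subseteq\calB_0$.
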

\noindent%
For the Logistic-AT surrogate loss $\phi=\phi_\logiat$,
the statement \hyl{b1} is proved in \citet[Theorem 2]{yamasaki2022optimal},
and the statement \hyl{b2} relies on \citet[Theorems 3 and 4]{yamasaki2022optimal}.
The statements for the Exponential-AT surrogate loss $\phi=\phi_\expoat$ are novel findings.

For the CL model \eqref{eq:CL}, we found the following properties.
\begin{theorem}
\label{thm:POCL-shape}
Assume $\bb\in\calB_0^\ord$.
Then, it holds that 
\begin{itemize}\setlength{\parskip}{0pt}\setlength{\itemindent}{0pt}
\item[\hyt{c1}]
$\sum_{l=1}^k P_\cl(l;u,\bb)=\frac{1}{1+e^{-(b_k-u)}}$ is decreasing in $u$, 
and $\sum_{l=k+1}^K P_\cl(l;u,\bb)=\frac{1}{1+e^{(b_k-u)}}$ is increasing in $u$,
for $k=1,\ldots,K-1$;
\item[\hyt{c2}]
$P_\cl(1;u,\bb)\to1$ and $P_\cl(k;u,\bb)\to0$ for $k=2,\ldots,K$ as $u\to-\infty$, 
and $P_\cl(K;u,\bb)\to1$ and $P_\cl(k;u,\bb)\to0$ for $k=1,\ldots,K-1$ as $u\to+\infty$;
\end{itemize}
and that, for each $k\in\{2,\ldots,K-1\}$, 
\begin{itemize}\setlength{\parskip}{0pt}\setlength{\itemindent}{0pt}
\item[\hyt{c3}]
$P_\cl(k;u,\bb)$ is symmetric in $u$ around $u=\frac{b_{k-1}+b_k}{2}$;
\item[\hyt{c4}]
$P_\cl(k;u,\bb)$ is maximized in $u$ at $u=\frac{b_{k-1}+b_k}{2}$, 
increasing in $u$ if $u<\frac{b_{k-1}+b_k}{2}$, 
and decreasing in $u$ if $u>\frac{b_{k-1}+b_k}{2}$;
\item[\hyt{c5}]
$P_\cl(k;u,\bb)$ with fixed $u$ and $\frac{b_{k-1}+b_k}{2}$ is increasing in $b_k-b_{k-1}$.
\end{itemize}
Assume further that $\bb=\bb^{[\Delta]}$ for some $\Delta>0$, 
and that $b_{m-1}\le u<b_m$ for some $m\in[K]$, 
where $\bb^{[\Delta]}:=\Delta\cdot(0,1,\ldots,K-2)^\top$, 
$b_0:=-\infty$, and $b_K:=+\infty$.
Then, it holds that 
\begin{itemize}\setlength{\parskip}{0pt}\setlength{\itemindent}{0pt}
\item[\hyt{c6}]
$P_\cl(2;u,\bb)\le\cdots\le P_\cl(m;u,\bb)$ if $m\ge 2$, and
$P_\cl(m;u,\bb)\ge\cdots\ge P_\cl(K-1;u,\bb)$ if $m\le K-1$;
\item[\hyt{c7}]
$P_\cl(1;u,\bb)\le\cdots\le P_\cl(m;u,\bb)$ if $m\ge 2$ and $\Delta\ge\Delta_1$, and
$P_\cl(m;u,\bb)\ge\cdots\ge P_\cl(K;u,\bb)$ if $m\le K-1$ and $\Delta\ge\Delta_2$,
where $\Delta_1$ is $\Delta$ satisfying $\frac{2}{1+e^{-(b_1-u)}}=\frac{1}{1+e^{-(b_2-u)}}$ and 
$\Delta_2$ is $\Delta$ satisfying $\frac{2}{1+e^{(b_{K-1}-u)}}=\frac{1}{1+e^{(b_{K-2}-u)}}$, 
and where $\Delta_1$ and $\Delta_2$ are uniquely determined for each $u$.
\end{itemize}
\end{theorem}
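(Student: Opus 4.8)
The plan is to reduce everything to elementary properties of the logistic sigmoid $\sigma(z):=1/(1+e^{-z})$: it is smooth and strictly increasing, satisfies $\sigma(-z)=1-\sigma(z)$, tends to $1$ as $z\to+\infty$ and to $0$ as $z\to-\infty$, and has even derivative $\sigma'=\sigma(1-\sigma)$ that is strictly decreasing in $|z|$. With the conventions $b_0:=-\infty$ and $b_K:=+\infty$, the CL model becomes the single formula $P_\cl(k;u,\bb)=\sigma(b_k-u)-\sigma(b_{k-1}-u)$, valid for all $k\in[K]$. Then \hyl{c1} is just telescoping, $\sum_{l=1}^k P_\cl(l;u,\bb)=\sigma(b_k-u)$ and $\sum_{l=k+1}^K P_\cl(l;u,\bb)=1-\sigma(b_k-u)=1/(1+e^{b_k-u})$, the monotonicities following from that of $\sigma$; and \hyl{c2} is read off from the limits of $\sigma$.

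For \hyl{c3}–\hyl{c5} fix $k\in\{2,\dots,K-1\}$ and put $c:=(b_{k-1}+b_k)/2$, $\delta:=(b_k-b_{k-1})/2\ge0$; if $\delta=0$ then $P_\cl(k;\cdot,\bb)\equiv0$ and the three claims are trivial, so take $\delta>0$. Substituting $u=c+t$ and using $\sigma(-z)=1-\sigma(z)$ gives $P_\cl(k;c+t,\bb)=\sigma(\delta-t)-\sigma(-\delta-t)=\sigma(\delta-t)+\sigma(\delta+t)-1$, an even function of $t$, which is \hyl{c3}; its $t$-derivative $\sigma'(\delta+t)-\sigma'(\delta-t)$ is negative for $t>0$ because $\sigma'$ is even, decreasing in $|z|$, and $|\delta+t|>|\delta-t|$, hence positive for $t<0$ by \hyl{c3}, which is \hyl{c4}. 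For \hyl{c5}, with $u$ and $c$ held fixed the $\delta$-derivative of $P_\cl(k;u,\bb)=\sigma(c+\delta-u)-\sigma(c-\delta-u)$ is $\sigma'(c+\delta-u)+\sigma'(c-\delta-u)>0$, and $\delta$ grows with $b_k-b_{k-1}$.

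Now specialize to $\bb=\bb^{[\Delta]}$, so $b_k=\Delta(k-1)$, and assume $b_{m-1}\le u<b_m$. The engine is the bump $g(x):=\sigma(x)-\sigma(x-\Delta)$, which (a reparametrization of \hyl{c3}–\hyl{c4}) is strictly positive, symmetric about $x=\Delta/2$, and strictly increasing on $(-\infty,\Delta/2]$ then strictly decreasing on $[\Delta/2,\infty)$; since $b_{k-1}=b_k-\Delta$ for $k\in\{2,\dots,K-1\}$, one has $P_\cl(k;u,\bb^{[\Delta]})=g(b_k-u)$ for such $k$. Hence, for $2\le k\le K-2$, $P_\cl(k+1;u,\bb^{[\Delta]})\ge P_\cl(k;u,\bb^{[\Delta]})$ iff $g(b_{k+1}-u)\ge g(b_k-u)$ iff $|b_{k+1}-u-\Delta/2|\le|b_{k+1}-u-3\Delta/2|$ iff $b_{k+1}-u\le\Delta$ iff $u\ge b_k$. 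Thus for \hyl{c6}, when $m\ge2$ the inequalities $b_k\le b_{m-1}\le u$ for $k=2,\dots,m-1$ chain to $P_\cl(2;u,\bb)\le\dots\le P_\cl(m;u,\bb)$; the extra step $P_\cl(K-1)\le P_\cl(K)$ needed when $m=K$ follows from $P_\cl(K;u,\bb)-P_\cl(K-1;u,\bb)=1-2\sigma(b_{K-1}-u)+\sigma(b_{K-2}-u)\ge0$ (both summands $\ge0$ as $u\ge b_{K-1}$), and the decreasing chain is symmetric, with the step $P_\cl(1)\ge P_\cl(2)$ needed when $m=1$ following from $P_\cl(2;u,\bb)-P_\cl(1;u,\bb)=\sigma(b_2-u)-2\sigma(b_1-u)\le0$ as $u\le b_1$.

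Finally, \hyl{c7} adds only the comparisons $P_\cl(1)\le P_\cl(2)$ (when $m\ge2$) and $P_\cl(K-1)\ge P_\cl(K)$ (when $m\le K-1$); a short rearrangement shows the first is equivalent to $1-2e^{-\Delta}\ge e^{-u}$ and the second to $1-2e^{-\Delta}\ge e^{u-\Delta(K-2)}$. In both, as $\Delta$ increases with $u$ fixed the left side strictly increases (from $-1$ to $1$) while the right side stays constant, respectively strictly decreases to $0$; so each inequality holds exactly for $\Delta$ no smaller than the unique crossing value, which is the $\Delta_1$, respectively $\Delta_2$, of the statement (and $\Delta_1=+\infty$, the claim vacuous, precisely when $e^{-u}\ge1$, i.e.\ at $u=b_1=0$). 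The two conditions interchange under $u\leftrightarrow\Delta(K-2)-u$, which reflects the identity $P_\cl(k;u,\bb^{[\Delta]})=P_\cl(K+1-k;\Delta(K-2)-u,\bb^{[\Delta]})$ — an algebraic consequence of $\sigma(-z)=1-\sigma(z)$ together with the palindromic-up-to-shift shape of $\bb^{[\Delta]}$ — which could also be used to derive the decreasing statements from the increasing ones. I expect the main obstacle to be exactly this pair of boundary comparisons, and especially $P_\cl(K-1)$ vs.\ $P_\cl(K)$: unlike the middle indices it mixes the end pieces of the CL model, and for $K\ge4$ both sigmoid arguments move with $\Delta$, so the $g$-based argument of \hyl{c6} does not apply; the fix is the exponential rearrangement above, which isolates a single monotone-in-$\Delta$ quantity and pins the threshold to $\Delta_2$. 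Everything else is bookkeeping — well-definedness and uniqueness of $\Delta_1,\Delta_2$ from the crossing argument, the degenerate case $b_{k-1}=b_k$ in \hyl{c3}–\hyl{c5}, and which endpoints of $[b_{m-1},b_m)$ are included (harmless, as all asserted inequalities are non-strict and each $P_\cl(k;\cdot,\bb)$ is continuous).
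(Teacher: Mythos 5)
Your proposal is correct and follows essentially the same route as the paper: (c1)--(c2) by direct calculation, (c3)--(c5) from elementary sigmoid properties (the paper cites these from prior work), (c6) by comparing the symmetric, unimodal middle-category bumps through their distance to the peak, and (c7) by reducing to the adjacent comparisons $P_\cl(1)$ vs.\ $P_\cl(2)$ and $P_\cl(K-1)$ vs.\ $P_\cl(K)$ and solving the same threshold equations that define $\Delta_1,\Delta_2$. If anything your write-up is slightly more complete than the paper's own proof, which argues (c6) only via the midpoints of the middle categories (so does not explicitly treat the end comparisons you handle for $m\in\{1,K\}$) and disposes of $\Delta_2$ with a ``similar calculation'' and an implicit equation, whereas your monotone-crossing argument gives existence and uniqueness of $\Delta_2$ directly (and correctly flags the vacuous boundary case $u=b_1$ for $\Delta_1$).
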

\noindent%
The results \hyl{c1} and \hyl{c2} would be trivial,
and the results \hyl{c3}--\hyl{c5} have been described in \citet[Lemma 1]{yamasaki2022optimal},
while the results \hyl{c6} and \hyl{c7} are novel and important for our subsequent discussion.

\begin{figure}[t]
\renewcommand{\arraystretch}{0.1}\renewcommand{\tabcolsep}{5pt}\centering%
\begin{tabular}{ccc}\hskip-6pt
{\tiny\hyt{d1} $\bb=\bb^{[\Delta]}$, $\Delta=1/3$}&\hskip-6pt
{\tiny\hyt{d2} $\bb=\bb^{[\Delta]}$, $\Delta=1$}&\hskip-6pt
{\tiny\hyt{d3} $\bb=\bb^{[\Delta]}$, $\Delta=3$}\\\hskip-6pt
\includegraphics[width=4.8cm, bb=0 0 946 442]{./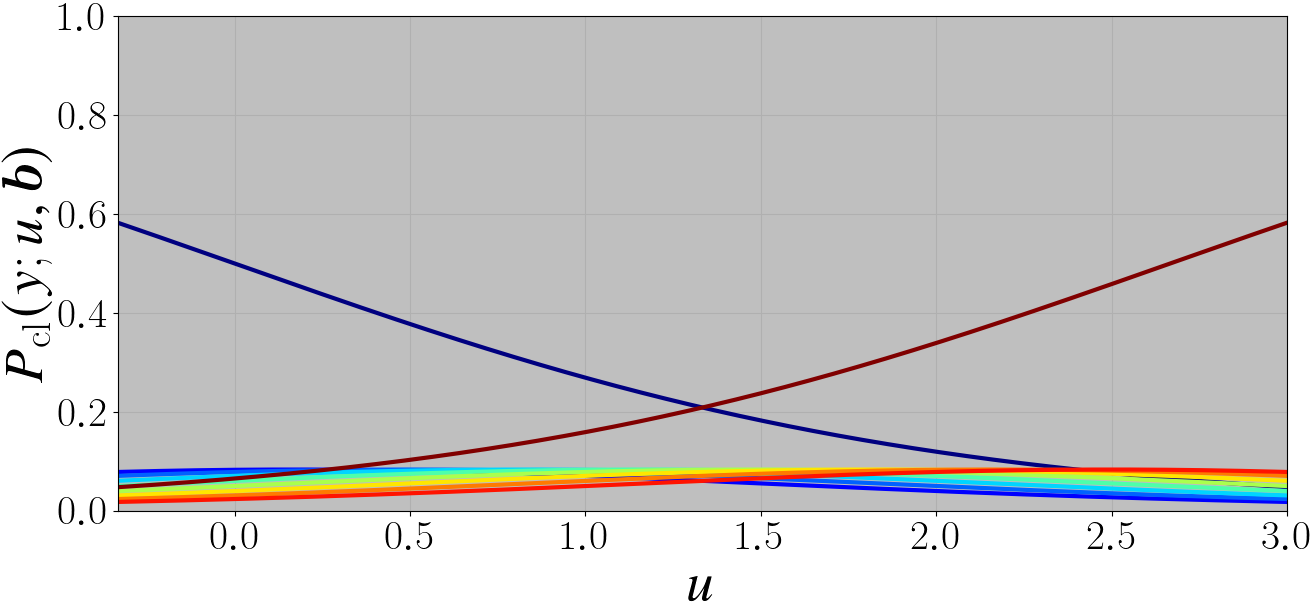}&\hskip-6pt
\includegraphics[width=4.8cm, bb=0 0 946 442]{./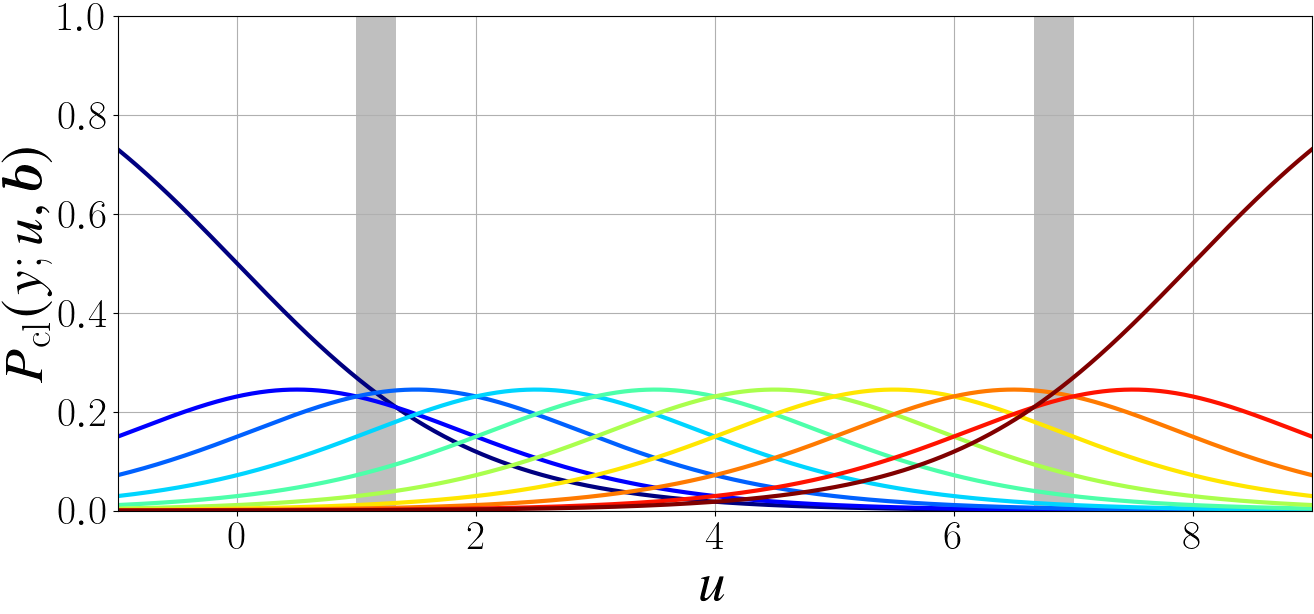}&\hskip-6pt
\includegraphics[width=4.8cm, bb=0 0 946 442]{./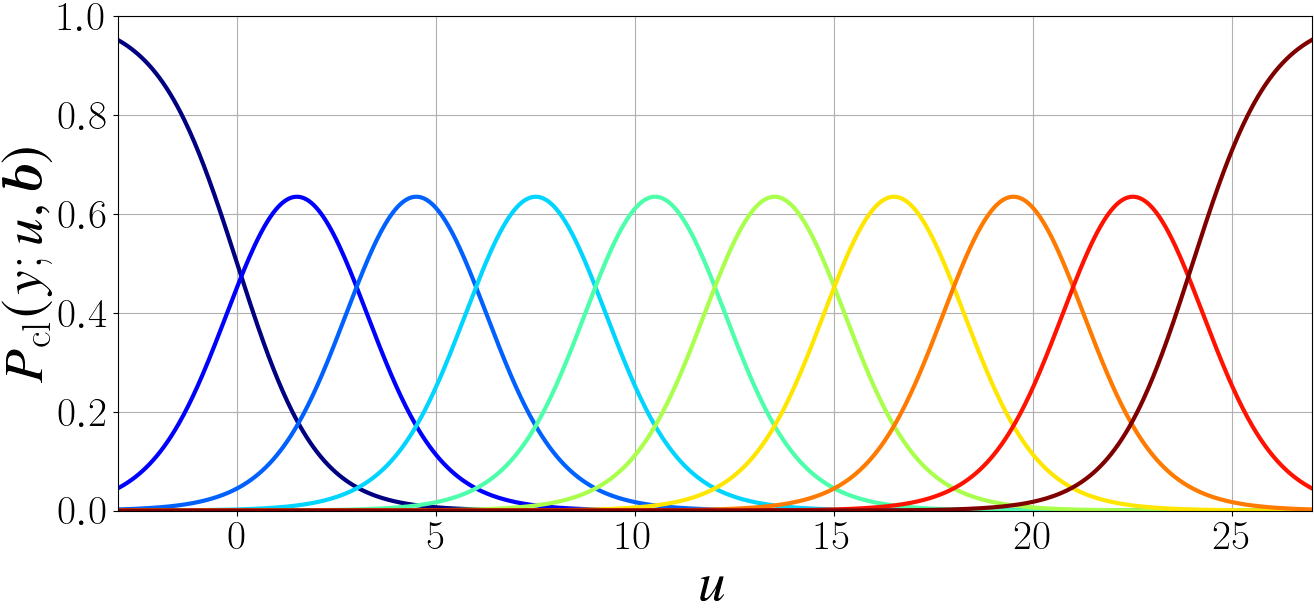}\\\hskip-6pt
{\tiny\hyt{e1} $\bb=\acute{\bb}^{[\Delta]}$, $\Delta=1/3$}&\hskip-6pt
{\tiny\hyt{e2} $\bb=\acute{\bb}^{[\Delta]}$, $\Delta=1$}&\hskip-6pt
{\tiny\hyt{e3} $\bb=\acute{\bb}^{[\Delta]}$, $\Delta=3$}\\\hskip-6pt
\includegraphics[width=4.8cm, bb=0 0 946 442]{./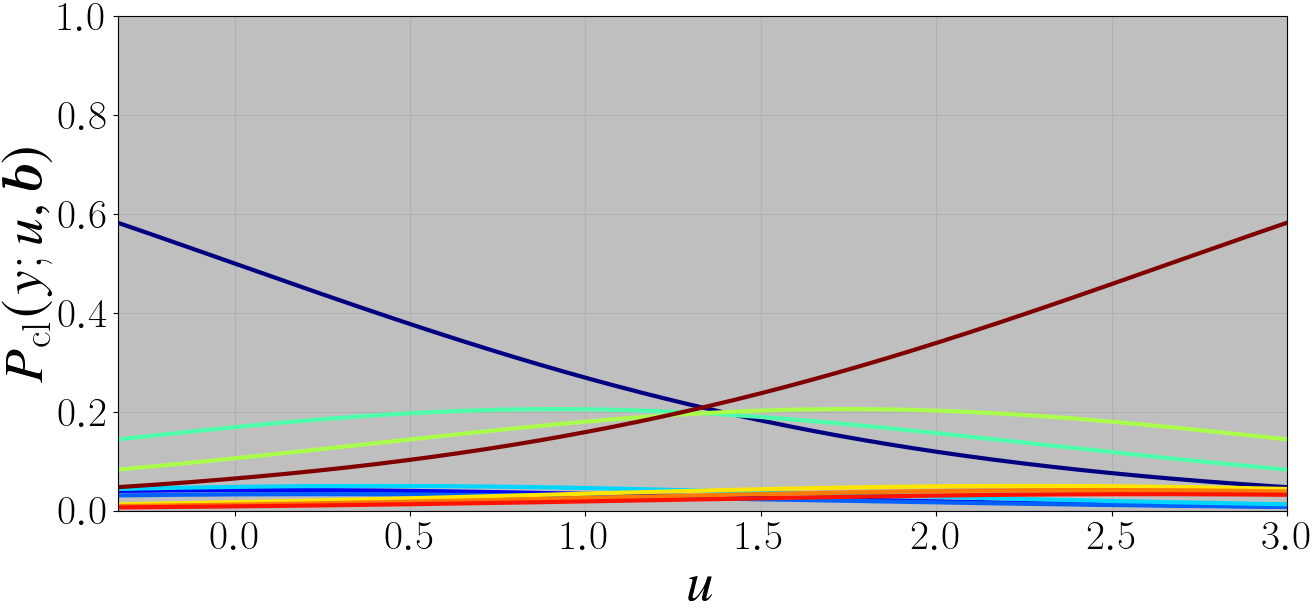}&\hskip-6pt
\includegraphics[width=4.8cm, bb=0 0 946 442]{./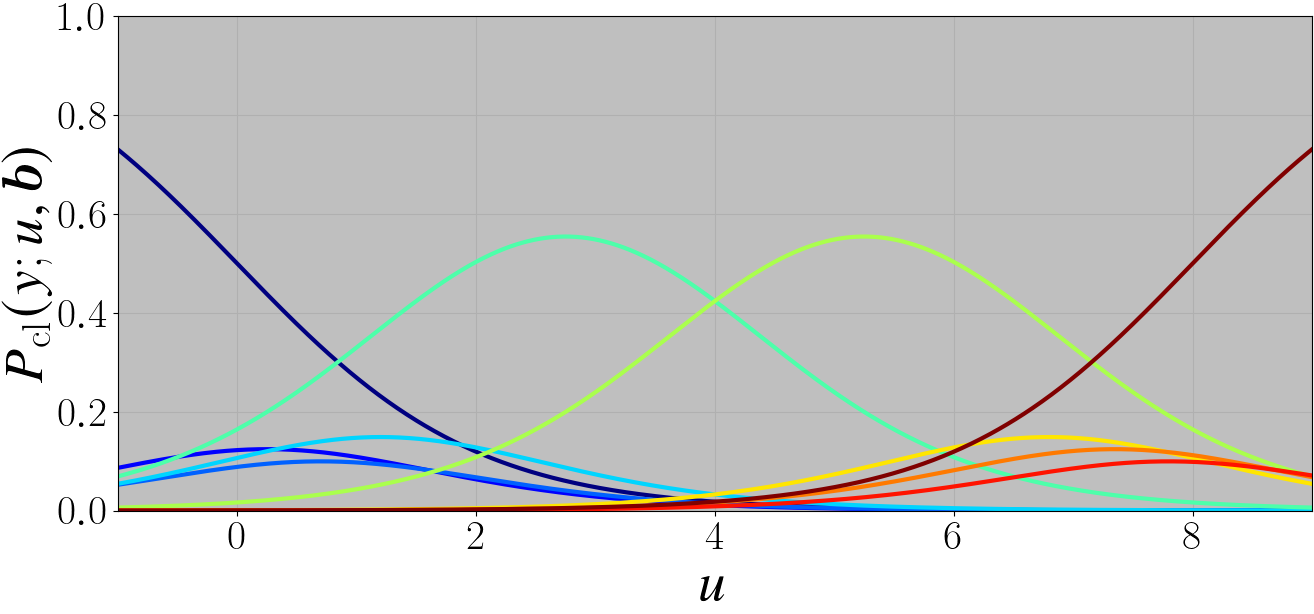}&\hskip-6pt
\includegraphics[width=4.8cm, bb=0 0 946 442]{./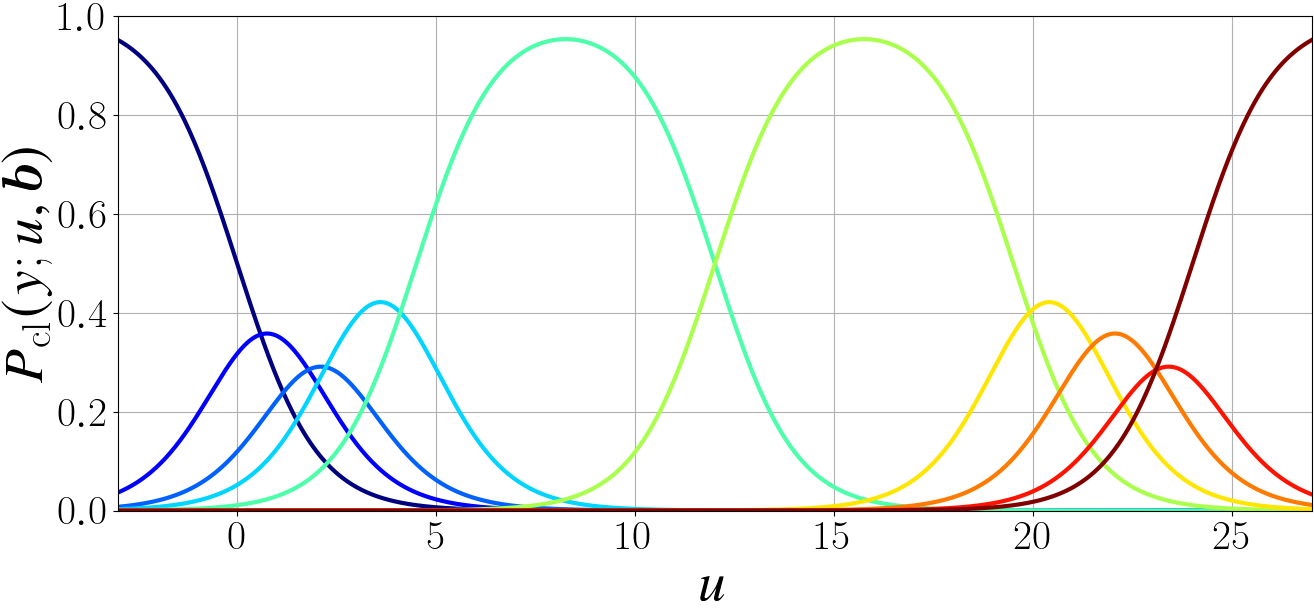}
\end{tabular}\\\hskip-6pt
\includegraphics[width=10cm, bb=0 0 1584 58]{./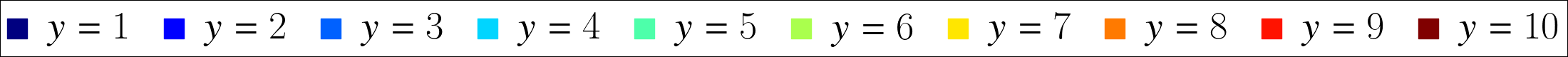}
\caption{%
Instances of the CL model of $K=10$.
Figures show $P_\cl(y;u,\bb)$ for $y=1,\ldots,10$, $u\in[-\Delta,9\Delta]$, 
$\bb=\bb^{[\Delta]}$ (plates~\protect\hyl{d1}--\protect\hyl{d3}), 
$\acute{\bb}^{[\Delta]}$ (plates~\protect\hyl{e1}--\protect\hyl{e3}), 
and $\Delta=1/3,1,3$.
At $u$ in the region where the background color is white or gray, 
the PMF $(P_\cl(y;u,\bb))_{y\in[10]}$ is unimodal or not.}
\label{fig:CL}
\end{figure}

Figure~\ref{fig:CL} shows instances of the CL model 
$(P_\cl(y;u,\bb))_{y\in[K]}$ of $K=10$ with the ordered equal-interval bias parameter vector 
$\bb=\bb^{[\Delta]}=\Delta\cdot(0,1,\ldots,8)^\top$ and ordered unequal-interval bias parameter vector 
$\bb=\acute{\bb}^{[\Delta]}:=\Delta\cdot(0,0.5,0.9,1.5,4,6.5,7.1,7.6,8)^\top$ 
for $\Delta=1/3,1,3$.
As Theorem~\ref{thm:POCL-shape} \hyl{c6} and \hyl{c7} show,
the CL model with the ordered equal-interval bias parameter vector $\bb=\bb^{[\Delta]}$
is unimodal when the gap $\Delta$ of adjacent bias parameters is large;
see Figure~\ref{fig:CL} \hyl{d1}--\hyl{d3}.
Therefore, one can find that the CL model is well suited 
for representing homoscedastic unimodal data 
(especially, data having a CPD in which the scale tends to be small overall).
The CL model with the ordered unequal-interval bias parameter vector $\bb=\acute{\bb}^{[\Delta]}$
shown in Figure~\ref{fig:CL} \hyl{e1}--\hyl{e3} are mode-wise 
heteroscedastic, and the CPD with $\Delta=3$ is unimodal overall.
This result suggests that the CL model can represent 
mode-wise heteroscedastic unimodal data as well.
On the other hand, it should be noted that the CL model 
cannot represent overall heteroscedastic data, for example, 
data that follow the large-scale CPD in Figure~\ref{fig:CL} \hyl{d2} in a certain domain 
and the small-scale CPD in Figure~\ref{fig:CL} \hyl{d3} in another domain.

\subsubsection{Analysis for a Special Instance: Squared-AT Loss}
\label{sec:AT-SQloss}
The Absolute- and Squared-AT losses adopt a non-monotonic function $\varphi$, 
in which the shape is not similar to those we analyzed in Section~\ref{sec:AT-LEloss}.
Therefore, analogies from behaviors of the surrogate risk minimizer 
with the Logistic- and Exponential-AT losses may not be convincing 
in understanding those with the Absolute- and Squared-AT losses.
We thus studied behaviors of the surrogate risk minimizer 
with the Squared-AT loss and the corresponding classifier, 
and obtained the following result
(see Section~\ref{sec:PL-Loss} for results for the Absolute-AT loss):
\begin{theorem}
\label{thm:ATSQ-Consistency}
Let $\phi$ be the Squared-AT loss ($\phi=\phi_\squaat$), 
and $\calA\times\calB\subseteq\{a:\bbR^d\to\bbR\}\times\calB_0^\ord$ 
include $(\tilde{a},\tilde{\bb})$ that satisfies, 
with $c_1:= \Pr(Y\le 1)$ and $c_2:=1+\frac{2}{K-1}(\sum_{y=1}^{K-1}\Pr(Y\le y)-K)-c_1$,
\begin{align}
\label{eq:ATSQSRM}
    \tilde{a}(\bx)
    =c_2+\frac{2}{K-1}\sum_{y=1}^{K-1}y\Pr(Y=y|\bX=\bx),\quad
    \tilde{b}_y
    =\Pr(Y\le y)-c_1
\end{align}
for any $\bx$ in the support of the probability distribution of $\bX$ and every $y\in[K]$.
Then, $(\bar{a},\bar{\bb})$ defined by \eqref{eq:Idolab} satisfies 
$(\bar{a}(\bX),\bar{\bb})=(\tilde{a}(\bX),\tilde{\bb})$ almost surely.
Also, $\bar{f}(\cdot)=h_\thr(\bar{a}(\cdot);\bar{\bt})$ 
with $\bar{\bt}$ defined by
\begin{align}
    \bar{t}_k=c_2+\frac{2}{K-1}(k+0.5)\text{ for }k\in[K-1]
\end{align}
satisfies $\bbE[\ell(\bar{f}(\bX),Y)]=\min_{f:\bbR^d\to[K]}\bbE[\ell(f(\bX),Y)]$,
if $\ell=\ell_\sq$.
\end{theorem}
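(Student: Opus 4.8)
The plan is to (i) solve the first-order conditions of the convex quadratic surrogate risk to obtain the minimizer $(\bar a,\bar\bb)$ in closed form, and then (ii) check that thresholding this $\bar a$ at the stated $\bar\bt$ reproduces the Bayes classifier for $\ell_\sq$. For (i): each $\phi_\squaat(a(\bx),\bb,y)$ is a sum of squares of affine functions of $(a(\bx),\bb)$, so $\bbE[\phi_\squaat(a(\bX),\bb,Y)]$ is jointly convex in $(a(\cdot),\bb)$, and since $\calA$ contains the relevant pointwise minimizer, for each fixed $\bb$ the minimization over $a(\cdot)$ decouples across $\bx$. Interchanging the two sums in \eqref{eq:AT}, the term $(1-a(\bx)+b_k)^2$ (from $\varphi_\squa(a(\bx)-b_k)$) carries weight $\Pr(Y\ge k+1\mid\bx)$ and $(1-b_k+a(\bx))^2$ (from $\varphi_\squa(b_k-a(\bx))$) carries weight $q_k(\bx):=\Pr(Y\le k\mid\bx)$, so the conditional surrogate risk at $\bx$ equals $\sum_{k=1}^{K-1}\bigl[(1-q_k(\bx))(1-a+b_k)^2+q_k(\bx)(1-b_k+a)^2\bigr]$; differentiating, stationarity in $a(\bx)$ gives $\sum_{k=1}^{K-1}\bigl(2q_k(\bx)-1+a(\bx)-b_k\bigr)=0$ and stationarity in $b_k$ gives $\bbE_{\bX}\bigl[1-a(\bX)+b_k-2q_k(\bX)\bigr]=0$.

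Solving this coupled linear system is the core computation. Using $\sum_{k=1}^{K-1}\Pr(Y\le k\mid\bx)=K-\bbE[Y\mid\bx]$ (and its unconditional analogue), the first equation expresses $\bar a(\bx)$ as a strictly increasing affine image of $\bbE[Y\mid\bx]$ shifted by $\tfrac1{K-1}\sum_k b_k$, and the second expresses $b_k$ affinely in $\bbE[a(\bX)]$ and $\Pr(Y\le k)$. The translation invariance of the AT loss makes the Hessian singular precisely along the direction $(a,\bb)\mapsto(a+c,\bb+c\bm{1}_{K-1})$; normalizing by $b_1=0$ removes that freedom and yields the displayed closed forms for $(\tilde a,\tilde\bb)$ with the stated $c_1,c_2$. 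Because $\Pr(Y\le k)$ is non-decreasing in $k$, this (unique up to translation) unconstrained solution already satisfies the order constraints, so it lies in $\calB_0^{\ord}$ and is therefore also the minimizer over the smaller $\calA\times\calB$; hence $(\bar a(\bX),\bar\bb)=(\tilde a(\bX),\tilde\bb)$ almost surely.

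For (ii): the Bayes classifier for $\ell=\ell_\sq$ at $\bx$ is $\argmin_{k\in[K]}\bbE[(k-Y)^2\mid\bx]=\argmin_{k\in[K]}(k-\bbE[Y\mid\bx])^2$, i.e.\ the integer in $[K]$ nearest to $\bbE[Y\mid\bx]\in[1,K]$ (a tie at a half-integer being immaterial for the risk). Writing $\bar a(\bx)=c_2+\tfrac{2}{K-1}\bbE[Y\mid\bx]$, the choice $\bar t_k=c_2+\tfrac{2}{K-1}(k+\tfrac12)$ gives $\bar a(\bx)\ge\bar t_k\iff\bbE[Y\mid\bx]\ge k+\tfrac12$, so $h_\thr(\bar a(\bx);\bar\bt)=1+\sum_{k=1}^{K-1}\bbI(\bbE[Y\mid\bx]\ge k+\tfrac12)$ is exactly the clipped nearest-integer rule, i.e.\ the $\ell_\sq$-Bayes classifier. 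Therefore $\bbE[\ell_\sq(\bar f(\bX),Y)]=\min_f\bbE[\ell_\sq(f(\bX),Y)]$; and since no threshold vector can undercut the Bayes error, the stated $\bar\bt$ is a legitimate minimizer in \eqref{eq:Idolab}.

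The step I expect to be most delicate is the algebra of the normal equations — managing the translation degeneracy and correctly reading off $c_1$ and $c_2$ — together with the endpoint cases $k=1$ and $k=K-1$ in (ii), where one must verify that $h_\thr(\bar a(\cdot);\bar\bt)$, whose range is $[K]$, genuinely realizes the clipping of the nearest-integer rule; this uses $\bbE[Y\mid\bx]\in[1,K]$. Everything else — convexity, the pointwise decoupling over $\bx$, and uniqueness of the minimizer modulo translation — is routine.
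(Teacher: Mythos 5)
Your overall strategy is the same as the paper's for the first claim: write the Squared-AT risk as $\bbE\bigl[\sum_{k=1}^{K-1}\{\Pr(Y\le k|\bX)\varphi_\squa(b_k-a(\bX))+\Pr(Y>k|\bX)\varphi_\squa(a(\bX)-b_k)\}\bigr]$, impose the stationarity conditions in $a(\bx)$ and in $b_k$, use translation invariance plus the normalization $b_1=0$, and invoke convexity for uniqueness; your two normal equations are exactly the paper's \eqref{eq:ATSQA}--\eqref{eq:ATSQB}. Your part (ii) (half-integer thresholds realize the nearest-integer, i.e.\ $\ell_\sq$-Bayes, rule) is a sensible supplement: the paper's written proof stops at identifying $(\bar a,\bar\bb)$ and never argues the threshold/Bayes-optimality claim, so there you attempt more than the paper does.

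The gap is exactly at the step you flag as delicate and then skip: you assert that solving the coupled system under $b_1=0$ ``yields the displayed closed forms with the stated $c_1,c_2$,'' but it does not, and you never check it. Substituting $\tilde b_k=\Pr(Y\le k)-c_1$ into your own $b_k$-equation combined with the expectation of your $a$-equation (i.e.\ into \eqref{eq:ATSQB}) forces $\Pr(Y\le k)=\frac{1}{K-1}\sum_{j=1}^{K-1}\Pr(Y\le j)$ for every $k$, which fails in general; carrying the elimination through gives instead $\bar b_k=2\{\Pr(Y\le k)-c_1\}$ and $\bar a(\bx)=c_2-c_1+\frac{2}{K-1}\bbE[Y|\bX=\bx]$. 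Note also that the theorem's display has $\sum_{y=1}^{K-1}y\Pr(Y=y|\bX=\bx)$, which omits the $y=K$ term and is not $\bbE[Y|\bX=\bx]$; in (ii) you silently replaced it by the full conditional mean. Consequently (ii) as written does not close either: with the actual minimizer, $\bar a(\bx)\ge c_2+\frac{2}{K-1}(k+0.5)$ is equivalent to $\bbE[Y|\bX=\bx]\ge k+0.5+\frac{(K-1)c_1}{2}$, which is the nearest-integer rule only after shifting the thresholds by the same $-c_1$ (or when $\Pr(Y=1)=0$). So the route is right and matches the paper, but the core computation was never performed, and performing it shows the displayed formulas need correction (a factor $2$ in $\tilde b_y$, a constant offset in $\tilde a$, and correspondingly shifted $\bar t_k$); a complete proof must either do this algebra explicitly or identify the normalization under which the displayed forms are exact. (For what it is worth, the paper's own proof shares this soft spot: it derives \eqref{eq:ATSQA}--\eqref{eq:ATSQB} and then merely asserts that \eqref{eq:ATSQSRM} satisfies them.)
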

\noindent%
This result guarantees the optimality of the classifier of the threshold method
independent of the underlying data distribution
(only for the case with $(\phi,\ell)=(\phi_\squaat,\ell_\sq)$).
Even in cases other than $\ell=\ell_\sq$, 
which are not covered by this theorem, 
the expression of the surrogate risk minimizer suggests that 
the threshold method with $\phi=\phi_\squaat$ works well 
when the data follow a unimodal CPD with a small scale:
Consider that, when $\Pr(Y=y|\bX=\bx)\approx1$ if $y=M((\Pr(Y=y|\bX=\bx))_{y\in[K]})$ and $0$ otherwise,
one has $\bar{a}(\bx)\approx c_2+\frac{2}{K-1}M((\Pr(Y=y|\bX=\bx))_{y\in[K]})$
(on the other hand, when $\Pr(Y=y|\bX=\bx)\approx\frac{1}{K}$ 
for all $y\in[K]$, one has $\bar{a}(\bx)\approx c_2+1$).

\subsubsection{Conjecture}
\label{sec:AT-Conj}
Theorem~\ref{thm:AT-Consistency} guarantees 
the optimality of the surrogate risk minimizer and 
of the resulting classifier under several popular tasks 
only for the Logistic-AT and Exponential-AT losses 
and restricted class of the data distribution.
However, we conjecture that the goodness of the surrogate risk minimizer 
and resulting classifier will be maintained to some extent even if 
the data distribution and loss function change their shape somewhat.
Thus, from Theorem~\ref{thm:POCL-shape}, we conjecture that, 
for AT losses based on a non-increasing function $\varphi$
that is similar to the Logistic and Exponential losses and
has no special property (see Section~\ref{sec:PL-Loss}),
a classifier of threshold methods tends to perform well for 
homoscedastic unimodal data and mode-wise heteroscedastic unimodal data.
In contrast, such threshold methods may perform poorly for 
overall heteroscedastic unimodal data and almost non-unimodal data.
Also, Theorem~\ref{thm:ATSQ-Consistency} gives a closed-form expression 
of the surrogate risk minimizer based on $\phi=\phi_\squaat$ 
and guarantees the optimality of the resulting classifier independent 
of the data distribution for the task with $\ell=\ell_\sq$.
According to that expression, we conjecture that the learning procedure 
with an AT loss based on a V-shaped function $\varphi$ like $\varphi_\squa$
(with no special property; see Section~\ref{sec:PL-Loss}) also 
performs well when the data follow a unimodal CPD with a small scale.

\subsection{Immediate-Threshold (IT) Loss}
\label{sec:IT-Loss}
\subsubsection{Analysis for Special Instances: Logistic- and Exponential-IT Losses}
\label{sec:IT-LEloss}
In Section~\ref{sec:IT-Loss}, 
we give analysis of the surrogate risk minimization of 
the threshold method with an IT surrogate loss function,
and conjecture about what data distributions 
the resulting classifier will perform well or poorly for.

The following theorem suggests that 
threshold methods based on the Logistic-IT or Exponential-IT loss 
perform well when the data follow a certain likelihood model, 
known as the (\emph{proportional odds}) \emph{adjacent categories logit} 
(\emph{ACL}) \emph{model} \citep{agresti2010analysis, yamasaki2022unimodal}.
\begin{theorem}
\label{thm:IT-Consistency}
Assume that the random variable $(\bX,Y)$ has 
the conditional probability of $Y=y$ at $\bX=\bx$,
\begin{align}
\label{eq:ACL}
    \Pr(Y=y|\bX=\bx)=P_\acl(y;\tilde{a}(\bx), \tilde{\bb}):=
    \frac{e^{-\sum_{k=1}^{y-1}(\tilde{b}_k-\tilde{a}(\bx))}}{\sum_{l=1}^Ke^{-\sum_{k=1}^{l-1}(\tilde{b}_k-\tilde{a}(\bx))}},
\end{align}
with $\tilde{a}:\bbR^d\to\bbR$ and $\tilde{\bb}\in\calB_0$, 
for any $\bx$ in the support of the probability distribution 
of $\bX$ and every $y\in[K]$.
Let $\phi$ be the Logistic-IT loss ($\phi=\phi_\logiit$) or Exponential-IT loss ($\phi=\phi_\expoit$), 
and $\calA\times\calB\subseteq\{a:\bbR^d\to\bbR\}\times\calB_0$ 
include $(\tilde{a},\tilde{\bb})$ or $(\tilde{a}(\cdot)/2,\tilde{\bb}/2)$ 
respectively when $\phi=\phi_\logiit$ or $\phi=\phi_\expoit$.
Then, it holds that
\begin{itemize}\setlength{\parskip}{0pt}\setlength{\itemindent}{0pt}
\item[\hyt{f1}]
$(\bar{a},\bar{\bb})$ defined by \eqref{eq:Idolab} satisfies 
$(\bar{a}(\bX),\bar{\bb})=(\tilde{a}(\bX),\tilde{\bb})$ or 
$(\bar{a}(\bX),\bar{\bb})=(\tilde{a}(\bX)/2,\tilde{\bb}/2)$ almost surely
respectively when $\phi=\phi_\logiit$ or $\phi=\phi_\expoit$;
\item[\hyt{f2}]
$\bar{f}=h_\thr(\bar{a}(\cdot);\bar{\bt})$ with 
$\bar{a}$ and $\bar{\bt}$ defined by \eqref{eq:Idolab} satisfies
$\bbE[\ell(\bar{f}(\bX),Y)]=\min_{f:\bbR^d\to[K]}\bbE[\ell(f(\bX),Y)]$,
if $\ell=\ell_\zo$;
\item[\hyt{f3}]
$\bbE[\ell(h_\thr(\bar{a}(\bX);\bar{\bt}),Y)]=\bbE[\ell(h_\thr(\bar{a}(\bX);\bar{\bb}),Y)]$
with $(\bar{a},\bar{\bb})$ and $\bar{\bt}$ defined by \eqref{eq:Idolab},
if $\ell=\ell_\zo$ and $\tilde{\bb}\in\calB_0^\ord$.
\end{itemize}
Alternatively let $\calA=\{a:\bbR^d\to\bbR\}$ and $\calB=\calB_0^\ord$.
Then, it holds that
\begin{itemize}\setlength{\parskip}{0pt}\setlength{\itemindent}{0pt}
\item[\hyt{f4}]
$\bar{\bb}$ defined by \eqref{eq:Idolab} has overlapped components
(i.e., $\bar{b}_k=\bar{b}_l$ for some $k,l\in[K-1]$ s.t.\;$k\neq l$),
if $\tilde{\bb}\not\in\calB_0^\ord$.
\end{itemize}
\end{theorem}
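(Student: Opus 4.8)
The plan is to exploit that, with $\calA$ the class of \emph{all} functions $\bbR^d\to\bbR$, the IT surrogate risk decouples threshold by threshold into one-dimensional strictly convex problems, whose joint unconstrained minimizer is forced to be a global translate of $(\tilde a,\tilde\bb)$ (resp.\ $(\tilde a/2,\tilde\bb/2)$); when $\tilde\bb$ is not non-decreasing this minimizer is infeasible for $\calB^\ord_0$, and convexity then pushes the constrained minimizer onto the boundary of the order cone, where two coordinates coincide. Concretely, writing $p_y(\bx):=\Pr(Y=y\mid\bX=\bx)$ and collecting coefficients in \eqref{eq:IT}, one has, for every $\bx$,
\[
\bbE[\phi(a(\bX),\bb,Y)\mid\bX=\bx]=\sum_{k=1}^{K-1}\bigl(p_k(\bx)\,\varphi(b_k-a(\bx))+p_{k+1}(\bx)\,\varphi(a(\bx)-b_k)\bigr),
\]
so $F(a,\bb):=\bbE[\phi(a(\bX),\bb,Y)]$ is a sum over $k$ of terms each convex in $(a,\bb)$. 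For $\varphi\in\{\varphi_\logi,\varphi_\expo\}$ each map $v\mapsto p_k(\bx)\varphi(v)+p_{k+1}(\bx)\varphi(-v)$ is strictly convex (using $p_k(\bx),p_{k+1}(\bx)>0$, which holds for the ACL model) with unique minimizer $v=\log(p_k(\bx)/p_{k+1}(\bx))=\tilde b_k-\tilde a(\bx)$ for $\varphi_\logi$ and $v=\tfrac12(\tilde b_k-\tilde a(\bx))$ for $\varphi_\expo$, by the ACL identity $p_k(\bx)/p_{k+1}(\bx)=e^{\tilde b_k-\tilde a(\bx)}$.

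Bounding each summand of $F$ from below by its pointwise minimum in $v=b_k-a(\bx)$ — a bound attained precisely when $b_k-a(\bx)$ equals the pointwise minimizer above for every $k$ and almost every $\bx$ — I would conclude, which is essentially \hyl{f1} together with the translation invariance of $\phi$, that the minimizers of $F$ over the \emph{fully unconstrained} set $\calA\times\bbR^{K-1}$ are exactly the pairs $(\tilde a+c,\tilde\bb+c\bm{1}_{K-1})$, $c\in\bbR$ (a.s.; resp.\ $(\tilde a/2+c,\tilde\bb/2+c\bm{1}_{K-1})$). The key point for what follows is that, since $\tilde\bb$ is assumed not non-decreasing, neither $\tilde\bb+c\bm{1}_{K-1}$ nor $\tilde\bb/2+c\bm{1}_{K-1}$ is non-decreasing either; hence no unconstrained minimizer of $F$ has ordered bias coordinates.

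Now argue by contradiction: suppose $\bar\bb\in\calB^\ord_0$ has all coordinates distinct, i.e.\ $\bar b_1<\cdots<\bar b_{K-1}$. Let $C:=\{\bb\in\bbR^{K-1}\mid b_1\le\cdots\le b_{K-1}\}$. By translation invariance of $\phi$ and closure of $\calA$ under adding constants, $(\bar a,\bar\bb)$ in fact minimizes $F$ over $\calA\times C$ (any $(a,\bb)\in\calA\times C$ shifts to $(a-b_1,\bb-b_1\bm{1}_{K-1})\in\calA\times\calB^\ord_0$ at the same $F$-value). Strict orderedness makes $\bar\bb$ an interior point of the cone $C$, so for small $t>0$ the point $z_t:=\bigl((1-t)\bar a+t\tilde a,\ (1-t)\bar\bb+t\tilde\bb\bigr)$ lies in $\calA\times C$; optimality gives $F(z_t)\ge F(\bar a,\bar\bb)$, while convexity of $F$ along the segment gives $F(z_t)\le(1-t)F(\bar a,\bar\bb)+tF(\tilde a,\tilde\bb)$, and combining these yields $F(\bar a,\bar\bb)\le F(\tilde a,\tilde\bb)$. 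Since $(\tilde a,\tilde\bb)$ attains the unconstrained minimum, $(\bar a,\bar\bb)$ is then itself an unconstrained minimizer of $F$, hence (previous paragraph) $\bar\bb$ is a global translate of $\tilde\bb$ and is therefore not non-decreasing — contradicting $\bar\bb\in C$. So $\bar\bb$ cannot be strictly ordered; being in $\calB^\ord_0$ it must satisfy $\bar b_k=\bar b_{k+1}$ for some $k\in[K-2]$, which is the asserted overlap. The Exponential-IT case is handled verbatim with $(\tilde a,\tilde\bb)$ replaced by $(\tilde a/2,\tilde\bb/2)$.

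The main obstacle I anticipate is careful book-keeping of the two symmetries at play — the translation invariance $(a,\bb)\mapsto(a+c,\bb+c\bm{1}_{K-1})$ and the normalization $b_1=0$ built into $\calB^\ord_0$ — so that ``interior point of the order cone $C$'' is invoked correctly; the normalization is harmless exactly because $\calA$ is translation-closed, but this has to be spelled out. A secondary care point is that $F$ has an infinite-dimensional domain in the variable $a$; I avoid infinite-dimensional convex analysis entirely by only ever using convexity of $F$ restricted to a single line segment, which is elementary from pointwise convexity of each IT-loss summand. One should also note the degenerate possibility that the constrained infimum is not attained, in which case there is no $(\bar a,\bar\bb)$ and the claim is vacuous; otherwise the argument above applies.
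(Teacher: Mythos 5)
Your treatment of \hyl{f1} is correct and is essentially the paper's own argument: conditioning on $\bX=\bx$, each IT summand $v\mapsto \Pr(Y=k|\bX=\bx)\varphi(v)+\Pr(Y=k+1|\bX=\bx)\varphi(-v)$ is strictly convex with unique minimizer $\tilde b_k-\tilde a(\bx)$ (resp.\ $(\tilde b_k-\tilde a(\bx))/2$) by the ratio identity of the ACL model, and the normalization $b_1=0$ in $\calB_0$ fixes the translation freedom, so the minimizer over $\calA\times\calB$ coincides with $(\tilde a,\tilde\bb)$ (resp.\ the halved pair) almost surely. Your argument for \hyl{f4} is also sound, and it is in fact more explicit than the paper's proof, which only invokes the KKT conditions of the constrained problem: reducing $\calB_0^\ord$ to the order cone via translation invariance, perturbing a strictly ordered $\bar\bb$ toward $(\tilde a,\tilde\bb)$ inside the cone, and using convexity to conclude that $(\bar a,\bar\bb)$ would have to be an unconstrained minimizer (hence have non-ordered biases) is a legitimate elementary alternative; the only minor housekeeping left implicit is the finiteness of the risk values entering the inequality chain, which is easily checked for $\varphi_\logi$ and $\varphi_\expo$ under the ACL model.

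There is, however, a genuine gap: you prove only two of the theorem's four claims. Statements \hyl{f2} and \hyl{f3} are never addressed, and they do not follow from \hyl{f1} without additional structural facts about the ACL model. For \hyl{f2} one must show that the Bayes classifier for $\ell_\zo$ is realizable by thresholding $\bar a$, i.e.\ that $\argmax_{k\in[K]}P_\acl(k;u,\bar\bb)$ is non-decreasing in $u$; the paper obtains this from the ratio identity $P_\acl(k+1;u,\bb)/P_\acl(k;u,\bb)=e^{-(b_k-u)}$ (Theorem~\ref{thm:POACL-shape} \hyl{g1}), which excludes any pair $l<m$ whose ranking flips the wrong way as $u$ increases, and in the exponential case one additionally needs that replacing $(\tilde a,\tilde\bb)$ by $(\tilde a/2,\tilde\bb/2)$ leaves the conditional mode unchanged (the likelihood ratios are merely square-rooted). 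For \hyl{f3} one needs unimodality of the ACL PMF when $\tilde\bb\in\calB_0^\ord$ (Theorem~\ref{thm:POACL-shape} \hyl{g2}), which implies that thresholding $\bar a$ at $\bar\bb$ already outputs the conditional mode and hence attains the same zero-one risk as the optimal thresholds $\bar\bt$. Without these two arguments the theorem is only half proved.
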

\noindent%
To our knowledge, this type of result is the first for IT losses.

For the ACL model \eqref{eq:ACL}, we found the following properties.
\begin{theorem}
\label{thm:POACL-shape}
It holds that, with $b_0:=-\infty$ and $b_K:=+\infty$,
\begin{itemize}\setlength{\parskip}{0pt}\setlength{\itemindent}{0pt}
\item[\hyt{g1}]
for each $k\in[K-1]$,
$P_\acl(k+1;u,\bb)/P_\acl(k;u,\bb)
=(\frac{1}{1+e^{(b_k-u)}})/(\frac{1}{1+e^{-(b_k-u)}})
=e^{-(b_k-u)}$;
\item[\hyt{g2}]
$(P_\acl(y;u,\bb))_{y\in[K]}$ is unimodal at any $u\in\bbR$
(i.e., $P_\acl(1;u,\bb)\le\cdots\le P_\acl(m;u,\bb)$ 
and $P_\acl(m;u,\bb)\ge\cdots\ge P_\acl(K;u,\bb)$
when $b_{m-1}\le u\le b_m$ for $m\in[K]$),
if $\bb\in\calB_0^\ord$;
\item[\hyt{g3}]
$(P_\acl(y;u,\bb))_{y\in[K]}$ is unimodal,
if $\max(\{1\}\cup\{k\in[K-1]\mid b_k\le u\})\le\min(\{k\in[K-1]\mid b_k\ge u\}\cup\{K\})$;
\item[\hyt{g4}]
$(P_\acl(y;u,\bb))_{y\in[K]}$ is not unimodal at any $u\in(b_l,b_k)$,
if $b_k>b_l$ for $k,l\in[K-1]$ s.t.\;$k<l$.
\end{itemize}
\end{theorem}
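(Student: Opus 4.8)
The plan is to reduce every part to the elementary log-linear structure of the ACL model. First I would prove \hyl{g1}: since the denominator in \eqref{eq:ACL} does not depend on $y$, we have $\log P_\acl(y;u,\bb)=\sum_{j=1}^{y-1}(u-b_j)+\mathrm{const}$, so the consecutive increment is $\log P_\acl(k+1;u,\bb)-\log P_\acl(k;u,\bb)=u-b_k$; exponentiating gives $P_\acl(k+1;u,\bb)/P_\acl(k;u,\bb)=e^{-(b_k-u)}$, and clearing $e^{b_k-u}$ shows this equals $\frac{1+e^{-(b_k-u)}}{1+e^{(b_k-u)}}$, which is the other expression displayed in \hyl{g1}. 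This is routine.

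The conceptual core is a single characterization, which I would isolate as a lemma: the PMF $(P_\acl(y;u,\bb))_{y\in[K]}$ — all of whose entries are positive — is unimodal in the sense of Section~\ref{sec:OD} if and only if there is no pair $j<k$ in $[K-1]$ with $b_j>u>b_k$ (call such a pair an \emph{inversion}). Indeed, by \hyl{g1} the sequence $(\log P_\acl(y;u,\bb))_y$ has increments $u-b_1,\dots,u-b_{K-1}$, and such a real sequence, hence the PMF, is a single up-then-down run precisely when its increments are nowhere negative-then-positive. Concretely: if no inversion exists, set $m:=1+\max(\{0\}\cup\{k\in[K-1]\mid b_k<u\})$; then (using the no-inversion property to rule out $b_k>u$ for $1\le k<m$) one gets $u-b_k\ge0$ for $k<m$ and $u-b_k\le0$ for $k\ge m$, so $P_\acl(1;u,\bb)\le\cdots\le P_\acl(m;u,\bb)\ge\cdots\ge P_\acl(K;u,\bb)$; once one such valley-free chain is available, the modes form an interval on which the entries are constant, so the ``for every mode'' formulation of Section~\ref{sec:OD} holds automatically. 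Conversely, an inversion $j<k$ makes $P_\acl$ strictly decrease at step $j$ and strictly increase at step $k>j$, and then no index can serve as a mode.

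With this lemma the remaining items are short. For \hyl{g2}, when $b_1\le\cdots\le b_{K-1}$ and $b_{m-1}\le u\le b_m$ (using $b_0=-\infty$, $b_K=+\infty$), we get $u-b_k\ge0$ for $k\le m-1$ and $u-b_k\le0$ for $k\ge m$, which is exactly the claimed chain with mode $m$; in particular there is no inversion. For \hyl{g3}, suppose an inversion $j<k$ with $b_j>u>b_k$ existed: then $b_k\le u$ forces $k\le\max(\{1\}\cup\{k'\in[K-1]\mid b_{k'}\le u\})$ while $b_j\ge u$ forces $j\ge\min(\{k'\in[K-1]\mid b_{k'}\ge u\}\cup\{K\})$, so the hypothesis of \hyl{g3} yields $j\ge k$, contradicting $j<k$; hence no inversion, and the lemma applies. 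For \hyl{g4}, the hypothesis supplies an inversion directly — take the smaller index $j:=k$ (with $b_j=b_k>u$) and the larger index $k':=l$ (with $b_{k'}=b_l<u$), so $j<k'$ and $b_j>u>b_{k'}$ — and the lemma gives non-unimodality at every $u\in(b_l,b_k)$.

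I expect no real obstacle here. The points that need care are: the boundary conventions $b_0=-\infty$, $b_K=+\infty$ in \hyl{g2} (and the corresponding vacuous cases $m=1,K$); correctly handling ties so that ``up-then-down run'' matches the Section~\ref{sec:OD} definition quantified over all modes; and stating the characterization lemma cleanly, since \hyl{g2}--\hyl{g4} are then immediate corollaries of it.
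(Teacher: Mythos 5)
Your proposal is correct and rests on the same core argument as the paper's proof: use the ratio identity in \hyl{g1} so that the sign pattern of the increments $u-b_k$ dictates the monotone runs of $(P_\acl(y;u,\bb))_y$, giving \hyl{g2}--\hyl{g3} when no index pair has $b_j>u>b_k$ with $j<k$ and \hyl{g4} when such a pair exists. Your only departure is organizational --- you package this as a single ``no inversion'' iff-characterization (and, usefully, spell out the tie/``for every mode'' bookkeeping and handle \hyl{g3} by contraposition where the paper does a direct case analysis on $l_1,l_2$) --- but the substance matches the paper's proof.
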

\noindent%
The results \hyl{g1} and \hyl{g2} are shown in 
\citet[Theorem 3]{yamasaki2022unimodal},
while the results \hyl{g3} and \hyl{g4} are novel findings.

\begin{figure}[t]
\renewcommand{\arraystretch}{0.1}\renewcommand{\tabcolsep}{5pt}\centering%
\begin{tabular}{ccc}\hskip-6pt
{\tiny\hyt{h1} $\bb=\bb^{[\Delta]}$, $\Delta=1/3$}&\hskip-6pt
{\tiny\hyt{h2} $\bb=\bb^{[\Delta]}$, $\Delta=1$}&\hskip-6pt
{\tiny\hyt{h3} $\bb=\bb^{[\Delta]}$, $\Delta=3$}\\\hskip-6pt
\includegraphics[width=4.8cm, bb=0 0 946 442]{./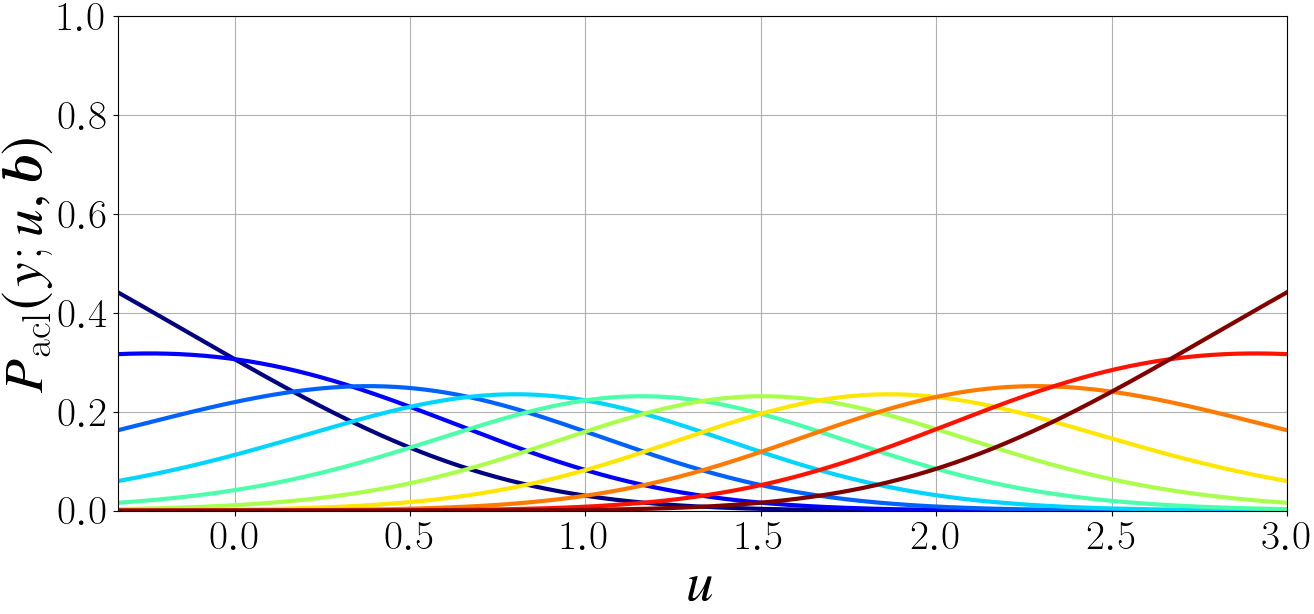}&\hskip-6pt
\includegraphics[width=4.8cm, bb=0 0 946 442]{./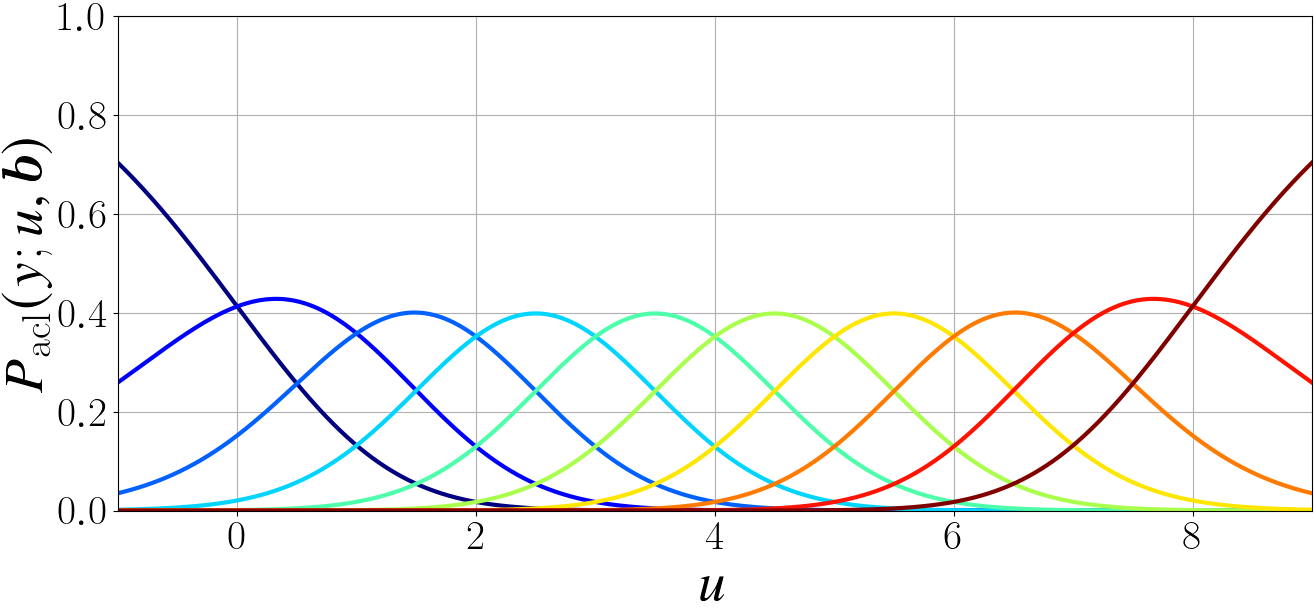}&\hskip-6pt
\includegraphics[width=4.8cm, bb=0 0 946 442]{./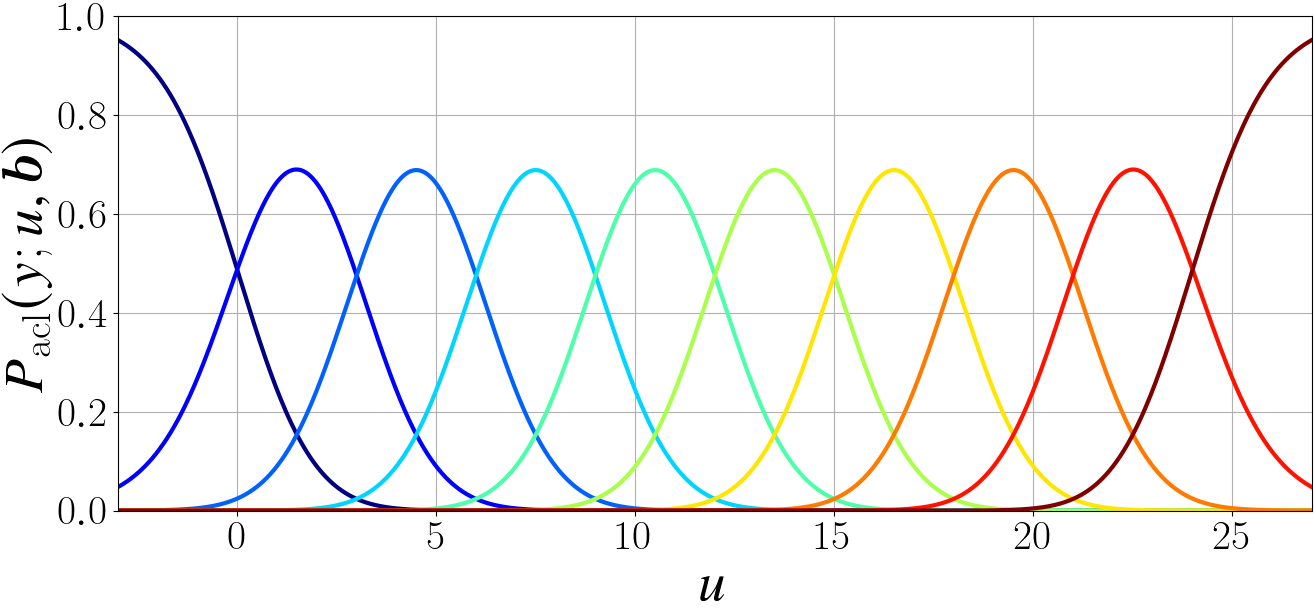}\\\hskip-6pt
{\tiny\hyt{i1} $\bb=\acute{\bb}^{[\Delta]}$, $\Delta=1/3$}&\hskip-6pt
{\tiny\hyt{i2} $\bb=\acute{\bb}^{[\Delta]}$, $\Delta=1$}&\hskip-6pt
{\tiny\hyt{i3} $\bb=\acute{\bb}^{[\Delta]}$, $\Delta=3$}\\\hskip-6pt
\includegraphics[width=4.8cm, bb=0 0 946 442]{./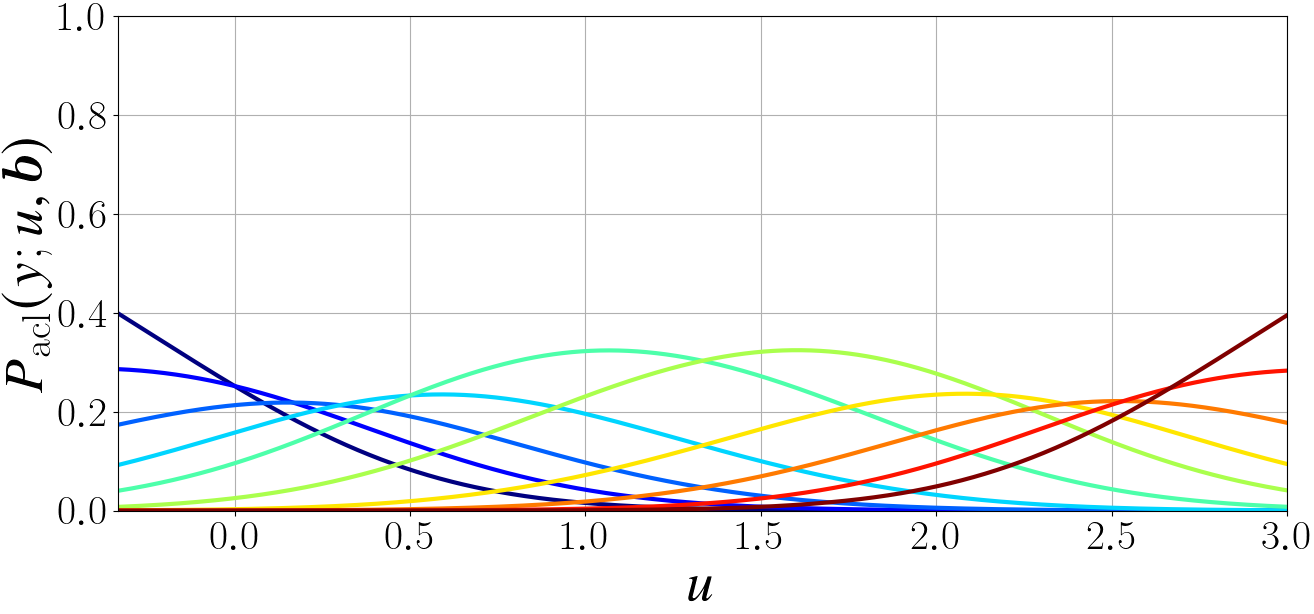}&\hskip-6pt
\includegraphics[width=4.8cm, bb=0 0 946 442]{./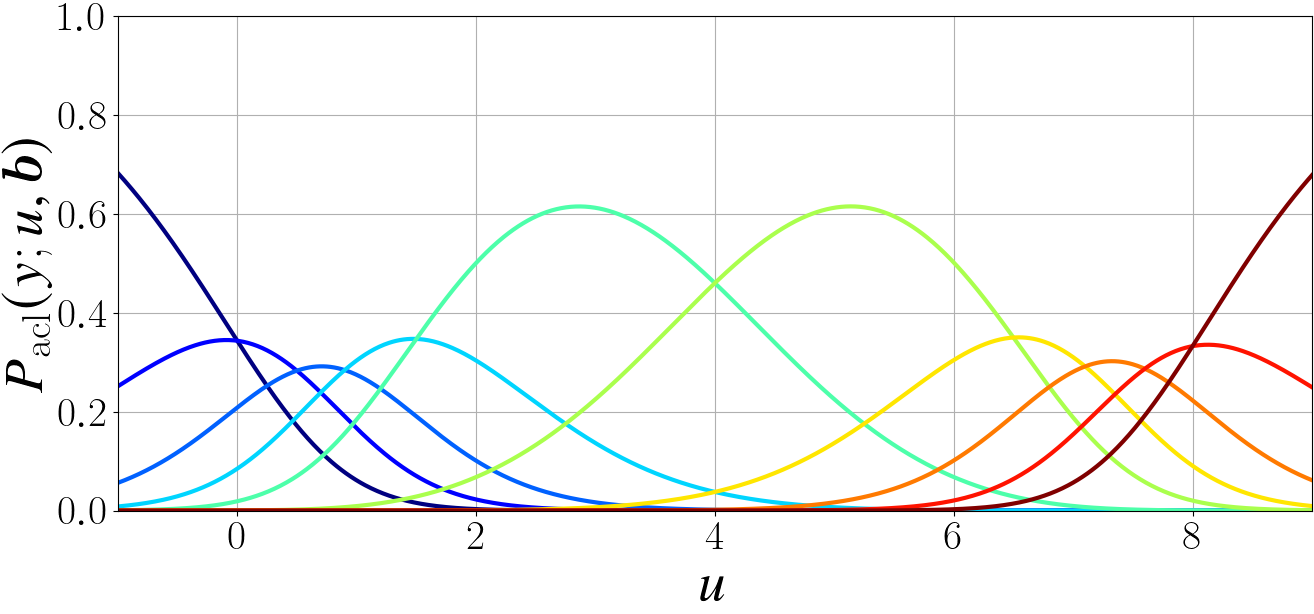}&\hskip-6pt
\includegraphics[width=4.8cm, bb=0 0 946 442]{./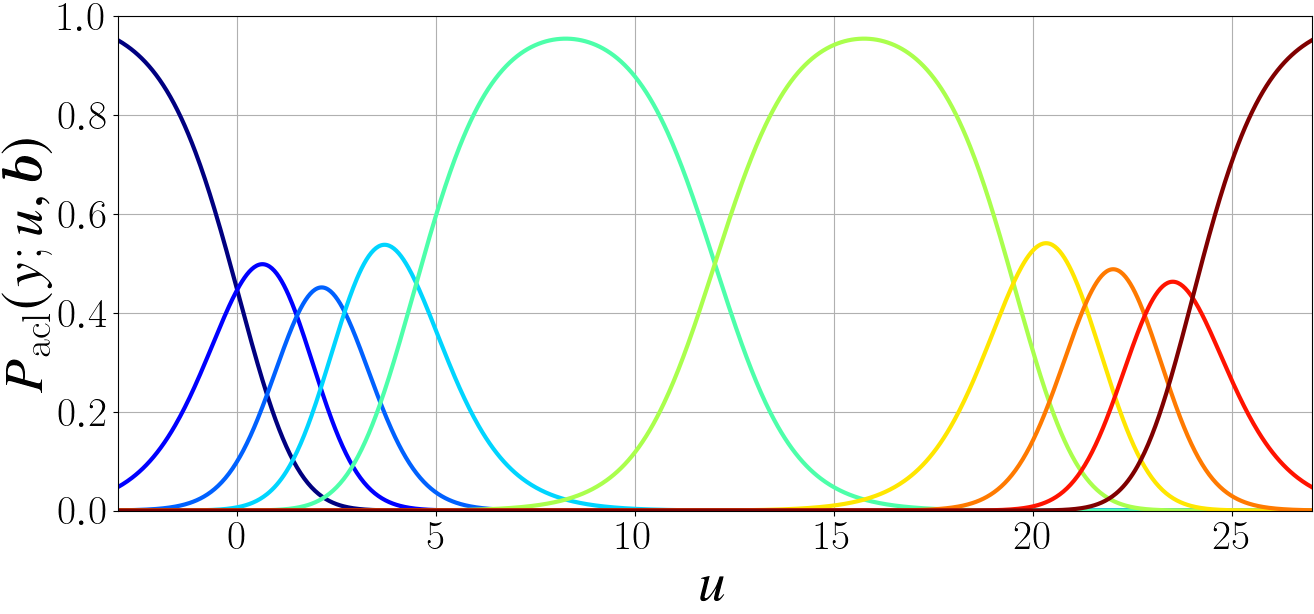}\\\hskip-6pt
{\tiny\hyt{j1} $\bb=\grave{\bb}^{[\Delta]}$, $\Delta=1/3$}&\hskip-6pt
{\tiny\hyt{j2} $\bb=\grave{\bb}^{[\Delta]}$, $\Delta=1$}&\hskip-6pt
{\tiny\hyt{j3} $\bb=\grave{\bb}^{[\Delta]}$, $\Delta=3$}\\\hskip-6pt
\includegraphics[width=4.8cm, bb=0 0 946 442]{./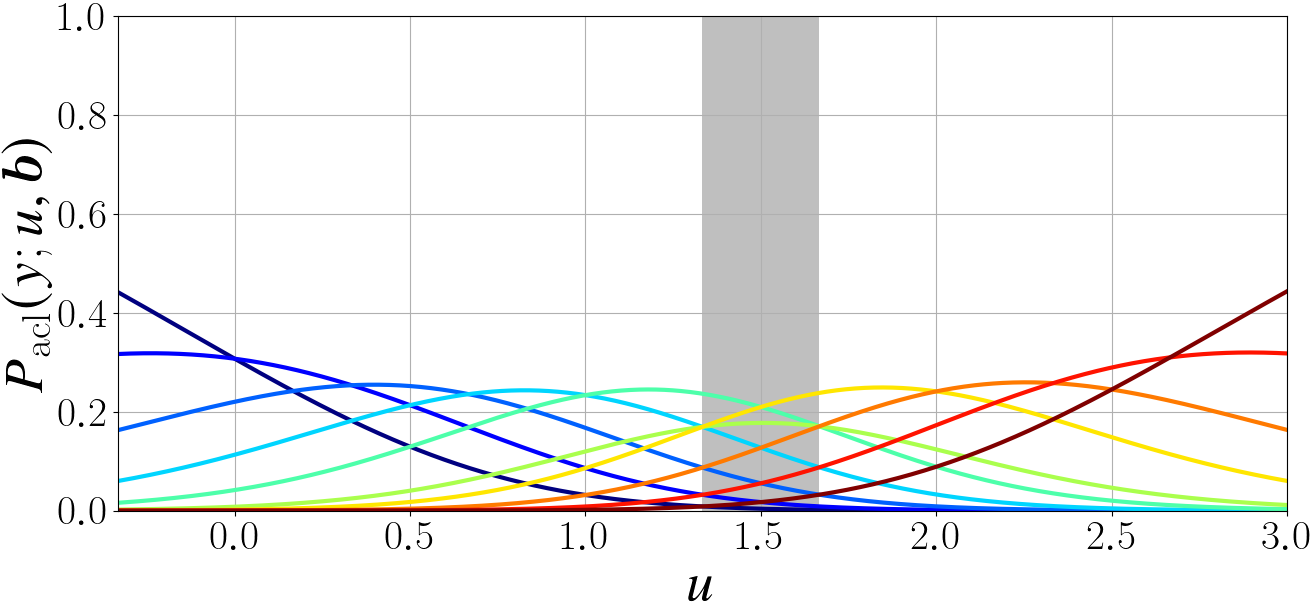}&\hskip-6pt
\includegraphics[width=4.8cm, bb=0 0 946 442]{./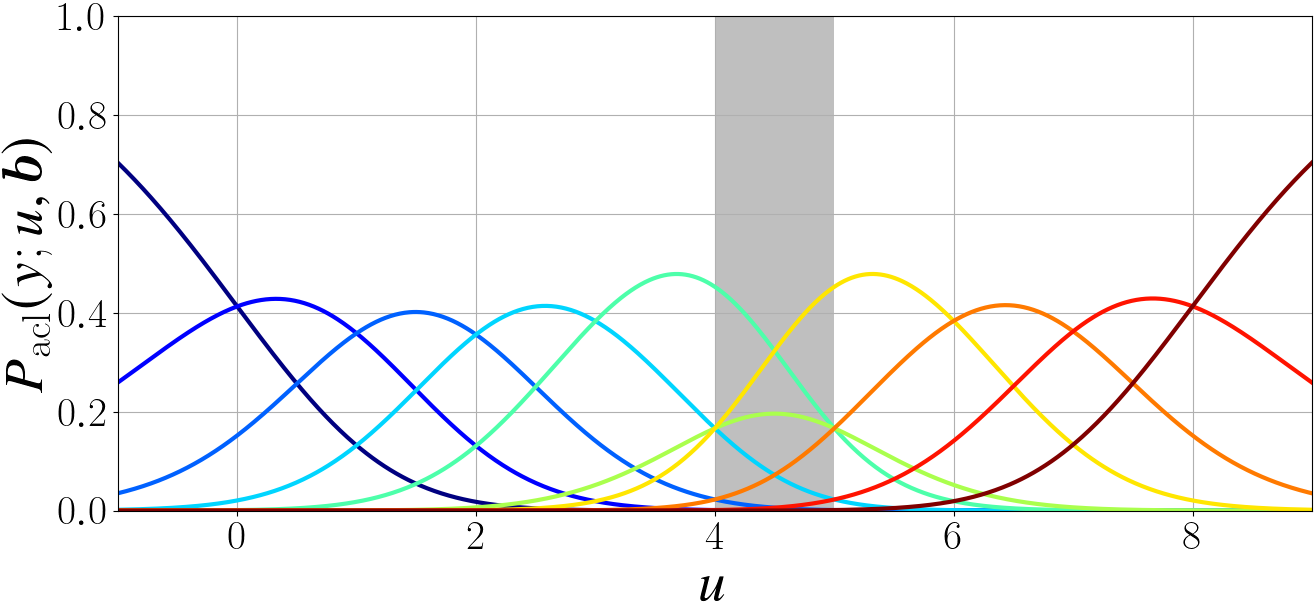}&\hskip-6pt
\includegraphics[width=4.8cm, bb=0 0 946 442]{./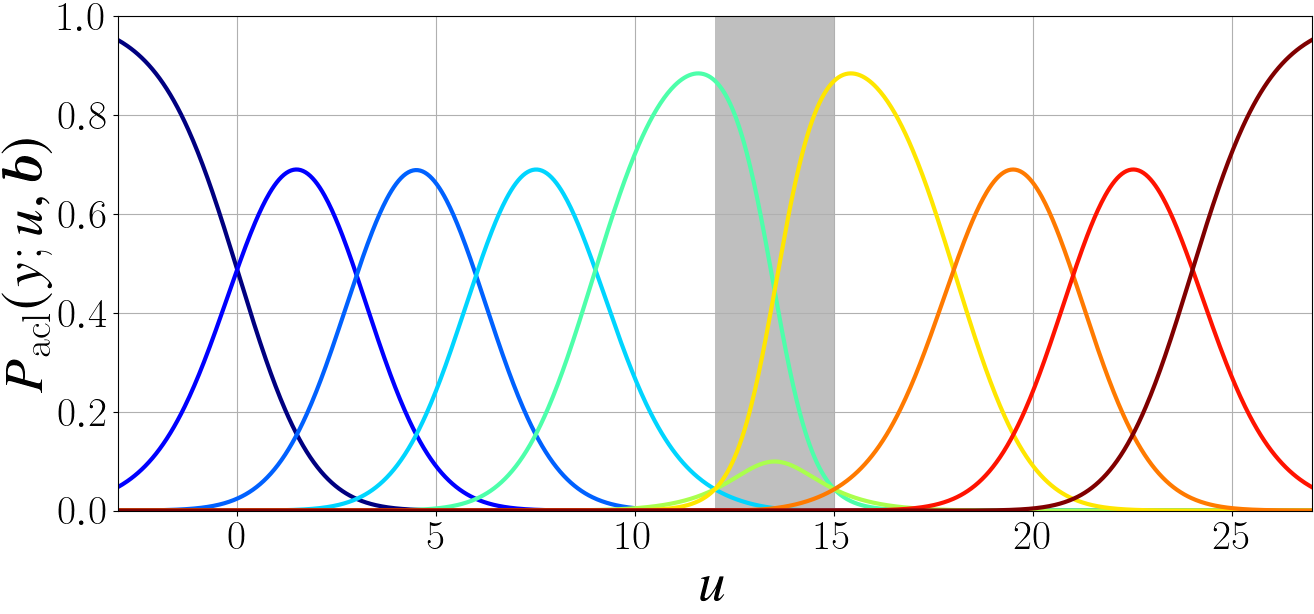}\\\hskip-6pt
\end{tabular}\\\hskip-6pt
\includegraphics[width=10cm, bb=0 0 1584 58]{./image/legend3.png}
\caption{%
Instances of the ACL model of $K=10$.
Figures show $P_\acl(y;u,\bb)$ for $y=1,\ldots,10$, $u\in[-\Delta,9\Delta]$, 
$\bb=\bb^{[\Delta]}$ (plates~\protect\hyl{h1}--\protect\hyl{h3}), 
$\acute{\bb}^{[\Delta]}$ (plates~\protect\hyl{i1}--\protect\hyl{i3}), 
$\grave{\bb}^{[\Delta]}$ (plates~\protect\hyl{j1}--\protect\hyl{j3}), 
and $\Delta=1/3,1,3$.
At $u$ in the region where the background color is white or gray, 
the PMF $(P_\acl(y;u,\bb))_{y\in[10]}$ is unimodal or not.}
\label{fig:ACL}
\end{figure}

Figure~\ref{fig:ACL} shows instances of the ACL model 
$(P_\acl(y;u,\bb))_{y\in[K]}$ of $K=10$ with the ordered equal-interval bias parameter vector 
$\bb=\bb^{[\Delta]}$, ordered unequal-interval bias parameter vector
$\bb=\acute{\bb}^{[\Delta]}$, and non-ordered bias parameter vector
$\bb=\grave{\bb}^{[\Delta]}:=\Delta\cdot(0,1,2,3,5,4,6,7,8)^\top$
for $\Delta=1/3,1,3$.
Theorem~\ref{thm:POACL-shape} \hyl{g2}--\hyl{g4} 
can be confirmed from Figure~\ref{fig:ACL}:
the ACL model $(P_\acl(y;u,\bb))_{y\in[10]}$ is unimodal for 
the ordered bias parameter vectors $\bb=\bb^{[\Delta]}, \acute{\bb}^{[\Delta]}$,
and partly non-unimodal for the non-ordered bias parameter vector $\bb=\grave{\bb}^{[\Delta]}$.
The CL and ACL models with the ordered equal-interval bias 
parameter vector $\bb=\bb^{[\Delta]}$ are similar;
compare Figure~\ref{fig:CL} \hyl{d1}--\hyl{d3} 
and Figure~\ref{fig:ACL} \hyl{h1}--\hyl{h3}.
Therefore, one can find that the ACL model is also well 
suited for representing homoscedastic unimodal data.
The ACL model with the ordered unequal-interval bias parameter vector
$\bb=\acute{\bb}^{[\Delta]}$ shown in Figure~\ref{fig:ACL} \hyl{i1}--\hyl{i3} 
is mode-wise heteroscedastic, and similar to the counterpart of 
the CL model shown in Figure~\ref{fig:CL} \hyl{e1}--\hyl{e3}.
This result suggests that the ACL model can represent 
mode-wise heteroscedastic unimodal data as well.
On the other hand, as the CL model,
the ACL model cannot represent overall heteroscedastic data well.

\subsubsection{Analysis for a Special Instance: Squared-IT Loss}
\label{sec:IT-SQloss}
A function $\varphi$ adopted for the Absolute- and Squared-IT losses 
is non-monotonic and dissimilar to those we analyzed in Section~\ref{sec:IT-LEloss}.
Therefore, analogies from behaviors of the surrogate risk minimizer 
with the Logistic- and Exponential-IT losses may not be convincing 
in understanding those with the Absolute- and Squared-IT losses.
We thus studied behaviors of the surrogate risk minimizer 
with the Squared-IT loss and the corresponding classifier, 
and obtained the following result
(see Section~\ref{sec:PL-Loss} for results for the Absolute-IT loss):
\begin{theorem}
\label{thm:ITSQ-Consistency}
Assume that $\Pr(Y\in\{y,y+1\})>0$ for all $y\in[K-1]$,
and let $\phi$ be the Squared-IT loss ($\phi=\phi_\squait$), 
and $\calA\times\calB\subseteq\{a:\bbR^d\to\bbR\}\times\calB_0$ 
include $(\tilde{a},\tilde{\bb})$ that satisfies
\begin{align}
\label{eq:ITSQSRM}
\begin{split}
%
%
    &\tilde{a}(\bx)
    =\frac{\sum_{y=1}^{K-1}\Pr(Y\in\{y,y+1\}|\bX=\bx)\tilde{b}_y-\Pr(Y=1|\bX=\bx)+\Pr(Y=K|\bX=\bx)}{2-\Pr(Y\in\{1,K\}|\bX=\bx)},\\
    &\tilde{b}_y
    =\frac{\Pr(Y=y)-\Pr(Y=y+1)+\bbE[\Pr(Y\in\{y,y+1\}|\bX)\tilde{a}(\bX)]}{\Pr(Y\in\{y,y+1\})}
%
\end{split}
\end{align}
for any $\bx$ in the support of the probability distribution of $\bX$ and every $y\in[K]$
and $\tilde{b}_1=0$, where the expectation $\bbE[\cdot]$ is taken for $\bX$.
Then, $(\bar{a},\bar{\bb})$ defined by \eqref{eq:Idolab} satisfies 
$(\bar{a}(\bX),\bar{\bb})=(\tilde{a}(\bX),\tilde{\bb})$ almost surely.
\end{theorem}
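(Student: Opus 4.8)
The plan is to exploit that the Squared-IT loss $\phi_\squait$ is convex in its first two arguments jointly, and to reduce the idealized minimization \eqref{eq:Idolab} to first-order optimality conditions, which turn out to be precisely the coupled system \eqref{eq:ITSQSRM}. Writing $\varphi_\squa(v)=(1-v)^2$, each branch of $\phi_\squait(\alpha,\bb,y)$ in \eqref{eq:IT} is a finite sum of squares of affine functions of $(\alpha,\bb)$ and is hence convex; composing with the linear map $(a,\bb)\mapsto(a(\bx),\bb)$ and averaging over $(\bX,Y)$ shows that $R(a,\bb):=\bbE[\phi_\squait(a(\bX),\bb,Y)]$ is convex on $\{a:\bbR^d\to\bbR\}\times\calB_0$. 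Thus it suffices to show that the posited $(\tilde a,\tilde\bb)$ is a critical point of $R$ (its directional derivative vanishes in every admissible direction): by convexity it is then a global minimizer over this full class; since $(\tilde a,\tilde\bb)\in\calA\times\calB$, the infimum in \eqref{eq:Idolab} is attained and equals $R(\tilde a,\tilde\bb)$; and any minimizer $(\bar a,\bar\bb)$ over $\calA\times\calB$ is then also a global minimizer over the full class, hence a critical point, hence $(\bar a(\bX),\bar\bb)$ satisfies \eqref{eq:ITSQSRM} almost surely.

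The second step is to spell out the stationarity conditions. Because $a$ is unconstrained and $a(\bx)$ enters $R$ only through the conditional risk $\alpha\mapsto\bbE[\phi_\squait(\alpha,\bb,Y)\mid\bX=\bx]$, optimality in $a$ means this one-dimensional strictly convex quadratic is stationary at $\alpha=a(\bx)$ for $\Pr_\bX$-almost every $\bx$; differentiating the three branches of \eqref{eq:IT} with $\varphi=\varphi_\squa$, collecting the linear terms, and using $b_1=0$ together with $\sum_y\Pr(Y=y\mid\bX=\bx)=1$, one obtains a scalar linear equation in $a(\bx)$ whose leading coefficient is $2-\Pr(Y\in\{1,K\}\mid\bX=\bx)\in[1,2]$; solving it gives the first line of \eqref{eq:ITSQSRM}. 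For the bias coordinates, $b_1=0$ is frozen, and for $k\in\{2,\dots,K-1\}$ the parameter $b_k$ appears in $\phi_\squait(\cdot,\bb,y)$ only for $y\in\{k,k+1\}$, so $\partial_{b_k}R=0$ is a scalar linear equation with leading coefficient $\Pr(Y\in\{k,k+1\})$, which is positive by the hypothesis $\Pr(Y\in\{y,y+1\})>0$; solving it gives the second line of \eqref{eq:ITSQSRM}. Hence the set of minimizers of $R$ coincides with the solution set of \eqref{eq:ITSQSRM}; when moreover $\Pr(Y=y)>0$ for every $y$, one checks that $R$ has no flat direction other than modifying $a$ outside $\mathrm{supp}\,\bX$, so the solution—hence the minimizer—is unique up to such a null-set modification, matching the ``almost surely''.

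The main effort, and essentially the only delicate point, is the bookkeeping in the second step: expanding the piecewise sums of \eqref{eq:IT} for $\varphi=\varphi_\squa$ while tracking the sign with which each $b_k$ occurs in each branch, then taking $\bbE[\cdot\mid\bX]$ and $\bbE[\cdot]$ so that the terms reassemble into $\Pr(Y\in\{y,y+1\}\mid\bX)$, $\Pr(Y=1\mid\bX)$, $\Pr(Y=K\mid\bX)$ and their unconditional counterparts exactly as written in \eqref{eq:ITSQSRM}. Two points worth stating explicitly are that differentiation under the expectation is justified (each branch is quadratic, so dominated convergence applies on the set where $R$ is finite, which contains $(\tilde a,\tilde\bb)$), and that the $a$-condition determines $\bar a$ only $\Pr_\bX$-almost surely, which is the source of the ``almost surely'' in the conclusion; over a proper subclass $\calA\times\calB$ the minimizer need not be unique, but by the convexity argument every minimizer still solves \eqref{eq:ITSQSRM}.
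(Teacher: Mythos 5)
Your proposal follows essentially the same route as the paper's proof: write the IT surrogate risk as in the paper's expansion, take the stationarity conditions in $a(\bx)$ (leading coefficient $2-\Pr(Y\in\{1,K\}\mid\bX=\bx)$) and in $b_y$ (leading coefficient $\Pr(Y\in\{y,y+1\})$) to recover exactly the coupled system \eqref{eq:ITSQSRM}, and invoke convexity of the quadratic risk to conclude that $(\tilde{a},\tilde{\bb})$ is the minimizer, with equality of $\bar{a}(\bX)$ holding only almost surely. Your additional care about minimization over a proper subclass $\calA\times\calB$ and about possible flat directions is a reasonable tightening of the paper's brief uniqueness claim, but it does not change the underlying argument.
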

\noindent%
Note that \eqref{eq:ITSQSRM} can be solved by 
$\bar{\bb}=\bD\bar{\bb}+\bc$, that is, $\bar{\bb}=(\bI-\bD)^{-1}\bc$
under the invertibility of the matrix $\bI-\bD$ with 
the $(K-1)\times(K-1)$ identity matrix $\bI$, $\bc=(c_y)_{y\in[K-1]}$, 
and $\bD=(d_{y,k})_{y,k\in[K-1]}$ that are consisted of
\begin{align}
\begin{split}
    &c_y:=
    \frac{\Pr(Y=y)-\Pr(Y=y+1)-\bbE\bigl[\frac{\Pr(Y\in\{y,y+1\}|\bX)\{\Pr(Y=1|\bX)-\Pr(Y=K|\bX)\}}{2-\Pr(Y\in\{1,K\}|\bX)}\bigl]}{\Pr(Y\in\{y,y+1\})},\\
    &d_{y,k}:=
    \begin{cases}
    0&\text{if }k=1,\\
    \frac{\bbE\bigl[\frac{\Pr(Y\in\{y,y+1\}|\bX)\Pr(Y\in\{k,k+1\}|\bX)}{2-\Pr(Y\in\{1,K\}|\bX)}\bigl]}{\Pr(Y\in\{y,y+1\})}&\text{otherwise}.
    \end{cases}
\end{split}
\end{align}
The expression of the surrogate risk minimizer suggests that 
the threshold method with $\phi=\phi_\squait$ works well 
when the data follow a unimodal CPD with a small scale:
consider that, when $\Pr(Y=y|\bX=\bx)\approx1$ if $y=M((\Pr(Y=y|\bX=\bx))_{y\in[K]})$ and $0$ otherwise
for any $\bx$ and $\Pr(Y=y)\approx\frac{1}{K}$ for all $y\in[K]$,
one has $\bar{b}_k\approx2(k-1)$ for all $k\in[K-1]$ 
and $\bar{a}(\bx)\approx2M((\Pr(Y=y|\bX=\bx))_{y\in[K]}-3$
(on the other hand, when $\Pr(Y=y|\bX=\bx)\approx\frac{1}{K}$ for any $\bx$
and $\Pr(Y=y)\approx\frac{1}{K}$ for all $y\in[K]$,
one has that $\bar{b}_k\approx0$ for $y=1$, $\frac{1}{2}$ for $y=2,\ldots,K-2$, 
and $1$ for $y=K-1$ and $\bar{a}(\bx)\approx\frac{1}{2}$).

\subsubsection{Overlap of Optimized Bias Parameters}
\label{sec:IT-BP}
For an IT loss $\phi$, a minimizer $(a,\bb)=(\bar{a}',\bar{\bb}')$ 
of the surrogate risk $\bbE[\phi(a(\bX),\bb,Y)]$ over $\calA\times\calB_0$ 
has no guarantee to satisfy the order condition unlike that for an AT loss, 
namely, it may satisfy $\bar{\bb}'\not\in\calB_0^\ord$.
When $\bar{\bb}'\not\in\calB_0^\ord$, 
the bias parameter vector of the surrogate risk minimizer 
$(a,\bb)=(\bar{a},\bar{\bb})$ over $\calA\times\calB_0^\ord$
has overlapped components.
Such a situation is not only exemplified in 
Theorem~\ref{thm:IT-Consistency} \hyl{f3}, but occurs frequently in practice.
In this section, we describe an inherent trouble of the threshold method 
that adopts an IT loss function and the ordered class of the bias parameter vector,
owing to the overlap of optimized bias parameters.

Assume that the surrogate risk minimizer $(\bar{a},\bar{\bb})$ satisfies
\begin{align}
\label{eq:concentrate}
    \bar{b}_k=c\text{ for all }k\in\{l,\ldots,m\}
    \text{ with some }c\in\bbR, l, m\in[K-1]\text{ s.t. }l<m.
\end{align}
Under this setting, the conditional surrogate risk at $\bX=\bx$ becomes
\begin{align}
    \begin{split}
    \bbE[\phi(\bar{a}(\bx),\bar{\bb},Y)]
    &=\Pr(Y\in\{l+1,\ldots,m\}|\bX=\bx)\{\varphi(\bar{a}(\bx)-c)+\varphi(c-\bar{a}(\bx))\}\\
    &+\sum_{y\in\{\ldots,l,m+1,\ldots\}}\Pr(Y=y|\bX=\bx)\{\varphi(\bar{a}(\bx)-\bar{b}_{y-1})+\varphi(\bar{b}_y-\bar{a}(\bx))\}.
    \end{split}
\end{align}
If $\Pr(Y\in\{\ldots,l,m+1,\ldots\}|\bX=\bx)=0$ for $\bx\in\calX_0$ 
with a some domain $\calX_0\subseteq\bbR^d$,
the optimized 1DT value $\bar{a}(\bx)\in\argmin_{a(\bx)\in\bbR}\bbE[\phi(a(\bx),\bar{\bb},Y)]$ 
for $\bx\in\calX_0$ is determined independently of the conditional probabilities
$\Pr(Y=l+1|\bX=\bx),\ldots,\Pr(Y=m|\bX=\bx)$.
Especially when $\varphi$ is strictly convex, 
optimized 1DT values are concentrated like
$\bar{a}(\bx)=c$ for any $\bx\in\calX_0$.
Then, the resulting classifier cannot classify data in $\calX_0$ well.

The above demonstration is on an extreme situation.
However, even in more mild situation,
overlap of the optimized bias parameter vector makes it difficult 
to reflect variation of the data distribution in the learning of the 1DT.
This trouble is inherent to the combination of 
the IT loss and ordered class $\calB_0^\ord$ of bias parameter vector 
and may lead to the sub-optimal approximation error.

\subsubsection{Conjecture}
\label{sec:IT-Conj}
According to the discussion (Theorems~\ref{thm:IT-Consistency} 
and \ref{thm:POACL-shape}) in Section~\ref{sec:IT-LEloss},
for the case using IT losses we present a conjecture similar to that for AT losses:
for IT losses based on a non-increasing function $\varphi$ that has no special property,
the classifier of threshold methods would tend to perform well for 
homoscedastic unimodal data and mode-wise heteroscedastic unimodal data,
and may perform poorly for overall heteroscedastic unimodal data and almost non-unimodal data.
Also, Theorem~\ref{thm:ITSQ-Consistency} gives a closed-form expression 
of the surrogate risk minimizer based on $\phi=\phi_\squait$.
According to that expression, we conjecture that the learning procedure 
with an IT loss based on a V-shaped function $\varphi$ like $\varphi_\squa$
(with no special property; see Section~\ref{sec:PL-Loss})
performs well when the data follow a unimodal CPD with a small scale.
Furthermore, the order constraint on the bias parameter vector $\calB=\calB_0^\ord$ 
for IT losses may cause the overlap of learned bias parameter vector, 
which has a risk to negatively influence the classification performance,
as described in Section~\ref{sec:IT-BP}.

\subsection{Piecewise-Linear (PL) Loss}
\label{sec:PL-Loss}
We here study the learning procedure based on 
a \emph{piecewise-linear} (\emph{PL}) function $\varphi$.

We define the piecewise-linearity as follows:
\begin{definition}
\label{def:PL}
If a function $\varphi:\bbR\to\bbR$ satisfies that $\varphi(u)=a_i+b_i u$ for $u$ in $i$-th one 
of the intervals $(-\infty,c_1], (c_1,c_2], \ldots, (c_{I-1},c_I], (c_I, +\infty)$ for all $i\in[I+1]$,
with $a_1,\ldots,a_{I+1},b_1,\ldots,b_{I+1},c_1,\ldots,c_I\in\bbR$ satisfying 
$a_i\neq a_{i+1}$ or $b_i\neq b_{i+1}$ for all $i\in[I]$ and $c_1<\cdots<c_I$,
we say that $\varphi$ is PL with $I$ \emph{knots} $c_1,\ldots,c_I$.
\end{definition}
\noindent%
For example, the hinge loss function $\varphi_\hing$ 
and the absolute loss function $\varphi_\abso$ are PL with 1 knot,
and the ramp loss function $\varphi_\ramp$ is PL with 2 knots.

When $\varphi$ is PL, the corresponding AT and IT loss functions and 
the conditional surrogate risk $\bbE[\phi(a(\bx),\bb,Y)]$ are also PL:
\begin{theorem}
\label{thm:PLAIT}
Assume that $\varphi$ is PL with $I$ knots,
and $\bb=(b_k)_{k\in[K-1]}$ consists of different $J$ values,
with $I\in\bbN$ and $J\in[K-1]$.
Then, the AT loss $\phi(\cdot,\bb,y)$ defined by \eqref{eq:AT} with $\varphi$ 
is PL with at most $IJ$ knots,
and the IT loss $\phi(\cdot,\bb,y)$ defined by \eqref{eq:IT} with $\varphi$ 
is PL with $I$ knots if $y\in\{1,K\}$ or at most $2I$ knots otherwise.
Moreover, for both of the AT and IT losses,
the conditional surrogate risk $\bbE[\phi(\cdot,\bb,Y)]$
is PL with at most $2IJ$ knots.
\end{theorem}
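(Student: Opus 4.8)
The plan is to establish the claim in three layers: first a closure lemma for sums and compositions of piecewise-linear (PL) functions, then the AT/IT loss claims by direct substitution into \eqref{eq:AT} and \eqref{eq:IT}, and finally the conditional-risk claim. For the closure lemma I would record two elementary facts. (i) If $\varphi$ is PL with $I$ knots, then for any fixed $c\in\bbR$ the maps $u\mapsto\varphi(u-c)$ and $u\mapsto\varphi(c-u)$ are PL with at most $I$ knots (translation shifts knots; reflection negates them and reverses their order, but the count is unchanged). (ii) If $\varphi^{(1)},\ldots,\varphi^{(N)}$ are PL with knot sets $S_1,\ldots,S_N$, then $\sum_n\varphi^{(n)}$ is PL with knot set contained in $\bigcup_n S_n$, hence with at most $\sum_n|S_n|$ knots --- because a finite sum of affine-on-a-common-interval functions is affine there, so the only possible breakpoints of the sum are the breakpoints of the summands; coincidences only reduce the count, and the $a_i\neq a_{i+1}\vee b_i\neq b_{i+1}$ genuineness condition in Definition~\ref{def:PL} is automatically handled by discarding fake knots. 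These are the only structural ingredients needed.

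For the AT loss with a given label $y$, the summand $\varphi(a(\bx)-b_k)$ contributes, as a function of $u:=a(\bx)$, at most $I$ knots, and likewise $\varphi(b_k-u)$; there are $K-1$ such terms in each branch of \eqref{eq:AT}. A naive count gives $(K-1)I$, but the sharper bound $IJ$ comes from grouping the $K-1$ threshold indices by the value of $b_k$: since $\bb$ takes only $J$ distinct values, the terms $\varphi(u-b_k)$ (resp. $\varphi(b_k-u)$) for all $k$ sharing a common value $\beta$ have the same knot locations, namely the $I$ knots of $\varphi$ shifted by $\beta$ (resp. reflected about $\beta$). So across all branches the knot set of $\phi(\cdot,\bb,y)$ lies in the union, over the $J$ distinct bias values and the $I$ knots of $\varphi$, of these $IJ$ points --- noting that a knot $c_i$ of $\varphi$ produces the breakpoint $\beta+c_i$ from the $\varphi(u-\beta)$ term and the breakpoint $\beta-c_i$ from $\varphi(\beta-u)$, but by symmetry of $\varphi$'s knot multiset one still gets at most $IJ$ distinct breakpoints when one argues carefully; even without symmetry, the bound $IJ$ follows by observing that each of the $J$ bias-value groups contributes a PL function of $u$ with at most $I$ knots (a fixed-sign affine combination of $\varphi(u-\beta)$ and $\varphi(\beta-u)$ pieces has its only breakpoints among $\{\beta\pm c_i\}$, and on each side of $\beta$ these are governed by $\varphi$'s one-sided knot structure — details are routine). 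For the IT loss, $y\in\{1,K\}$ gives a single term $\varphi(b_1-u)$ or $\varphi(u-b_{K-1})$, hence at most $I$ knots; for $1<y<K$ we have the two-term sum $\varphi(u-b_{y-1})+\varphi(b_y-u)$, hence at most $2I$ knots by fact (ii).

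For the conditional surrogate risk $\bbE[\phi(\cdot,\bb,Y)]=\sum_{y=1}^K\Pr(Y=y\mid\bX=\bx)\,\phi(\cdot,\bb,y)$, this is a nonnegative (hence in particular finite) linear combination of the $K$ PL functions $\phi(\cdot,\bb,y)$, so by fact (ii) it is PL with knot set contained in the union of their knot sets. In the AT case every $\phi(\cdot,\bb,y)$ has the same candidate knot set of size $\le IJ$, so the union still has size $\le IJ$; the stated $2IJ$ is a (safe, non-tight) bound that also absorbs the possibility one wishes to phrase breakpoints of $\varphi(u-b_k)$ and $\varphi(b_k-u)$ as distinct families $\{b_k+c_i\}\cup\{b_k-c_i\}$ without invoking symmetry, giving $\le 2I$ per distinct bias value and $\le 2IJ$ overall. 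In the IT case the union over $y$ of sets of size $\le I$ (for $y\in\{1,K\}$, at the locations $\{b_1-c_i\}$ and $\{b_{K-1}+c_i\}$) and $\le 2I$ (for interior $y$, at $\{b_{y-1}+c_i\}\cup\{b_y-c_i\}$) is contained in $\bigcup_{k=1}^{K-1}(\{b_k+c_i\}_i\cup\{b_k-c_i\}_i)$, which since $\bb$ has $J$ distinct values has size $\le 2IJ$. Throughout, any apparent knot that fails the genuineness condition of Definition~\ref{def:PL} is simply dropped, which only decreases the count, so all bounds are upper bounds as claimed.

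The main obstacle is purely bookkeeping: making the $IJ$ (rather than $(K-1)I$) bound for the AT loss airtight requires being slightly careful about whether a single knot $c_i$ of $\varphi$ generates one or two breakpoints after combining the reflected and unreflected pieces attached to a common bias value $\beta$. I expect to resolve this by treating the left and right neighborhoods of $\beta$ separately --- on $u>\beta$ only the "decreasing-direction" pieces of $\varphi$ matter and on $u<\beta$ only the other ones --- so that each bias-value group really does contribute at most $I$ genuine knots to the AT loss, while for the conditional risk one can afford the looser $2IJ$ without this refinement. No inequality here is delicate; the content is entirely in the two closure facts, which are immediate.
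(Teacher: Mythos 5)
The paper provides no written proof of Theorem~\ref{thm:PLAIT} (it is declared trivial and omitted), so the only yardstick is whether your bookkeeping is airtight. Your two closure facts, your treatment of the IT loss ($I$ knots for $y\in\{1,K\}$, at most $2I$ otherwise), and your treatment of the conditional risk (knots contained in the union, over the $J$ distinct bias values $\beta$, of $\{\beta+c_i\}_{i\in[I]}\cup\{\beta-c_i\}_{i\in[I]}$, hence at most $2IJ$) are correct and are surely the intended argument.

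The genuine gap is the $IJ$ bound for the AT loss, exactly the step you flag as the "main obstacle." Your claim that each of the $J$ bias-value groups contributes at most $I$ knots is false whenever a bias value $\beta$ occurs both among $b_1,\ldots,b_{y-1}$ and among $b_y,\ldots,b_{K-1}$: that group then contributes both $\varphi(u-\beta)$ and $\varphi(\beta-u)$ terms, whose knot sets $\{\beta+c_i\}$ and $\{\beta-c_i\}$ are distinct unless the knot multiset of $\varphi$ is symmetric about $0$ (it is not for any loss in Table~\ref{tab:Loss}, e.g.\ the hinge has its single knot at $1$). Concretely, take $K=3$, $y=2$, $b_1=b_2=0$, $\varphi=\varphi_\hing$, so $I=J=1$: the AT loss is $(1-u)_+ + (1+u)_+$, which has two genuine knots at $u=\pm1$, exceeding $IJ=1$. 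Your proposed repair (splitting into $u>\beta$ and $u<\beta$) does not rescue this—here the two knots lie on opposite sides of $\beta$ and both survive as slope changes. What your grouping argument actually delivers for the AT loss is at most $I(J_1+J_2)$ knots, with $J_1,J_2$ the numbers of distinct values among $b_1,\ldots,b_{y-1}$ and among $b_y,\ldots,b_{K-1}$; this equals $IJ$ only when no bias value straddles the split at $y$ (in particular when all $b_k$ are distinct, $J=K-1$), and in general is only bounded by $\min\{I(K-1),\,2IJ\}$. So you should either add that proviso (or prove the weaker, correct bound), and note explicitly that the $IJ$ claim as stated fails in the overlapping-bias case which the hypothesis $J\in[K-1]$ allows—no amount of care in the left/right bookkeeping can close this, because the target inequality itself is false there. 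The rest of the proposal (IT losses and the $2IJ$ bound for $\bbE[\phi(\cdot,\bb,Y)]$) stands.
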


In general, the PL conditional surrogate risk $\bbE[\phi(a(\bx),\bb,Y)]$ 
is minimized with respect to $a(\bx)\in\bbR$ at one of its knots 
(except for, when the CPD $(\Pr(Y=y|\bX=\bx))_{y\in[K]}$ is special 
so that $\bbE[\phi(a(\bx),\bb,Y)]$ has a plateau regarding $a(\bx)$
and its minimizer is not uniquely determined).
When the optimized 1DT $\bar{a}(\bx)$ takes the same value for $\bx$'s with different CPD values,
the resulting classifier cannot classify data on such $\bx$'s successfully, 
as the following example:
\begin{example}
\label{ex:3hingit}
Consider the setting with $K=3$, $\phi=\phi_\hingit$, $\calA=\{a:\bbR^d\to\bbR\}$, and $\calB=\calB_0^\ord$.
In this setting, Theorem~\ref{thm:FBLAIT} in Appendix~\ref{sec:Proof} shows $\bar{b}_2\in[0,2]$,
and the conditional surrogate risk for $(a,\bar{\bb})$ given $\bX=\bx$ 
at its knots $-1, \bar{b}_2-1, 1, \bar{b}_2+1$ becomes
\begin{align}
    &\bbE[\phi(a(\bx),\bar{\bb},Y)]\\
    &=\begin{cases}
    2\Pr(Y=2|\bX=\bx)+(2+\bar{b}_2)\Pr(Y=3|\bX=\bx)&\text{if~}a(\bx)=-1,\\
    \bar{b}_2\Pr(Y=1|\bX=\bx)+(2-\bar{b}_2)\Pr(Y=2|\bX=\bx)+2\Pr(Y=3|\bX=\bx)&\text{if~}a(\bx)=\bar{b}_2-1,\\
    2\Pr(Y=1|\bX=\bx)+(2-\bar{b}_2)\Pr(Y=2|\bX=\bx)+\bar{b}_2\Pr(Y=3|\bX=\bx)&\text{if~}a(\bx)=1,\\
    (2+\bar{b}_2)\Pr(Y=1|\bX=\bx)+2\Pr(Y=2|\bX=\bx)&\text{if~}a(\bx)=\bar{b}_2+1.
    \end{cases}\nonumber
\end{align}
By comparing the above four terms, one can see that the minimizer 
$\bar{a}(\bx)=\argmin_{a(\bx)\in\bbR}\bbE[\phi(a(\bx),\bar{\bb},Y)]$ is
\begin{align}
    \bar{a}(\bx)
    =\begin{cases}
    -1&\text{if~}\bx\in\calX_1,\\
    \bar{b}_2-1&\text{if~}\bx\in\calX_2,\\
    1&\text{if~}\bx\in\calX_3,\\
    \bar{b}_2+1&\text{if~}\bx\in\calX_4,
    \end{cases}
\end{align}
where the domains $\calX_1,\ldots,\calX_4\subseteq\bbR^d$ 
splitting the support of the distribution of $\bX$ are defined as
\begin{align}
    \begin{split}
    &\calX_1:=\{\bx\mid \Pr(Y=1|\bX=\bx)>\Pr(Y=2|\bX=\bx)+\Pr(Y=3|\bX=\bx)\},\\
    &\calX_2:=\{\bx\mid \Pr(Y=2|\bX=\bx)>\Pr(Y=1|\bX=\bx)-\Pr(Y=3|\bX=\bx)>0\},\\
    &\calX_3:=\{\bx\mid \Pr(Y=2|\bX=\bx)>\Pr(Y=3|\bX=\bx)-\Pr(Y=1|\bX=\bx)>0\},\\
    &\calX_4:=\{\bx\mid \Pr(Y=3|\bX=\bx)>\Pr(Y=1|\bX=\bx)+\Pr(Y=2|\bX=\bx)\}.
    \end{split}
\end{align}
The surrogate risk becomes
\begin{align}
	\begin{split}
	\bbE[\phi(\bar{a}(\bX),\bar{\bb},Y)]
	&=\bar{b}_2\bigl\{
	\Pr(\bX\in\calX_2\cup\calX_4, Y=1)
	-\Pr(\bX\in\calX_2\cup\calX_3, Y=2)\\
	&\hphantom{=\bar{b}_2\bigl\{}
	+\Pr(\bX\in\calX_1\cup\calX_3, Y=3)\bigr\}
	+(\text{$\bar{b}_2$-independent term}),
	\end{split}
\end{align}
which implies that the surrogate risk minimizer $(\bar{a},\bar{\bb})$ depends on the condition
\begin{align}
\label{eq:PhaTra}
    \Pr(\bX\in\calX_2\cup\calX_4, Y=1)+\Pr(\bX\in\calX_1\cup\calX_3, Y=3)
    \gtrless \Pr(\bX\in\calX_2\cup\calX_3, Y=2).
\end{align}
If the left-hand side is greater than the right-hand side in the above, 
then the surrogate risk minimizer becomes 
`phase-1': $\bar{b}_1=\bar{b}_2=0$ and
\begin{align}
    \bar{a}(\bx)
    =\begin{cases}
    -1&\text{if~}\bx\in\calX_1\cup\calX_2,\\
    1&\text{if~}\bx\in\calX_3\cup\calX_4.
    \end{cases}
\end{align}
In the other case, the surrogate risk minimizer becomes
`phase-2': $\bar{b}_1=0$, $\bar{b}_2=2$, and
\begin{align}
    \bar{a}(\bx)
    =\begin{cases}
    -1&\text{if~}\bx\in\calX_1,\\
    1&\text{if~}\bx\in\calX_2\cup\calX_3,\\
    3&\text{if~}\bx\in\calX_4.
    \end{cases}
\end{align}
The phase-1 may be particularly problematic for the 3-class classification problem.
We demonstrate below that 
the surrogate risk minimizer tends to be in the phase-1
especially when the scale of the CPD is large.
Assume further that $\bX$ has probabilities
$\Pr(\bX=\bx^{[i]})=p_i$ for $i\in[4]$ with $(p_i)_{i\in[4]}\in\Delta_3$
at four different points $\bx^{[1]}, \ldots,\bx^{[4]}$,
and the CPD of $Y$ at each of these points is unimodal and
$(\Pr(Y=y|\bX=\bx^{[i]}))_{y\in[3]}=(q_1,q_2,q_3),(q_1,q_3,q_2),(q_2,q_3,q_1),(q_3,q_2,q_1)$
for $i\in[4]$ with $(q_i)_{i\in[3]}\in\Delta_2$ s.t.\;$0\le q_1<q_2<q_3$.

If $q_3>q_1+q_2$,
it holds that $\calX_1=\{\bx^{[4]}\}$, $\calX_2=\{\bx^{[3]}\}$, 
$\calX_3=\{\bx^{[2]}\}$, $\calX_4=\{\bx^{[1]}\}$,
and $\Pr(\bX\in\calX_2\cup\calX_4, Y=1)=p_3q_2+p_1q_1$, 
$\Pr(\bX\in\calX_2\cup\calX_3, Y=2)=p_3q_3+p_2q_3$, 
$\Pr(\bX\in\calX_1\cup\calX_3, Y=3)=p_4q_1+p_2q_2$,
and the condition for the phase transition \eqref{eq:PhaTra} becomes
\begin{align}
    (p_1+p_4)q_1\gtrless (p_2+p_3)(q_3-q_2).
\end{align}
If $q_3<q_1+q_2$,
it holds that $\calX_1=\emptyset$, $\calX_2=\{\bx^{[3]},\bx^{[4]}\}$, 
$\calX_3=\{\bx^{[1]},\bx^{[2]}\}$, $\calX_4=\emptyset$,
and $\Pr(\bX\in\calX_2\cup\calX_4, Y=1)=p_3q_2+p_4q_3$, 
$\Pr(\bX\in\calX_2\cup\calX_3, Y=2)=p_1q_2+p_2q_3+p_3q_3+p_4q_2$, 
$\Pr(\bX\in\calX_1\cup\calX_3, Y=3)=p_1q_3+p_2q_2$,
and the condition for the phase transition \eqref{eq:PhaTra} becomes
\begin{align}
    (p_1+p_4)\gtrless (p_2+p_3).
\end{align}

\begin{figure}[t]
\renewcommand{\arraystretch}{0.1}\renewcommand{\tabcolsep}{5pt}\centering%
\begin{tabular}{cccc}\hskip-6pt
~~~~{\tiny\hyt{k1} $(p_1,p_2)=\ldots,(0.2,0.3)$}&\hskip-10pt
~~~~~{\tiny\hyt{k2} $(p_1,p_2)=(0.3,0.2)$}&\hskip-10pt
~~~~~{\tiny\hyt{k3} $(p_1,p_2)=(0.4,0.1)$}&\hskip-10pt
~~~~{\tiny\hyt{k4} $(p_1,p_2)=(0.5,0)$}\\\hskip-6pt
\includegraphics[width=3.75cm, bb=0 0 652 639]{./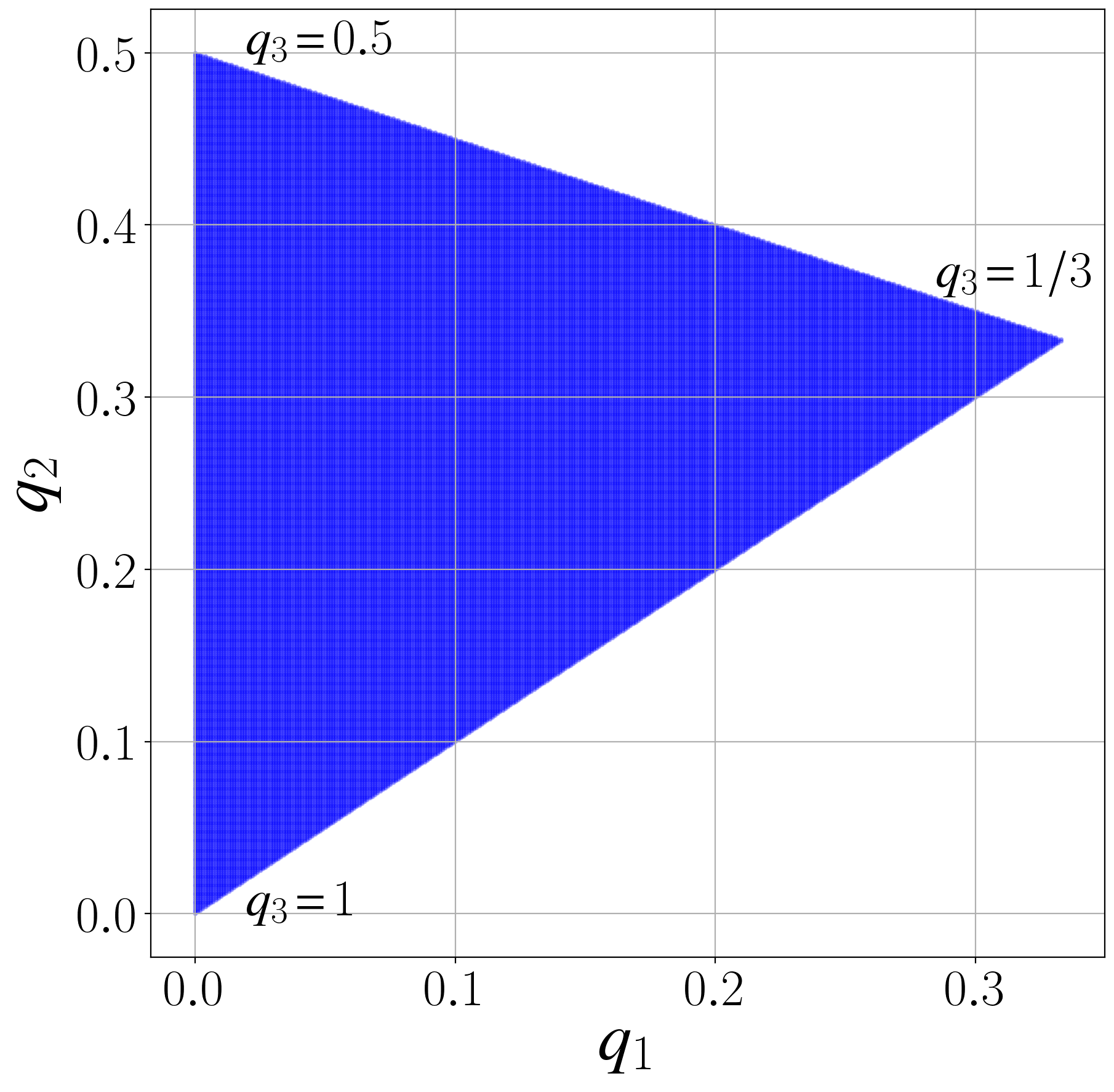}&\hskip-10pt
\includegraphics[width=3.75cm, bb=0 0 652 639]{./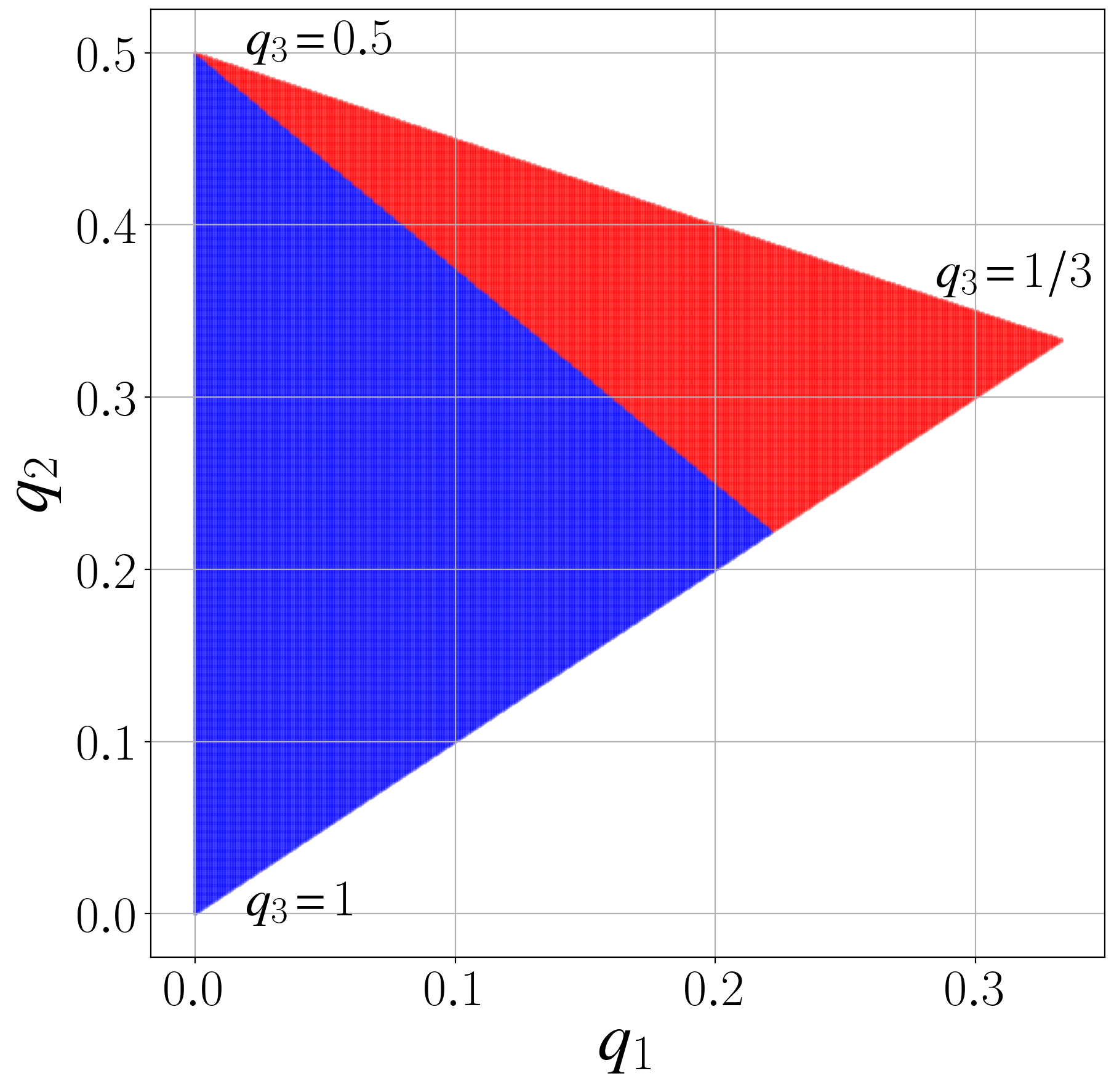}&\hskip-10pt
\includegraphics[width=3.75cm, bb=0 0 652 639]{./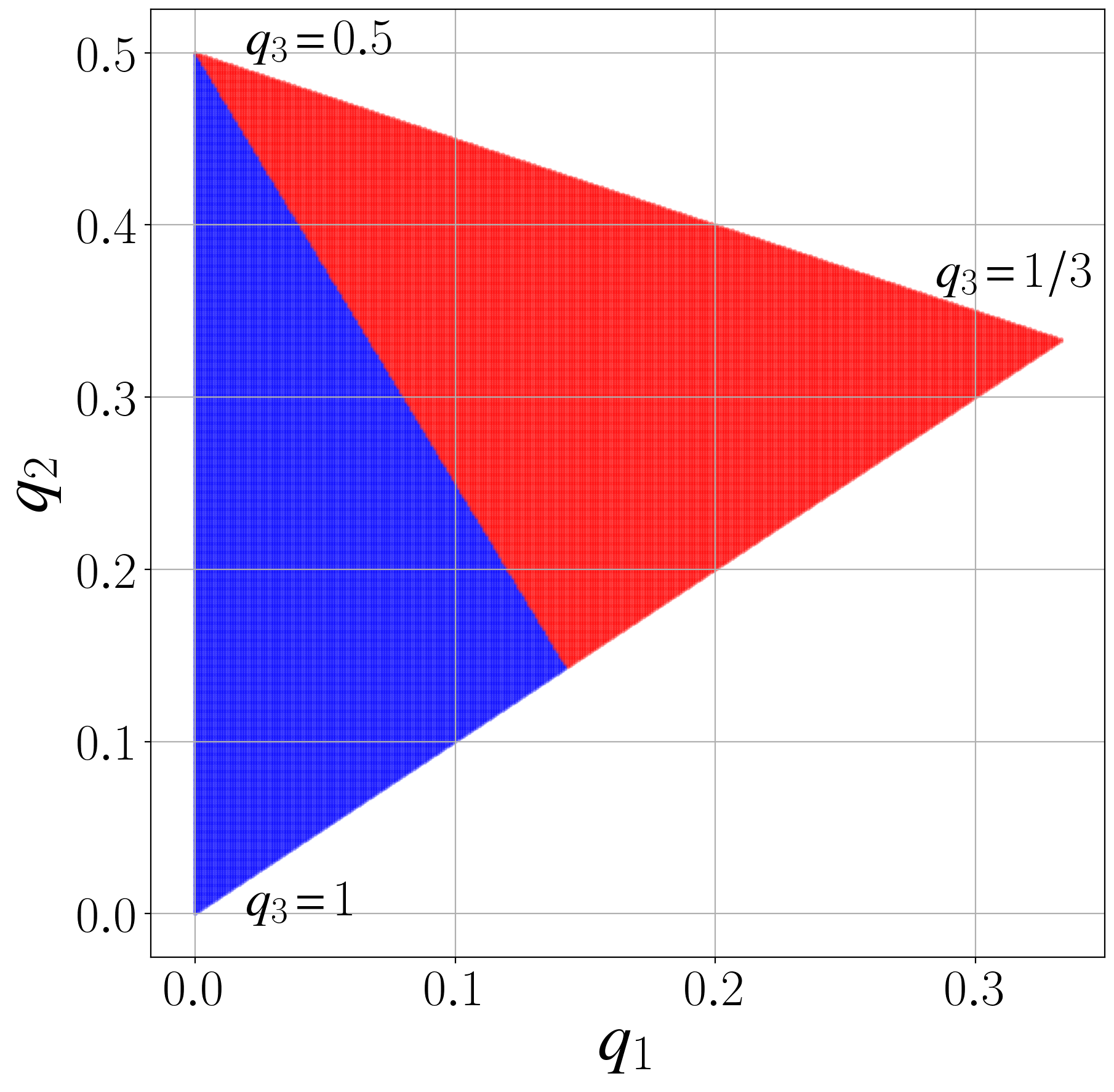}&\hskip-10pt
\includegraphics[width=3.75cm, bb=0 0 652 639]{./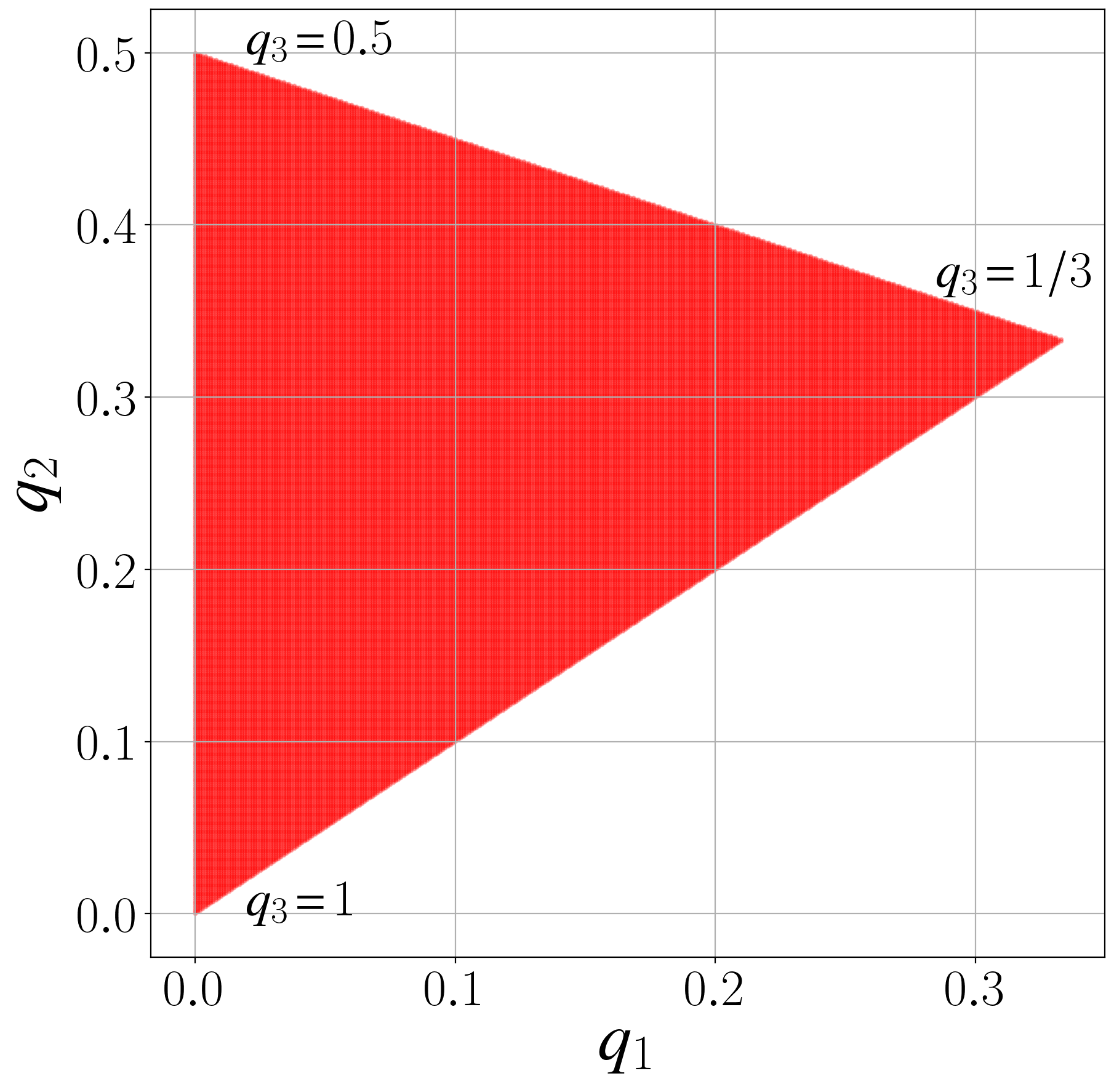}
\end{tabular}
\caption{%
Phase transition of the surrogate risk minimizer for Example~\ref{ex:3hingit}
with $(p_1,p_2)=(p_4,p_3)=(0,0.5),(0.1,0.4),\ldots,(0.5,0)$.
The surrogate risk minimizer for $(q_1,q_2,q_3)$ in the red (resp.\;blue) region is in the phase-1 (resp.\;phase-2).
For example, the CPD has a small scale for $(q_1,q_2,q_3)\approx(0,0,1)$ 
or a large scale for $(q_1,q_2,q_3)\approx(1/3,1/3,1/3)$.}
\label{fig:PT}
\end{figure}
Figure~\ref{fig:PT} shows the phase transition of the surrogate risk minimizer.
From this figure, one can find that the larger the scale of the CPD, 
the more likely the surrogate risk minimizer tends to be in the phase-1.
\end{example}

Binary classification methods such as support vector classification \citep{cortes1995support}
also apply a PL surrogate loss so that outputs of the optimized model can be concentrated at a few points, 
but they have no trouble that the approximation error becomes sub-optimal;
the trouble of the PL loss in threshold methods becomes striking when 
learned 1DT values are concentrated on a number of points less than $K$ 
under a $K$-class problem.

According to the consideration in this section, 
one can see that the combination of an IT surrogate loss 
and the ordered class of the bias parameter vector 
should be used with caution and may not work well 
when the data have large scale, even if they are unimodal.

\section{Numerical Experiments}
\label{sec:NE}
\subsection{Simulation Study}
\label{sec:SS}
With the aim of understanding how the design of the learning procedure 
for a threshold method affects the approximation error of that method,
we presented several theorems and conjectures in Section~\ref{sec:AC}.
In order to verify these results experimentally, we performed the following simulation study:
As a simulation of the surrogate risk minimization 
$\min_{a\in\calA,\bb\in\calB}\bbE[\phi(a(\bX),\bb,Y)]$, 
we solved the numerical optimization problem
\begin{align}
\label{eq:SimSRM}
    \underset{a_1,\ldots,a_N\in\bbR,\bb\in\calB}{\min}\,
    \frac{1}{N}\sum_{i=1}^N\sum_{y=1}^K p_{i,y}\phi(a_i,\bb,y)
\end{align}
with $N=100$ and $K=10$.
This problem simulates the learning procedure of a threshold 
method with the surrogate loss function $\phi$, class of the 1DT 
$\calA=\{a:\bbR^d\to\bbR\}$, and class of the bias parameter vector $\calB$ 
for data in which the underlying explanatory variable $\bX$ follows 
the uniform distribution on $N$ different points $\bx^{[1]},\ldots,\bx^{[N]}$ 
and in which the CPD of the $K$-class target variable $Y$ 
takes $(p_{i,y})_{y\in[K]}$ at $i$-th point $\bx^{[i]}$.

\paragraph{Data Distributions}
Let $\tilde{a}^{[N,\Delta]}_i:=(\tfrac{(i-1)K}{N-1}-1)\Delta$.
We considered the following 15 instances of the CPD:
homoscedastic unimodal data distribution H-$\Delta$ with $\Delta=1/3,1,3$,
\begin{align}
\label{eq:U}
    p_{i,y}=P_\acl(y;\tilde{a}^{[N,\Delta]}_i,\bb^{[\Delta]})
    \text{~for~}i\in[N], y\in[K];
\end{align}
mode-wise heteroscedastic unimodal data distribution M-$\Delta$ with $\Delta=1/3,1,3$,
\begin{align}
\label{eq:M}
    p_{i,y}=P_\acl(y;\tilde{a}^{[N,\Delta]}_i,\acute{\bb}^{[\Delta]})
    \text{~for~}i\in[N], y\in[K];
\end{align}
almost unimodal (partly non-unimodal) data distribution A-$\Delta$ with $\Delta=1/3,1,3$,
\begin{align}
\label{eq:A}
    p_{i,y}=P_\acl(y;\tilde{a}^{[N,\Delta]}_i,\grave{\bb}^{[\Delta]})
    \text{~for~}i\in[N], y\in[K];
\end{align}
overall heteroscedastic data unimodal distribution O-($\Delta_1,\Delta_2$)
with $(\Delta_1,\Delta_2)=(1/3,1),(1/3,3),(1,3)$,
\begin{align}
\label{eq:O}
    p_{i,y}=P_\acl(y;\tilde{a}^{[N/2,\Delta_1]}_i, \bb^{[\Delta_1]})\text{ and }
    p_{i+N/2,y}=P_\acl(y;\tilde{a}^{[N/2,\Delta_2]}_i, \bb^{[\Delta_2]})
    \text{~for~}i=[\tfrac{N}{2}], y\in[K];
\end{align}
almost non-unimodal data distribution N-$\Delta$ with $\Delta=1/3,1,3$,
\begin{align}
\label{eq:N}
    p_{i,y}=P_\acl(\pi(y);\tilde{a}^{[N,\Delta]}_i,\bb^{[\Delta]})
    \text{~for~}i\in[N], y\in[K],
\end{align}
where the function $\pi$ maps $1,\ldots,10$ to $1,10,2,9,3,8,4,7,5,6$.
These instances are based on those shown in Figure~\ref{fig:ACL}.
Here, $\bb^{[\Delta]}=\Delta\cdot(0,1,\ldots,8)^\top$ is ordered equal-interval,
$\acute{\bb}^{[\Delta]}=\Delta\cdot(0,0.5,0.9,1.5,4,6.5,7.1,7.6,8)^\top$ 
is ordered unequal-interval, and 
$\grave{\bb}^{[\Delta]}=\Delta\cdot(0,1,2,3,5,4,6,7,8)^\top$ is non-ordered.
Also, $\Delta$, $\Delta_1$, and $\Delta_2$ control the scale of the data distribution;
the larger they are, the smaller scale the associated data distribution have.
We basically suppose that the ordinal data tend to have the unimodality, 
and hence consider that the data distributions 
H-$\Delta$, M-$\Delta$, A-$\Delta$, and O-($\Delta_1,\Delta_2$)
can be adequate as a data distribution underlying the ordinal data,
but N-$\Delta$ is not.

\paragraph{Methods}
As the surrogate loss function $\phi$, we tried the Logistic, 
Hinge, Smoothed-Hinge, Squared-Hinge, Exponential, 
Absolute, and Squared-AT and -IT losses.
As the class of the bias parameter vector $\calB$,
we considered $\calB_0^\ord$ (ordered class) for AT losses, and
$\calB_0$ (non-ordered class) and $\calB_0^\ord$ for IT losses;
we denote these three settings AT-O, IT-N, and IT-O, respectively.
Under these settings, 
the problem \eqref{eq:SimSRM} is a convex optimization problem, 
and hence it is relatively easy to seek the global minimizer
(this is a reason why we did not try the Ramp-AT and -IT losses).
We solved the problem \eqref{eq:SimSRM} numerically
using full-batch Adam optimization with 
the descending learning rate $0.1^{1+3t/T}$ 
at $t$-th epoch during $T=10^5$ epochs.

\paragraph{Evaluation}
We considered three popular OR tasks (Tasks-Z, -A, and -S) 
with the task loss function $\ell=\ell_\zo,\ell_\ab,\ell_\sq$.
We denote the solution of \eqref{eq:SimSRM} $\bar{a}_1,\ldots,\bar{a}_N,\bar{\bb}$,
omitting the dependency on $p_{i,y}$'s, $\phi$, and $\calB$.
For the task loss $\ell$ and learned 1DT values $\bar{a}_1,\ldots,\bar{a}_N$,
we calculated the optimal threshold parameter vector
$\bar{\bt}_\ell\in\argmin_{\bt\in\bbR^{K-1}}\frac{1}{N}\sum_{i=1}^N
\sum_{y=1}^K p_{i,y}\ell(h_\thr(\bar{a}_i;\bt), y)$ 
following a way that modifies a dynamic-programming-based algorithm 
\citep{lin2006large, yamasaki2022optimal} that is for a sample-based situation.
To evaluate the method's label prediction 
$\bar{f}_{\ell,i}=h_\thr(\bar{a}_i;\bar{\bt}_\ell)$, 
we calculated the approximation errors, 
$\frac{1}{N}\sum_{i=1}^N\sum_{y=1}^K p_{i,y}\ell(\bar{f}_{\ell,i}, y)$ with 
$\ell=\ell_\zo,\ell_\ab,\ell_\sq$ (mean zero-one error (MZE), mean absolute error 
(MAE), and mean squared error (MSE)) respectively for Tasks-Z, -A, and -S.
Under the task with the task loss $\ell$,
the optimal label prediction for $i$-th point $\bx^{[i]}$ is 
$\tilde{f}_{\ell,i}\in\argmin_{k\in[K]}\sum_{y=1}^K p_{i,y}\ell(k,y)$,
and the optimal error (Bayes error) is
$\frac{1}{N}\sum_{i=1}^N\sum_{y=1}^K p_{i,y}\ell(\tilde{f}_{\ell,i}, y)$.
Note that, in the following, the evaluation for Task-S 
is reported as `root of MSE' (RMSE), not as MSE itself.

\paragraph{Results}
We show results for all the combinations of 15 data distributions, 
21 threshold methods, and 3 tasks and optimal values 
in Tables~\ref{tab:SimRes-H}--\ref{tab:SimRes-N} in Appendix~\ref{sec:ApeSS},
and representative behaviors of learned 1DT values $\bar{a}_i$'s 
in Figure~\ref{fig:SimRes} in Appendix~\ref{sec:ApeSS}.
We observed the following facts:
\begin{itemize}\setlength{\parskip}{0pt}\setlength{\itemindent}{0pt}
\item[\hyt{l1}]
Non-PL losses gave optimal approximation errors for H-$\Delta$, M-$\Delta$, and A-$\Delta$.
Not only squa-AT-O for Task-S,
logi-IT-N and expo-IT-N for all of these data distributions, 
and logi-IT-O and expo-IT-O for H-$\Delta$ and M-$\Delta$
with the optimality guarantee 
(Theorems~\ref{thm:ATSQ-Consistency} and \ref{thm:IT-Consistency}), 
but also logi-AT-O, smhi-AT-O, and others performed well;
see Tables~\ref{tab:SimRes-H}--\ref{tab:SimRes-A}.
Figure~\ref{fig:SimRes} \hyl{m1}--\hyl{m3} show plots of $(i,\bar{a}_i)$'s 
only for logi-AT-O, -IT-N, and -IT-O, but other non-PL losses gave similar results.
\item[\hyt{l2}]
All the losses gave sub-optimal approximation errors for 
many O-$(\Delta_1,\Delta_2)$ and N-$\Delta$ with a large scale;
see Tables~\ref{tab:SimRes-O} and \ref{tab:SimRes-N}
and Figure~\ref{fig:SimRes} \hyl{m4} and \hyl{m5}.
The deviation of the approximation error from the optimal value 
was small for O-$(\Delta_1,\Delta_2)$ and large for N-$\Delta$.
\item[\hyt{l3}]
PL losses (Hinge and Absolute-AT and -IT losses) gave sub-optimal 
approximation errors for many data distributions with a large scale.
Learned 1DT values $\bar{a}_i$'s with different CPD values were concentrated at a few points,
which resulted in the deterioration of the approximation error of the resulting classifier in many cases.
Figure~\ref{fig:SimRes} \hyl{m6}--\hyl{m8} show plots of $(i,\bar{a}_i)$'s 
only for hing-AT-O, -IT-N, and -IT-O, but plots for abso-AT-O, -IT-N, and -IT-O 
similarly indicated the concentration of $\bar{a}_i$'s.
\end{itemize}

\subsection{Synthesis Data Experiment}
\label{sec:SDE}
Our ultimate interest is in the prediction performance of threshold methods.
We have proceeded with the study under the working hypothesis 
that the approximation error dominates the prediction error 
when the number of training data is sufficiently large.
To test this working hypothesis and 
to show that our discussion is suggestive in studying the ultimate interest, 
we performed a performance comparison experiment with synthesis data, 
which were generated from the data distributions we considered in Section~\ref{sec:SS}.

\paragraph{Datasets}
Let $N=100$, $K=10$, $n_\tot=11000$, 
$\tilde{a}^{[N,\Delta]}_i=(\tfrac{(i-1)K}{N-1}-1)\Delta$,
$\Uni(\calS)$ be the uniform distribution on the set $\calS$, 
and $\Cat(\bp)$ be the categorical distribution for $K$ 
categories with event probabilities $p_1,\ldots,p_K$ 
of $\bp=(p_k)_{k\in[K]}\in\Delta_{K-1}$.
We used the following datasets:
homoscedastic unimodal data H-$\Delta$ with $\Delta=1/3,1,3$,
\begin{align}
    x_i\sim\Uni\bigl(\{\tilde{a}^{[N,\Delta]}_j\}_{j\in[N]}\bigr)\text{~and~}
    y_i\sim\Cat\bigl((P_\acl(k;x_i,\bb^{[\Delta]}))_{k\in[K]}\bigr)
    \text{~for~}i\in[n_\tot];
\end{align}
mode-wise heteroscedastic unimodal data M-$\Delta$ with $\Delta=1/3,1,3$,
\begin{align}
    x_i\sim\Uni\bigl(\{\tilde{a}^{[N,\Delta]}_j\}_{j\in[N]}\bigr)\text{~and~}
    y_i\sim\Cat\bigl((P_\acl(k;x_i,\acute{\bb}^{[\Delta]}))_{k\in[K]}\bigr)
    \text{~for~}i\in[n_\tot];
\end{align}
almost unimodal (partly non-unimodal) data A-$\Delta$ with $\Delta=1/3,1,3$,
\begin{align}
    x_i\sim\Uni\bigl(\{\tilde{a}^{[N,\Delta]}_j\}_{j\in[N]}\bigr)\text{~and~}
    y_i\sim\Cat\bigl((P_\acl(k;x_i,\grave{\bb}^{[\Delta]}))_{k\in[K]}\bigr)
    \text{~for~}i\in[n_\tot];
\end{align}
overall heteroscedastic unimodal data O-($\Delta_1,\Delta_2$) 
with $(\Delta_1,\Delta_2)=(1/3,1),(1/3,3),(1,3)$,
\begin{align}
    \begin{split}
    &x_i\sim\Uni\bigl(\{\tilde{a}^{[N/2,\Delta_1]}_j\}_{j\in[N/2]}\bigr)\text{~and~}
    y_i\sim\Cat\bigl((P_\acl(k;x_i,\bb^{[\Delta_1]}))_{k\in[K]}\bigr),\text{~or}\\
    &x_i\sim\Uni\bigl(\{\tilde{a}^{[N/2,\Delta_2]}_j+c\}_{j\in[N/2]}\bigr)\text{~and~}
    y_i\sim\Cat\bigl((P_\acl(k;x_i,\bb^{[\Delta_2]}+c\cdot\bm{1}_{K-1}))_{k\in[K]}\bigr)\\
    &\text{with equal probabilities for~}i\in[n_\tot],
    \end{split}
\end{align}
where we added $c=K\Delta_1+\Delta_2$ to prevent 
the overlap of the domains of $x_i$'s for the two cases,
which changes the CPD there from \eqref{eq:O};
almost non-unimodal data N-$\Delta$ with $\Delta=1/3,1,3$,
\begin{align}
    x_i\sim\Uni\bigl(\{\tilde{a}^{[N,\Delta]}_j\}_{j\in[N]}\bigr)\text{~and~}
    y_i\sim\Cat\bigl((P_\acl(\pi(k);x_i,\bb^{[\Delta]}))_{k\in[K]}\bigr)
    \text{~for~}i\in[n_\tot].
\end{align}
We used each of these datasets with randomly splitting them into the training set of 900 data points, 
validation set of 100 data points, and test set of 10000 data points.

\paragraph{Methods}
We considered threshold methods based on the two-step optimization \eqref{eq:Estiab}.
As their surrogate loss function $\phi$ and class of the bias parameter vector $\calB$,
we tried the 21 settings same as those of the simulation study in Section~\ref{sec:SS}.
As their class of the 1DT $\calA$, we used a 4-layer neural network model,
in which each hidden layer has 100 nodes and every activation function is ReLU.
For comparison, 
we also experimented with multinomial logistic regression (MLR),
which applies the negative-log-likelihood as a surrogate loss 
and a similar 4-layer neural network model to the above.
We learned models by Adam optimization with 
the empirical surrogate risk as the objective function, 
the mini-batch size 4, and the descending learning rate 
$0.1^{2.5+t/T}$ at $t$-th epoch during $T=2000$ epochs.

\paragraph{Evaluation}
During 2000 training epochs, we evaluated 
the empirical surrogate risk for the validation set after every epoch.
For a model at the point in time when the validation error was minimized, 
we calculated the empirical task risks with the task loss 
$\ell=\ell_\zo,\ell_\ab,\ell_\sq$ for the test set.
To judge the significance, 
we performed 20 independent trials and applied the one-sided Wilcoxon rank-sum test 
with the $p$-value 0.05 (Wilcoxon test) for 20 observations of each evaluation criterion.

\paragraph{Results}
Tables~\ref{tab:SDERes-H}--\ref{tab:SDERes-N} in Appendix~\ref{sec:ApeSDE} show
the mean and standard deviation (SD) of MZE, MAE, and RMSE over 20 trials,
and Figure~\ref{fig:SDERes} in Appendix~\ref{sec:ApeSDE} shows
representative behaviors of learned 1DT values $\hat{a}(\bx^{[i]})$'s. 
The experiment in this section has two differences 
from the simulation in Section~\ref{sec:SS}:
In this experiment,
we considered a finite-sample situation, and there is no guarantee 
that we obtained the global minimizer of the empirical surrogate risk 
(i.e., optimization error may exist, for example, 
because the optimization may be trapped in a local minimizer).
Even under such differences, 
we could see the following observations similar to 
those observed in the simulation in Section~\ref{sec:SS},
as expected under our working hypothesis:
\begin{itemize}\setlength{\parskip}{0pt}\setlength{\itemindent}{0pt}
\item[\hyt{n1}]
Non-PL losses gave competitive or smaller test errors for unimodal data, 
H-$\Delta$, M-$\Delta$, A-$\Delta$, and O-$(\Delta_1,\Delta_2)$,
compared with MLR;
see Tables~\ref{tab:SDERes-H}--\ref{tab:SDERes-O}
and Figure~\ref{fig:SDERes} \hyl{o1}--\hyl{o4}.
We consider that, for O-$(\Delta_1,\Delta_2)$, 
that the approximation error of these threshold methods was
slightly larger than that of MLR (see Table~\ref{tab:SimRes-O} again)
but the estimation error was smaller than that of MLR, 
and hence those threshold methods performed well.
\item[\hyt{n2}]
Most threshold methods performed poorly than MLR for N-$\Delta$;
see Table~\ref{tab:SDERes-N} and Figure~\ref{fig:SDERes} \hyl{o5}.
\item[\hyt{n3}]
PL losses resulted in the concentration of learned 1DT values $\hat{a}(\bx^{[i]})$'s
especially for large-scale data (see Figure~\ref{fig:SDERes} \hyl{o6}--\hyl{o8}),
and provided larger test errors than non-PL losses in many cases.
\end{itemize}

\subsection{Real-World Data Experiment}
\label{sec:RWDE}
Finally, we performed an OR experiment with real-world data,
mainly to confirm troubles owing to 
the use of an IT loss and ordered class of the bias parameter vector
and the use of a PL loss for practical OR tasks.
We here addressed an OR problem of estimating age from facial image 
by a discrete-value prediction, with reference to the experiments in 
the previous OR studies \citep{cao2020rank, yamasaki2022optimal}.
%
It would be supposed that the distribution of age of people 
conditioned on their facial image tends to be unimodal.

\paragraph{Datasets}
We used MORPH-2, CACD, and AFAD datasets 
\citep{ricanek2006morph, chen2014cross, niu2016ordinal}.
We purchased MORPH-2 (MORPH Album2) dataset at 
\url{https://ebill.uncw.edu/C20231_ustores/web/} and preprocessed 
it so that the face spanned the whole image with the nose tip, which was 
located by facial landmark detection \citep{sagonas2016300}, at the center. 
We used 55,013 instances with ages from 16 to 70 in this dataset. 
We downloaded CACD dataset from \url{https://bcsiriuschen.github.io/CARC/},
and preprocessed this dataset similarly to MORPH-2.
Excluding images, in which no face or more than two faces were 
detected in the preprocessing, 
we used 159,402 facial images in the age range of 14--62 years.
For AFAD dataset (\url{https://github.com/afad-dataset/tarball}), 
because faces in its images were already centered, we took no preprocessing.
We used its 164,418 instances with ages 15--40. 
Because data of MORPH-2 were collected in a laboratory setting
and data of CACD and AFAD were collected by web scraping, 
CACD and AFAD are inferred to have a larger scale
(this can also be inferred from that errors for CACD and AFAD were larger
as shown in Table~\ref{tab:FAE} in Appendix~\ref{sec:ApeRWDE}).
For each dataset, we resized all images to $128\times 128$ pixels of 3 RGB channels and 
randomly split the dataset into 72\,\% training, 8\,\% validation, and 20\,\% test sets.
The training phase used images randomly cropped to $120 \times 120$ pixels of 3 channels
as input to improve the stability of the model against the difference of facial positions, 
and validation and test phases used images center-cropped to the same size.

\paragraph{Methods}
We tried 21 threshold methods and MLR as in 
the experiment in Section~\ref{sec:SDE}.
We implemented all the methods with ResNet34 \citep{he2016deep}.
We trained the network using Adam optimization 
with the mini-batch size 256 and the descending learning rate 
$0.1^{2.5+t/T}$ at $t$-th epoch during $T=200$ epochs.

\paragraph{Evaluation}
We took 5 trials, and considered the three OR tasks with the task loss 
$\ell=\ell_\zo,\ell_\ab,\ell_\sq$ and used the test MZE, MAE, and RMSE 
for the evaluation similarly to the experiment in Section~\ref{sec:SDE}.

\paragraph{Results}
Table~\ref{tab:FAE} in Appendix~\ref{sec:ApeRWDE} 
shows the mean and SD of MZE, MAE, and RMSE over the 5 trials,
and Figure~\ref{fig:Violin} in Appendix~\ref{sec:ApeRWDE} 
shows plots of $(y_i,\hat{a}(\bx_i))$'s.
These results lead to the following findings:
\begin{itemize}\setlength{\parskip}{0pt}\setlength{\itemindent}{0pt}
\item[\hyt{p1}]
Non-PL losses, especially, non-PL AT losses, 
gave the prediction performance comparable to MLR.
Plots of $(y_i,\hat{a}(\bx_i))$'s for other non-PL losses were 
similar to those for logi-AT-O, -IT-N, and -IT-O 
(Figure~\ref{fig:Violin} \hyl{q1}--\hyl{q3}).
\item[\hyt{p2}]
Plots of $(y_i,\hat{a}(\bx_i))$'s for abso-AT-O, -IT-N, and -IT-O were 
similar to those for hing-AT-O, -IT-N, and -IT-O 
(Figure~\ref{fig:Violin} \hyl{q4}--\hyl{q6}).
Although there was a little blur probably due to the optimization error, 
1DT values $\hat{a}(\bx_i)$'s learned with PL losses were concentrated at a few points.
Partly because of such concentration phenomenon, 
PL losses gave worse prediction performance than non-PL losses in many cases
(especially for CACD and AFAD that are inferred to have a larger scale).
\item[\hyt{p3}]
Table~\ref{tab:Bias} shows the mean and SD of the number 
of unique elements of learned bias parameter vector,
and indicates that the setting IT-O resulted in much heavy 
overlap of the learned bias parameter vector than the setting AT-O.
This phenomenon might contribute to low performance 
of the threshold methods with the setting IT-O.
\end{itemize}

\section{Conclusion and Future Works}
\label{sec:Conclusion}
We studied the influence of the underlying data distribution 
and of the learning procedure of the 1DT 
on the approximation error of threshold methods,
under the expectation that it would lead to understand
the prediction performance of those methods
learned with a sufficiently large-size sample.
First, Theorems~\ref{thm:AT-Consistency} and \ref{thm:IT-Consistency} 
clarify data distributions for which the threshold methods based on 
the learning procedure with the Logistic and Exponential-AT and -IT losses 
can yield the minimum approximation error.
On the ground of these results and Theorems~\ref{thm:POCL-shape} and \ref{thm:POACL-shape}, 
we conjecture that threshold methods using 
the Logistic and Exponential-AT and -IT losses or similarly-shaped losses 
will perform well for homoscedastic unimodal data and mode-wise heteroscedastic unimodal data
and may perform poorly for overall heteroscedastic unimodal data and almost non-unimodal data.
Second, according to Theorems~\ref{thm:ATSQ-Consistency} and \ref{thm:ITSQ-Consistency},
which give a closed-form expression of the surrogate risk minimizer 
with the Squared-AT and -IT losses respectively, we also conjecture that 
threshold methods using the Squared-AT and -IT losses or similarly-shaped losses 
will perform well for unimodal data with a small scale.
Third, the learning procedure based on an IT loss and ordered class of the bias parameter vector
can output overlapped learned bias parameter vector:
In Section~\ref{sec:IT-BP}, we demonstrated that such overlap 
may deteriorate the ability of the threshold method to reflect 
variation in the underlying data distribution.
Fourth, we showed in Section~\ref{sec:PL-Loss} that, 
under the learning procedure based on a PL loss,
learned 1DT values are concentrated at a few points.
It is considered that these troubles can act as negative factors 
that degrade the prediction performance.
We confirmed these results and conjectures as 
\hyl{l1}--\hyl{l3}, \hyl{n1}--\hyl{n3}, and \hyl{p1}--\hyl{p3} experimentally.
Consequently, these results advance 
the previous theoretical work \citep[Section 4]{pedregosa2017consistency},
which suggests the absence of Fisher consistency for many threshold methods,
and the experimental comparisons \citep{rennie2005loss, gutierrez2015ordinal}
of threshold methods using various surrogate loss functions.

The results of this study may be useful in developing 
novel learning procedures for threshold methods.
On the other hand, these results also indicate that 
a classifier of several threshold methods based on 
the strict surrogate risk minimization may not be good.
Devised design of the optimization procedure, 
including the setting of initial parameters and appropriate stopping, 
may improve the practical classification performance of the classifier.
Also, it would be still important to analyze the estimation error of 
the classifier based on the two-step optimization \eqref{eq:Estiab}.
These topics are future works.



\bibliographystyle{abbrvnat}
\bibliography{multitask_learning, ordinal_regression, SVM_SVR,machine_learning, sparse, age_estimation}

\appendix
\section{Tables and Figures about Experimental Results}
\label{sec:SER}
\subsection{Results for Simulation Study in Section~\protect\ref{sec:SS}}
\label{sec:ApeSS}
Tables~\ref{tab:SimRes-H}--\ref{tab:SimRes-N} show
the simulated approximation errors of 21 threshold methods and optimal values (Bayes errors)
for the simulation study in Section~\ref{sec:SS}.
Results for a threshold method with the error equal to 
the optimal value are omitted or marked as `-'.

\begin{table}[H]
\renewcommand{\arraystretch}{0.75}\renewcommand{\tabcolsep}{5pt}\centering%
\caption{Results for the simulation in Section~\ref{sec:SS} with the data distributions $\text{H-}1/3$, $\text{H-}1$, and $\text{H-}3$.}
\label{tab:SimRes-H}\scalebox{0.75}{\begin{tabular}{cc|ccc|ccc|ccc}\toprule
&&\multicolumn{3}{c|}{\scs H-1/3}&\multicolumn{3}{c|}{\scs H-1}&\multicolumn{3}{c}{\scs H-3}\\
&&{\scs MZE}&{\scs MAE}&{\scs RMSE}&{\scs MZE}&{\scs MAE}&{\scs RMSE}&{\scs MZE}&{\scs MAE}&{\scs RMSE}\\
\midrule
\multicolumn{2}{c|}{\scs optimal}&.726&1.144&1.492&.571&.686&.959&.349&.356&.609\\
\midrule\multirow{2}{*}{\rotatebox{90}{\tiny AT-O\hspace{0.1em}}}
&{\scs hing}&.727&1.150&1.502&.575&.690&.963&-&-&-\\
&{\scs abso}&.746&1.233&1.585&.657&.878&1.183&.592&.775&1.052\\
\midrule\multirow{2}{*}{\rotatebox{90}{\tiny IT-N\hspace{0.1em}}}
&{\scs hing}&.728&1.150&1.502&.577&.695&.972&-&-&-\\
&{\scs abso}&.731&1.164&1.520&.586&.725&1.010&.378&.390&.644\\
\midrule\multirow{2}{*}{\rotatebox{90}{\tiny IT-O\hspace{0.1em}}}
&{\scs hing}&.727&1.148&1.497&.575&.692&.966&-&-&-\\
&{\scs abso}&.730&1.161&1.517&.578&.700&.976&.366&.376&.630\\
\bottomrule
\end{tabular}}\end{table}
\begin{table}[H]
\renewcommand{\arraystretch}{0.75}\renewcommand{\tabcolsep}{5pt}\centering%
\caption{Results for the simulation in Section~\ref{sec:SS} with the data distributions $\text{M-}1/3$, $\text{M-}1$, and $\text{M-}3$.}
\label{tab:SimRes-M}\scalebox{0.75}{\begin{tabular}{cc|ccc|ccc|ccc}\toprule
&&\multicolumn{3}{c|}{\scs M-1/3}&\multicolumn{3}{c|}{\scs M-1}&\multicolumn{3}{c}{\scs M-3}\\
&&{\scs MZE}&{\scs MAE}&{\scs RMSE}&{\scs MZE}&{\scs MAE}&{\scs RMSE}&{\scs MZE}&{\scs MAE}&{\scs RMSE}\\
\midrule
\multicolumn{2}{c|}{\scs optimal}&.716&1.122&1.452&.529&.660&.952&.296&.318&.597\\
\midrule\multirow{2}{*}{\rotatebox{90}{\tiny AT-O\hspace{0.1em}}}
&{\scs hing}&.717&1.126&1.457&.530&.665&.957&-&-&-\\
&{\scs abso}&.786&1.503&1.887&.619&.957&1.335&.484&.666&1.042\\
\midrule\multirow{2}{*}{\rotatebox{90}{\tiny IT-N\hspace{0.1em}}}
&{\scs hing}&.721&1.143&1.479&.531&.671&.966&-&-&.598\\
&{\scs abso}&.730&1.215&1.585&.536&.684&.974&.303&.330&.619\\
\midrule\multirow{2}{*}{\rotatebox{90}{\tiny IT-O\hspace{0.1em}}}
&{\scs hing}&.720&1.141&1.478&.531&.666&.961&-&-&-\\
&{\scs abso}&.722&1.172&1.527&.541&.694&.983&.299&.326&.616\\
\bottomrule
\end{tabular}}\end{table}
\begin{table}[H]
\renewcommand{\arraystretch}{0.75}\renewcommand{\tabcolsep}{5pt}\centering%
\caption{Results for the simulation in Section~\ref{sec:SS} with the data distributions $\text{A-}1/3$, $\text{A-}1$, and $\text{A-}3$.}
\label{tab:SimRes-A}\scalebox{0.75}{\begin{tabular}{cc|ccc|ccc|ccc}\toprule
&&\multicolumn{3}{c|}{\scs A-1/3}&\multicolumn{3}{c|}{\scs A-1}&\multicolumn{3}{c}{\scs A-3}\\
&&{\scs MZE}&{\scs MAE}&{\scs RMSE}&{\scs MZE}&{\scs MAE}&{\scs RMSE}&{\scs MZE}&{\scs MAE}&{\scs RMSE}\\
\midrule
\multicolumn{2}{c|}{\scs optimal}&.720&1.141&1.493&.553&.681&.960&.309&.335&.606\\
\midrule\multirow{2}{*}{\rotatebox{90}{\tiny AT-O\hspace{0.1em}}}
&{\scs hing}&.722&1.150&1.502&.554&.683&.963&-&-&-\\
&{\scs abso}&.750&1.256&1.626&.662&.918&1.225&.528&.727&1.042\\
\midrule\multirow{2}{*}{\rotatebox{90}{\tiny IT-N\hspace{0.1em}}}
&{\scs hing}&.728&1.172&1.530&.560&.707&.991&.313&.343&.612\\
&{\scs abso}&.728&1.174&1.537&.574&.737&1.043&.357&.409&.688\\
\midrule\multirow{2}{*}{\rotatebox{90}{\tiny IT-O\hspace{0.1em}}}
&{\scs hing}&.722&1.149&1.503&.557&.691&.977&-&-&.607\\
&{\scs abso}&.728&1.180&1.548&.562&.710&.995&.332&.360&.635\\
\bottomrule
\end{tabular}}\end{table}
\begin{table}[H]
\renewcommand{\arraystretch}{0.75}\renewcommand{\tabcolsep}{5pt}\centering%
\caption{Results for the simulation in Section~\ref{sec:SS} with the data distributions $\text{O-}(1/3,1)$, $\text{O-}(1/3,3)$, and $\text{O-}(1/3,3)$.}
\label{tab:SimRes-O}\scalebox{0.75}{\begin{tabular}{cc|ccc|ccc|ccc}\toprule
&&\multicolumn{3}{c|}{\scs O-(1/3,1)}&\multicolumn{3}{c|}{\scs O-(1/3,3)}&\multicolumn{3}{c}{\scs O-(1,3)}\\
&&{\scs MZE}&{\scs MAE}&{\scs RMSE}&{\scs MZE}&{\scs MAE}&{\scs RMSE}&{\scs MZE}&{\scs MAE}&{\scs RMSE}\\
\midrule
\multicolumn{2}{c|}{\scs optimal}&.646&.911&1.250&.535&.746&1.135&.457&.517&.800\\
\midrule\multirow{7}{*}{\rotatebox{90}{\tiny AT-O\hspace{1em}}}
&{\scs logi}&.650&-&-&.543&.748&-&.458&.518&-\\
&{\scs hing}&.652&.918&1.261&.543&.748&1.137&-&-&-\\
&{\scs smhi}&.650&-&-&.543&-&-&.458&-&-\\
&{\scs sqhi}&.650&-&-&.543&.748&-&.458&.518&-\\
&{\scs expo}&.654&.913&-&.544&.748&1.136&.459&.518&.801\\
&{\scs abso}&.694&1.053&1.395&.635&.954&1.356&.587&.780&1.104\\
&{\scs squa}&.650&-&-&.543&.748&-&.458&.518&-\\
\midrule\multirow{7}{*}{\rotatebox{90}{\tiny IT-N\hspace{1em}}}
&{\scs logi}&.649&-&-&.540&-&1.136&.458&-&-\\
&{\scs hing}&.650&.915&1.256&.546&.755&1.146&.458&-&-\\
&{\scs smhi}&.648&-&1.251&.538&-&-&.458&-&-\\
&{\scs sqhi}&.649&-&-&.540&-&-&.458&-&-\\
&{\scs expo}&.649&-&-&.542&-&-&.458&.518&-\\
&{\scs abso}&.656&.932&1.273&.578&.809&1.182&.468&.536&.820\\
&{\scs squa}&.649&-&-&.542&-&-&.458&-&-\\
\midrule\multirow{7}{*}{\rotatebox{90}{\tiny IT-O\hspace{1em}}}
&{\scs logi}&.649&-&-&.540&-&1.136&.458&-&-\\
&{\scs hing}&.648&.912&1.252&.552&.762&1.144&-&-&-\\
&{\scs smhi}&.648&-&1.251&.538&-&-&.458&-&-\\
&{\scs sqhi}&.649&-&-&.540&-&-&.458&-&-\\
&{\scs expo}&.649&-&-&.542&-&-&.458&.518&-\\
&{\scs abso}&.651&.921&1.261&.555&.770&1.158&.466&.531&.815\\
&{\scs squa}&.649&-&-&.542&-&-&.458&-&-\\
\bottomrule
\end{tabular}}\end{table}
\begin{table}[H]
\renewcommand{\arraystretch}{0.75}\renewcommand{\tabcolsep}{5pt}\centering%
\caption{Results for the simulation in Section~\ref{sec:SS} with the data distributions $\text{N-}1/3$, $\text{N-}1$, and $\text{N-}3$.}
\label{tab:SimRes-N}\scalebox{0.75}{\begin{tabular}{cc|ccc|ccc|ccc}\toprule
&&\multicolumn{3}{c|}{\scs N-1/3}&\multicolumn{3}{c|}{\scs N-1}&\multicolumn{3}{c}{\scs N-3}\\
&&{\scs MZE}&{\scs MAE}&{\scs RMSE}&{\scs MZE}&{\scs MAE}&{\scs RMSE}&{\scs MZE}&{\scs MAE}&{\scs RMSE}\\
\midrule
\multicolumn{2}{c|}{\scs optimal}&.726&1.887&2.372&.571&1.242&1.666&.349&.672&1.068\\
\midrule\multirow{7}{*}{\rotatebox{90}{\tiny AT-O\hspace{1em}}}
&{\scs logi}&.788&1.905&-&.687&1.280&-&.517&.727&-\\
&{\scs hing}&.778&1.924&2.386&.661&1.272&1.672&.487&.721&1.077\\
&{\scs smhi}&.787&1.903&-&.682&1.276&-&.509&.718&-\\
&{\scs sqhi}&.788&1.905&-&.687&1.280&-&.522&.727&-\\
&{\scs expo}&.788&1.905&-&.687&1.279&-&.531&.735&-\\
&{\scs abso}&.822&2.217&2.652&.687&1.347&1.734&.626&1.072&1.415\\
&{\scs squa}&.788&1.905&-&.687&1.280&-&.517&.727&-\\
\midrule\multirow{7}{*}{\rotatebox{90}{\tiny IT-N\hspace{1em}}}
&{\scs logi}&.768&1.928&2.396&.666&1.287&1.669&.508&.726&-\\
&{\scs hing}&.770&2.076&2.524&.665&1.364&1.793&.446&.797&1.177\\
&{\scs smhi}&.757&1.948&2.402&.655&1.284&1.671&.471&.711&1.070\\
&{\scs sqhi}&.768&1.930&2.396&.666&1.287&1.669&.514&.730&-\\
&{\scs expo}&.769&1.928&2.395&.668&1.287&1.668&.509&.731&1.069\\
&{\scs abso}&.770&2.038&2.478&.668&1.449&1.863&.484&.868&1.211\\
&{\scs squa}&.768&1.930&2.396&.666&1.287&1.669&.515&.732&-\\
\midrule\multirow{7}{*}{\rotatebox{90}{\tiny IT-O\hspace{1em}}}
&{\scs logi}&.769&1.928&2.396&.666&1.287&1.669&.508&.726&-\\
&{\scs hing}&.769&2.035&2.479&.646&1.334&1.735&.496&.774&1.128\\
&{\scs smhi}&.764&1.928&2.394&.655&1.284&1.671&.471&.711&1.070\\
&{\scs sqhi}&.769&1.928&2.396&.666&1.287&1.669&.514&.730&-\\
&{\scs expo}&.769&1.928&2.395&.668&1.287&1.668&.509&.731&1.069\\
&{\scs abso}&.776&2.054&2.510&.678&1.391&1.797&.510&.849&1.166\\
&{\scs squa}&.769&1.928&2.396&.666&1.287&1.669&.515&.732&-\\
\bottomrule
\end{tabular}}\end{table}

Figure~\ref{fig:SimRes} shows the learned 1DT values $\bar{a}_i$,
method's label prediction $\bar{f}_{\ell,i}$,
and optimal label prediction $\tilde{f}_{\ell,i}$
for several instances of the simulation in Section~\ref{sec:SS}.
This indicates representative behaviors of 
threshold methods based on a learning procedure with non-PL and PL losses.

\begin{figure}[H]
\renewcommand{\arraystretch}{0.1}\renewcommand{\tabcolsep}{0pt}\centering%
\begin{tabular}{rrrrr}
~~&
{\tiny\hyt{m1} H-1, logi~~~~~~~}~~~~&
{\tiny\hyt{m2} M-1, logi~~~~~~~}~~~~&
{\tiny\hyt{m3} A-1, logi~~~~~~~}~~~~&
{\tiny\hyt{m4} O-(1/3,3), logi~~~}\\
\rotatebox{90}{\tiny~~~~~~~~~~~~~~~AT-O}~~&
\includegraphics[height=3cm, bb=0 0 399.240798 381.600763]{./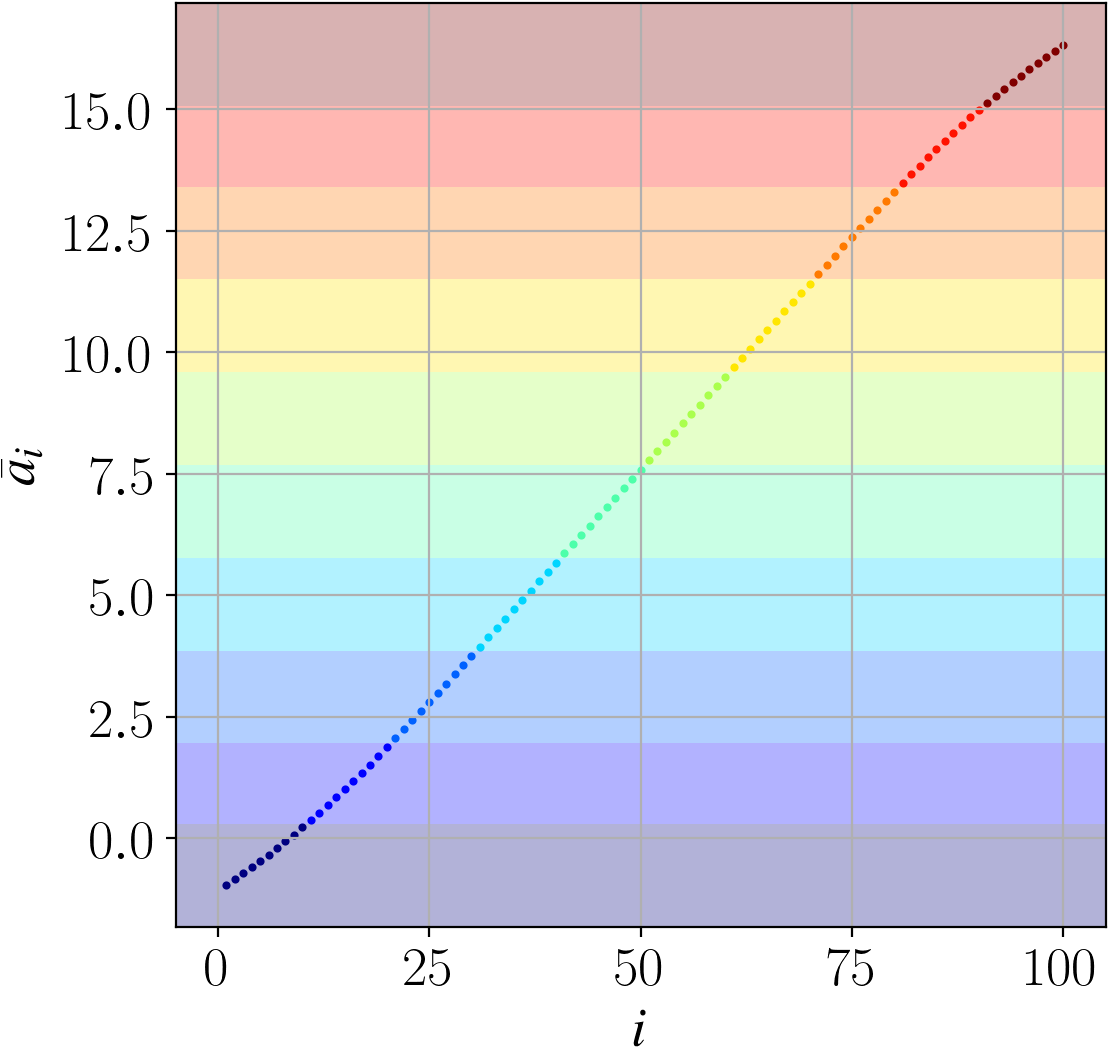}~~~~&
\includegraphics[height=3cm, bb=0 0 399.240798 381.600763]{./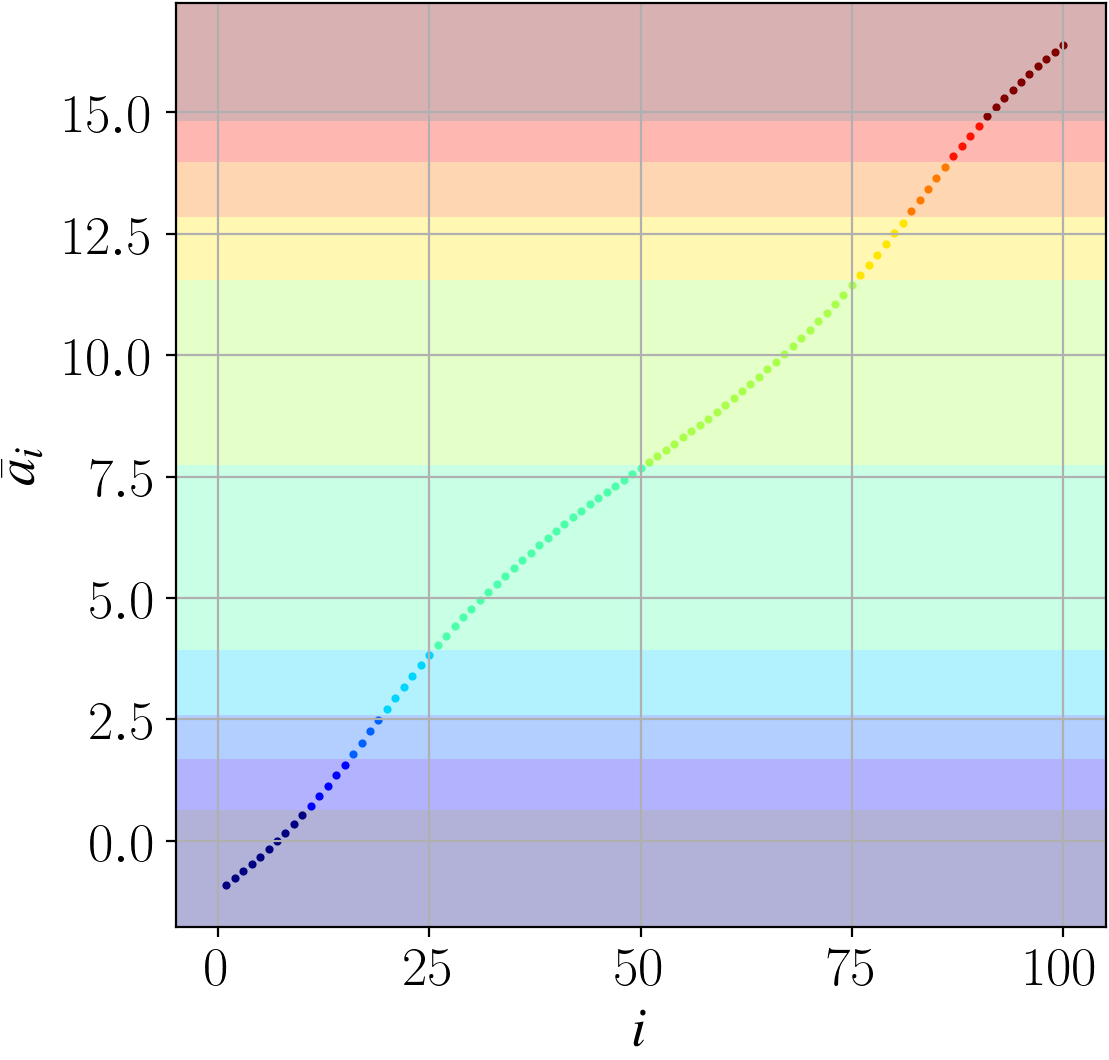}~~~~&
\includegraphics[height=3cm, bb=0 0 399.240798 381.600763]{./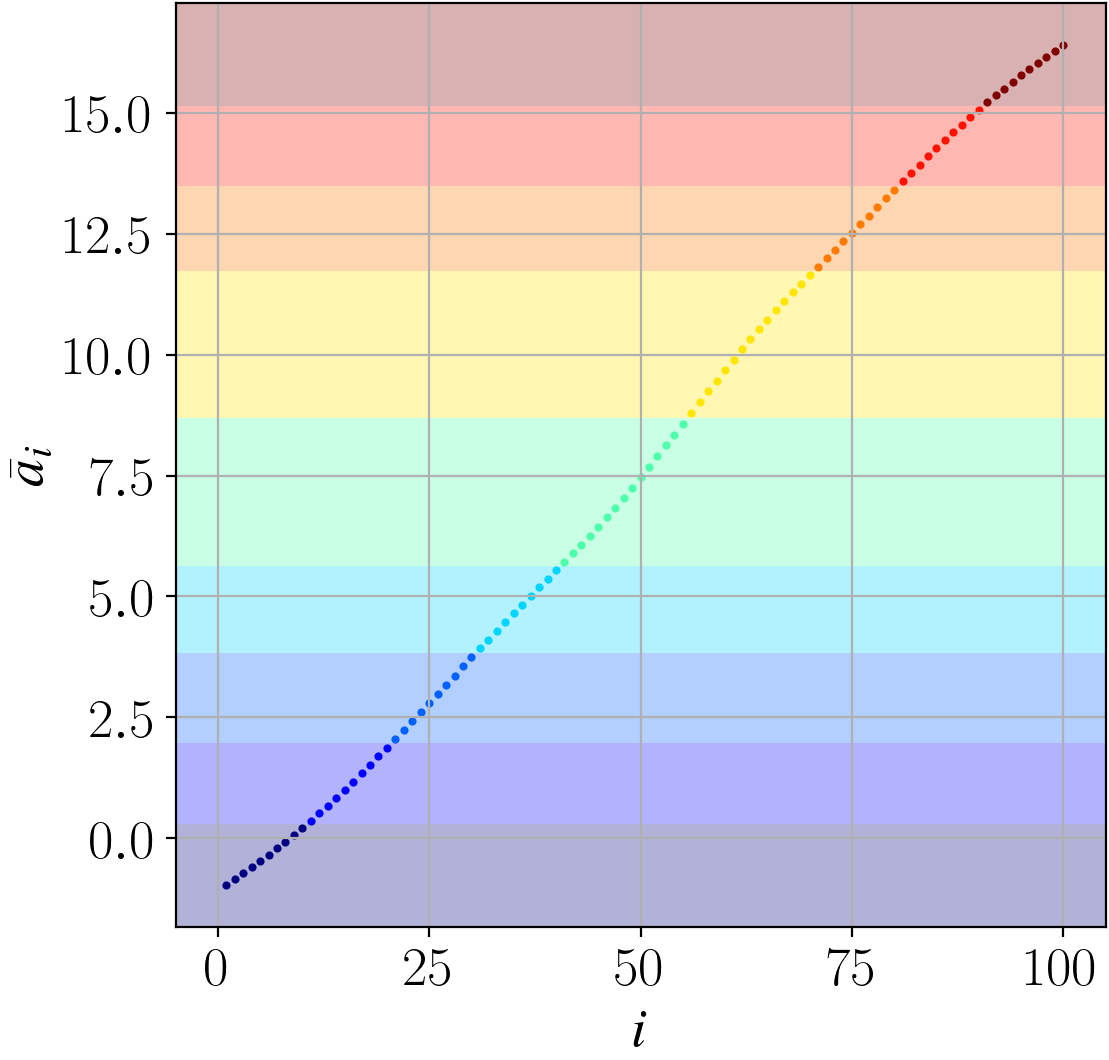}~~~~&
\includegraphics[height=3cm, bb=0 0 384.480769 381.600763]{./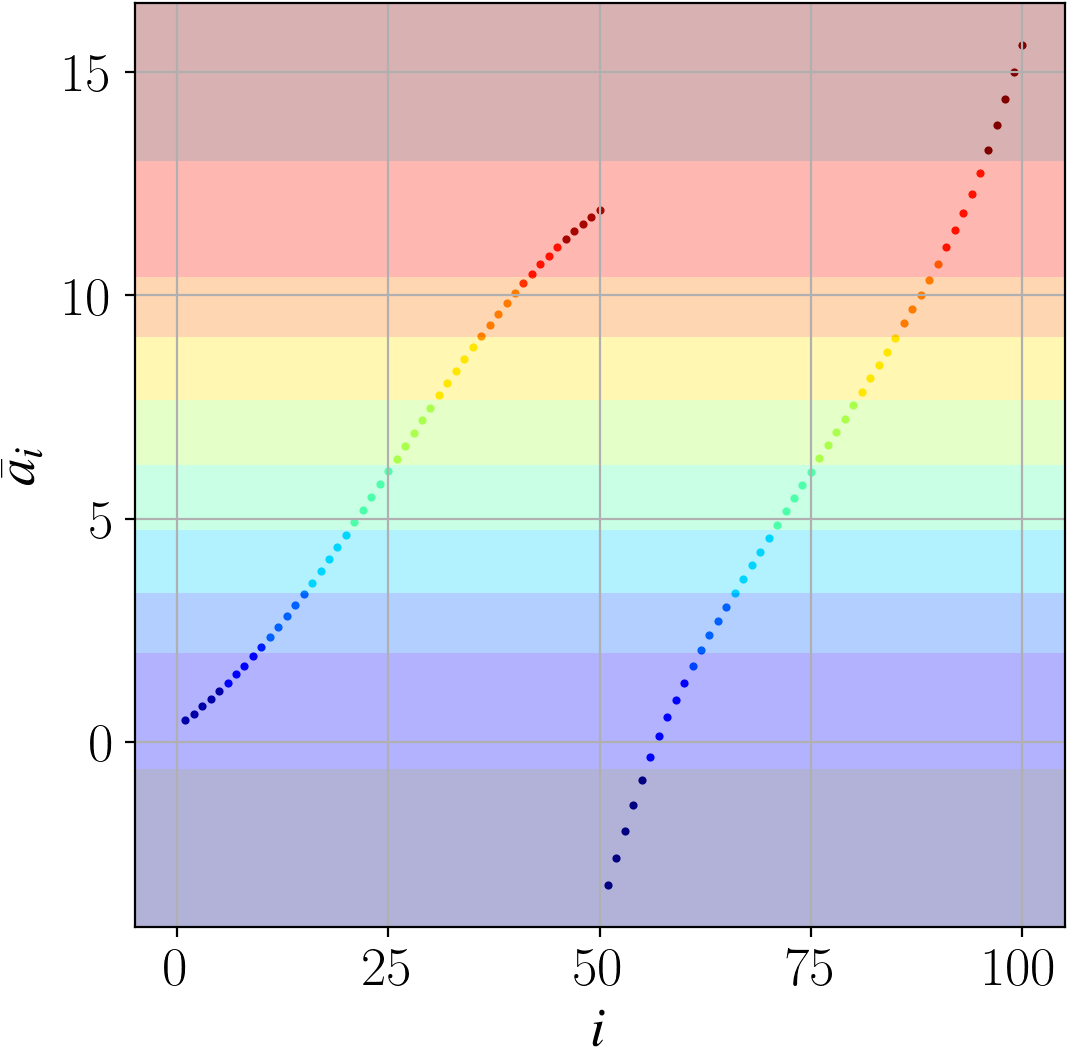}\\
\rotatebox{90}{\tiny~~~~~~~~~~~~~~~IT-N}~~&
\includegraphics[height=3cm, bb=0 0 375 382]{./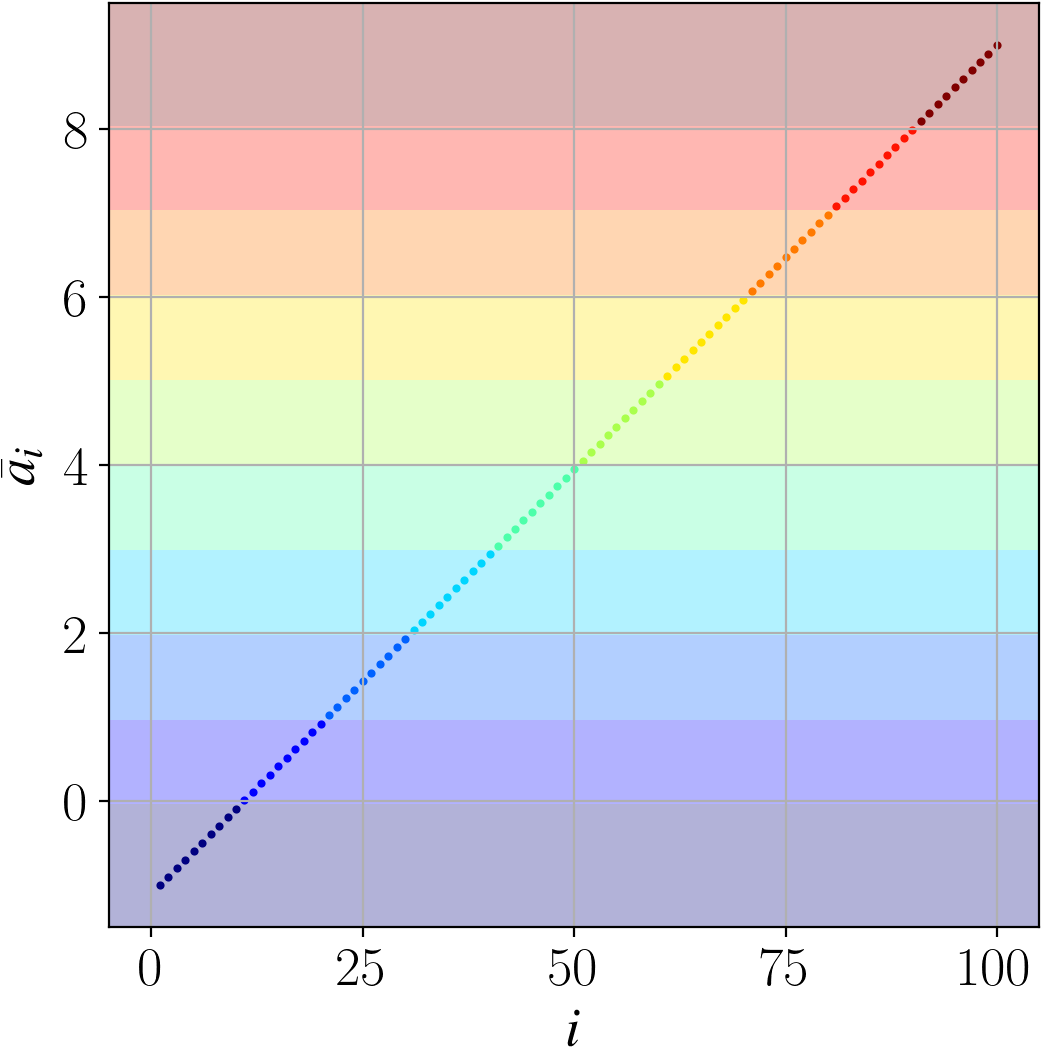}~~~~&
\includegraphics[height=3cm, bb=0 0 375 382]{./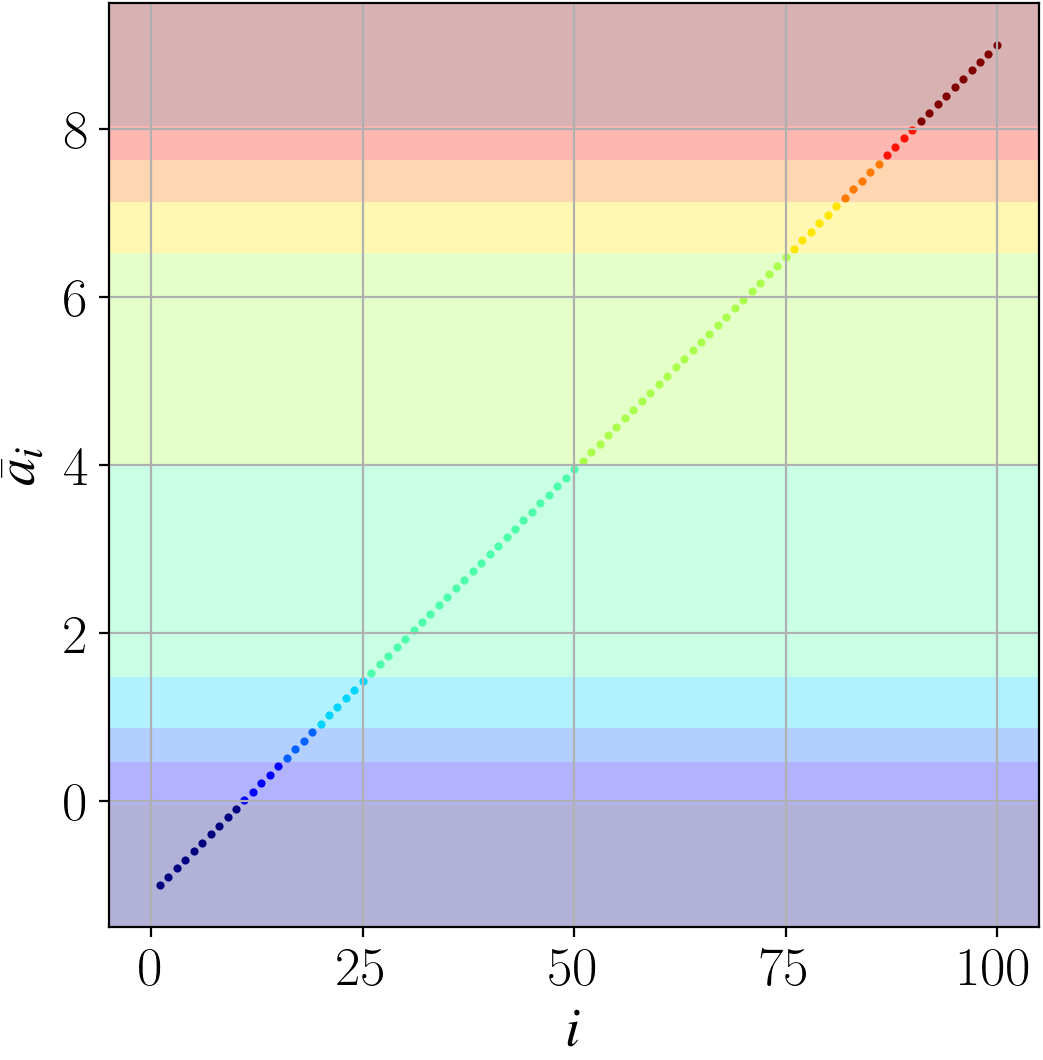}~~~~&
\includegraphics[height=3cm, bb=0 0 375 382]{./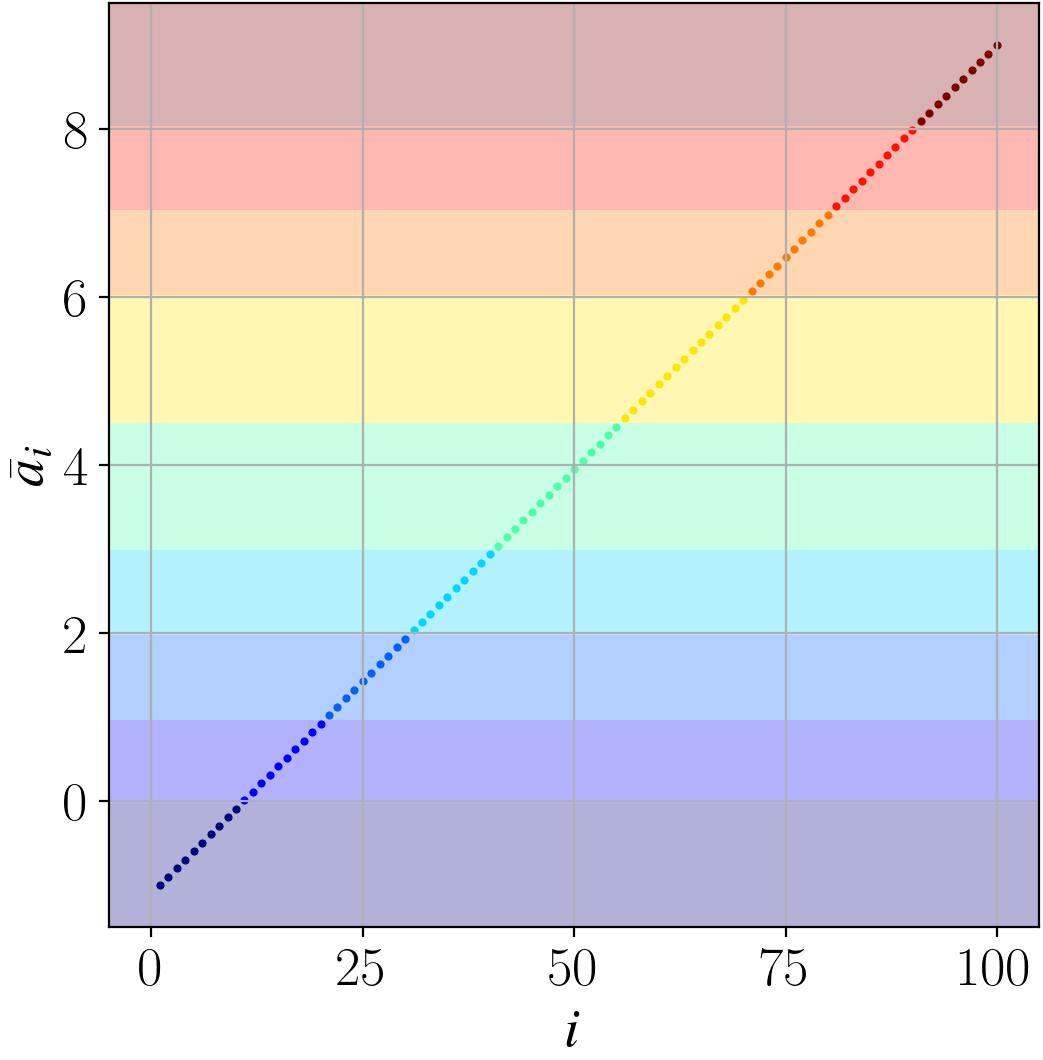}~~~~&
\includegraphics[height=3cm, bb=0 0 388 382]{./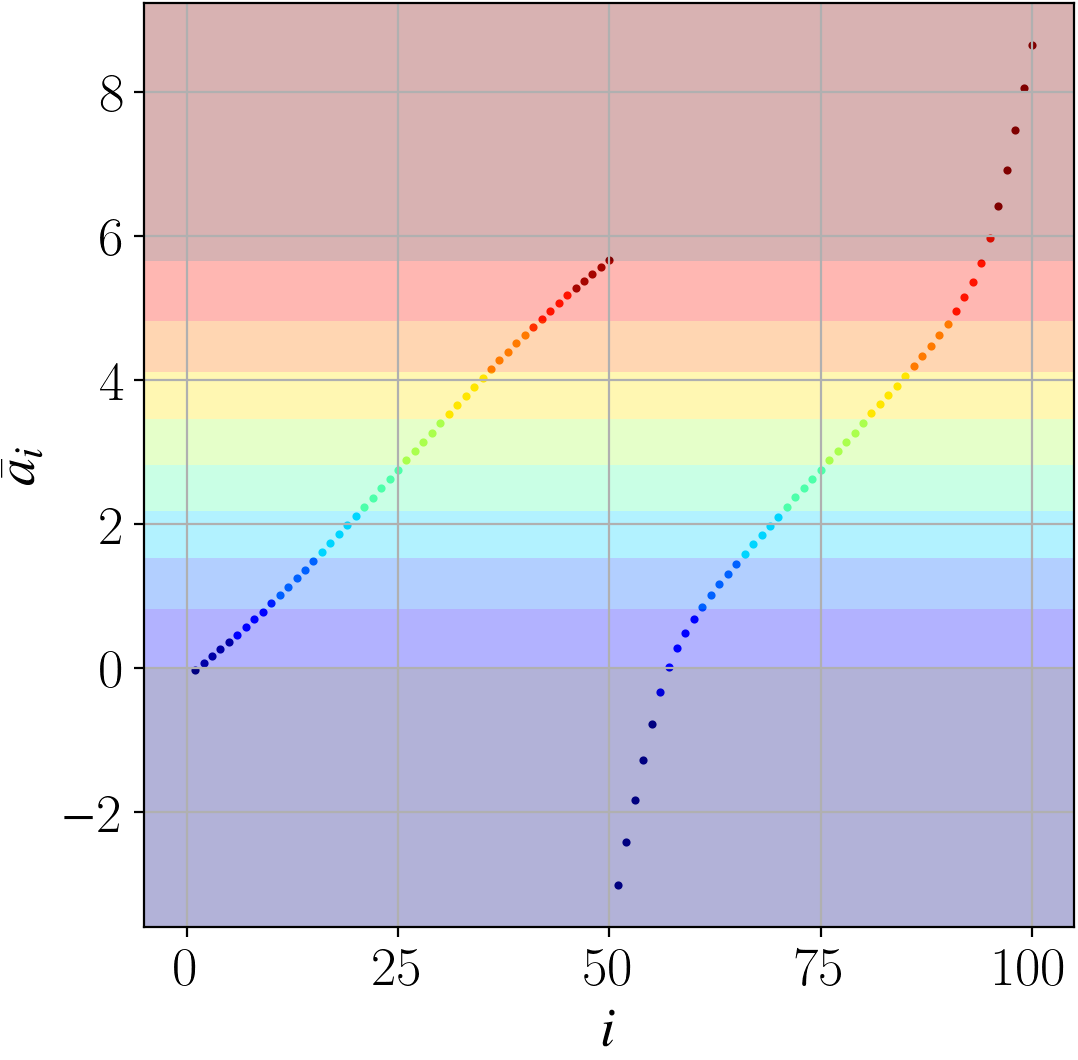}\\
\rotatebox{90}{\tiny~~~~~~~~~~~~~~~IT-O}~~&
\includegraphics[height=3cm, bb=0 0 375 382]{./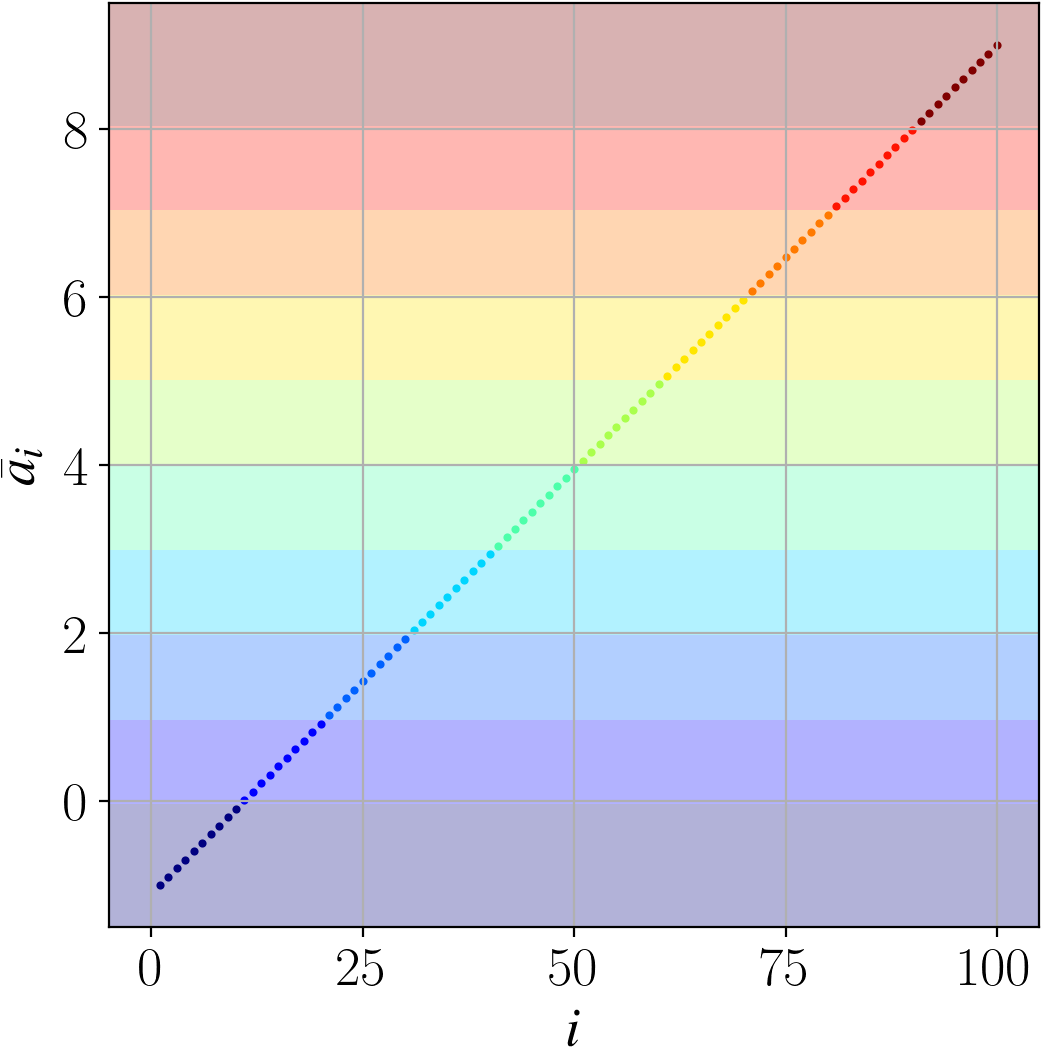}~~~~&
\includegraphics[height=3cm, bb=0 0 375 382]{./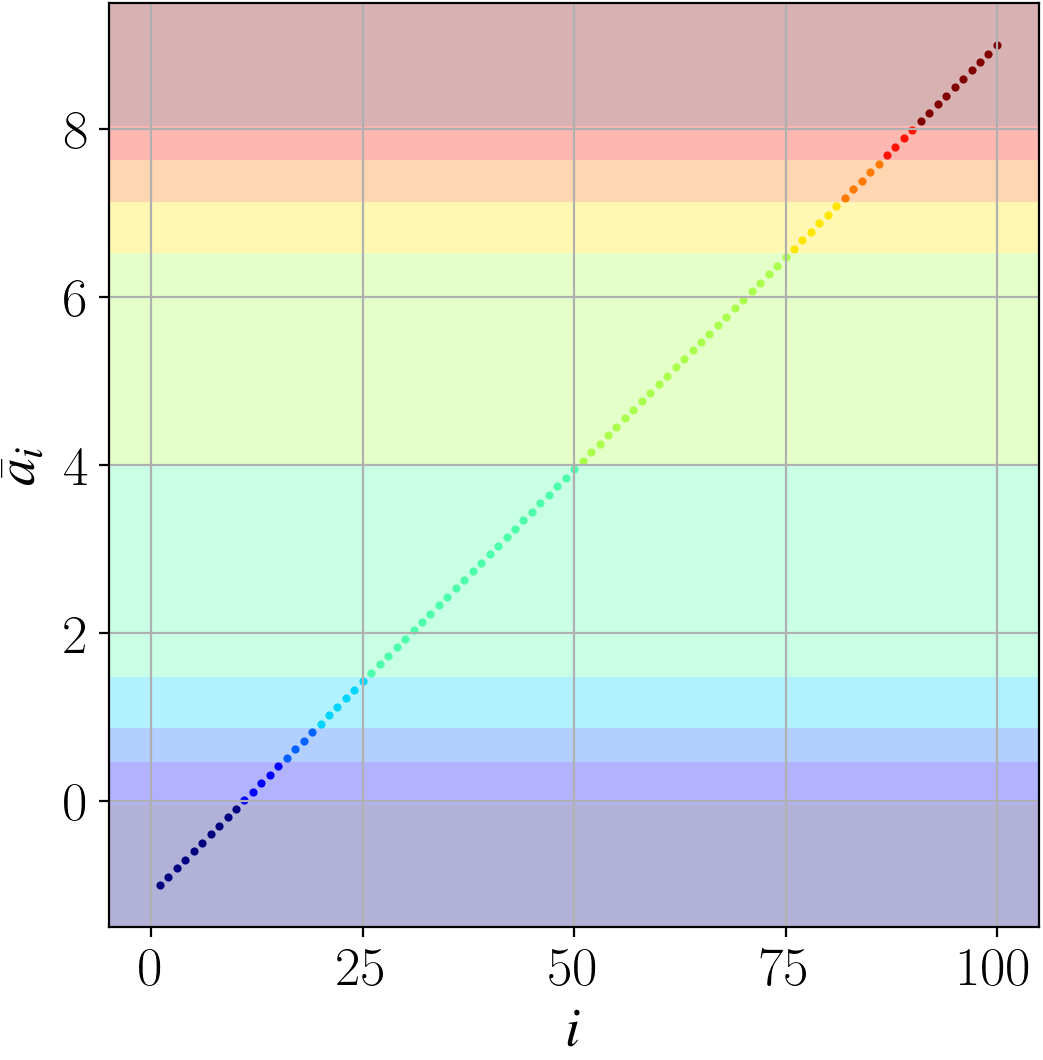}~~~~&
\includegraphics[height=3cm, bb=0 0 375 382]{./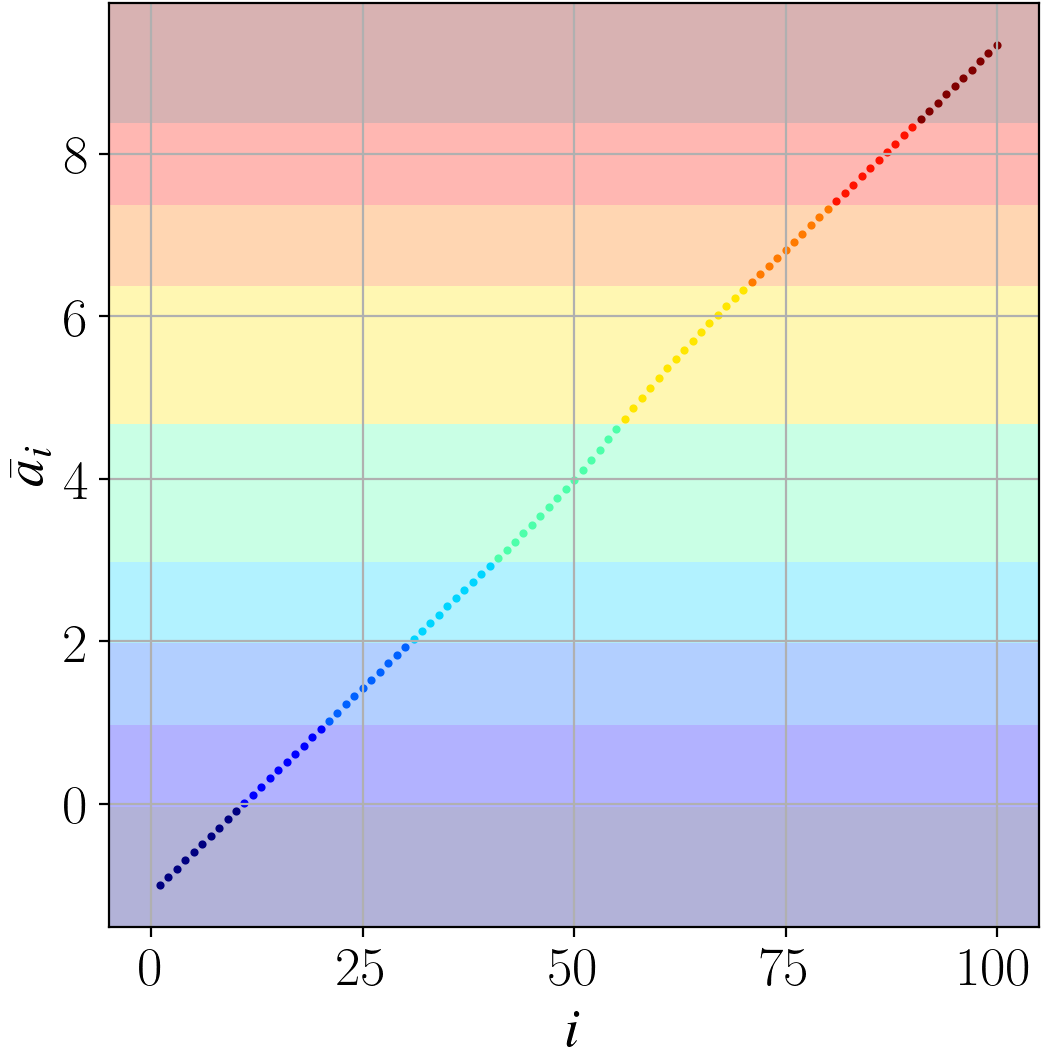}~~~~&
\includegraphics[height=3cm, bb=0 0 388 382]{./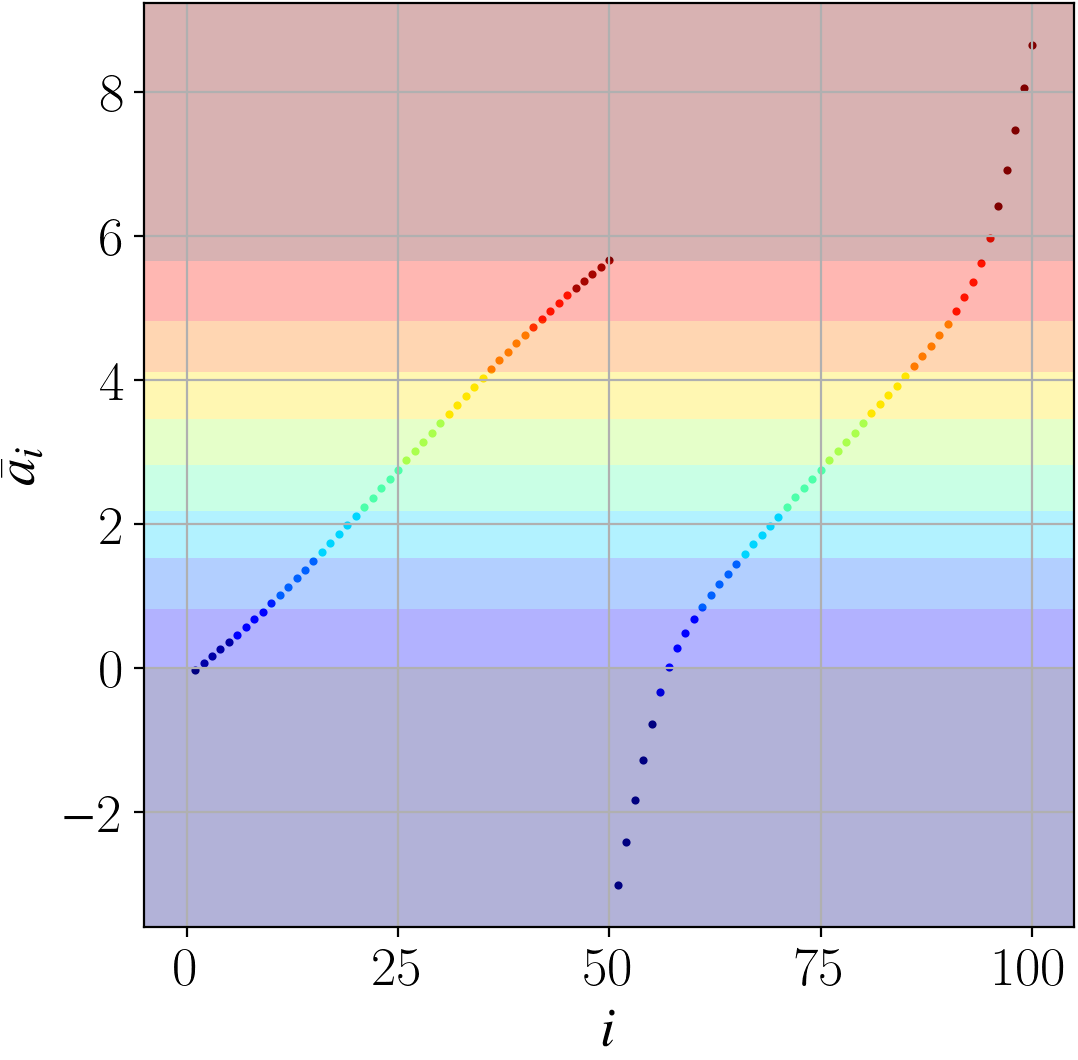}\\
~~&
{\tiny\hyt{m5} N-1, logi~~~~~~~}~~~~&
{\tiny\hyt{m6} H-1/3, hing~~~~~}~~~~&
{\tiny\hyt{m7} H-1, hing~~~~~~~}~~~~&
{\tiny\hyt{m8} H-3, hing~~~~~~~}\\
\rotatebox{90}{\tiny~~~~~~~~~~~~~~~AT-O}~~&
\includegraphics[height=3cm, bb=0 0 375 386]{./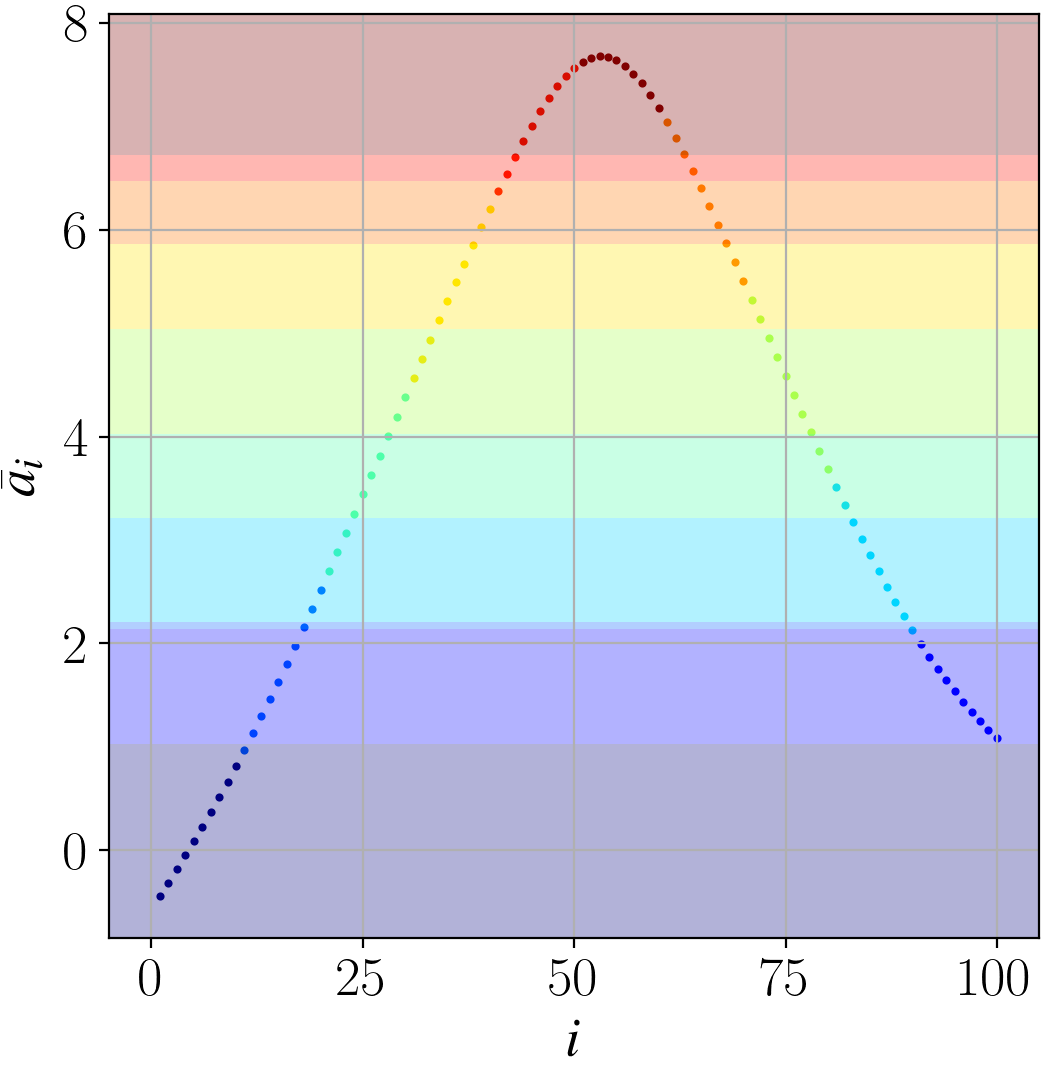}~~~~&
\includegraphics[height=3cm, bb=0 0 375.480751 381.600763]{./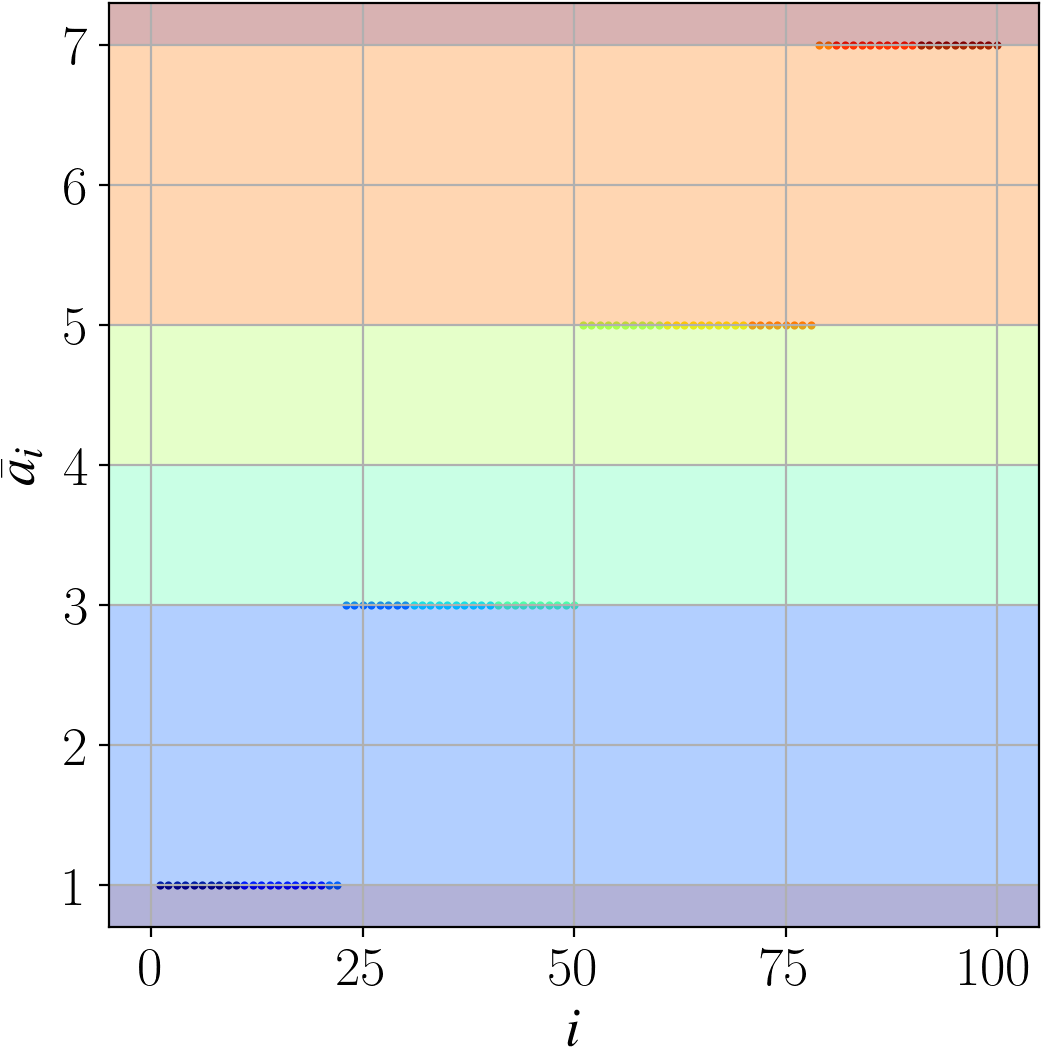}~~~~&
\includegraphics[height=3cm, bb=0 0 384.480769 381.600763]{./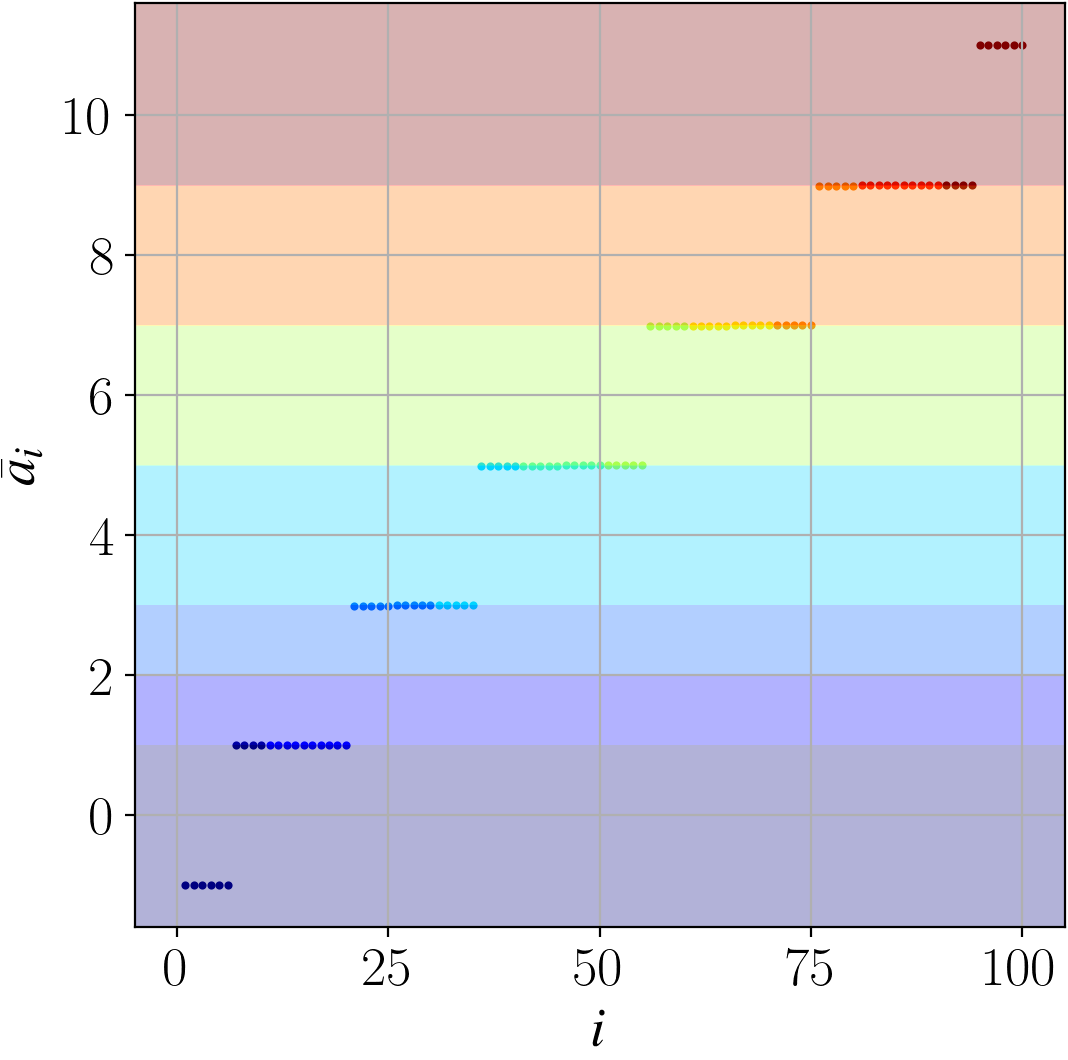}~~~~&
\includegraphics[height=3cm, bb=0 0 399.240798 381.960764]{./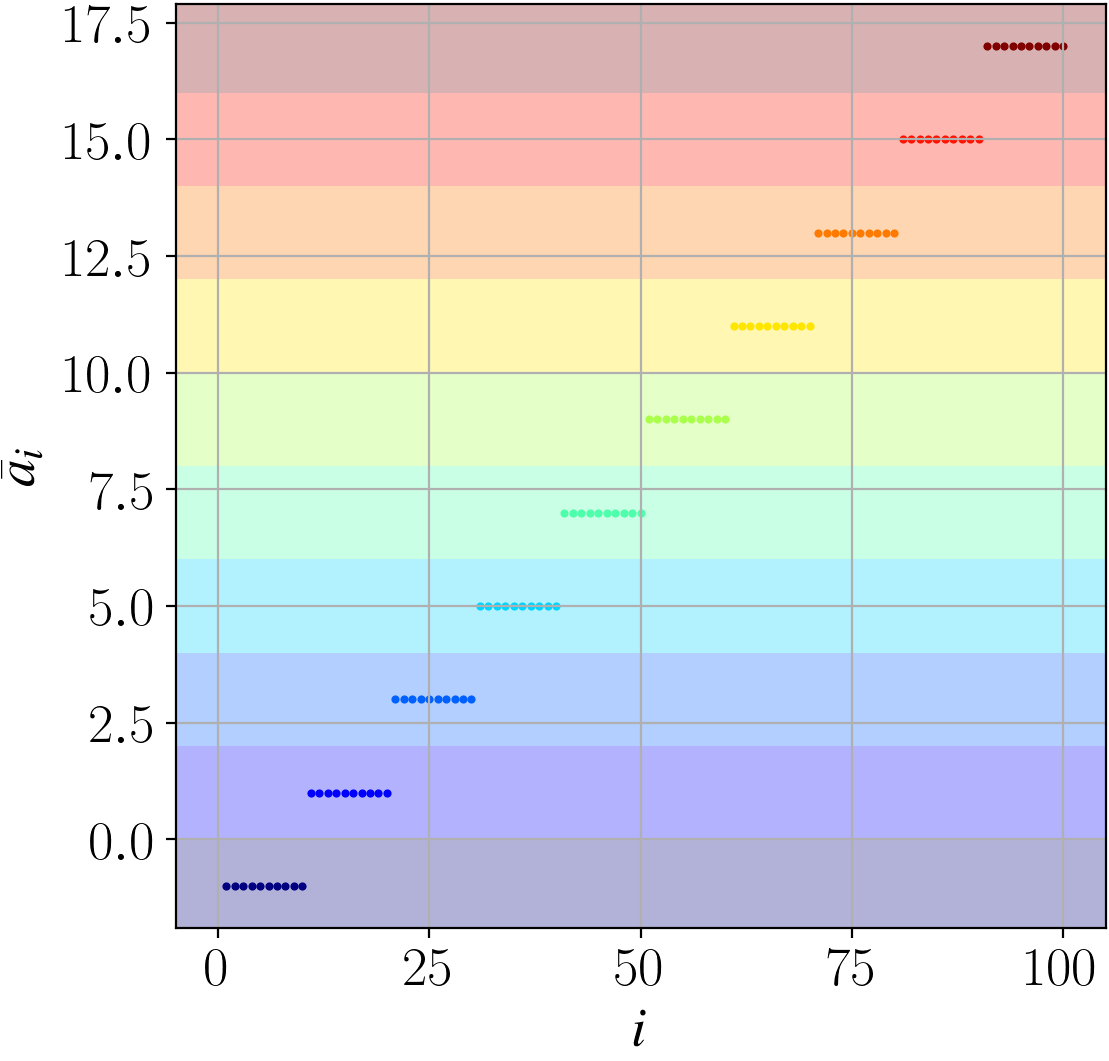}\\
\rotatebox{90}{\tiny~~~~~~~~~~~~~~~IT-N}~~&
\includegraphics[height=3cm, bb=0 0 403 382]{./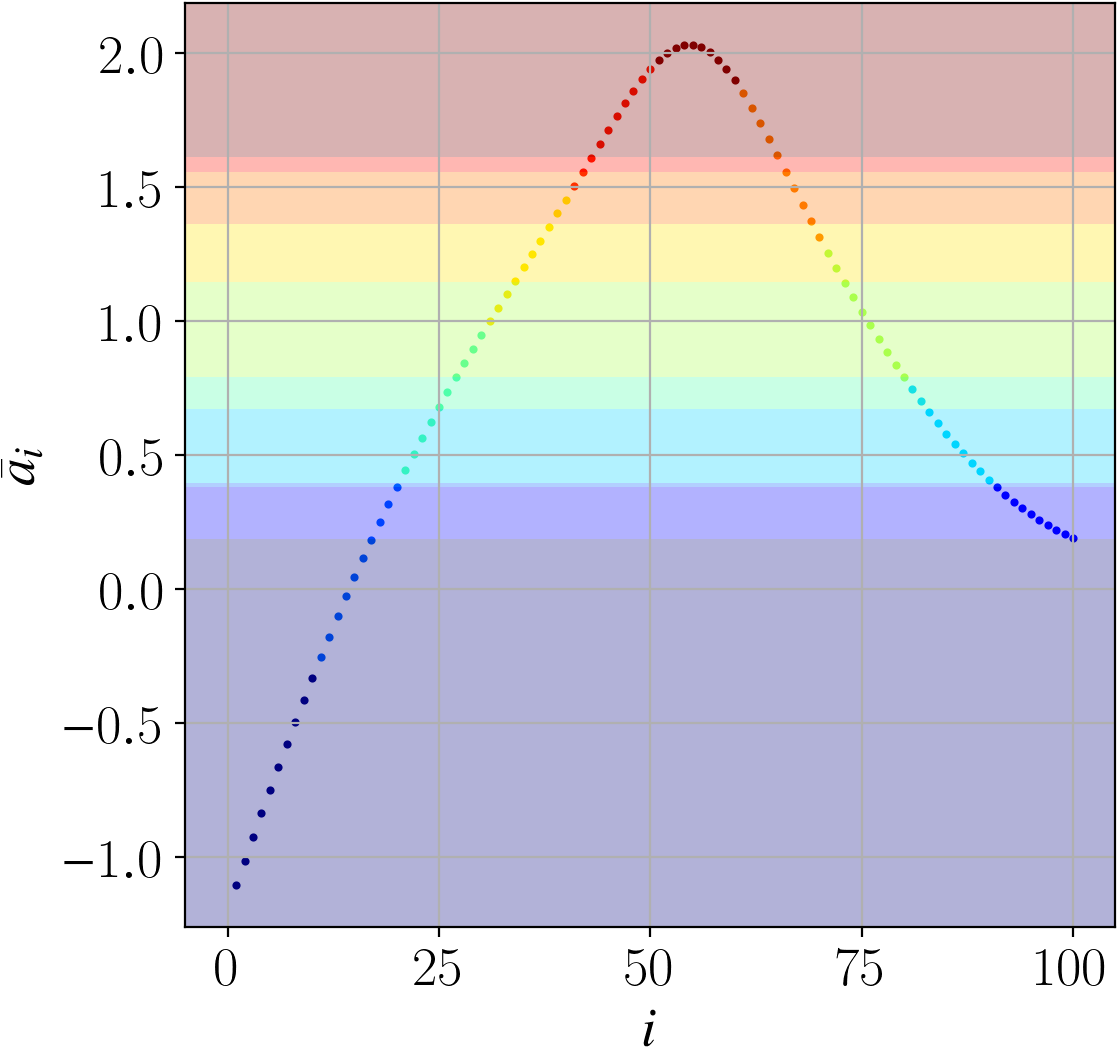}~~~~&
\includegraphics[height=3cm, bb=0 0 375 382]{./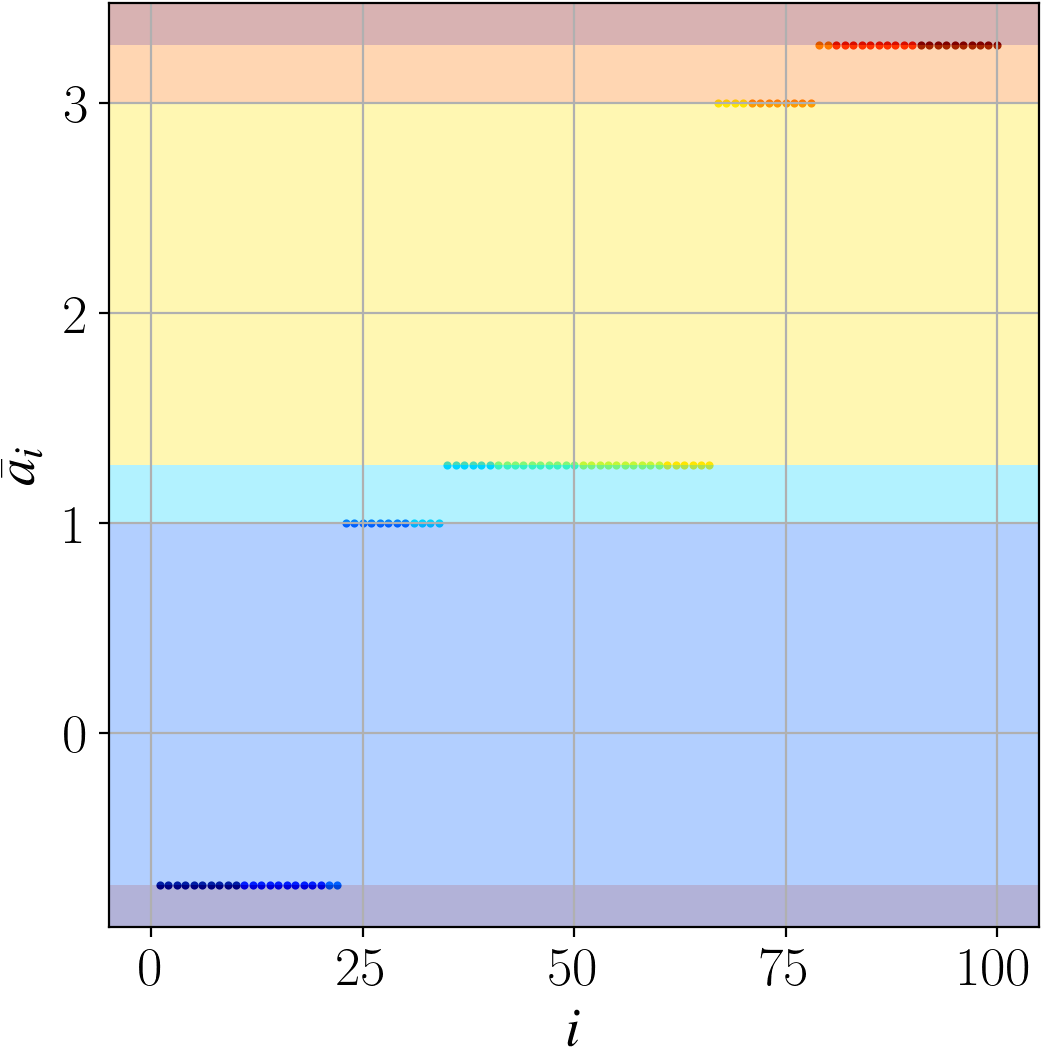}~~~~&
\includegraphics[height=3cm, bb=0 0 375 382]{./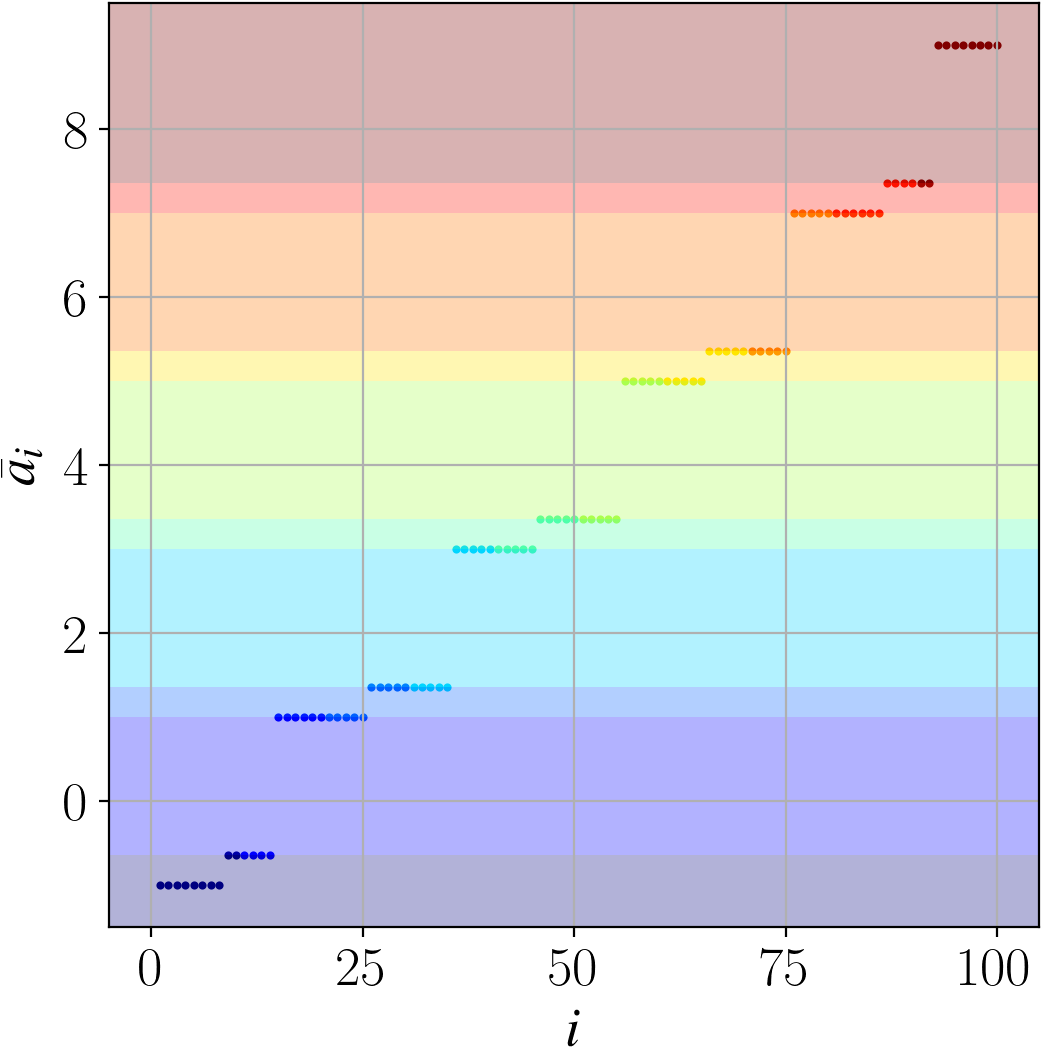}~~~~&
\includegraphics[height=3cm, bb=0 0 399 382]{./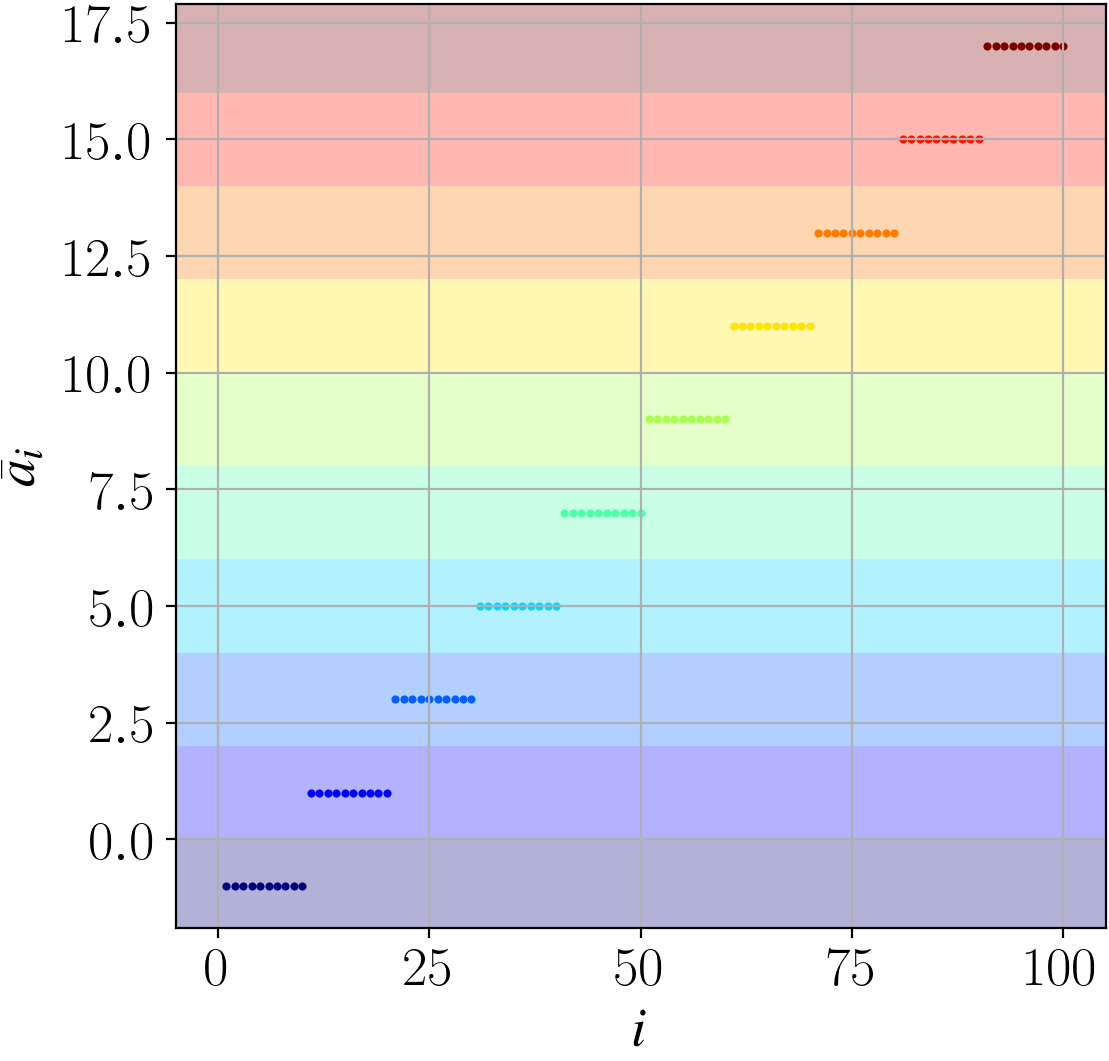}\\
\rotatebox{90}{\tiny~~~~~~~~~~~~~~~IT-O}~~&
\includegraphics[height=3cm, bb=0 0 403 382]{./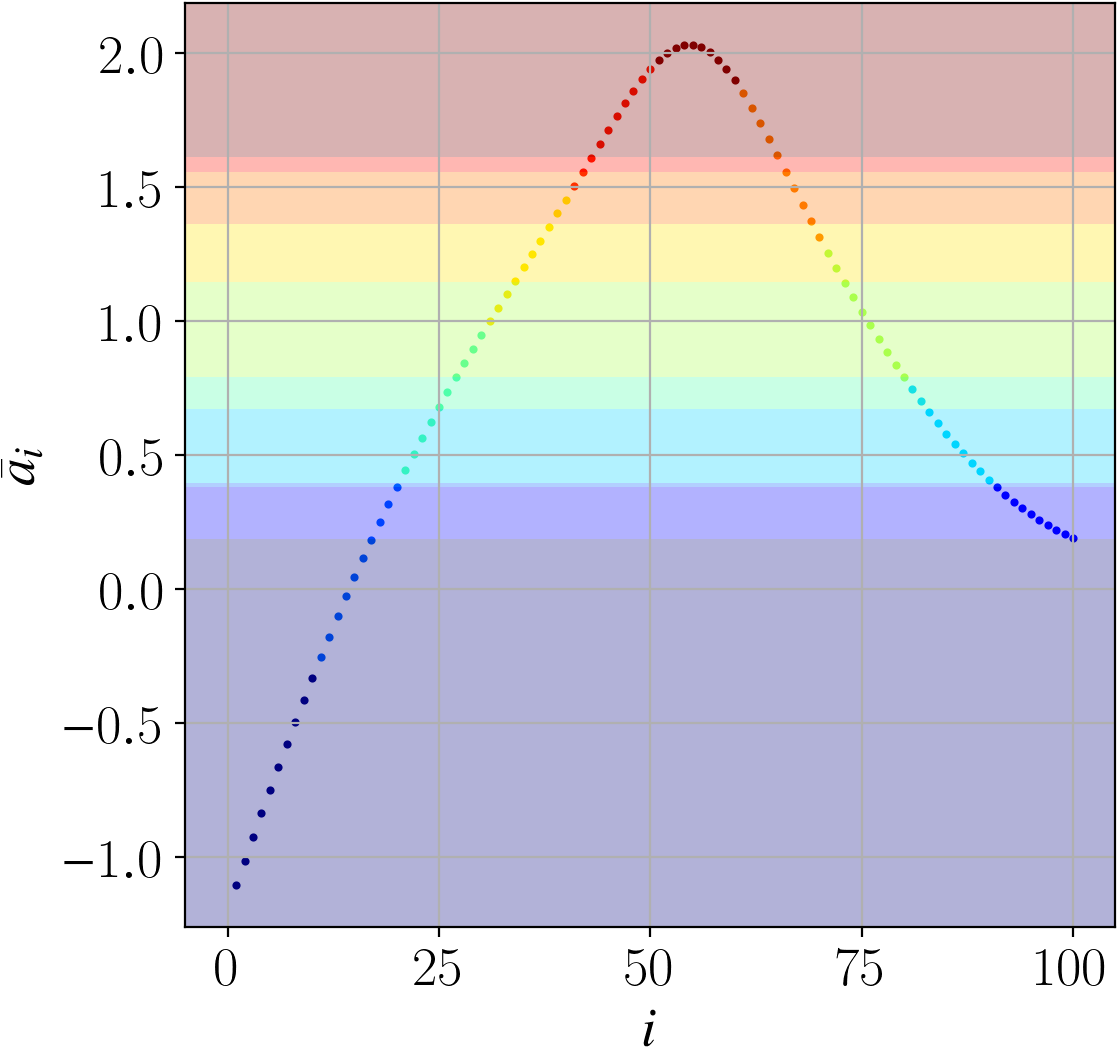}~~~~&
\includegraphics[height=3cm, bb=0 0 388 382]{./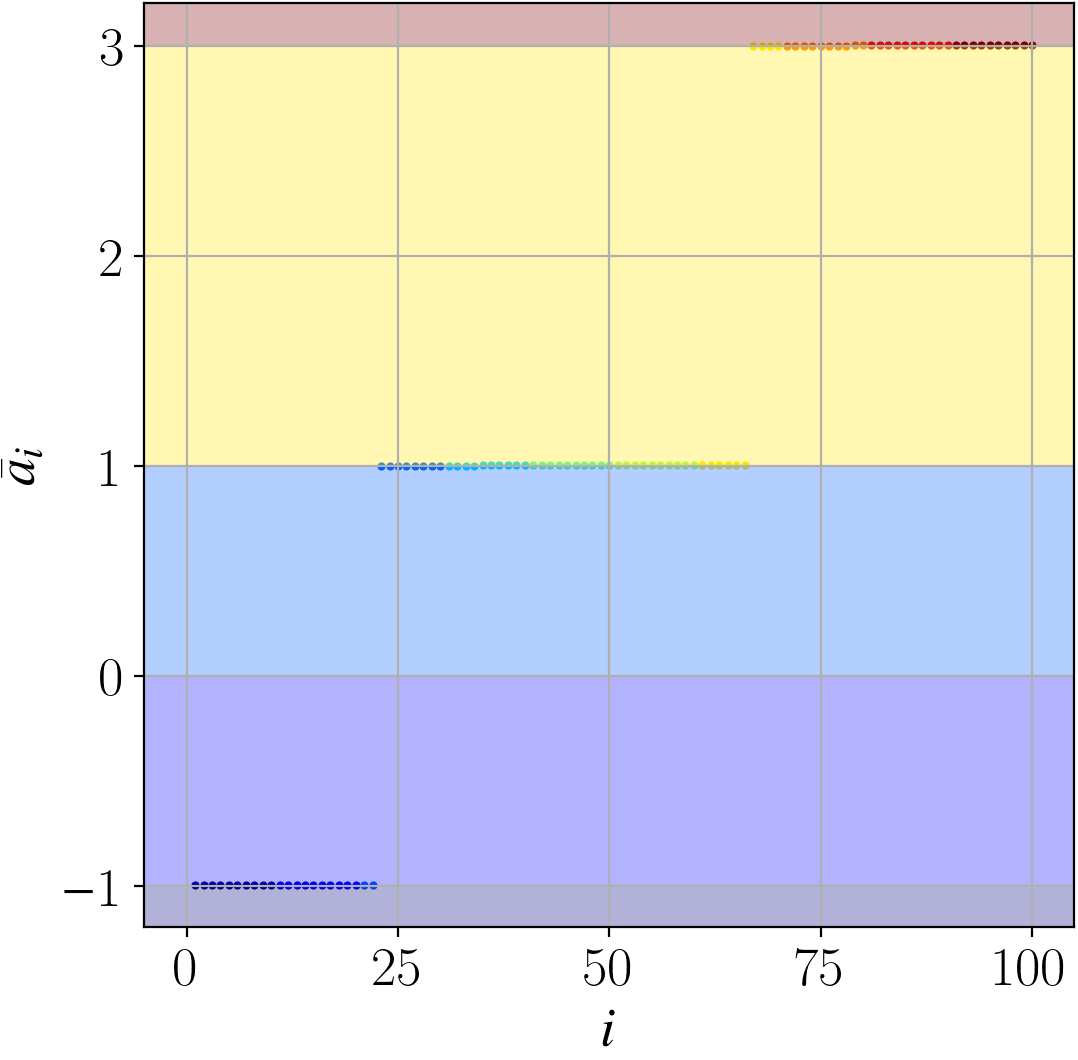}~~~~&
\includegraphics[height=3cm, bb=0 0 375 382]{./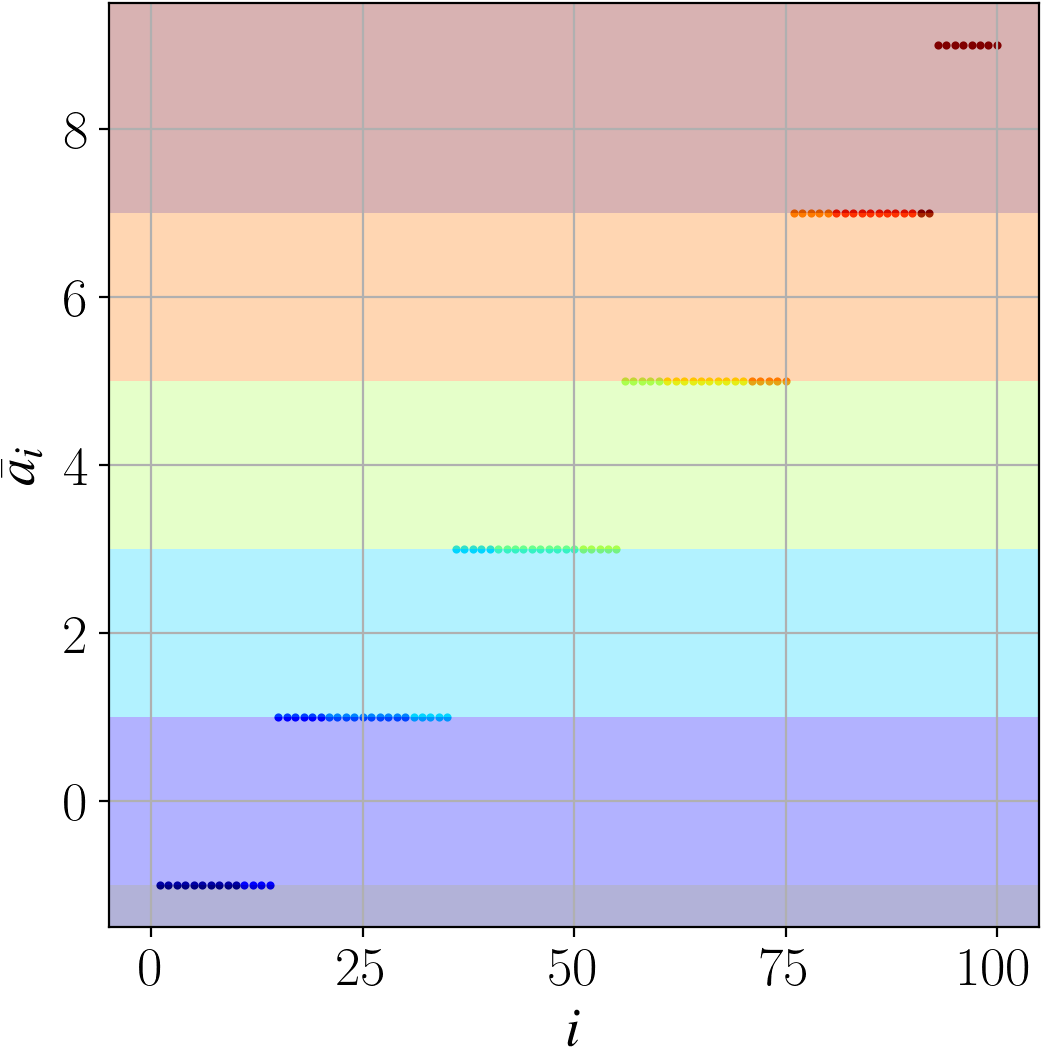}~~~~&
\includegraphics[height=3cm, bb=0 0 399 382]{./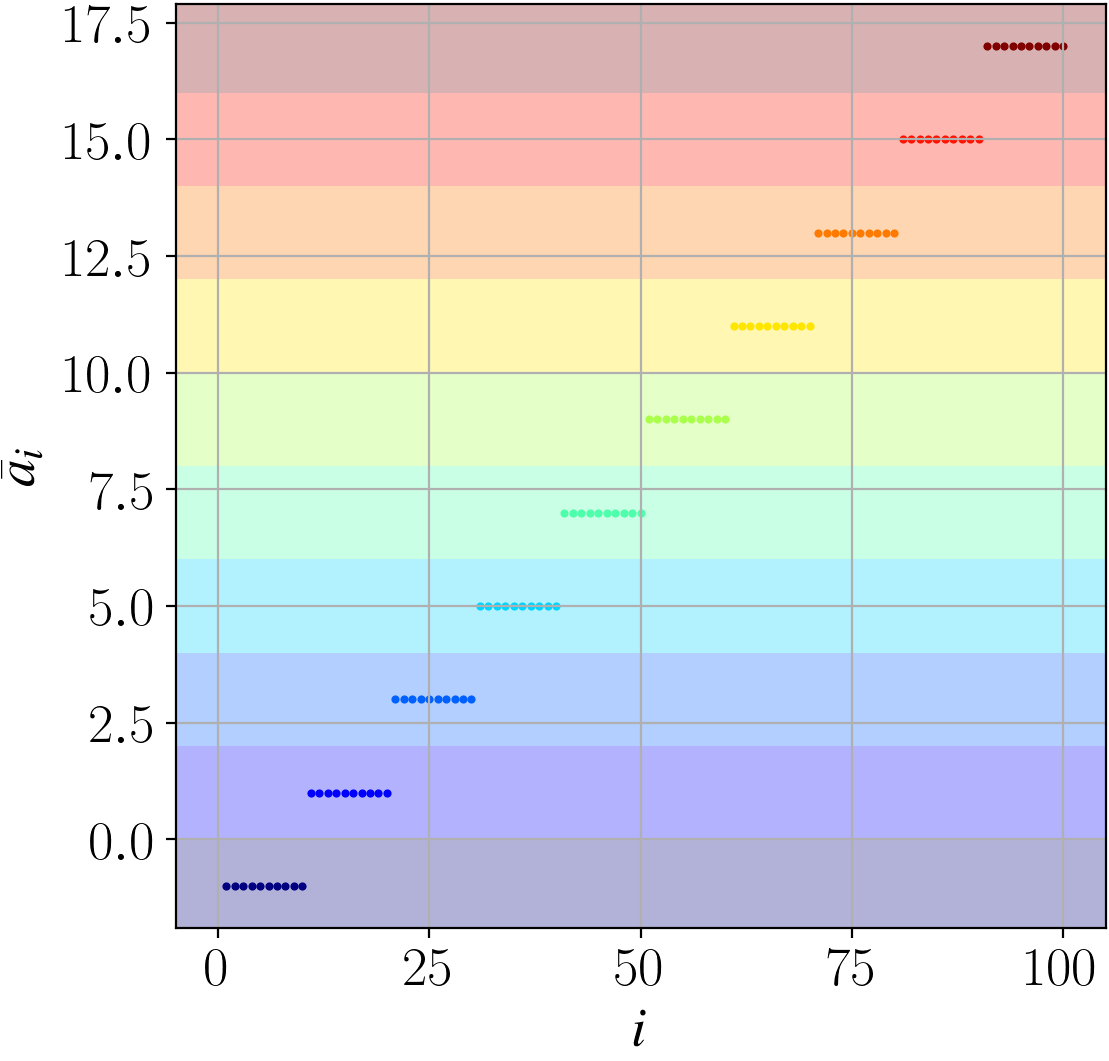}
\end{tabular}\\
\includegraphics[width=15cm, bb=0 0 2520 58]{./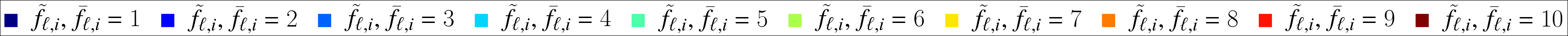}
\caption{%
Learned 1DT value $\bar{a}_i$ and optimal label prediction $\tilde{f}_{\ell,i}$ (color of point) versus $i$, 
and method's label prediction (lightened background color), which represents $\bar{f}_{\ell,i}$,
for the simulation in Section~\ref{sec:SS} with Task-Z.
For example, 3 plates of \protect\hyl{m1} show 3 results 
for H-1 with logi-AT-O, -IT-N, and -IT-O from top to bottom.}
\label{fig:SimRes}
\end{figure}

\subsection{Results for Synthesis Data Experiment in Section~\protect\ref{sec:SDE}}
\label{sec:ApeSDE}
Tables~\ref{tab:SDERes-H}--\ref{tab:SDERes-N} show 
the mean and SD of the test prediction errors (as `$\text{mean}_{\text{SD}}$') 
of MLR and 21 threshold methods for the experiment in Section~\ref{sec:SDE}.
For each combination of the data distribution and error, 
the best results (with the smallest mean of errors) among 
the results for threshold methods are shown in red, 
the result for MLR is shown in red if it is significantly better than the best results,
and results significantly worse than the best results in blue.
We see these tables graphically, 
concentrating especially on two points: 
whether the results for each threshold method are in blue
(i.e., trouble inherent to the learning procedure), 
and whether the results for MLR are colored
(i.e., comparison between threshold methods and standard classification method).

\begin{table}[H]
\renewcommand{\arraystretch}{0.75}\renewcommand{\tabcolsep}{5pt}\centering%
\caption{Results for the experiment in Section~\ref{sec:SDE} with the data $\text{H-}1/3$, $\text{H-}1$, and $\text{H-}3$.}
\label{tab:SDERes-H}\scalebox{0.75}{\begin{tabular}{cc|ccc|ccc|ccc}\toprule
&&\multicolumn{3}{c|}{\scs H-1/3}&\multicolumn{3}{c|}{\scs H-1}&\multicolumn{3}{c}{\scs H-3}\\
&&{\scs MZE}&{\scs MAE}&{\scs RMSE}&{\scs MZE}&{\scs MAE}&{\scs RMSE}&{\scs MZE}&{\scs MAE}&{\scs RMSE}\\
\midrule
\multicolumn{2}{c|}{\scs MLR}&\tcb{$.752_{.033}$}&$1.224_{.273}$&$1.574_{.284}$&$.589_{.007}$&$.708_{.013}$&$.977_{.014}$&$.365_{.009}$&$.371_{.008}$&$.621_{.008}$\\
\midrule\multirow{7}{*}{\rotatebox{90}{\tiny AT-O\hspace{1em}}}&
{\scs logi}&$.740_{.008}$&$1.166_{.011}$&$1.513_{.011}$&$.587_{.007}$&$.706_{.010}$&$.976_{.012}$&$.361_{.008}$&\tcr{$.369_{.009}$}&$.620_{.007}$\\&
{\scs hing}&$.741_{.009}$&$1.173_{.022}$&\tcb{$1.519_{.016}$}&$.589_{.007}$&$.708_{.012}$&$.980_{.014}$&\tcr{$.360_{.007}$}&$.369_{.008}$&$.620_{.007}$\\&
{\scs smhi}&$.739_{.007}$&$1.168_{.012}$&$1.511_{.012}$&$.587_{.008}$&$.705_{.011}$&$.977_{.013}$&$.361_{.008}$&$.370_{.009}$&$.620_{.007}$\\&
{\scs sqhi}&$.739_{.007}$&$1.166_{.011}$&$1.511_{.011}$&$.587_{.008}$&$.706_{.011}$&$.976_{.012}$&$.361_{.007}$&$.369_{.009}$&$.620_{.008}$\\&
{\scs expo}&$.740_{.007}$&$1.170_{.013}$&$1.514_{.014}$&$.587_{.007}$&$.706_{.009}$&$.976_{.012}$&$.362_{.008}$&$.371_{.009}$&$.621_{.007}$\\&
{\scs abso}&\tcb{$.746_{.014}$}&\tcb{$1.195_{.058}$}&\tcb{$1.545_{.069}$}&\tcb{$.620_{.049}$}&\tcb{$.795_{.133}$}&\tcb{$1.075_{.160}$}&\tcb{$.455_{.099}$}&\tcb{$.541_{.197}$}&\tcb{$.818_{.234}$}\\&
{\scs squa}&$.739_{.007}$&$1.164_{.011}$&$1.512_{.010}$&$.587_{.008}$&$.705_{.011}$&$.976_{.013}$&$.361_{.008}$&$.370_{.009}$&$.621_{.007}$\\
\midrule\multirow{7}{*}{\rotatebox{90}{\tiny IT-N\hspace{1em}}}&
{\scs logi}&$.740_{.009}$&$1.168_{.012}$&$1.513_{.012}$&$.587_{.007}$&$.704_{.010}$&$.976_{.013}$&$.362_{.008}$&$.372_{.009}$&$.623_{.008}$\\&
{\scs hing}&$.743_{.009}$&\tcb{$1.176_{.025}$}&$1.524_{.034}$&$.587_{.008}$&$.706_{.014}$&$.977_{.014}$&$.362_{.008}$&$.371_{.010}$&$.623_{.009}$\\&
{\scs smhi}&$.740_{.007}$&$1.167_{.012}$&$1.512_{.012}$&$.587_{.007}$&$.705_{.011}$&$.976_{.013}$&$.362_{.007}$&$.371_{.009}$&$.621_{.007}$\\&
{\scs sqhi}&$.740_{.007}$&$1.166_{.011}$&$1.511_{.011}$&$.586_{.008}$&$.705_{.012}$&$.976_{.013}$&$.361_{.008}$&$.370_{.009}$&$.621_{.008}$\\&
{\scs expo}&$.741_{.008}$&\tcb{$1.169_{.011}$}&$1.513_{.012}$&$.588_{.007}$&$.708_{.011}$&$.980_{.012}$&$.363_{.009}$&$.371_{.011}$&$.622_{.009}$\\&
{\scs abso}&$.742_{.009}$&\tcb{$1.179_{.022}$}&\tcb{$1.526_{.020}$}&$.590_{.008}$&\tcb{$.715_{.016}$}&$.984_{.020}$&\tcb{$.364_{.008}$}&$.373_{.009}$&\tcb{$.625_{.008}$}\\&
{\scs squa}&$.739_{.008}$&$1.165_{.011}$&$1.512_{.012}$&$.586_{.008}$&\tcr{$.704_{.009}$}&$.975_{.012}$&$.361_{.008}$&$.370_{.009}$&$.620_{.007}$\\
\midrule\multirow{7}{*}{\rotatebox{90}{\tiny IT-O\hspace{1em}}}&
{\scs logi}&\tcr{$.738_{.007}$}&$1.165_{.011}$&$1.511_{.010}$&\tcr{$.586_{.007}$}&$.705_{.010}$&\tcr{$.974_{.012}$}&$.361_{.008}$&$.370_{.009}$&$.621_{.007}$\\&
{\scs hing}&$.742_{.010}$&\tcb{$1.172_{.017}$}&$1.521_{.027}$&$.588_{.008}$&$.709_{.016}$&$.981_{.016}$&$.360_{.007}$&$.369_{.008}$&$.620_{.007}$\\&
{\scs smhi}&$.739_{.008}$&$1.165_{.011}$&$1.511_{.011}$&$.587_{.007}$&$.706_{.011}$&$.976_{.014}$&$.361_{.007}$&$.370_{.009}$&$.621_{.008}$\\&
{\scs sqhi}&$.739_{.007}$&\tcr{$1.164_{.011}$}&\tcr{$1.510_{.011}$}&$.586_{.008}$&$.705_{.011}$&$.976_{.013}$&$.360_{.007}$&$.369_{.009}$&\tcr{$.620_{.007}$}\\&
{\scs expo}&$.740_{.007}$&$1.167_{.013}$&$1.512_{.011}$&$.587_{.008}$&$.705_{.010}$&$.976_{.013}$&$.362_{.008}$&$.370_{.009}$&$.621_{.008}$\\&
{\scs abso}&$.741_{.008}$&$1.177_{.032}$&$1.528_{.040}$&$.589_{.010}$&$.710_{.017}$&$.981_{.018}$&$.364_{.008}$&$.374_{.011}$&\tcb{$.626_{.011}$}\\&
{\scs squa}&$.740_{.008}$&$1.165_{.012}$&$1.510_{.011}$&$.586_{.007}$&$.705_{.010}$&$.975_{.012}$&$.361_{.008}$&$.370_{.009}$&$.620_{.007}$\\
\bottomrule
\end{tabular}}\end{table}
\begin{table}[H]
\renewcommand{\arraystretch}{0.75}\renewcommand{\tabcolsep}{5pt}\centering%
\caption{Results for the experiment in Section~\ref{sec:SDE} with the data $\text{M-}1/3$, $\text{M-}1$, and $\text{M-}3$.}
\label{tab:SDERes-M}\scalebox{0.75}{\begin{tabular}{cc|ccc|ccc|ccc}\toprule
&&\multicolumn{3}{c|}{\scs M-1/3}&\multicolumn{3}{c|}{\scs M-1}&\multicolumn{3}{c}{\scs M-3}\\
&&{\scs MZE}&{\scs MAE}&{\scs RMSE}&{\scs MZE}&{\scs MAE}&{\scs RMSE}&{\scs MZE}&{\scs MAE}&{\scs RMSE}\\
\midrule
\multicolumn{2}{c|}{\scs MLR}&$.730_{.008}$&$1.144_{.016}$&$1.470_{.016}$&$.544_{.008}$&$.683_{.013}$&$.971_{.012}$&\tcb{$.315_{.011}$}&$.332_{.012}$&$.612_{.013}$\\
\midrule\multirow{7}{*}{\rotatebox{90}{\tiny AT-O\hspace{1em}}}&
{\scs logi}&$.729_{.007}$&$1.145_{.011}$&$1.466_{.009}$&$.543_{.009}$&$.679_{.010}$&$.971_{.014}$&$.310_{.008}$&$.332_{.011}$&$.614_{.016}$\\&
{\scs hing}&$.730_{.007}$&\tcb{$1.152_{.012}$}&\tcb{$1.474_{.013}$}&$.544_{.009}$&$.682_{.011}$&$.975_{.017}$&$.310_{.008}$&$.332_{.011}$&$.612_{.016}$\\&
{\scs smhi}&$.728_{.006}$&\tcr{$1.144_{.013}$}&$1.464_{.011}$&$.542_{.008}$&$.678_{.010}$&$.970_{.011}$&$.310_{.008}$&$.331_{.010}$&\tcr{$.612_{.014}$}\\&
{\scs sqhi}&\tcr{$.727_{.006}$}&$1.145_{.013}$&$1.465_{.010}$&$.542_{.009}$&$.680_{.009}$&$.970_{.013}$&$.310_{.008}$&$.332_{.010}$&$.612_{.015}$\\&
{\scs expo}&$.729_{.006}$&$1.146_{.014}$&$1.466_{.011}$&$.543_{.008}$&$.679_{.010}$&$.971_{.013}$&$.309_{.008}$&$.332_{.010}$&$.614_{.015}$\\&
{\scs abso}&$.730_{.008}$&$1.152_{.029}$&$1.480_{.037}$&$.551_{.020}$&\tcb{$.696_{.027}$}&\tcb{$.984_{.029}$}&\tcb{$.387_{.147}$}&$.555_{.435}$&\tcb{$.897_{.555}$}\\&
{\scs squa}&$.728_{.006}$&$1.146_{.013}$&$1.466_{.012}$&$.542_{.009}$&$.679_{.010}$&$.971_{.012}$&$.309_{.008}$&$.332_{.011}$&$.613_{.015}$\\
\midrule\multirow{7}{*}{\rotatebox{90}{\tiny IT-N\hspace{1em}}}&
{\scs logi}&$.728_{.007}$&\tcb{$1.151_{.013}$}&$1.469_{.015}$&$.542_{.009}$&$.679_{.009}$&$.970_{.012}$&$.311_{.009}$&$.333_{.011}$&$.616_{.015}$\\&
{\scs hing}&$.729_{.007}$&\tcb{$1.155_{.014}$}&\tcb{$1.479_{.023}$}&$.544_{.009}$&\tcb{$.687_{.015}$}&$.982_{.027}$&$.312_{.008}$&\tcb{$.337_{.009}$}&\tcb{$.623_{.015}$}\\&
{\scs smhi}&$.729_{.007}$&$1.146_{.014}$&$1.469_{.014}$&$.543_{.009}$&$.679_{.009}$&$.971_{.013}$&$.310_{.008}$&$.332_{.010}$&$.612_{.014}$\\&
{\scs sqhi}&$.728_{.007}$&$1.146_{.012}$&$1.467_{.012}$&$.542_{.008}$&$.680_{.010}$&$.972_{.013}$&$.311_{.009}$&$.332_{.010}$&$.613_{.015}$\\&
{\scs expo}&$.729_{.007}$&$1.150_{.015}$&$1.470_{.014}$&$.544_{.008}$&\tcb{$.686_{.009}$}&$.976_{.014}$&\tcb{$.334_{.019}$}&\tcb{$.381_{.060}$}&\tcb{$.684_{.085}$}\\&
{\scs abso}&\tcb{$.732_{.008}$}&\tcb{$1.165_{.035}$}&\tcb{$1.489_{.049}$}&$.546_{.011}$&\tcb{$.693_{.020}$}&\tcb{$.983_{.025}$}&\tcb{$.314_{.008}$}&\tcb{$.341_{.011}$}&\tcb{$.625_{.015}$}\\&
{\scs squa}&$.728_{.006}$&$1.146_{.012}$&$1.467_{.011}$&$.542_{.009}$&$.680_{.009}$&$.971_{.013}$&$.310_{.008}$&$.332_{.011}$&$.613_{.015}$\\
\midrule\multirow{7}{*}{\rotatebox{90}{\tiny IT-O\hspace{1em}}}&
{\scs logi}&$.728_{.006}$&$1.146_{.012}$&\tcr{$1.464_{.010}$}&$.542_{.009}$&$.678_{.009}$&\tcr{$.969_{.012}$}&$.309_{.008}$&$.332_{.011}$&$.613_{.015}$\\&
{\scs hing}&$.730_{.007}$&\tcb{$1.160_{.021}$}&\tcb{$1.483_{.029}$}&\tcb{$.546_{.009}$}&\tcb{$.697_{.025}$}&\tcb{$.993_{.038}$}&$.310_{.008}$&$.332_{.011}$&$.614_{.016}$\\&
{\scs smhi}&$.727_{.006}$&$1.146_{.014}$&$1.466_{.008}$&$.542_{.009}$&$.679_{.009}$&$.970_{.013}$&$.310_{.008}$&$.332_{.010}$&$.613_{.014}$\\&
{\scs sqhi}&$.728_{.007}$&$1.146_{.013}$&$1.465_{.009}$&$.542_{.009}$&$.679_{.009}$&$.971_{.013}$&\tcr{$.309_{.008}$}&\tcr{$.331_{.010}$}&$.613_{.015}$\\&
{\scs expo}&$.728_{.007}$&$1.146_{.013}$&$1.465_{.015}$&$.543_{.008}$&\tcr{$.678_{.009}$}&$.970_{.012}$&$.310_{.009}$&$.333_{.011}$&$.614_{.015}$\\&
{\scs abso}&$.731_{.009}$&\tcb{$1.156_{.015}$}&\tcb{$1.484_{.019}$}&$.546_{.012}$&\tcb{$.696_{.029}$}&\tcb{$.985_{.039}$}&\tcb{$.317_{.011}$}&\tcb{$.346_{.017}$}&\tcb{$.635_{.026}$}\\&
{\scs squa}&$.728_{.007}$&$1.147_{.014}$&$1.465_{.013}$&\tcr{$.542_{.009}$}&$.679_{.009}$&$.971_{.013}$&$.310_{.008}$&$.332_{.010}$&$.614_{.014}$\\
\bottomrule
\end{tabular}}\end{table}
\begin{table}[H]
\renewcommand{\arraystretch}{0.75}\renewcommand{\tabcolsep}{5pt}\centering%
\caption{Results for the experiment in Section~\ref{sec:SDE} with the data $\text{A-}1/3$, $\text{A-}1$, and $\text{A-}3$.}
\label{tab:SDERes-A}\scalebox{0.75}{\begin{tabular}{cc|ccc|ccc|ccc}\toprule
&&\multicolumn{3}{c|}{\scs A-1/3}&\multicolumn{3}{c|}{\scs A-1}&\multicolumn{3}{c}{\scs A-3}\\
&&{\scs MZE}&{\scs MAE}&{\scs RMSE}&{\scs MZE}&{\scs MAE}&{\scs RMSE}&{\scs MZE}&{\scs MAE}&{\scs RMSE}\\
\midrule
\multicolumn{2}{c|}{\scs MLR}&$.738_{.010}$&$1.162_{.018}$&$1.511_{.019}$&$.572_{.009}$&$.702_{.012}$&$.976_{.013}$&$.321_{.008}$&$.347_{.011}$&$.619_{.012}$\\
\midrule\multirow{7}{*}{\rotatebox{90}{\tiny AT-O\hspace{1em}}}&
{\scs logi}&$.736_{.008}$&$1.164_{.012}$&$1.513_{.012}$&$.570_{.009}$&$.701_{.011}$&$.976_{.011}$&$.320_{.006}$&$.347_{.009}$&$.617_{.008}$\\&
{\scs hing}&$.738_{.008}$&$1.170_{.018}$&\tcb{$1.520_{.015}$}&$.571_{.009}$&$.704_{.012}$&$.978_{.012}$&$.320_{.005}$&$.348_{.008}$&$.617_{.006}$\\&
{\scs smhi}&$.736_{.007}$&$1.163_{.012}$&$1.512_{.014}$&$.570_{.008}$&$.701_{.009}$&$.975_{.011}$&$.321_{.007}$&$.347_{.008}$&$.617_{.007}$\\&
{\scs sqhi}&$.736_{.007}$&\tcr{$1.162_{.012}$}&$1.512_{.013}$&$.570_{.008}$&\tcr{$.699_{.010}$}&$.975_{.011}$&$.320_{.006}$&$.347_{.008}$&\tcr{$.616_{.007}$}\\&
{\scs expo}&$.737_{.007}$&$1.165_{.012}$&$1.515_{.014}$&$.570_{.008}$&$.700_{.010}$&$.975_{.011}$&$.322_{.006}$&$.348_{.009}$&$.618_{.008}$\\&
{\scs abso}&\tcb{$.746_{.015}$}&\tcb{$1.203_{.071}$}&\tcb{$1.559_{.087}$}&\tcb{$.625_{.048}$}&\tcb{$.824_{.115}$}&\tcb{$1.114_{.142}$}&\tcb{$.500_{.089}$}&\tcb{$.679_{.188}$}&\tcb{$.985_{.200}$}\\&
{\scs squa}&$.736_{.007}$&$1.162_{.013}$&$1.514_{.012}$&$.570_{.008}$&$.699_{.009}$&$.975_{.011}$&$.320_{.007}$&$.348_{.009}$&$.618_{.007}$\\
\midrule\multirow{7}{*}{\rotatebox{90}{\tiny IT-N\hspace{1em}}}&
{\scs logi}&$.737_{.008}$&$1.163_{.012}$&$1.513_{.014}$&$.570_{.009}$&$.700_{.009}$&\tcr{$.975_{.011}$}&$.321_{.006}$&$.349_{.008}$&$.618_{.007}$\\&
{\scs hing}&\tcb{$.742_{.011}$}&\tcb{$1.183_{.042}$}&\tcb{$1.542_{.050}$}&$.571_{.008}$&$.704_{.013}$&$.979_{.014}$&$.322_{.007}$&$.349_{.008}$&$.619_{.009}$\\&
{\scs smhi}&$.737_{.007}$&$1.165_{.013}$&$1.514_{.016}$&$.570_{.008}$&$.701_{.010}$&$.975_{.011}$&$.321_{.006}$&$.347_{.008}$&$.617_{.007}$\\&
{\scs sqhi}&$.737_{.007}$&$1.163_{.012}$&$1.514_{.016}$&$.570_{.008}$&$.700_{.009}$&$.976_{.012}$&\tcr{$.319_{.006}$}&\tcr{$.346_{.008}$}&$.616_{.007}$\\&
{\scs expo}&$.737_{.007}$&$1.165_{.012}$&$1.513_{.013}$&$.572_{.008}$&$.702_{.010}$&$.978_{.012}$&\tcb{$.348_{.027}$}&\tcb{$.405_{.060}$}&\tcb{$.668_{.057}$}\\&
{\scs abso}&\tcb{$.741_{.008}$}&$1.179_{.042}$&\tcb{$1.535_{.051}$}&\tcb{$.574_{.010}$}&\tcb{$.712_{.016}$}&$.980_{.014}$&\tcb{$.333_{.021}$}&\tcb{$.367_{.044}$}&\tcb{$.638_{.043}$}\\&
{\scs squa}&$.736_{.007}$&$1.163_{.012}$&$1.512_{.013}$&\tcr{$.569_{.008}$}&$.700_{.010}$&$.976_{.011}$&$.320_{.006}$&$.347_{.008}$&$.617_{.007}$\\
\midrule\multirow{7}{*}{\rotatebox{90}{\tiny IT-O\hspace{1em}}}&
{\scs logi}&$.736_{.007}$&$1.164_{.012}$&$1.512_{.013}$&$.570_{.008}$&$.700_{.009}$&$.975_{.012}$&$.320_{.006}$&$.348_{.008}$&$.617_{.007}$\\&
{\scs hing}&$.740_{.008}$&\tcb{$1.192_{.049}$}&\tcb{$1.546_{.071}$}&$.573_{.008}$&\tcb{$.706_{.010}$}&$.980_{.014}$&$.320_{.006}$&$.348_{.008}$&$.617_{.007}$\\&
{\scs smhi}&$.736_{.007}$&$1.165_{.013}$&$1.514_{.014}$&$.570_{.008}$&$.701_{.010}$&$.975_{.010}$&$.320_{.006}$&$.347_{.008}$&$.617_{.007}$\\&
{\scs sqhi}&$.736_{.007}$&$1.165_{.016}$&$1.513_{.013}$&$.570_{.007}$&$.700_{.010}$&$.975_{.011}$&$.320_{.005}$&$.347_{.008}$&$.617_{.007}$\\&
{\scs expo}&$.737_{.008}$&$1.164_{.013}$&$1.512_{.013}$&$.570_{.008}$&$.699_{.010}$&$.975_{.012}$&$.322_{.007}$&$.350_{.009}$&$.619_{.009}$\\&
{\scs abso}&$.737_{.007}$&$1.170_{.030}$&$1.524_{.035}$&\tcb{$.579_{.013}$}&\tcb{$.721_{.026}$}&\tcb{$.998_{.029}$}&\tcb{$.327_{.015}$}&\tcb{$.355_{.018}$}&$.626_{.019}$\\&
{\scs squa}&\tcr{$.735_{.007}$}&$1.163_{.014}$&\tcr{$1.512_{.013}$}&$.569_{.008}$&$.700_{.010}$&$.976_{.011}$&$.321_{.005}$&$.348_{.008}$&$.617_{.008}$\\
\bottomrule
\end{tabular}}\end{table}
\begin{table}[H]
\renewcommand{\arraystretch}{0.75}\renewcommand{\tabcolsep}{5pt}\centering%
\caption{Results for the experiment in Section~\ref{sec:SDE} with the data $\text{O-}(1/3,1)$, $\text{O-}(1/3,3)$, and $\text{O-}(1/3,3)$.}
\label{tab:SDERes-O}\scalebox{0.75}{\begin{tabular}{cc|ccc|ccc|ccc}\toprule
&&\multicolumn{3}{c|}{\scs O-(1/3,1)}&\multicolumn{3}{c|}{\scs O-(1/3,3)}&\multicolumn{3}{c}{\scs O-(1,3)}\\
&&{\scs MZE}&{\scs MAE}&{\scs RMSE}&{\scs MZE}&{\scs MAE}&{\scs RMSE}&{\scs MZE}&{\scs MAE}&{\scs RMSE}\\
\midrule
\multicolumn{2}{c|}{\scs MLR}&\tcb{$.678_{.033}$}&$.989_{.194}$&$1.335_{.230}$&$.575_{.075}$&$.858_{.374}$&$1.251_{.377}$&$.495_{.059}$&$.600_{.236}$&$.903_{.298}$\\
\midrule\multirow{7}{*}{\rotatebox{90}{\tiny AT-O\hspace{1em}}}&
{\scs logi}&$.668_{.008}$&$.940_{.014}$&\tcr{$1.276_{.012}$}&$.561_{.008}$&$.776_{.016}$&$1.165_{.016}$&\tcb{$.483_{.010}$}&\tcb{$.551_{.016}$}&$.844_{.065}$\\&
{\scs hing}&$.668_{.009}$&$.942_{.016}$&$1.284_{.016}$&$.561_{.013}$&$.779_{.019}$&$1.173_{.037}$&\tcb{$.481_{.009}$}&$.547_{.019}$&$.841_{.067}$\\&
{\scs smhi}&$.670_{.010}$&$.941_{.014}$&$1.278_{.014}$&$.560_{.010}$&$.773_{.013}$&$1.165_{.012}$&$.480_{.009}$&$.543_{.011}$&$.827_{.020}$\\&
{\scs sqhi}&\tcb{$.669_{.007}$}&$.939_{.013}$&$1.278_{.015}$&$.561_{.010}$&$.775_{.012}$&$1.165_{.013}$&$.479_{.011}$&$.542_{.007}$&$.825_{.006}$\\&
{\scs expo}&$.670_{.008}$&$.940_{.015}$&$1.278_{.013}$&\tcb{$.562_{.012}$}&$.776_{.013}$&$1.164_{.013}$&\tcb{$.483_{.011}$}&$.546_{.009}$&$.828_{.013}$\\&
{\scs abso}&\tcb{$.721_{.044}$}&\tcb{$1.126_{.148}$}&\tcb{$1.479_{.156}$}&\tcb{$.670_{.071}$}&\tcb{$1.056_{.158}$}&\tcb{$1.434_{.138}$}&\tcb{$.615_{.057}$}&\tcb{$.848_{.154}$}&\tcb{$1.149_{.158}$}\\&
{\scs squa}&$.668_{.008}$&$.938_{.012}$&$1.278_{.013}$&$.557_{.008}$&$.777_{.018}$&$1.165_{.018}$&\tcr{$.476_{.009}$}&$.542_{.009}$&$.826_{.010}$\\
\midrule\multirow{7}{*}{\rotatebox{90}{\tiny IT-N\hspace{1em}}}&
{\scs logi}&$.667_{.008}$&$.941_{.018}$&$1.279_{.015}$&$.558_{.008}$&$.774_{.012}$&$1.166_{.013}$&$.480_{.011}$&$.542_{.010}$&$.825_{.012}$\\&
{\scs hing}&$.667_{.010}$&\tcb{$.946_{.013}$}&$1.282_{.013}$&$.559_{.008}$&\tcb{$.781_{.016}$}&$1.169_{.020}$&$.480_{.011}$&\tcb{$.547_{.010}$}&$.829_{.011}$\\&
{\scs smhi}&$.667_{.010}$&$.944_{.020}$&$1.286_{.039}$&\tcr{$.556_{.008}$}&$.777_{.016}$&$1.170_{.022}$&$.478_{.010}$&$.543_{.009}$&$.825_{.011}$\\&
{\scs sqhi}&$.667_{.009}$&\tcr{$.937_{.011}$}&$1.279_{.013}$&$.558_{.010}$&$.777_{.018}$&$1.170_{.019}$&$.479_{.010}$&$.544_{.009}$&$.825_{.012}$\\&
{\scs expo}&$.667_{.008}$&\tcb{$.946_{.015}$}&\tcb{$1.287_{.023}$}&$.557_{.011}$&$.776_{.013}$&\tcb{$1.168_{.011}$}&$.483_{.019}$&\tcb{$.557_{.049}$}&$.837_{.053}$\\&
{\scs abso}&\tcb{$.673_{.010}$}&\tcb{$.960_{.031}$}&\tcb{$1.301_{.042}$}&\tcb{$.566_{.010}$}&\tcb{$.796_{.021}$}&\tcb{$1.181_{.021}$}&\tcb{$.485_{.014}$}&$.554_{.025}$&$.837_{.028}$\\&
{\scs squa}&\tcr{$.665_{.007}$}&$.940_{.012}$&$1.279_{.013}$&$.558_{.009}$&\tcr{$.772_{.012}$}&\tcr{$1.162_{.012}$}&$.480_{.008}$&$.544_{.009}$&$.827_{.009}$\\
\midrule\multirow{7}{*}{\rotatebox{90}{\tiny IT-O\hspace{1em}}}&
{\scs logi}&$.669_{.009}$&$.947_{.020}$&$1.288_{.031}$&$.558_{.009}$&\tcb{$.777_{.012}$}&$1.165_{.015}$&$.479_{.007}$&$.544_{.008}$&$.825_{.008}$\\&
{\scs hing}&\tcb{$.670_{.008}$}&\tcb{$.947_{.016}$}&$1.282_{.018}$&$.559_{.009}$&$.777_{.014}$&$1.169_{.017}$&$.479_{.011}$&$.543_{.010}$&$.827_{.012}$\\&
{\scs smhi}&$.668_{.009}$&$.944_{.016}$&$1.281_{.013}$&$.557_{.009}$&$.775_{.017}$&$1.164_{.020}$&$.480_{.009}$&$.545_{.011}$&$.826_{.011}$\\&
{\scs sqhi}&$.668_{.009}$&$.943_{.013}$&$1.280_{.012}$&$.558_{.009}$&\tcb{$.781_{.016}$}&$1.172_{.030}$&$.478_{.011}$&\tcr{$.541_{.007}$}&$.825_{.009}$\\&
{\scs expo}&$.667_{.008}$&$.940_{.023}$&$1.284_{.042}$&$.559_{.011}$&$.776_{.015}$&$1.166_{.012}$&$.478_{.010}$&$.543_{.008}$&\tcr{$.824_{.009}$}\\&
{\scs abso}&\tcb{$.672_{.010}$}&\tcb{$.953_{.028}$}&\tcb{$1.294_{.032}$}&\tcb{$.568_{.010}$}&\tcb{$.800_{.023}$}&\tcb{$1.191_{.020}$}&$.481_{.011}$&\tcb{$.550_{.014}$}&\tcb{$.832_{.014}$}\\&
{\scs squa}&$.670_{.010}$&$.947_{.020}$&$1.283_{.022}$&$.559_{.009}$&$.777_{.020}$&$1.173_{.039}$&$.479_{.010}$&$.548_{.013}$&$.833_{.028}$\\
\bottomrule
\end{tabular}}\end{table}
\begin{table}[H]
\renewcommand{\arraystretch}{0.75}\renewcommand{\tabcolsep}{5pt}\centering%
\caption{Results for the experiment in Section~\ref{sec:SDE} with the data $\text{N-}1/3$, $\text{N-}1$, and $\text{N-}3$.}
\label{tab:SDERes-N}\scalebox{0.75}{\begin{tabular}{cc|ccc|ccc|ccc}\toprule
&&\multicolumn{3}{c|}{\scs N-1/3}&\multicolumn{3}{c|}{\scs N-1}&\multicolumn{3}{c}{\scs N-3}\\
&&{\scs MZE}&{\scs MAE}&{\scs RMSE}&{\scs MZE}&{\scs MAE}&{\scs RMSE}&{\scs MZE}&{\scs MAE}&{\scs RMSE}\\
\midrule
\multicolumn{2}{c|}{\scs MLR}&\tcr{$.742_{.008}$}&\tcr{$1.914_{.013}$}&\tcr{$2.397_{.011}$}&\tcr{$.591_{.010}$}&\tcr{$1.283_{.024}$}&$1.693_{.018}$&\tcr{$.362_{.008}$}&\tcr{$.697_{.015}$}&$1.087_{.010}$\\
\midrule\multirow{7}{*}{\rotatebox{90}{\tiny AT-O\hspace{1em}}}&
{\scs logi}&\tcb{$.782_{.017}$}&$1.940_{.022}$&\tcr{$2.407_{.018}$}&\tcb{$.665_{.021}$}&$1.310_{.019}$&$1.695_{.018}$&\tcb{$.523_{.016}$}&\tcb{$.758_{.015}$}&\tcr{$1.091_{.013}$}\\&
{\scs hing}&\tcb{$.784_{.022}$}&$1.946_{.024}$&\tcb{$2.425_{.024}$}&\tcb{$.657_{.024}$}&\tcb{$1.324_{.021}$}&$1.705_{.024}$&\tcb{$.504_{.066}$}&$.809_{.244}$&\tcb{$1.167_{.273}$}\\&
{\scs smhi}&\tcb{$.780_{.017}$}&$1.940_{.022}$&$2.411_{.025}$&\tcb{$.671_{.015}$}&$1.312_{.026}$&$1.696_{.020}$&\tcb{$.507_{.018}$}&\tcr{$.747_{.019}$}&$1.094_{.015}$\\&
{\scs sqhi}&\tcb{$.782_{.015}$}&$1.935_{.021}$&$2.411_{.022}$&\tcb{$.665_{.019}$}&$1.315_{.030}$&$1.699_{.022}$&\tcb{$.514_{.014}$}&\tcb{$.759_{.021}$}&$1.093_{.013}$\\&
{\scs expo}&\tcb{$.780_{.011}$}&$1.939_{.018}$&$2.409_{.019}$&\tcb{$.674_{.020}$}&$1.311_{.025}$&$1.697_{.022}$&\tcb{$.526_{.020}$}&\tcb{$.763_{.018}$}&$1.094_{.011}$\\&
{\scs abso}&\tcb{$.788_{.018}$}&\tcr{$1.935_{.021}$}&$2.410_{.019}$&\tcb{$.688_{.037}$}&\tcb{$1.360_{.066}$}&\tcb{$1.752_{.071}$}&\tcb{$.684_{.107}$}&\tcb{$1.188_{.333}$}&\tcb{$1.530_{.358}$}\\&
{\scs squa}&\tcb{$.778_{.018}$}&$1.944_{.019}$&$2.415_{.013}$&\tcb{$.665_{.014}$}&\tcr{$1.310_{.023}$}&$1.697_{.020}$&\tcb{$.523_{.013}$}&\tcb{$.764_{.018}$}&\tcb{$1.096_{.011}$}\\
\midrule\multirow{7}{*}{\rotatebox{90}{\tiny IT-N\hspace{1em}}}&
{\scs logi}&$.771_{.014}$&$1.949_{.033}$&\tcb{$2.422_{.024}$}&\tcb{$.661_{.011}$}&$1.317_{.016}$&\tcr{$1.695_{.019}$}&\tcb{$.514_{.010}$}&\tcb{$.764_{.020}$}&$1.097_{.017}$\\&
{\scs hing}&\tcb{$.786_{.023}$}&\tcb{$2.072_{.101}$}&\tcb{$2.533_{.086}$}&\tcr{$.641_{.014}$}&\tcb{$1.384_{.079}$}&\tcb{$1.795_{.089}$}&\tcr{$.464_{.041}$}&\tcb{$.797_{.037}$}&\tcb{$1.151_{.035}$}\\&
{\scs smhi}&$.769_{.013}$&\tcb{$1.974_{.038}$}&\tcb{$2.438_{.025}$}&\tcb{$.654_{.014}$}&\tcb{$1.337_{.029}$}&\tcb{$1.730_{.037}$}&\tcb{$.486_{.016}$}&\tcb{$.760_{.021}$}&\tcb{$1.108_{.020}$}\\&
{\scs sqhi}&$.770_{.011}$&\tcb{$1.954_{.025}$}&\tcb{$2.422_{.015}$}&\tcb{$.659_{.010}$}&$1.319_{.019}$&$1.697_{.021}$&\tcb{$.511_{.014}$}&\tcb{$.761_{.018}$}&$1.095_{.012}$\\&
{\scs expo}&\tcb{$.773_{.011}$}&$1.944_{.028}$&$2.416_{.024}$&\tcb{$.669_{.012}$}&$1.321_{.028}$&$1.702_{.021}$&\tcb{$.511_{.016}$}&\tcb{$.766_{.020}$}&$1.097_{.012}$\\&
{\scs abso}&\tcb{$.780_{.022}$}&\tcb{$2.082_{.095}$}&\tcb{$2.538_{.070}$}&\tcb{$.657_{.015}$}&\tcb{$1.435_{.063}$}&\tcb{$1.861_{.073}$}&\tcb{$.484_{.023}$}&\tcb{$.870_{.053}$}&\tcb{$1.215_{.050}$}\\&
{\scs squa}&\tcr{$.768_{.007}$}&$1.942_{.020}$&\tcb{$2.419_{.019}$}&\tcb{$.656_{.014}$}&$1.312_{.022}$&$1.700_{.019}$&\tcb{$.510_{.014}$}&\tcb{$.760_{.017}$}&\tcb{$1.098_{.016}$}\\
\midrule\multirow{7}{*}{\rotatebox{90}{\tiny IT-O\hspace{1em}}}&
{\scs logi}&\tcb{$.773_{.007}$}&\tcb{$1.951_{.027}$}&$2.420_{.021}$&\tcb{$.666_{.012}$}&$1.317_{.018}$&$1.700_{.017}$&\tcb{$.512_{.014}$}&\tcb{$.759_{.020}$}&\tcb{$1.098_{.012}$}\\&
{\scs hing}&$.775_{.016}$&\tcb{$2.028_{.085}$}&\tcb{$2.481_{.063}$}&\tcb{$.653_{.013}$}&\tcb{$1.340_{.036}$}&\tcb{$1.739_{.044}$}&\tcb{$.510_{.078}$}&\tcb{$.919_{.292}$}&\tcb{$1.291_{.352}$}\\&
{\scs smhi}&\tcb{$.773_{.010}$}&\tcb{$1.955_{.028}$}&$2.424_{.026}$&\tcb{$.658_{.012}$}&$1.323_{.026}$&$1.708_{.026}$&\tcb{$.495_{.017}$}&\tcb{$.761_{.021}$}&\tcb{$1.104_{.012}$}\\&
{\scs sqhi}&\tcb{$.774_{.008}$}&\tcb{$1.959_{.032}$}&\tcb{$2.426_{.026}$}&\tcb{$.665_{.012}$}&$1.318_{.026}$&$1.703_{.024}$&\tcb{$.514_{.013}$}&\tcb{$.765_{.018}$}&\tcb{$1.097_{.013}$}\\&
{\scs expo}&$.772_{.008}$&\tcb{$1.947_{.023}$}&$2.420_{.019}$&\tcb{$.669_{.014}$}&$1.328_{.043}$&$1.709_{.039}$&\tcb{$.542_{.060}$}&\tcb{$.904_{.340}$}&\tcb{$1.269_{.419}$}\\&
{\scs abso}&\tcb{$.788_{.025}$}&\tcb{$2.085_{.114}$}&\tcb{$2.531_{.096}$}&\tcb{$.665_{.043}$}&\tcb{$1.408_{.107}$}&\tcb{$1.823_{.113}$}&\tcb{$.562_{.109}$}&\tcb{$1.161_{.453}$}&\tcb{$1.573_{.552}$}\\&
{\scs squa}&\tcb{$.773_{.011}$}&\tcb{$1.958_{.031}$}&\tcb{$2.429_{.030}$}&\tcb{$.667_{.015}$}&$1.317_{.026}$&$1.705_{.025}$&\tcb{$.518_{.014}$}&\tcb{$.769_{.022}$}&\tcb{$1.098_{.014}$}\\
\bottomrule
\end{tabular}}\end{table}

Figure~\ref{fig:SDERes} shows the learned 1DT value $\hat{a}_i=\hat{a}(x^{[i]})$,
method's label prediction $\hat{f}_{\ell,i}=\hat{f}(x^{[i]})$,
and optimal label prediction $\tilde{f}_{\ell,i}$ 
for several instances of the experiment in Section~\ref{sec:SDE}.
This indicates representative behaviors of threshold methods 
based on a learning procedure with non-PL and PL losses
in a finite-sample situation.

\begin{figure}[H]
\renewcommand{\arraystretch}{0.1}\renewcommand{\tabcolsep}{0pt}\centering%
\begin{tabular}{rrrrr}
~~&
{\tiny\hyt{o1} H-1, logi~~~~~~~}~~~~&
{\tiny\hyt{o2} M-1, logi~~~~~~~}~~~~&
{\tiny\hyt{o3} A-1, logi~~~~~~~}~~~~&
{\tiny\hyt{o4} O-(1/3,3), logi~~~~}\\
\rotatebox{90}{\tiny~~~~~~~~~~~~~~~AT-O}~~&
\includegraphics[height=3cm, bb=0 0 404 386]{./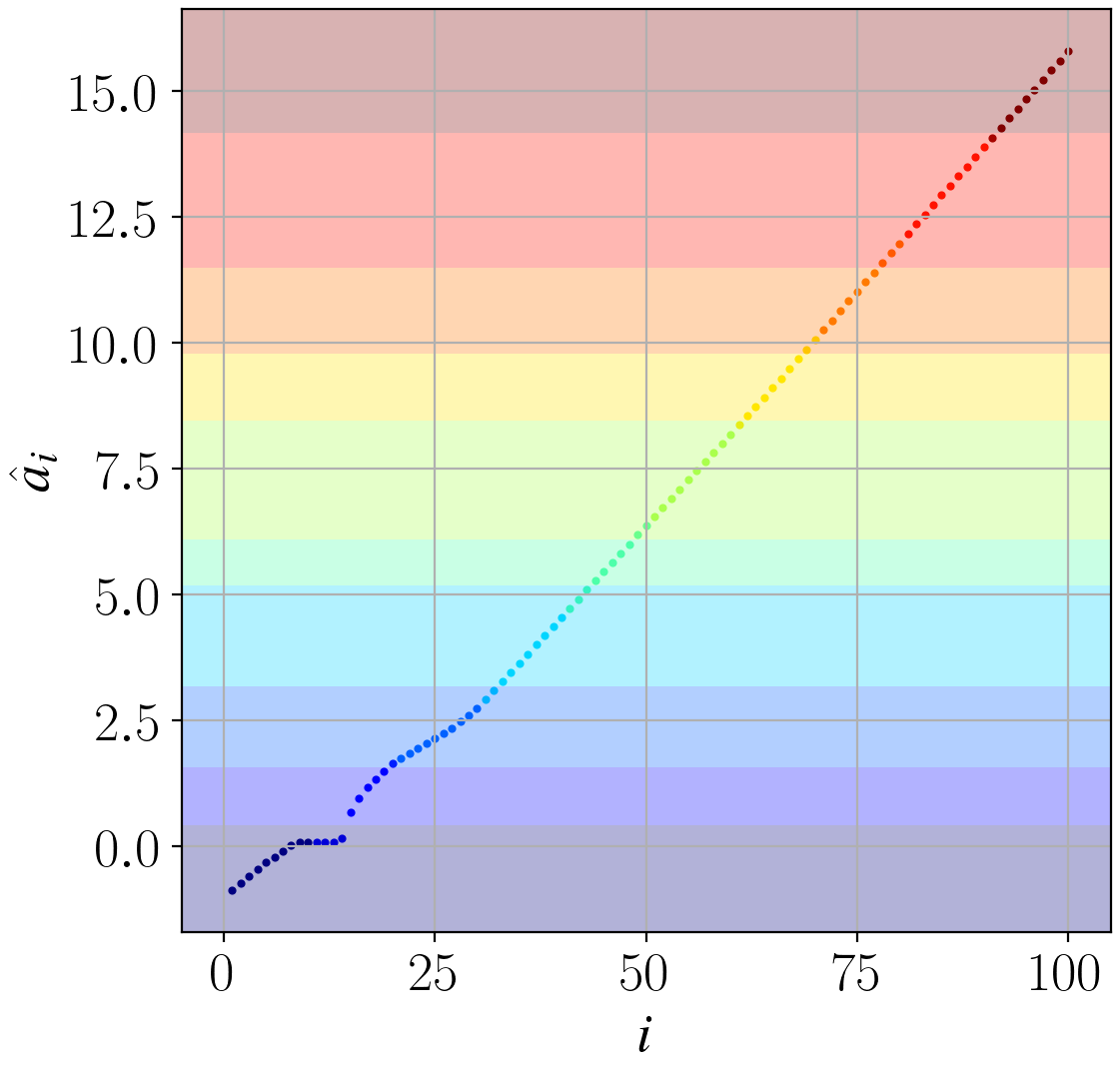}~~~~&
\includegraphics[height=3cm, bb=0 0 404 386]{./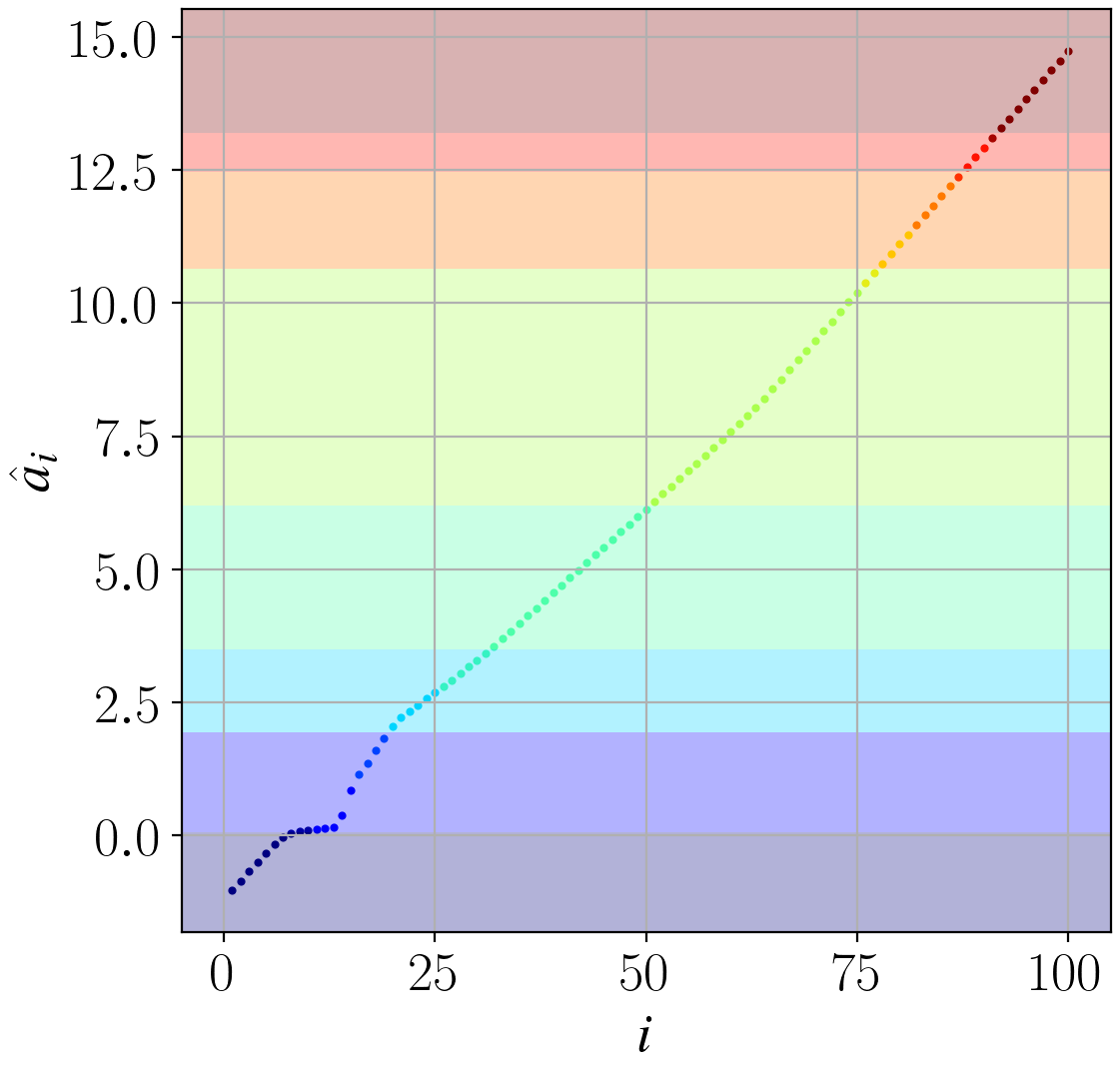}~~~~&
\includegraphics[height=3cm, bb=0 0 404 386]{./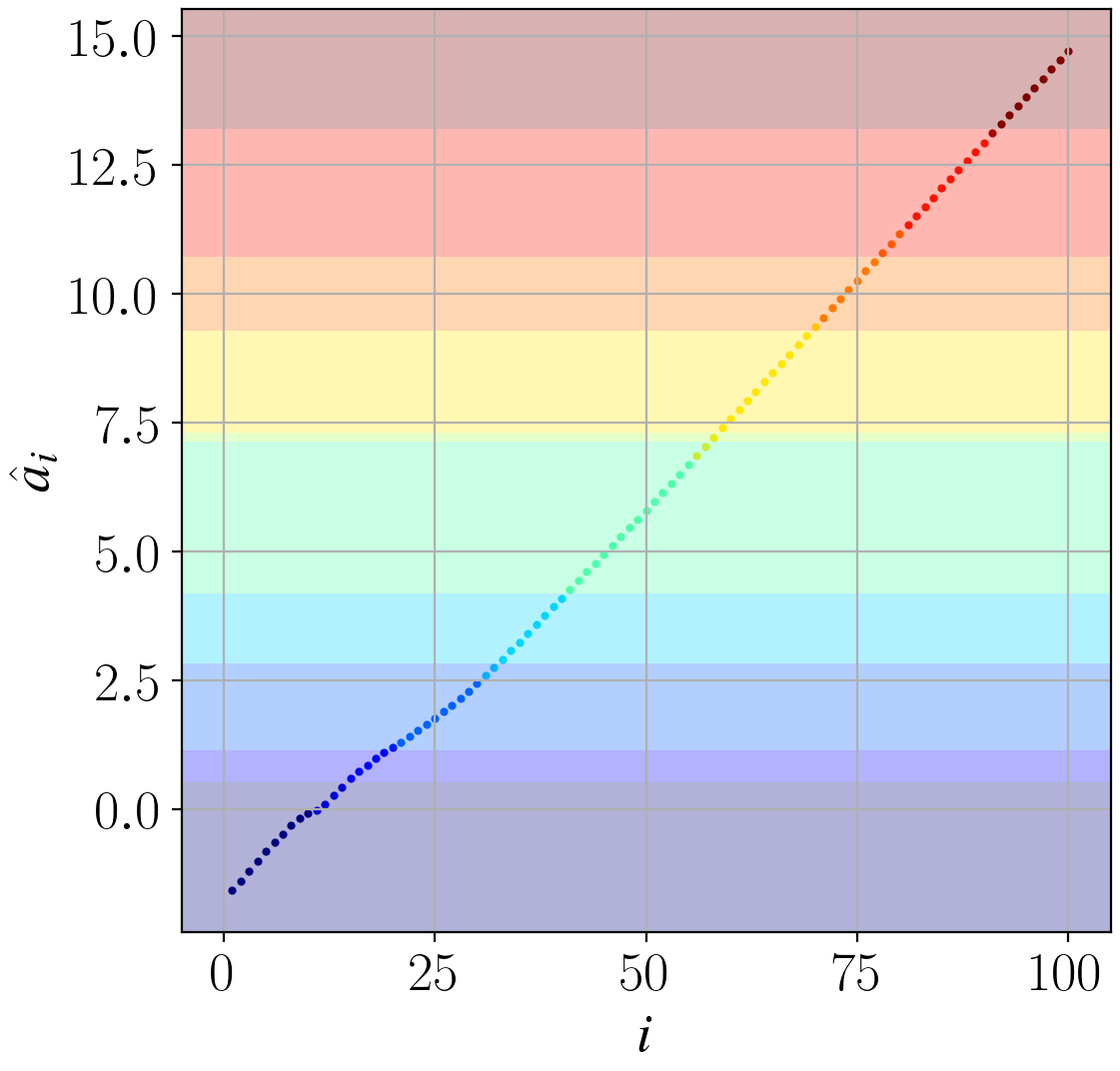}~~~~&
\includegraphics[height=3cm, bb=0 0 407 386]{./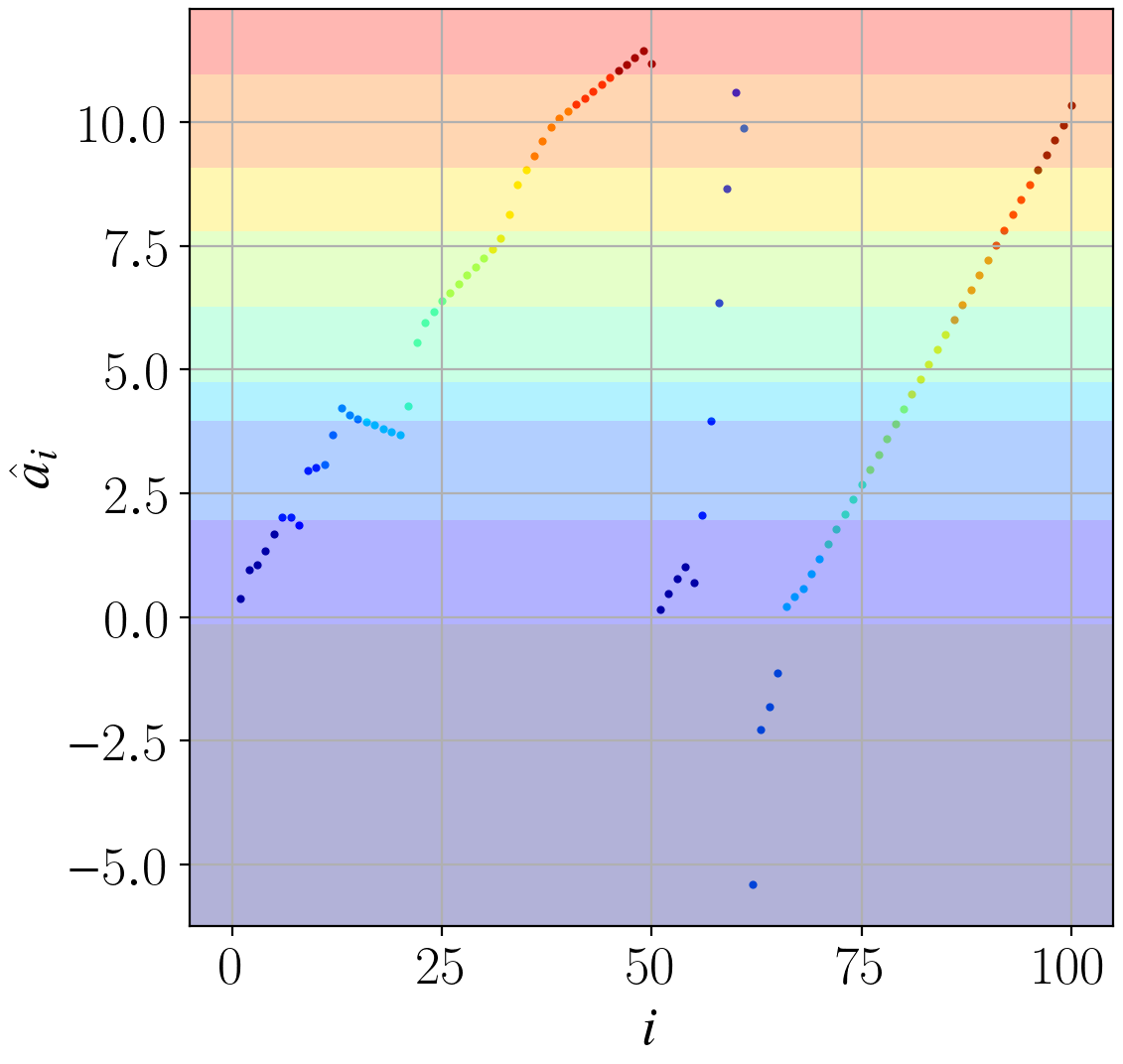}\\
\rotatebox{90}{\tiny~~~~~~~~~~~~~~~IT-N}~~&
\includegraphics[height=3cm, bb=0 0 380 386]{./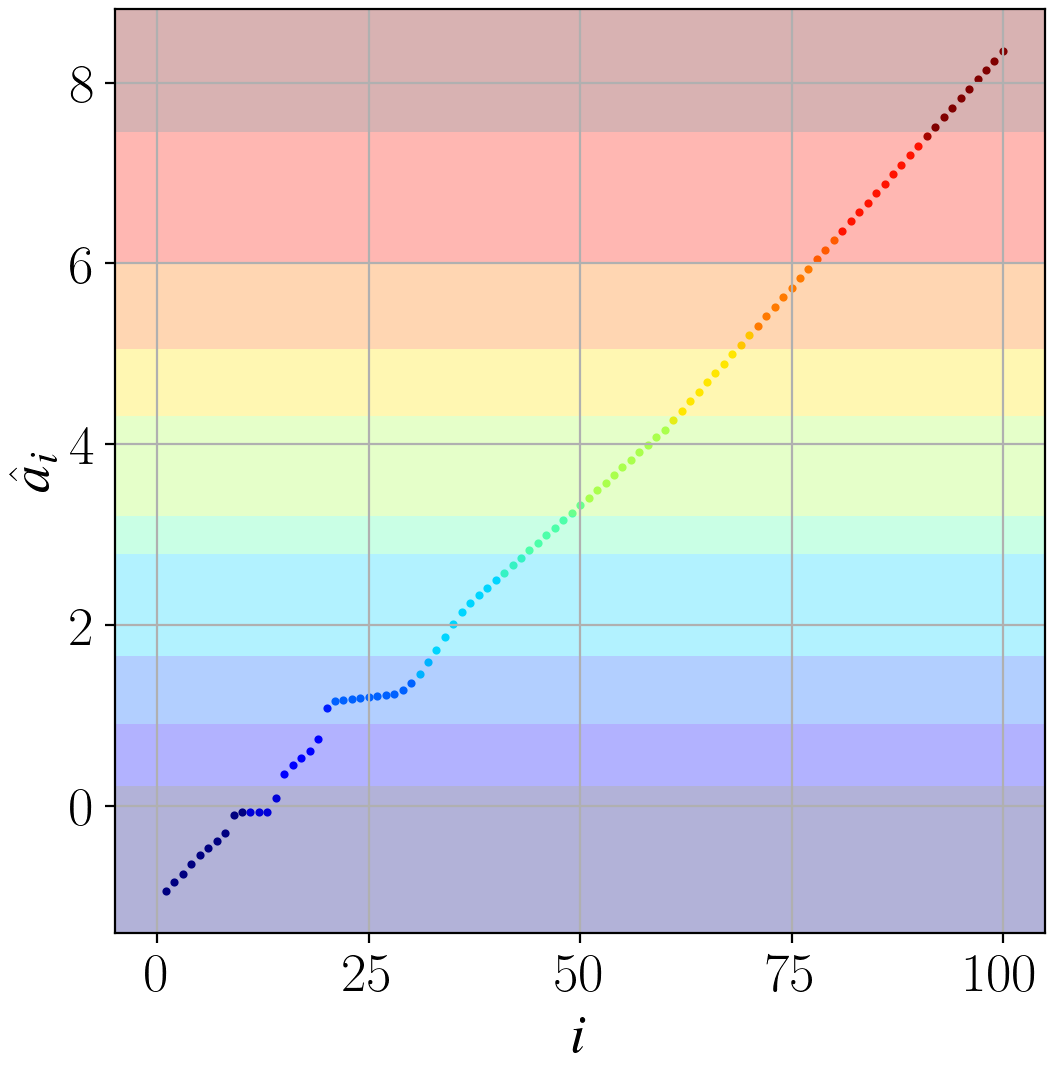}~~~~&
\includegraphics[height=3cm, bb=0 0 380 386]{./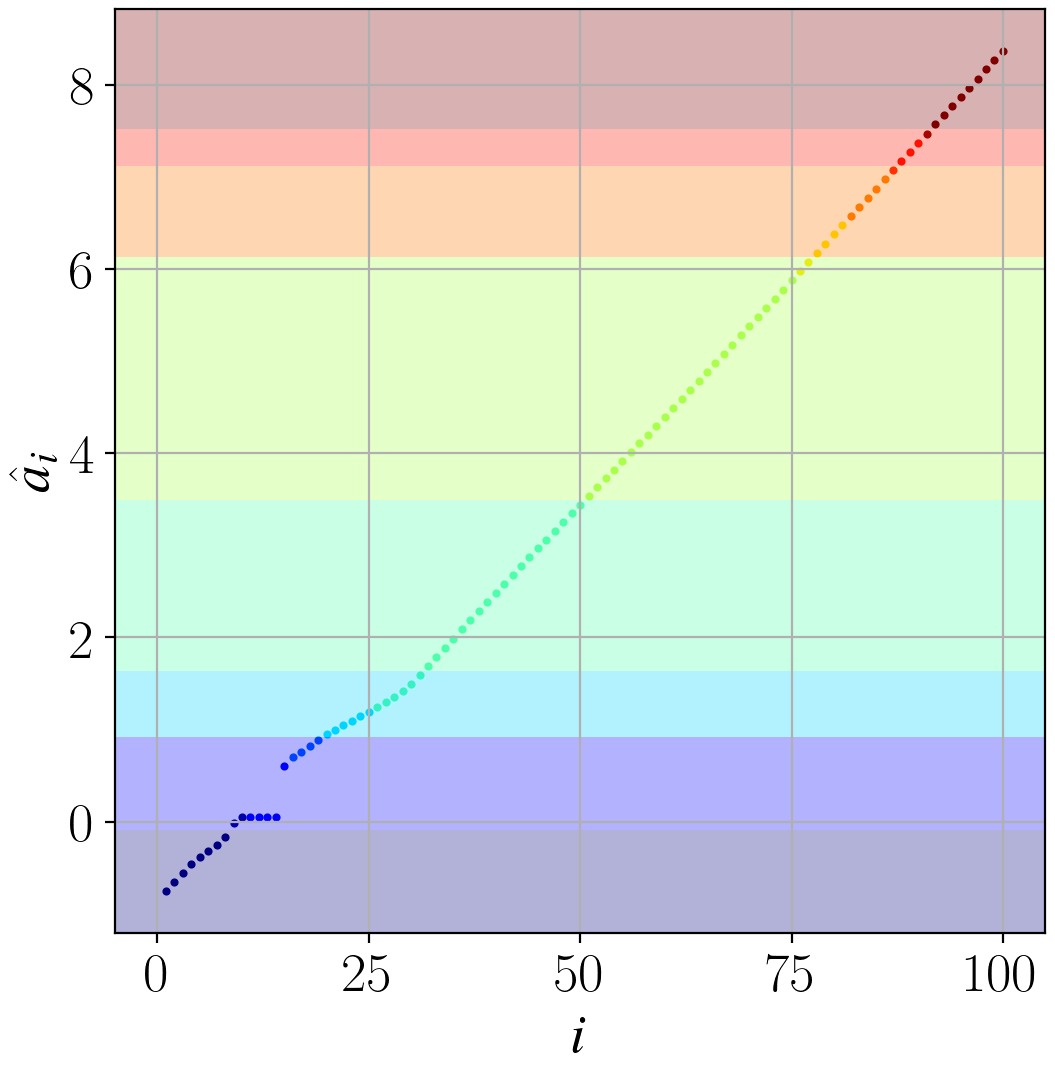}~~~~&
\includegraphics[height=3cm, bb=0 0 380 386]{./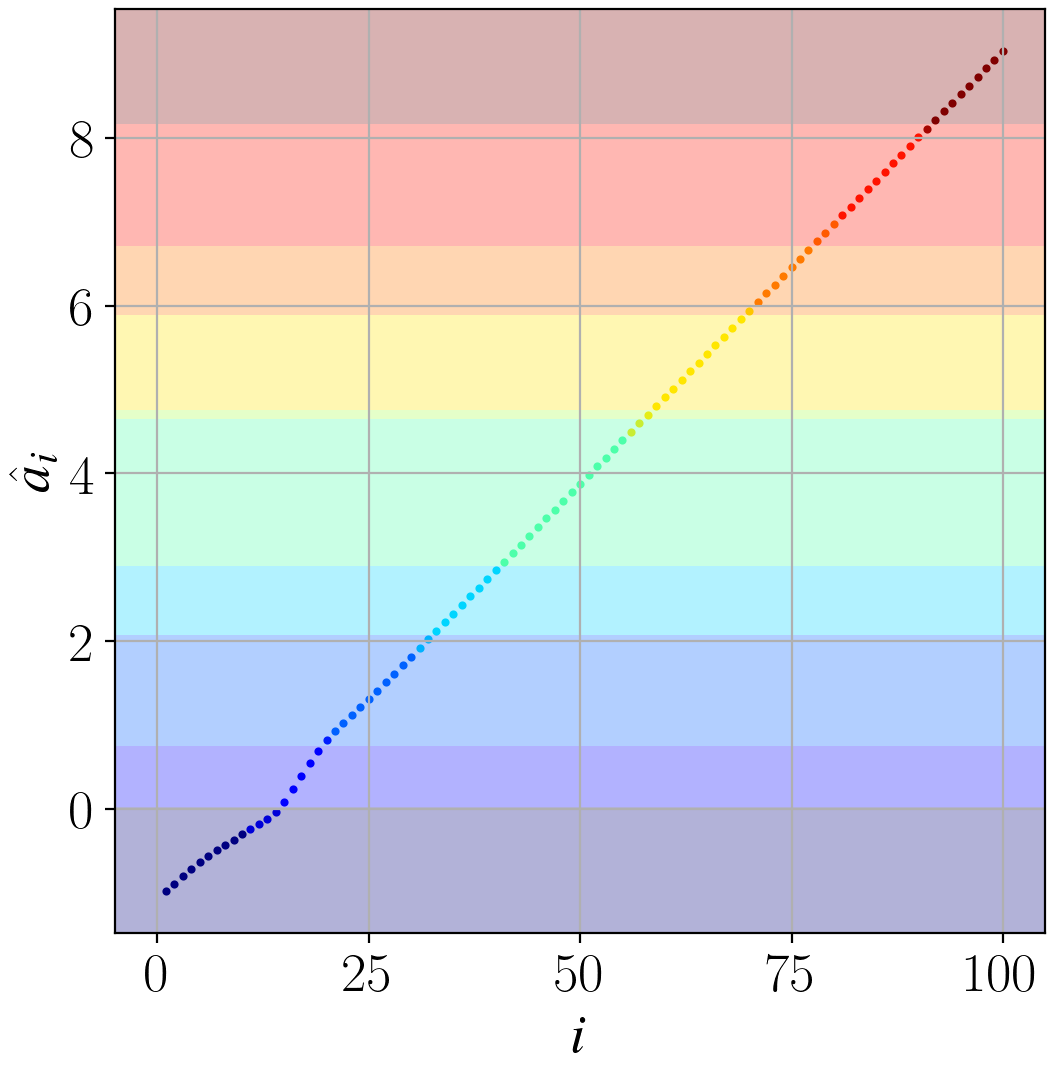}~~~~&
\includegraphics[height=3cm, bb=0 0 392 386]{./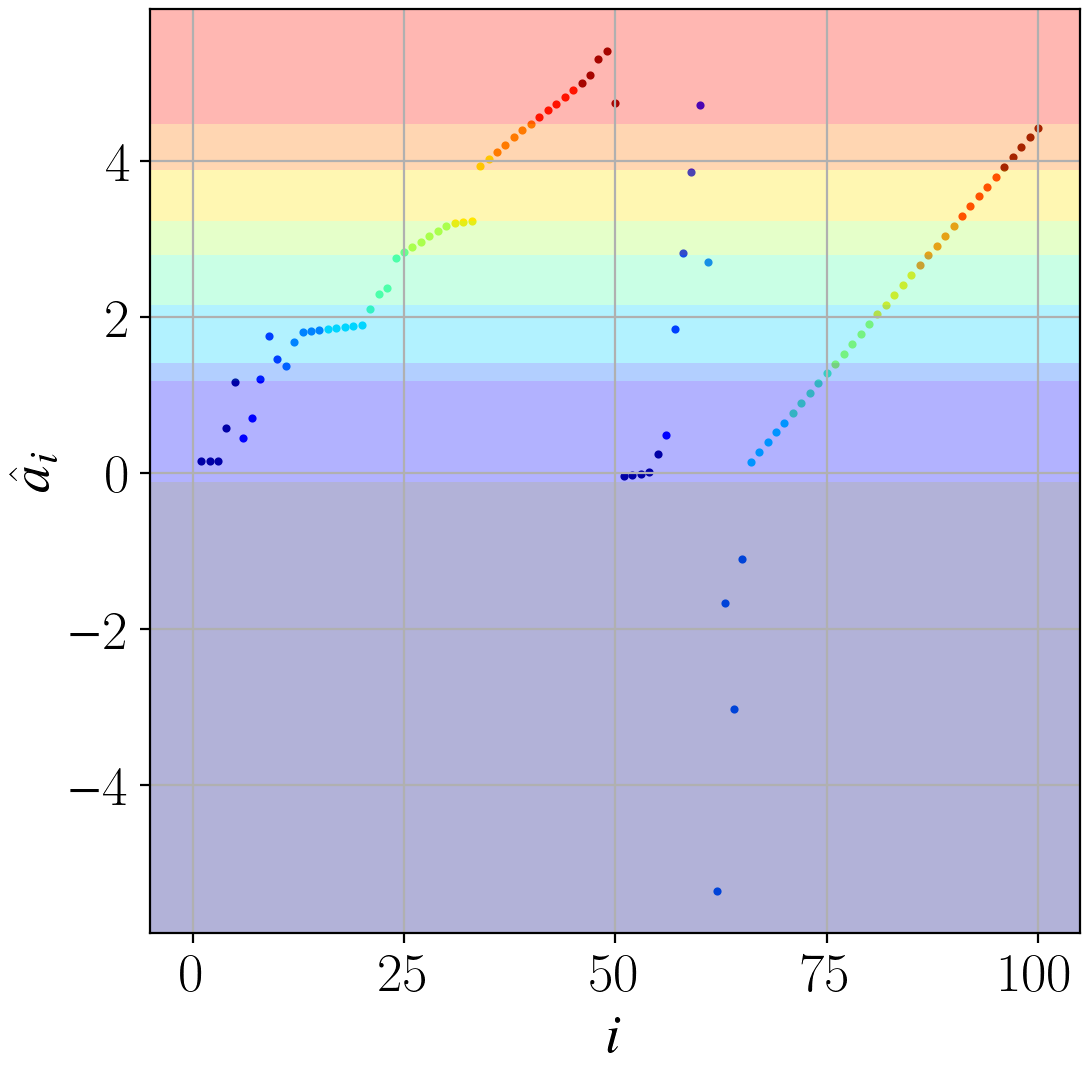}\\
\rotatebox{90}{\tiny~~~~~~~~~~~~~~~IT-O}~~&
\includegraphics[height=3cm, bb=0 0 380 386]{./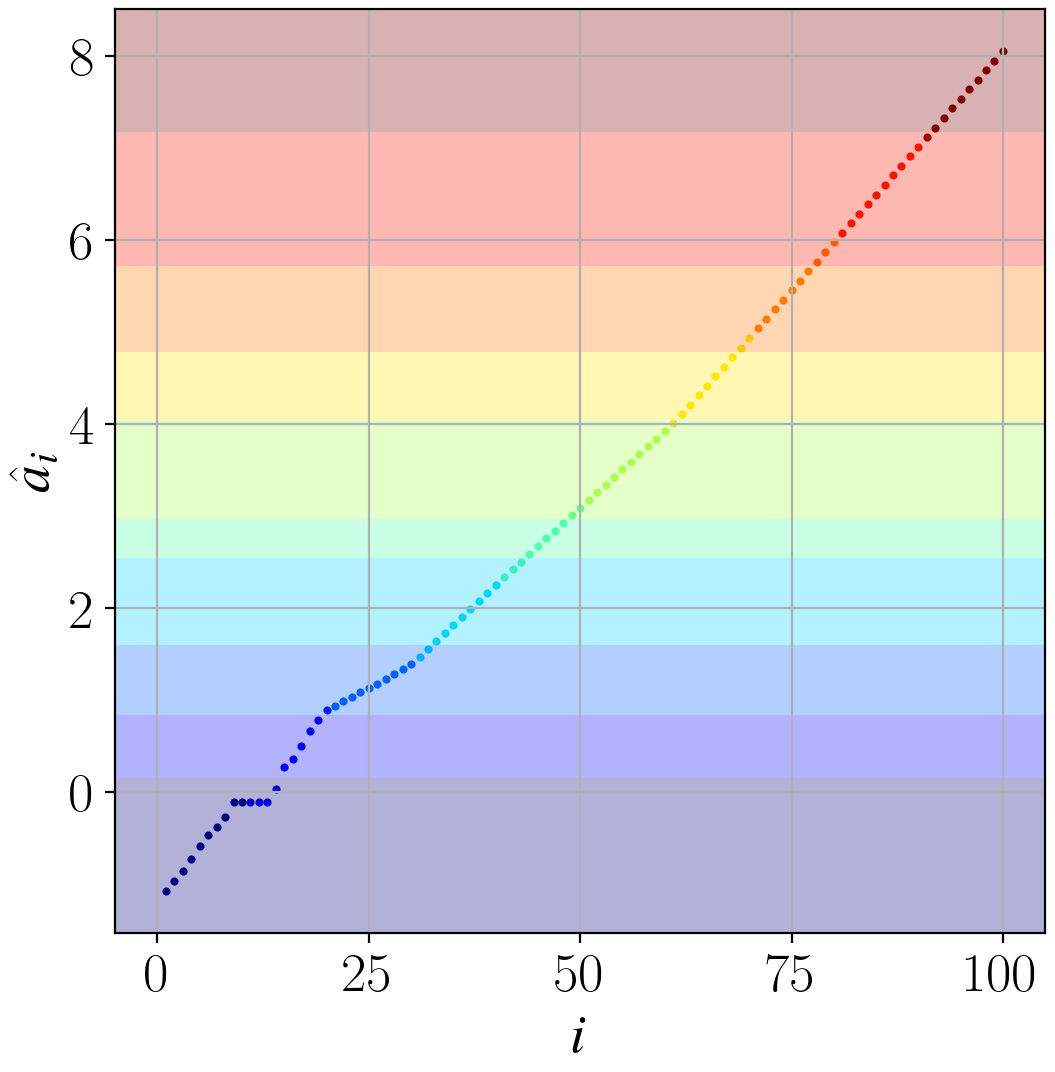}~~~~&
\includegraphics[height=3cm, bb=0 0 380 386]{./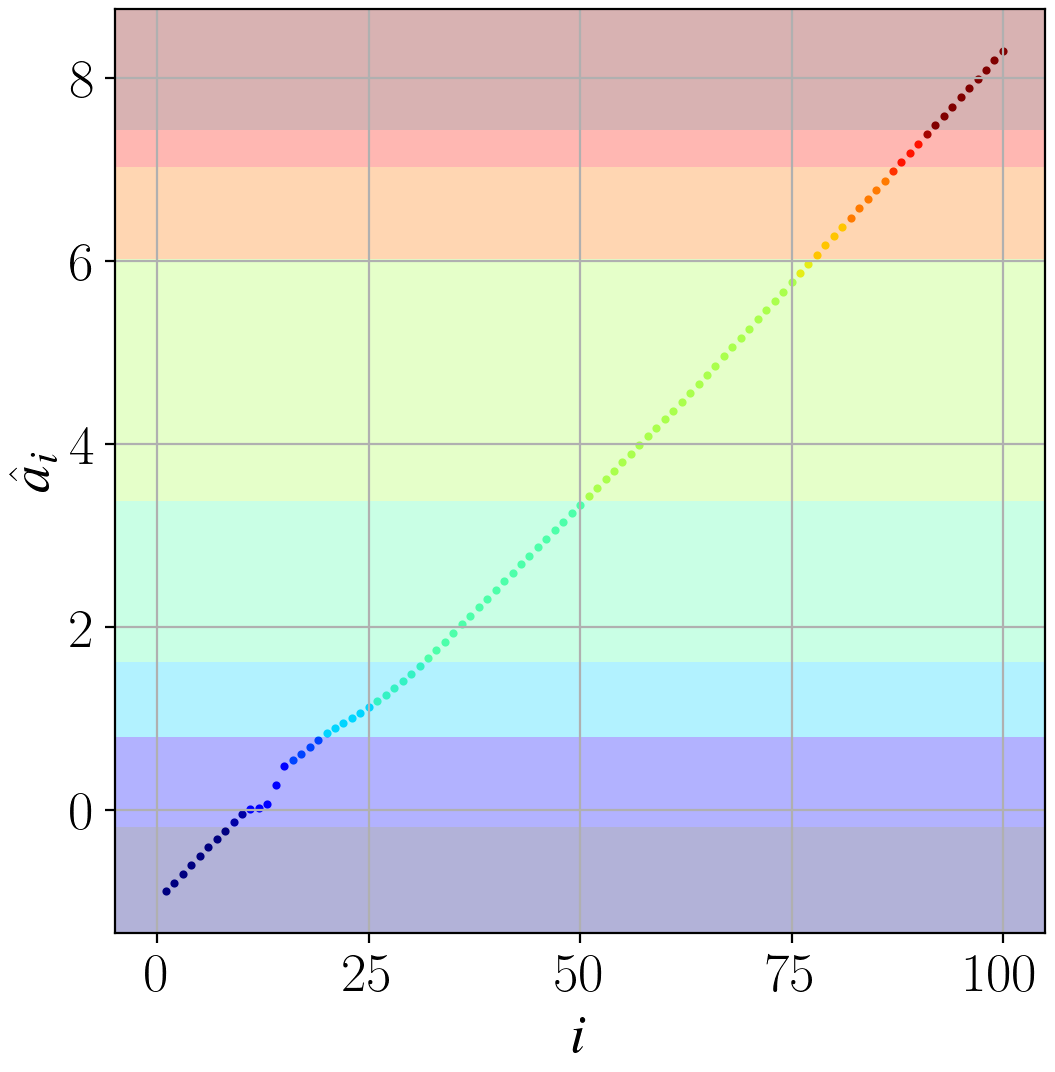}~~~~&
\includegraphics[height=3cm, bb=0 0 380 386]{./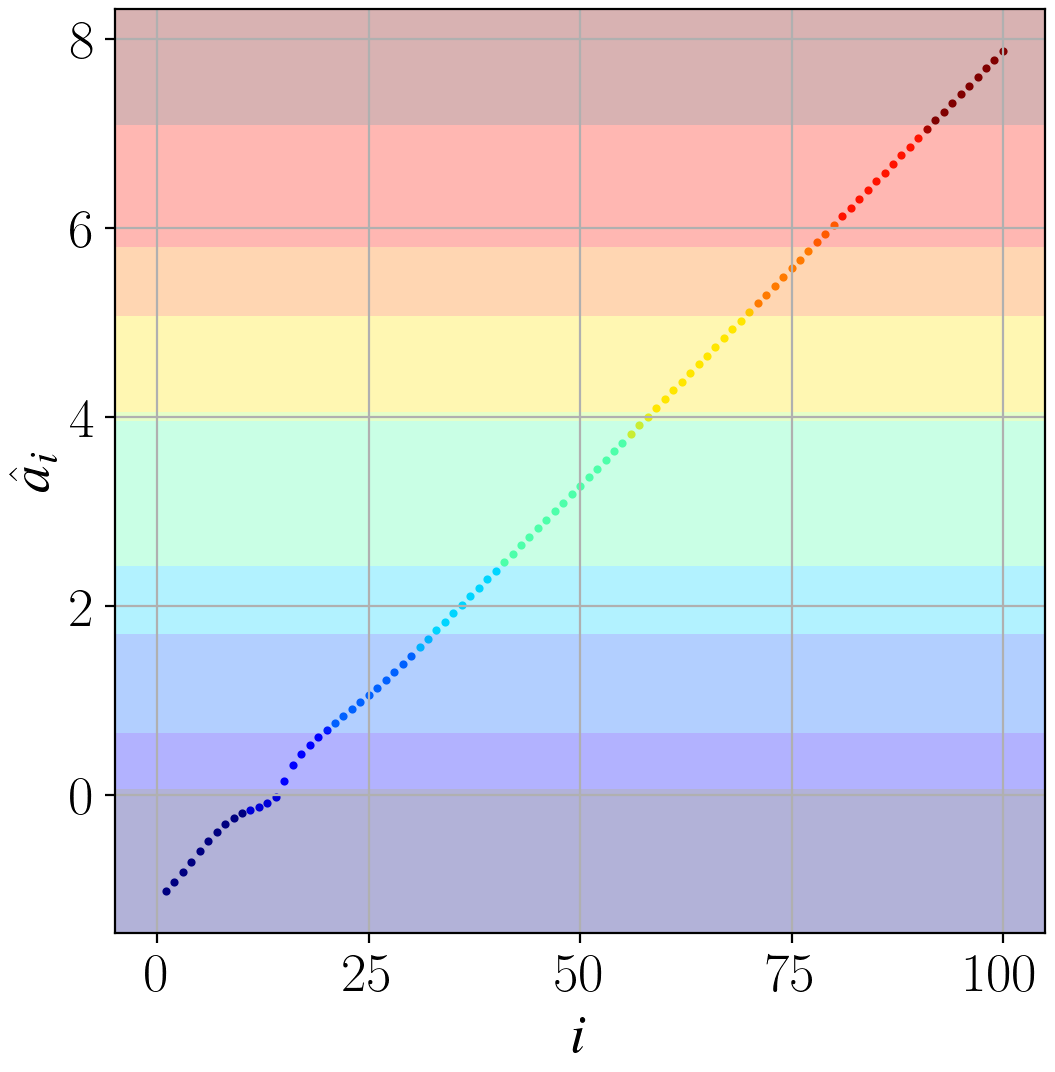}~~~~&
\includegraphics[height=3cm, bb=0 0 392 386]{./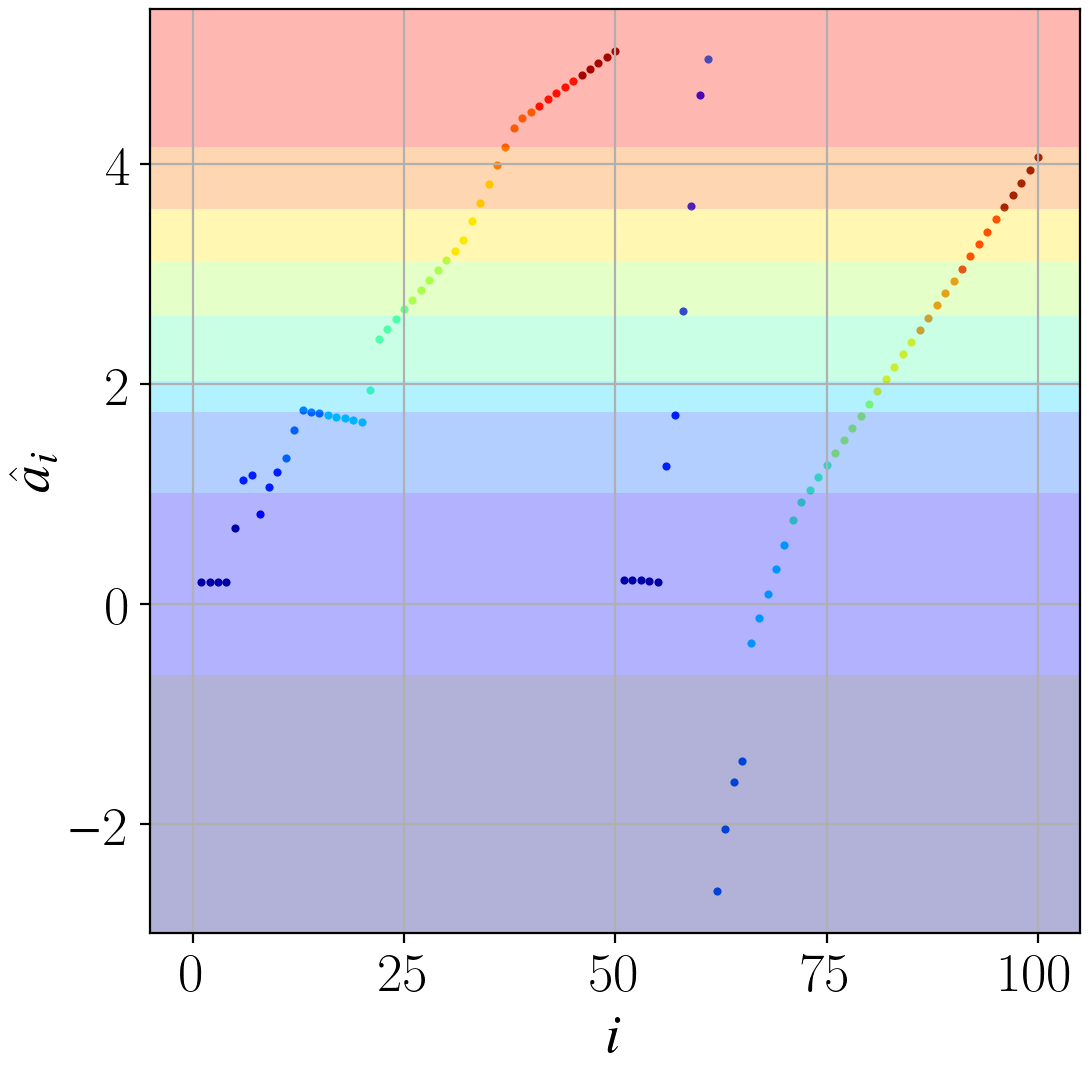}\\
~~&
{\tiny\hyt{o5} N-1, logi~~~~~~~}~~~~&
{\tiny\hyt{o6} H-1/3, hing~~~~~}~~~~&
{\tiny\hyt{o7} H-1, hing~~~~~~~}~~~~&
{\tiny\hyt{o8} H-3, hing~~~~~~~}\\
\rotatebox{90}{\tiny~~~~~~~~~~~~~~~AT-O}~~&
\includegraphics[height=3cm, bb=0 0 380 386]{./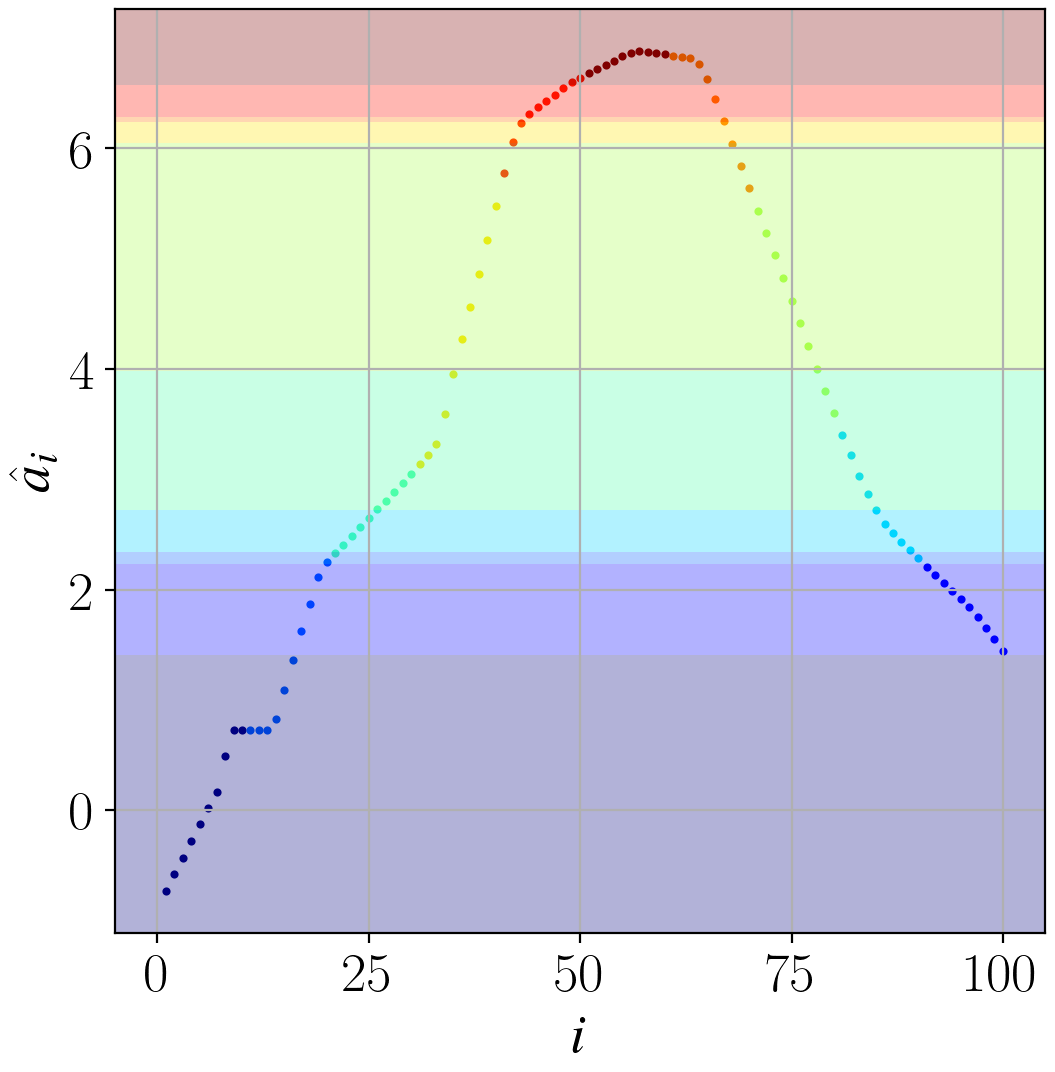}~~~~&
\includegraphics[height=3cm, bb=0 0 380 386]{./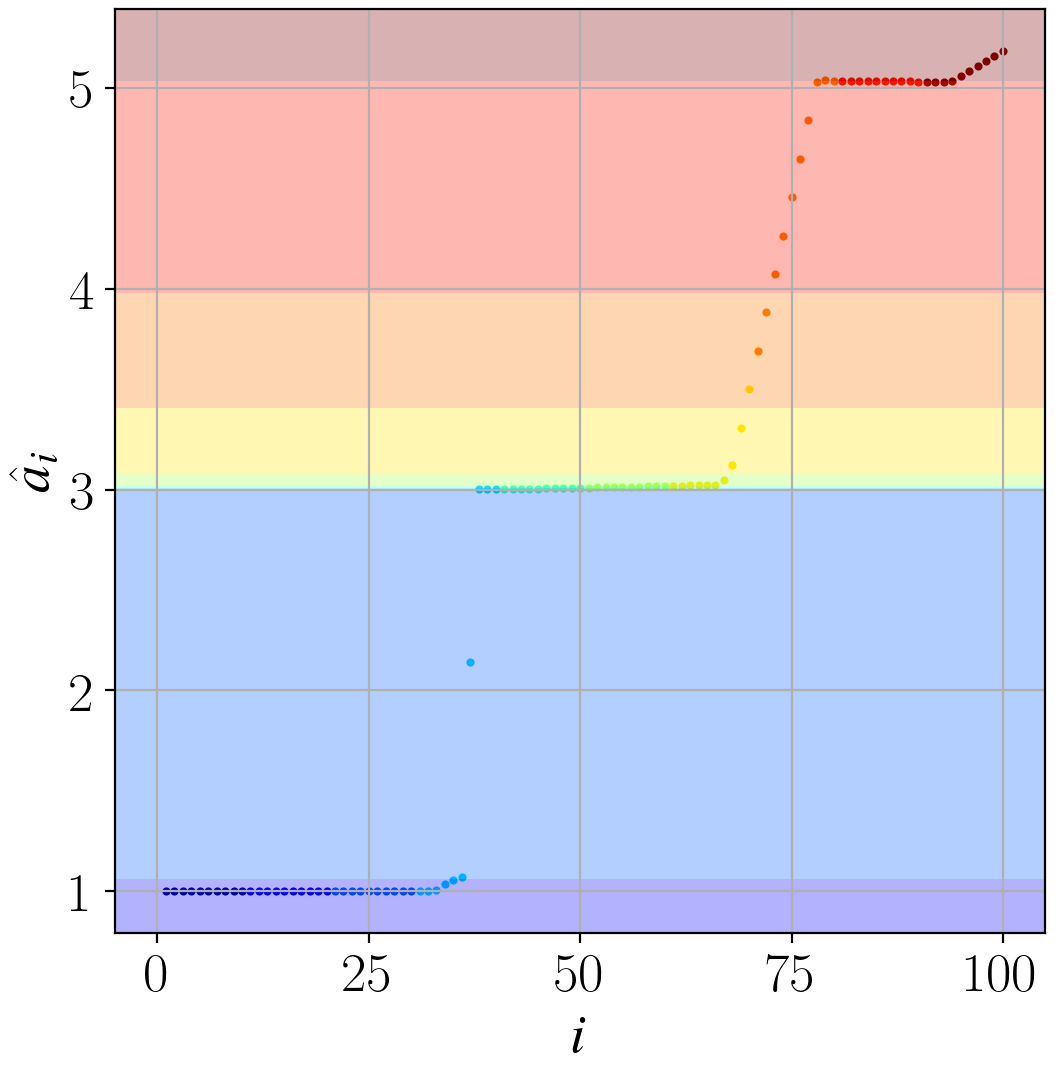}~~~~&
\includegraphics[height=3cm, bb=0 0 389 386]{./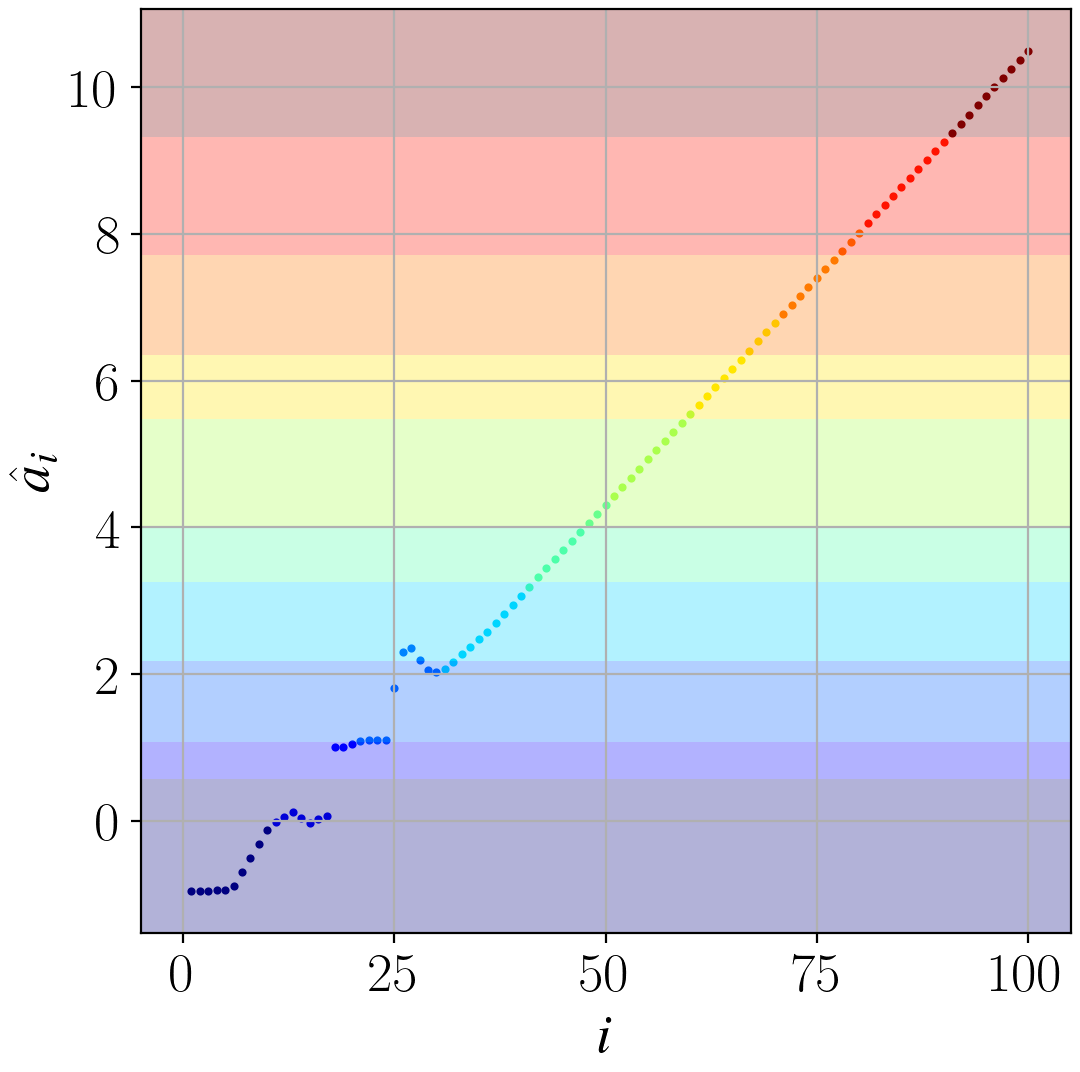}~~~~&
\includegraphics[height=3cm, bb=0 0 389 386]{./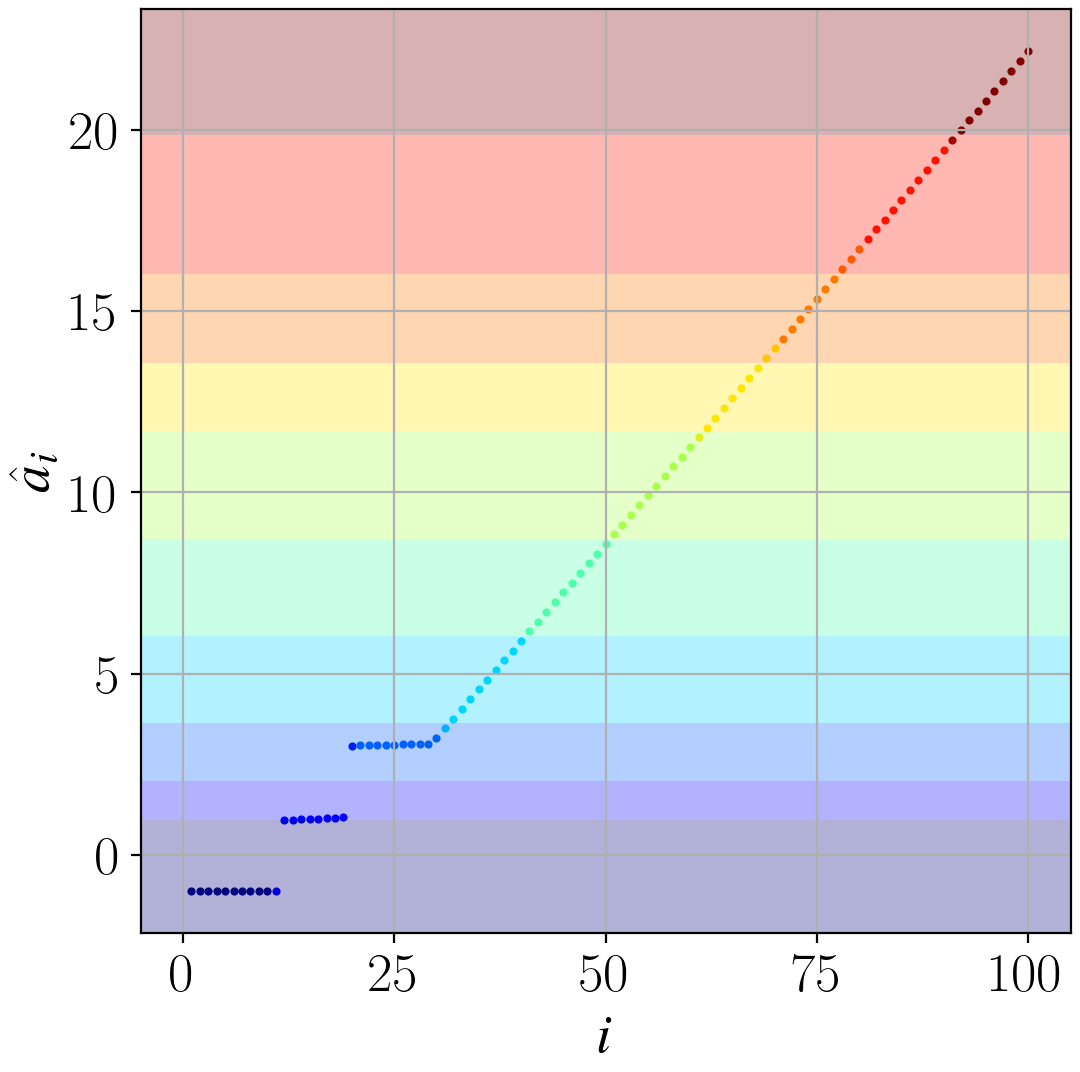}\\
\rotatebox{90}{\tiny~~~~~~~~~~~~~~~IT-N}~~&
\includegraphics[height=3cm, bb=0 0 407 386]{./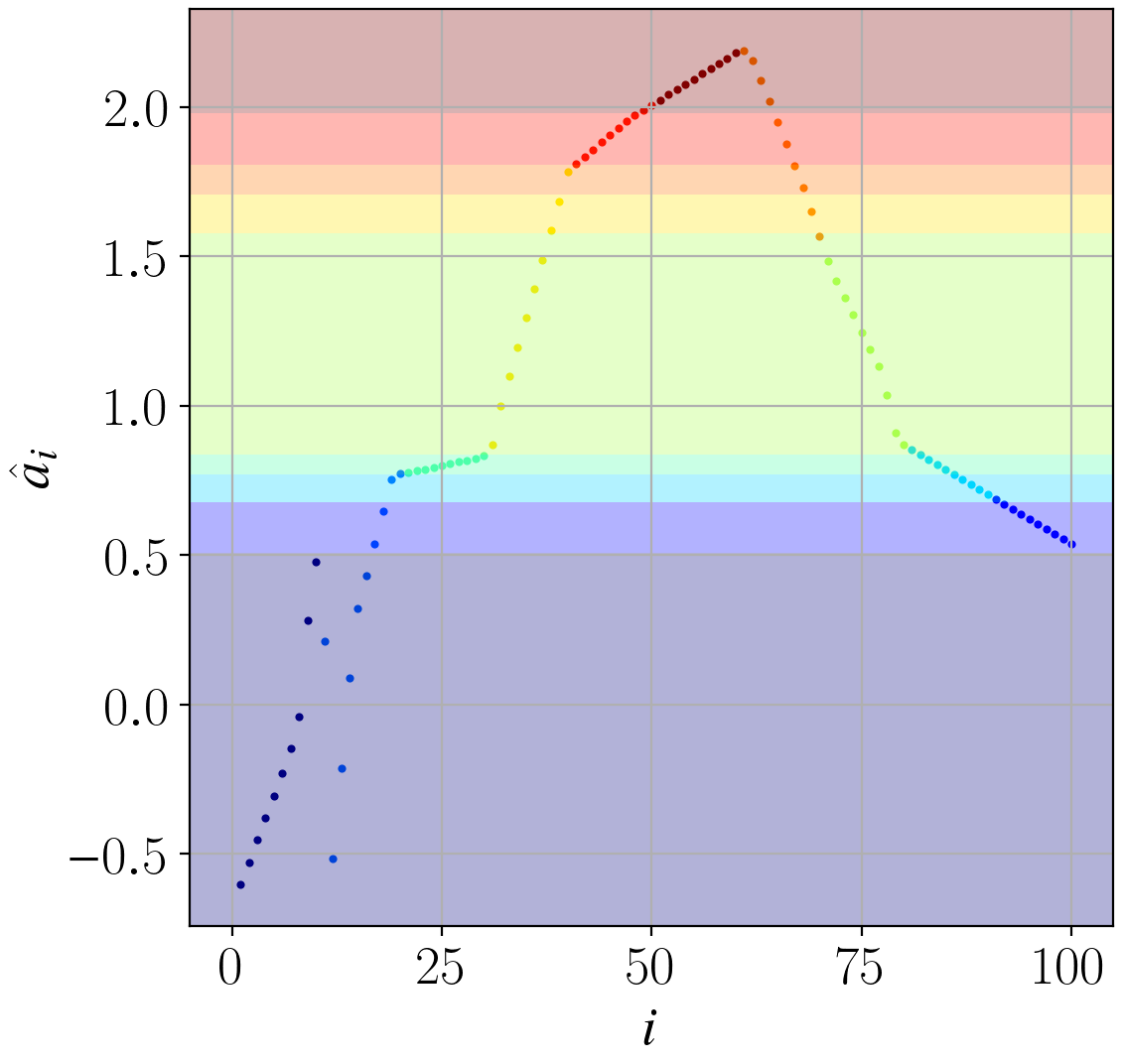}~~~~&
\includegraphics[height=3cm, bb=0 0 380 386]{./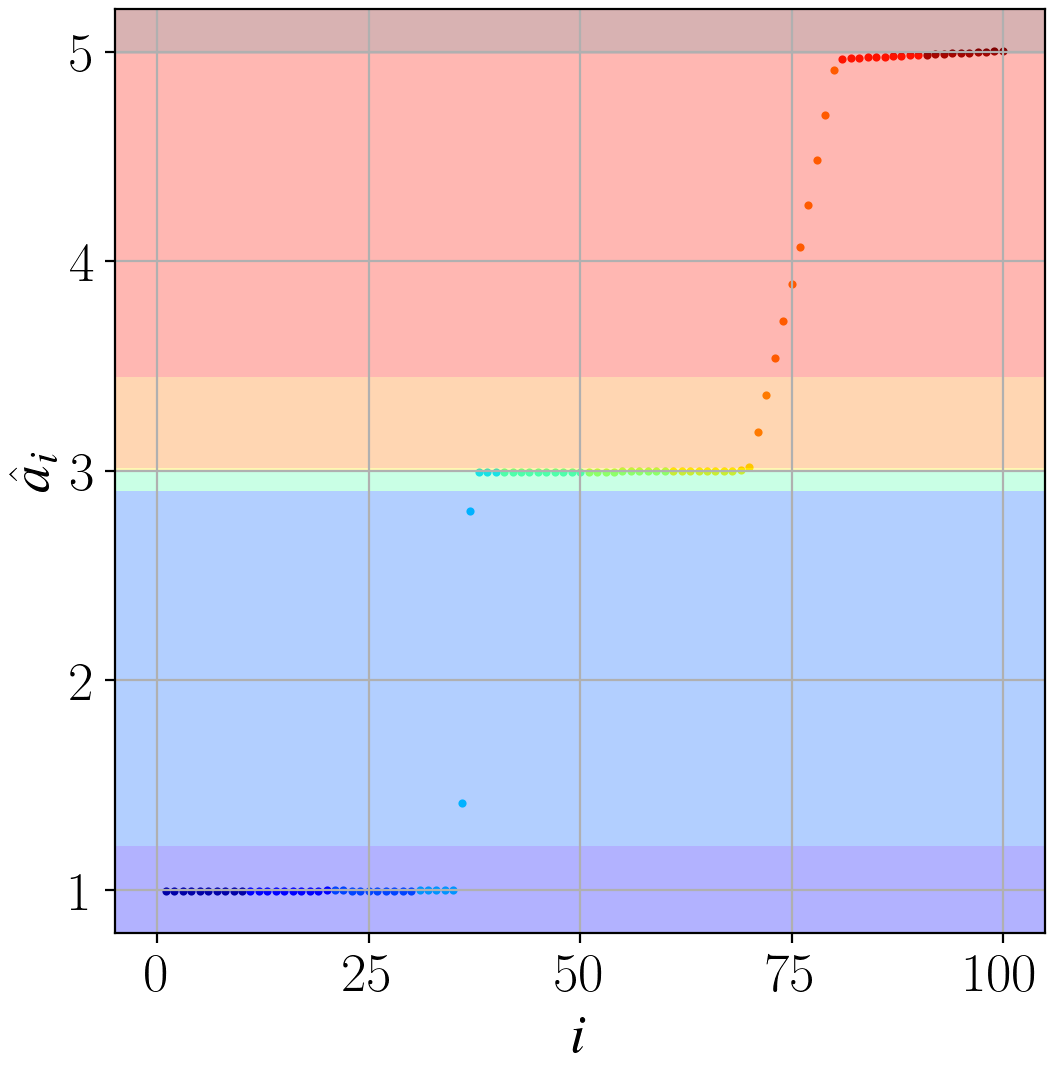}~~~~&
\includegraphics[height=3cm, bb=0 0 380 386]{./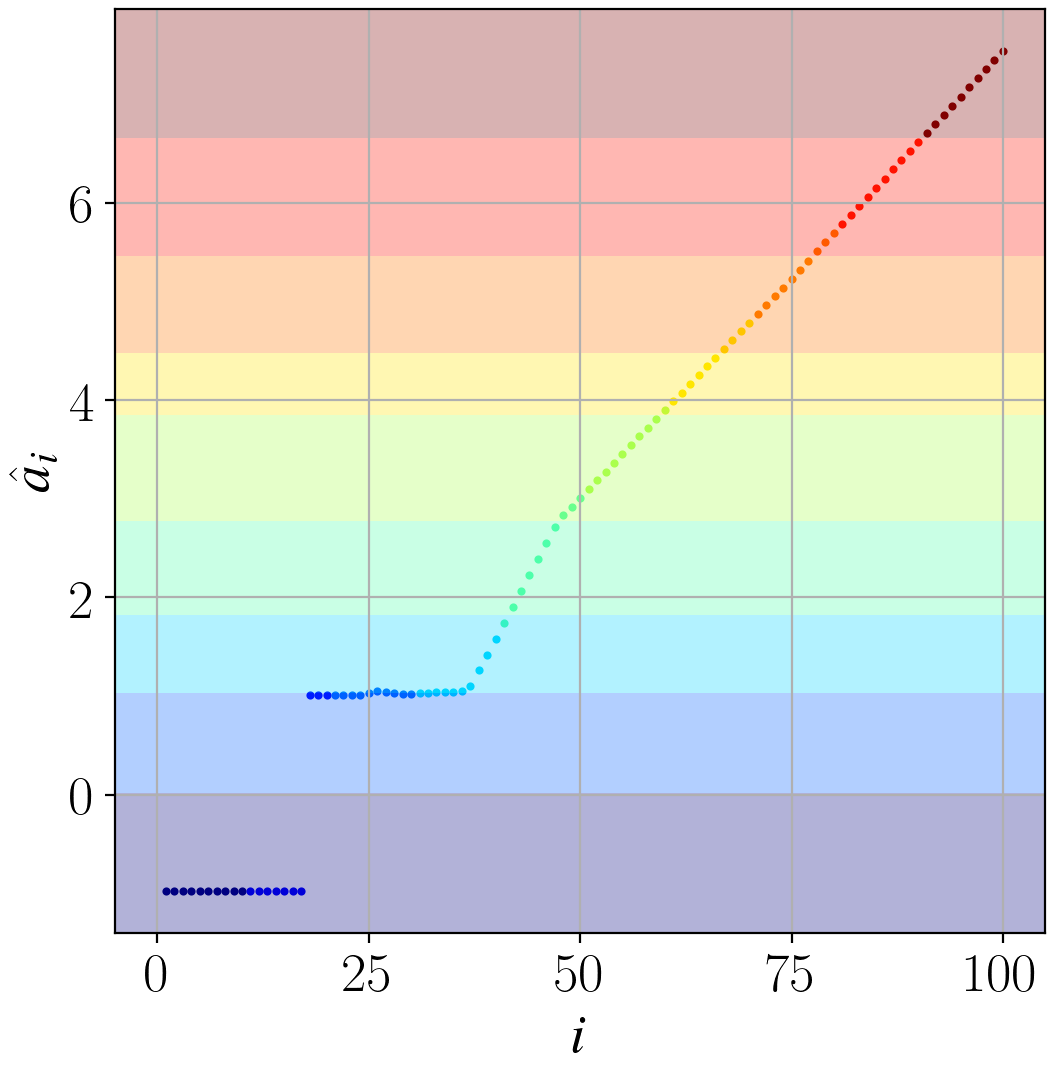}~~~~&
\includegraphics[height=3cm, bb=0 0 389 390]{./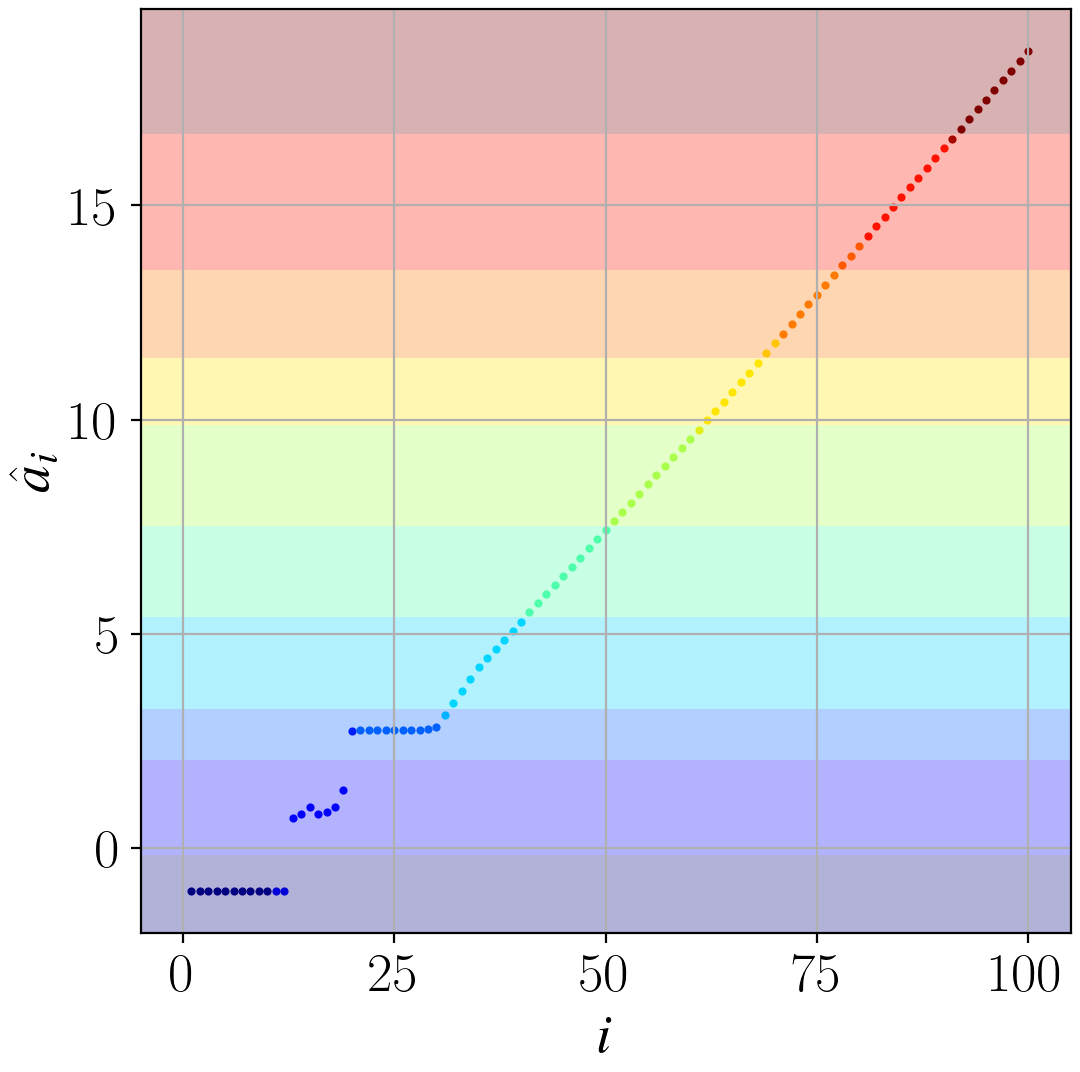}\\
\rotatebox{90}{\tiny~~~~~~~~~~~~~~~IT-O}~~&
\includegraphics[height=3cm, bb=0 0 407 386]{./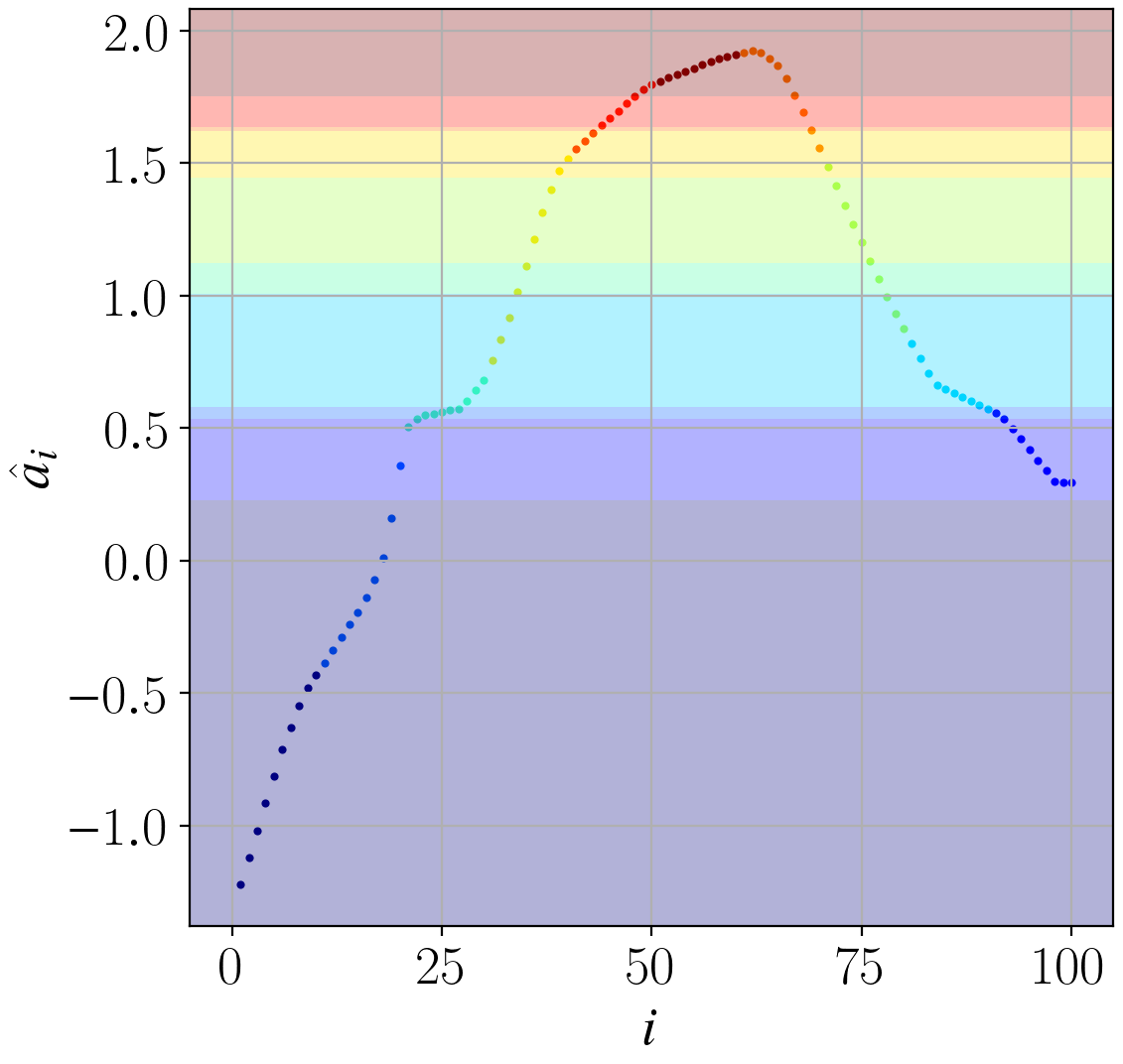}~~~~&
\includegraphics[height=3cm, bb=0 0 392 386]{./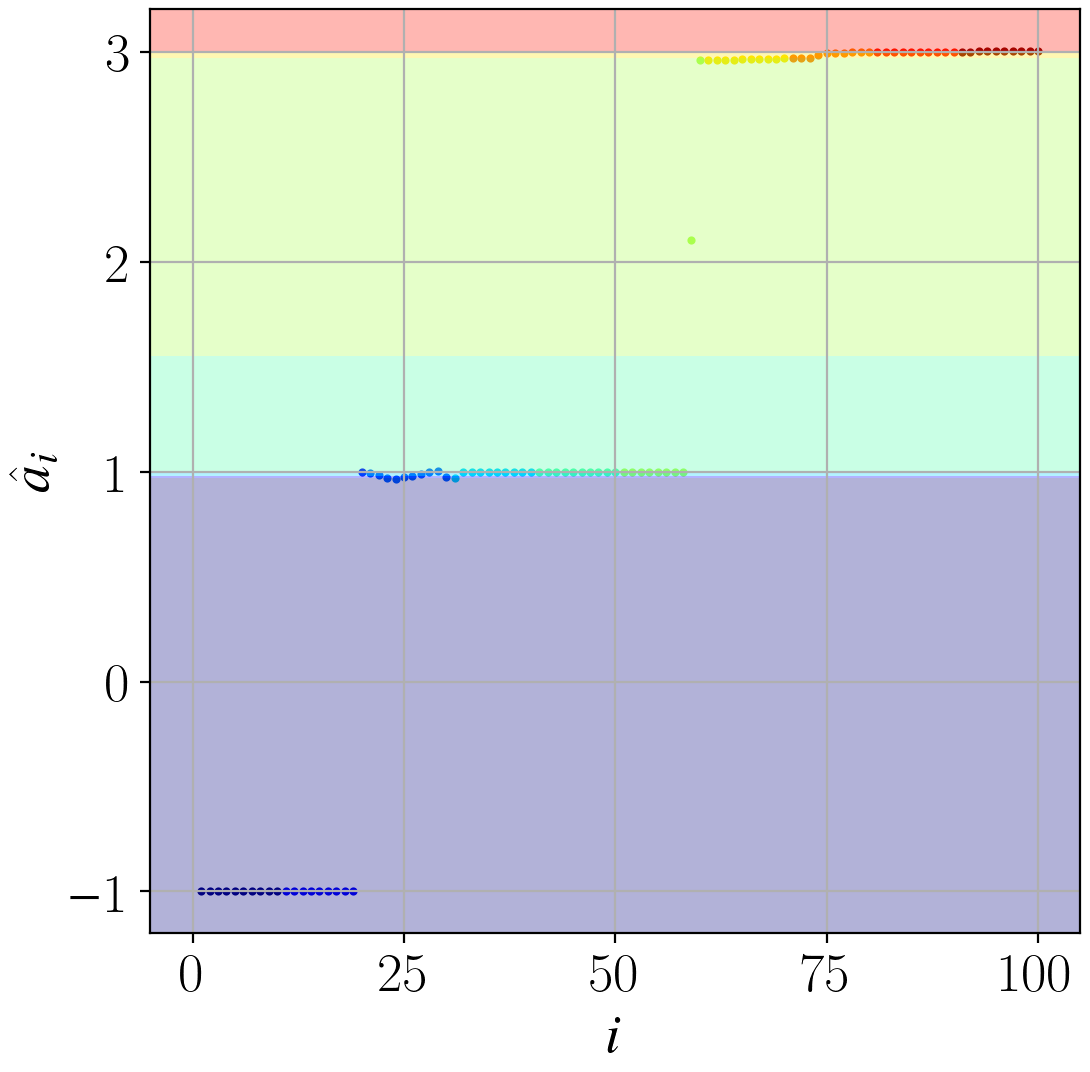}~~~~&
\includegraphics[height=3cm, bb=0 0 380 391]{./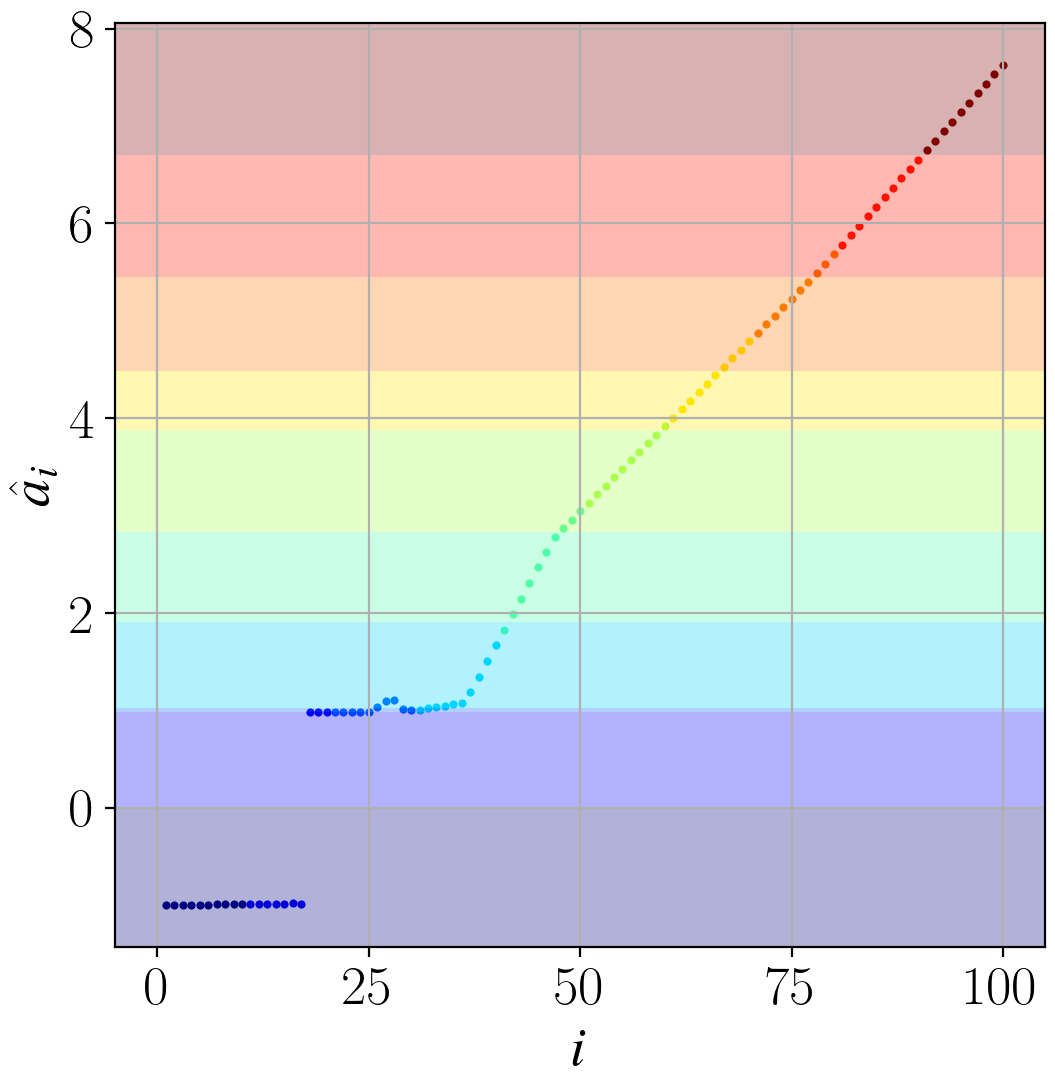}~~~~&
\includegraphics[height=3cm, bb=0 0 404 386]{./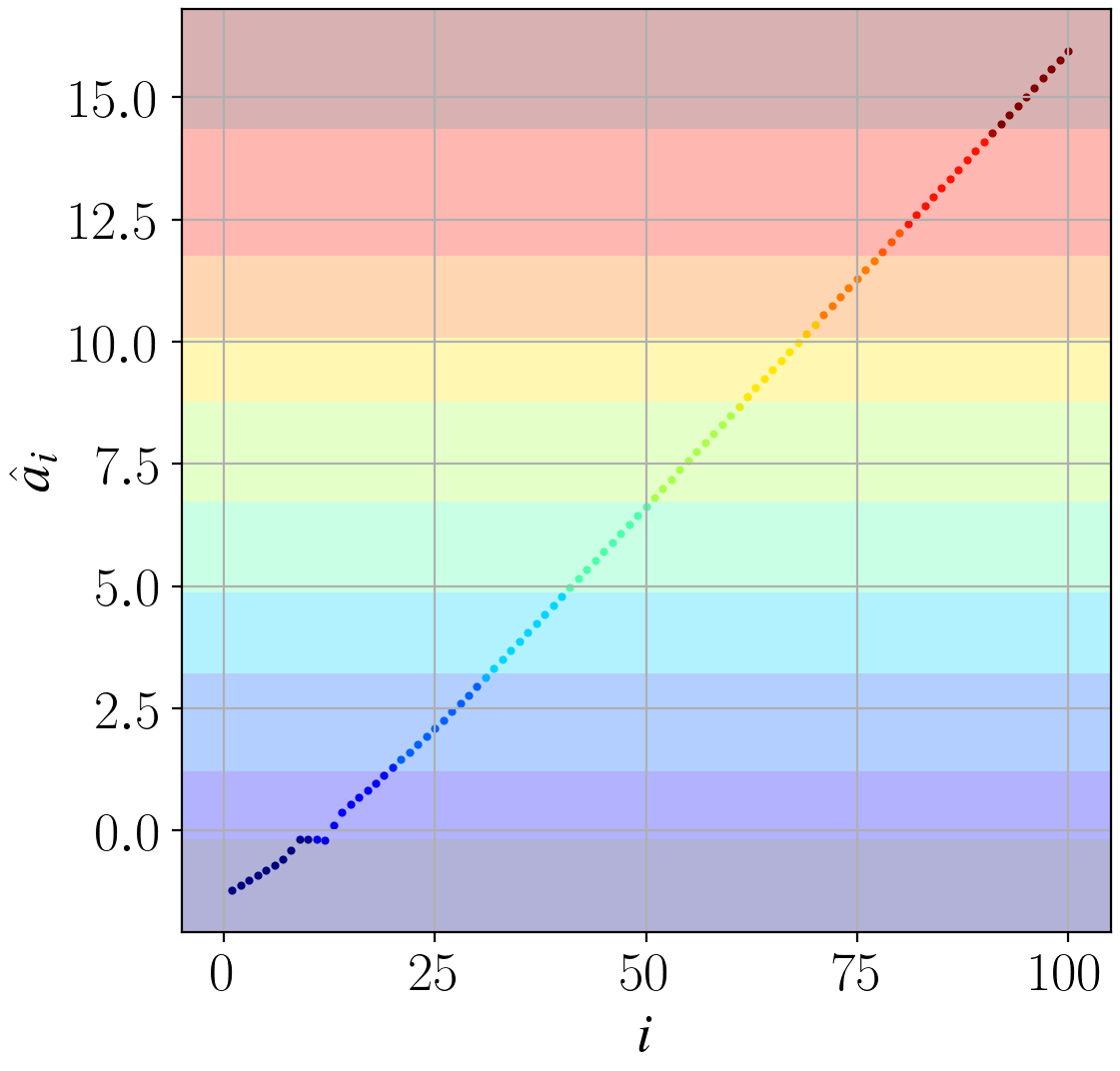}
\end{tabular}\\
\includegraphics[width=15cm, bb=0 0 2520 58]{./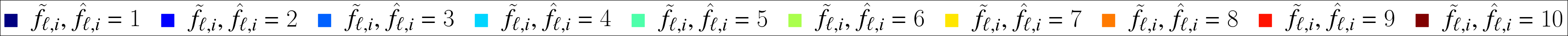}
\caption{%
Learned 1DT value $\hat{a}_i$ and optimal label prediction $\tilde{f}_{\ell,i}$ (color of point) versus $i$, 
and method's label prediction (lightened background color), which represents $\hat{f}_{\ell,i}$,
with $i$-th smallest point $x^{[i]}$ in the domain of $X$, 
in a certain trial for the experiment in Section~\ref{sec:SDE} with Task-Z.
For example, 3 plates of \protect\hyl{o1} show 3 results 
for H-1 with logi-AT-O, -IT-N, and -IT-O from top to bottom.}
\label{fig:SDERes}
\end{figure}

\subsection{Results for Real-World Data Experiment in Section~\protect\ref{sec:RWDE}}
\label{sec:ApeRWDE}
Table~\ref{tab:FAE} shows
the mean and SD of the test prediction errors (as `$\text{mean}_{\text{SD}}$')
of MLR and 21 threshold methods for the experiment in Section~\ref{sec:RWDE}.
This table is colored according to the same rule as 
that for Tables~\ref{tab:SDERes-H}--\ref{tab:SDERes-N}.

\begin{table}[H]
\renewcommand{\arraystretch}{0.75}\renewcommand{\tabcolsep}{5pt}\centering%
\caption{Results for the experiment in Section~\ref{sec:RWDE}.}
\label{tab:FAE}\scalebox{0.75}{\begin{tabular}{cc|ccc|ccc|ccc}\toprule
&&\multicolumn{3}{c|}{\scs MORPH-2}&\multicolumn{3}{c|}{\scs CACD}&\multicolumn{3}{c}{\scs AFAD}\\
&&{\scs MZE}&{\scs MAE}&{\scs RMSE}&{\scs MZE}&{\scs MAE}&{\scs RMSE}&{\scs MZE}&{\scs MAE}&{\scs RMSE}\\
\midrule
\multicolumn{2}{c|}{\scs MLR}&\tcb{$.878_{.006}$}&\tcb{$3.218_{.092}$}&\tcb{$4.303_{.121}$}&\tcb{$.931_{.003}$}&\tcb{$5.541_{.110}$}&\tcb{$7.578_{.120}$}&\tcr{$.873_{.003}$}&$3.402_{.043}$&$4.564_{.051}$\\
\midrule\multirow{7}{*}{\rotatebox{90}{\tiny AT-O\hspace{1em}}}
&{\scs logi}&\tcb{$.882_{.003}$}&\tcb{$3.151_{.097}$}&\tcb{$4.176_{.126}$}&\tcb{$.937_{.001}$}&\tcb{$5.649_{.055}$}&\tcb{$7.629_{.046}$}&$.879_{.001}$&$3.383_{.026}$&\tcr{$4.529_{.026}$}\\
&{\scs hing}&\tcb{$.882_{.002}$}&\tcb{$3.091_{.047}$}&\tcb{$4.105_{.062}$}&\tcb{$.931_{.001}$}&\tcb{$5.424_{.033}$}&\tcb{$7.506_{.047}$}&\tcb{$.885_{.002}$}&\tcb{$3.532_{.016}$}&\tcb{$4.756_{.017}$}\\
&{\scs smhi}&\tcb{$.878_{.004}$}&\tcb{$3.060_{.039}$}&\tcb{$4.075_{.050}$}&\tcb{$.933_{.001}$}&\tcb{$5.526_{.051}$}&\tcb{$7.553_{.056}$}&$.878_{.002}$&$3.376_{.020}$&$4.531_{.019}$\\
&{\scs sqhi}&\tcb{$.880_{.005}$}&\tcb{$3.113_{.053}$}&\tcb{$4.155_{.057}$}&\tcb{$.936_{.002}$}&\tcb{$5.707_{.026}$}&\tcb{$7.690_{.035}$}&$.879_{.002}$&$3.399_{.035}$&$4.556_{.033}$\\
&{\scs expo}&\tcb{$.886_{.007}$}&\tcb{$3.206_{.176}$}&\tcb{$4.231_{.221}$}&\tcb{$.943_{.002}$}&\tcb{$6.187_{.117}$}&\tcb{$8.110_{.124}$}&\tcb{$.883_{.004}$}&\tcb{$3.508_{.096}$}&\tcb{$4.655_{.105}$}\\
&{\scs abso}&\tcb{$.926_{.018}$}&\tcb{$5.327_{1.773}$}&\tcb{$6.744_{2.061}$}&\tcb{$.963_{.005}$}&\tcb{$8.667_{.646}$}&\tcb{$10.588_{.604}$}&\tcb{$.915_{.007}$}&\tcb{$4.771_{.262}$}&\tcb{$6.028_{.278}$}\\
&{\scs squa}&\tcr{$.869_{.003}$}&\tcr{$2.818_{.026}$}&\tcr{$3.824_{.043}$}&\tcr{$.923_{.000}$}&\tcr{$5.120_{.042}$}&\tcr{$7.346_{.061}$}&$.879_{.002}$&\tcr{$3.376_{.013}$}&\tcb{$4.559_{.019}$}\\
\midrule\multirow{7}{*}{\rotatebox{90}{\tiny IT-N\hspace{1em}}}
&{\scs logi}&\tcb{$.894_{.005}$}&\tcb{$3.685_{.113}$}&\tcb{$4.884_{.149}$}&\tcb{$.943_{.002}$}&\tcb{$6.376_{.074}$}&\tcb{$8.368_{.063}$}&\tcb{$.883_{.002}$}&\tcb{$3.775_{.020}$}&\tcb{$4.968_{.025}$}\\
&{\scs hing}&\tcb{$.891_{.003}$}&\tcb{$3.623_{.123}$}&\tcb{$4.794_{.167}$}&\tcb{$.967_{.002}$}&\tcb{$9.304_{.456}$}&\tcb{$11.150_{.411}$}&\tcb{$.908_{.004}$}&\tcb{$4.690_{.125}$}&\tcb{$5.940_{.131}$}\\
&{\scs smhi}&\tcb{$.891_{.003}$}&\tcb{$3.517_{.070}$}&\tcb{$4.671_{.096}$}&\tcb{$.942_{.001}$}&\tcb{$6.253_{.120}$}&\tcb{$8.219_{.116}$}&\tcb{$.884_{.002}$}&\tcb{$3.751_{.022}$}&\tcb{$4.946_{.010}$}\\
&{\scs sqhi}&\tcb{$.896_{.003}$}&\tcb{$3.700_{.094}$}&\tcb{$4.896_{.133}$}&\tcb{$.947_{.003}$}&\tcb{$6.662_{.220}$}&\tcb{$8.630_{.219}$}&\tcb{$.883_{.002}$}&\tcb{$3.747_{.043}$}&\tcb{$4.940_{.039}$}\\
&{\scs expo}&\tcb{$.904_{.002}$}&\tcb{$4.276_{.190}$}&\tcb{$5.627_{.242}$}&\tcb{$.947_{.002}$}&\tcb{$6.571_{.116}$}&\tcb{$8.543_{.119}$}&\tcb{$.889_{.001}$}&\tcb{$3.955_{.095}$}&\tcb{$5.170_{.100}$}\\
&{\scs abso}&\tcb{$.892_{.003}$}&\tcb{$3.643_{.098}$}&\tcb{$4.834_{.143}$}&\tcb{$.970_{.002}$}&\tcb{$9.634_{.220}$}&\tcb{$11.442_{.187}$}&\tcb{$.912_{.005}$}&\tcb{$4.780_{.109}$}&\tcb{$6.032_{.110}$}\\
&{\scs squa}&\tcb{$.894_{.003}$}&\tcb{$3.703_{.079}$}&\tcb{$4.905_{.112}$}&\tcb{$.945_{.001}$}&\tcb{$6.486_{.076}$}&\tcb{$8.457_{.073}$}&\tcb{$.884_{.002}$}&\tcb{$3.749_{.049}$}&\tcb{$4.943_{.050}$}\\
\midrule\multirow{7}{*}{\rotatebox{90}{\tiny IT-O\hspace{1em}}}
&{\scs logi}&\tcb{$.899_{.004}$}&\tcb{$4.015_{.157}$}&\tcb{$5.310_{.206}$}&\tcb{$.944_{.001}$}&\tcb{$6.432_{.125}$}&\tcb{$8.407_{.121}$}&\tcb{$.883_{.003}$}&\tcb{$3.679_{.046}$}&\tcb{$4.861_{.053}$}\\
&{\scs hing}&\tcb{$.895_{.007}$}&\tcb{$3.573_{.172}$}&\tcb{$4.715_{.198}$}&\tcb{$.967_{.003}$}&\tcb{$9.174_{.403}$}&\tcb{$11.037_{.365}$}&\tcb{$.905_{.008}$}&\tcb{$4.445_{.270}$}&\tcb{$5.686_{.281}$}\\
&{\scs smhi}&\tcb{$.893_{.005}$}&\tcb{$3.617_{.145}$}&\tcb{$4.787_{.201}$}&\tcb{$.945_{.003}$}&\tcb{$6.511_{.246}$}&\tcb{$8.479_{.239}$}&\tcr{$.878_{.002}$}&\tcb{$3.579_{.031}$}&\tcb{$4.760_{.039}$}\\
&{\scs sqhi}&\tcb{$.902_{.003}$}&\tcb{$4.068_{.146}$}&\tcb{$5.382_{.196}$}&\tcb{$.945_{.001}$}&\tcb{$6.492_{.080}$}&\tcb{$8.463_{.076}$}&\tcb{$.882_{.002}$}&\tcb{$3.656_{.046}$}&\tcb{$4.840_{.048}$}\\
&{\scs expo}&\tcb{$.895_{.004}$}&\tcb{$3.780_{.110}$}&\tcb{$5.010_{.126}$}&\tcb{$.945_{.002}$}&\tcb{$6.575_{.191}$}&\tcb{$8.551_{.200}$}&\tcb{$.887_{.004}$}&\tcb{$3.849_{.099}$}&\tcb{$5.055_{.104}$}\\
&{\scs abso}&\tcb{$.888_{.003}$}&\tcb{$3.540_{.068}$}&\tcb{$4.683_{.087}$}&\tcb{$.966_{.007}$}&\tcb{$9.140_{.953}$}&\tcb{$10.966_{.888}$}&\tcb{$.918_{.002}$}&\tcb{$4.899_{.013}$}&\tcb{$6.161_{.013}$}\\
&{\scs squa}&\tcb{$.901_{.006}$}&\tcb{$3.965_{.099}$}&\tcb{$5.234_{.143}$}&\tcb{$.946_{.001}$}&\tcb{$6.596_{.103}$}&\tcb{$8.556_{.117}$}&\tcb{$.883_{.002}$}&\tcb{$3.692_{.057}$}&\tcb{$4.874_{.054}$}\\
\bottomrule
\end{tabular}}\end{table}

Table~\ref{tab:Bias} shows the mean and SD of the number of unique elements of 
the learned bias parameter vector for the experiment in Section~\ref{sec:RWDE}.
\begin{table}[H]
\renewcommand{\arraystretch}{0.75}\renewcommand{\tabcolsep}{5pt}\centering%
\caption{Mean and SD of the number of unique elements of the learned 
bias parameter vector for the experiment in Section~\ref{sec:RWDE}.}
\label{tab:Bias}
\begin{minipage}[H]{0.29\columnwidth}\centering
\scalebox{0.75}{\begin{tabular}{cc|c|c|c}\toprule
&&{\scs MORPH-2}&{\scs CACD}&{\scs AFAD}\\
\midrule\multirow{7}{*}{\rotatebox{90}{\tiny AT-O\hspace{1em}}}
&{\scs logi}&$54.0_{.000}$&$48.0_{.000}$&$25.0_{.000}$\\
&{\scs hing}&$54.0_{.000}$&$47.8_{.400}$&$24.4_{.800}$\\
&{\scs smhi}&$54.0_{.000}$&$48.0_{.000}$&$25.0_{.000}$\\
&{\scs sqhi}&$54.0_{.000}$&$48.0_{.000}$&$25.0_{.000}$\\
&{\scs expo}&$54.0_{.000}$&$48.0_{.000}$&$25.0_{.000}$\\
&{\scs abso}&$18.6_{3.878}$&$14.0_{1.414}$&$9.6_{1.356}$\\
&{\scs squa}&$49.2_{1.327}$&$48.0_{.000}$&$25.0_{.000}$\\
\bottomrule\end{tabular}}\end{minipage}
\hspace{0.03\columnwidth}
\begin{minipage}[H]{0.29\columnwidth}\centering
\scalebox{0.75}{\begin{tabular}{cc|c|c|c}\toprule
&&{\scs MORPH-2}&{\scs CACD}&{\scs AFAD}\\
\midrule\multirow{7}{*}{\rotatebox{90}{\tiny IT-N\hspace{1em}}}
&{\scs logi}&$54.0_{.000}$&$48.0_{.000}$&$25.0_{.000}$\\
&{\scs hing}&$54.0_{.000}$&$48.0_{.000}$&$25.0_{.000}$\\
&{\scs smhi}&$54.0_{.000}$&$48.0_{.000}$&$25.0_{.000}$\\
&{\scs sqhi}&$54.0_{.000}$&$48.0_{.000}$&$25.0_{.000}$\\
&{\scs expo}&$54.0_{.000}$&$48.0_{.000}$&$25.0_{.000}$\\
&{\scs abso}&$54.0_{.000}$&$48.0_{.000}$&$25.0_{.000}$\\
&{\scs squa}&$54.0_{.000}$&$48.0_{.000}$&$25.0_{.000}$\\
\bottomrule\end{tabular}}\end{minipage}
\hspace{0.02\columnwidth}
\begin{minipage}[H]{0.29\columnwidth}\centering
\scalebox{0.75}{\begin{tabular}{cc|c|c|c}\toprule
&&{\scs MORPH-2}&{\scs CACD}&{\scs AFAD}\\
\midrule\multirow{7}{*}{\rotatebox{90}{\tiny IT-O\hspace{1em}}}
&{\scs logi}&$42.4_{3.499}$&$28.6_{1.960}$&$8.0_{.000}$\\
&{\scs hing}&$35.8_{4.792}$&$35.4_{10.131}$&$6.2_{.400}$\\
&{\scs smhi}&$42.6_{5.389}$&$27.8_{.748}$&$8.0_{.000}$\\
&{\scs sqhi}&$48.6_{3.611}$&$28.8_{2.638}$&$8.0_{.000}$\\
&{\scs expo}&$40.4_{5.276}$&$32.2_{.748}$&$9.2_{.980}$\\
&{\scs abso}&$26.2_{2.857}$&$36.4_{8.499}$&$6.4_{.490}$\\
&{\scs squa}&$45.0_{4.817}$&$29.6_{1.960}$&$8.0_{.000}$\\
\bottomrule\end{tabular}}\end{minipage}
\hspace{0.02\columnwidth}
\end{table}

Figure~\ref{fig:Violin} gives plots of $(y_i,\hat{a}(\bx_i))$'s for 
representative settings of the experiment in Section~\ref{sec:RWDE}.
In results with the Hinge-IT loss in Figure~\ref{fig:Violin} \hyl{q5} and \hyl{q6},
learned 1DT values $\hat{a}(\bx_i)$'s are concentrated at 1 point.
\begin{figure}[H]
\renewcommand{\arraystretch}{0.1}\renewcommand{\tabcolsep}{5pt}\centering%
\begin{tabular}{rccc}\hskip-6pt
&\hskip-6pt
{\tiny\hyt{q1} MORPH-2, logi}&\hskip-6pt
{\tiny\hyt{q2} CACD, logi}&\hskip-6pt
{\tiny\hyt{q3} AFAD, logi}\\\hskip-6pt
\rotatebox{90}{\tiny~~~~~~~~~~~~AT-O}&\hskip-6pt
\includegraphics[width=4.8cm, bb=0 0 946 442]{./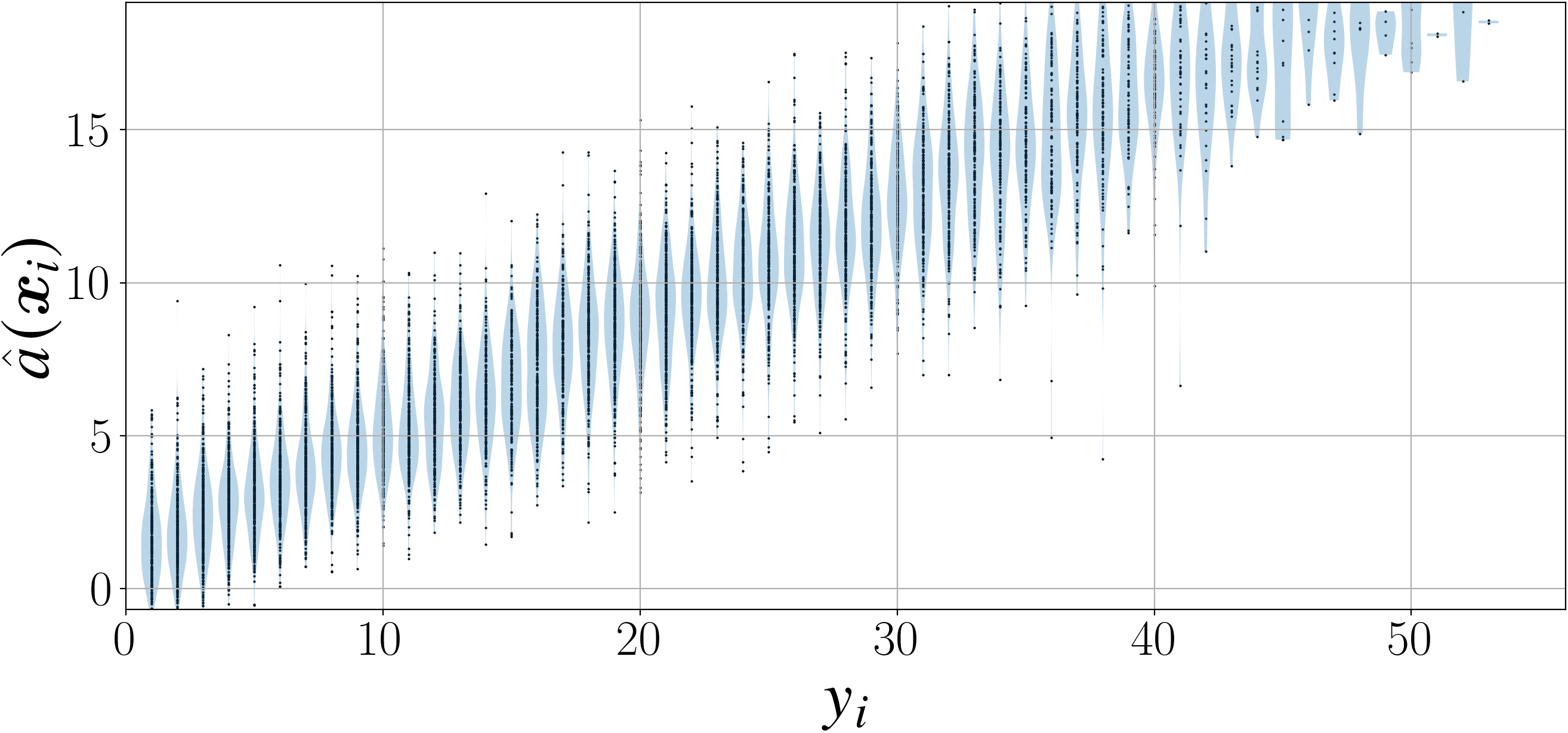}&\hskip-6pt
\includegraphics[width=4.8cm, bb=0 0 946 442]{./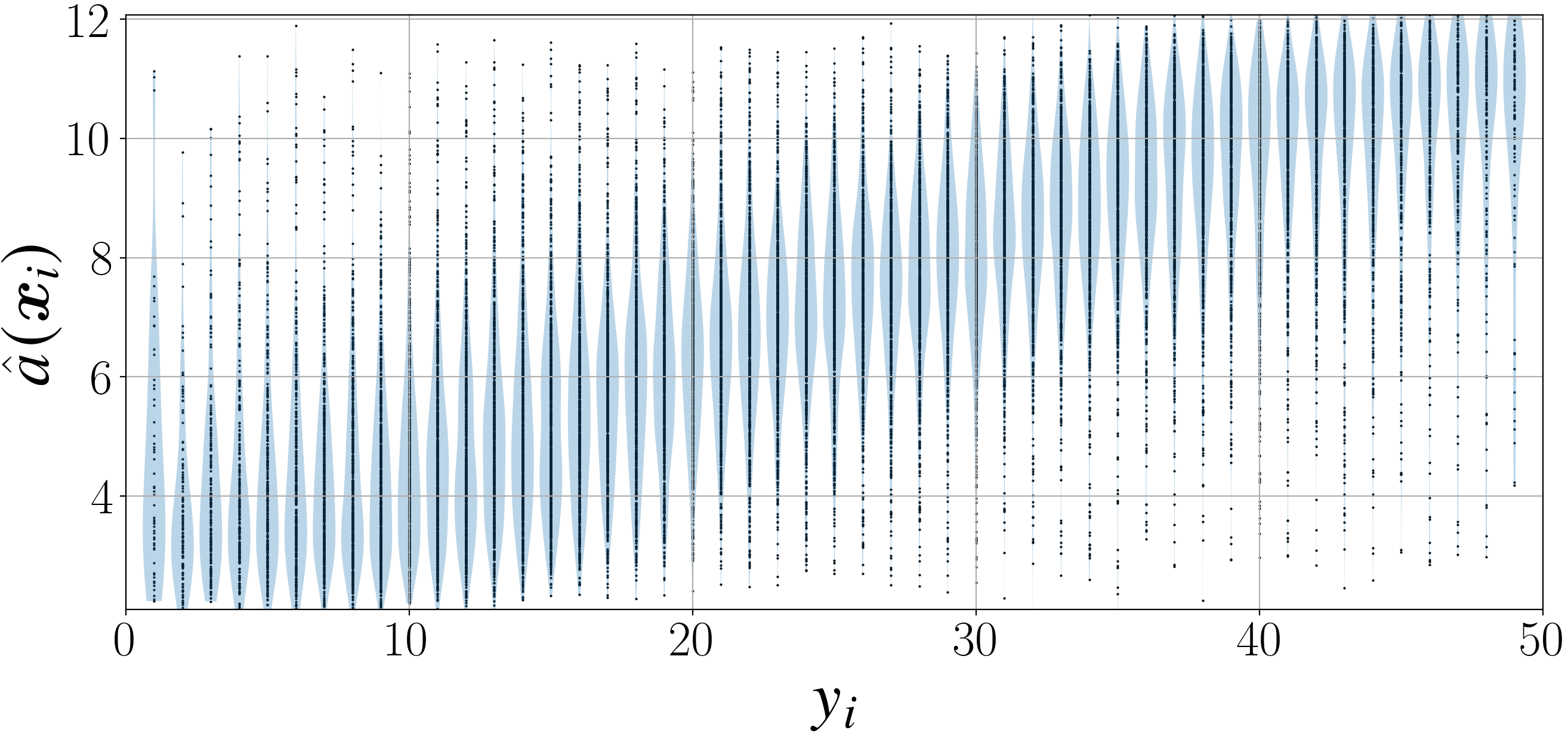}&\hskip-6pt
\includegraphics[width=4.8cm, bb=0 0 946 442]{./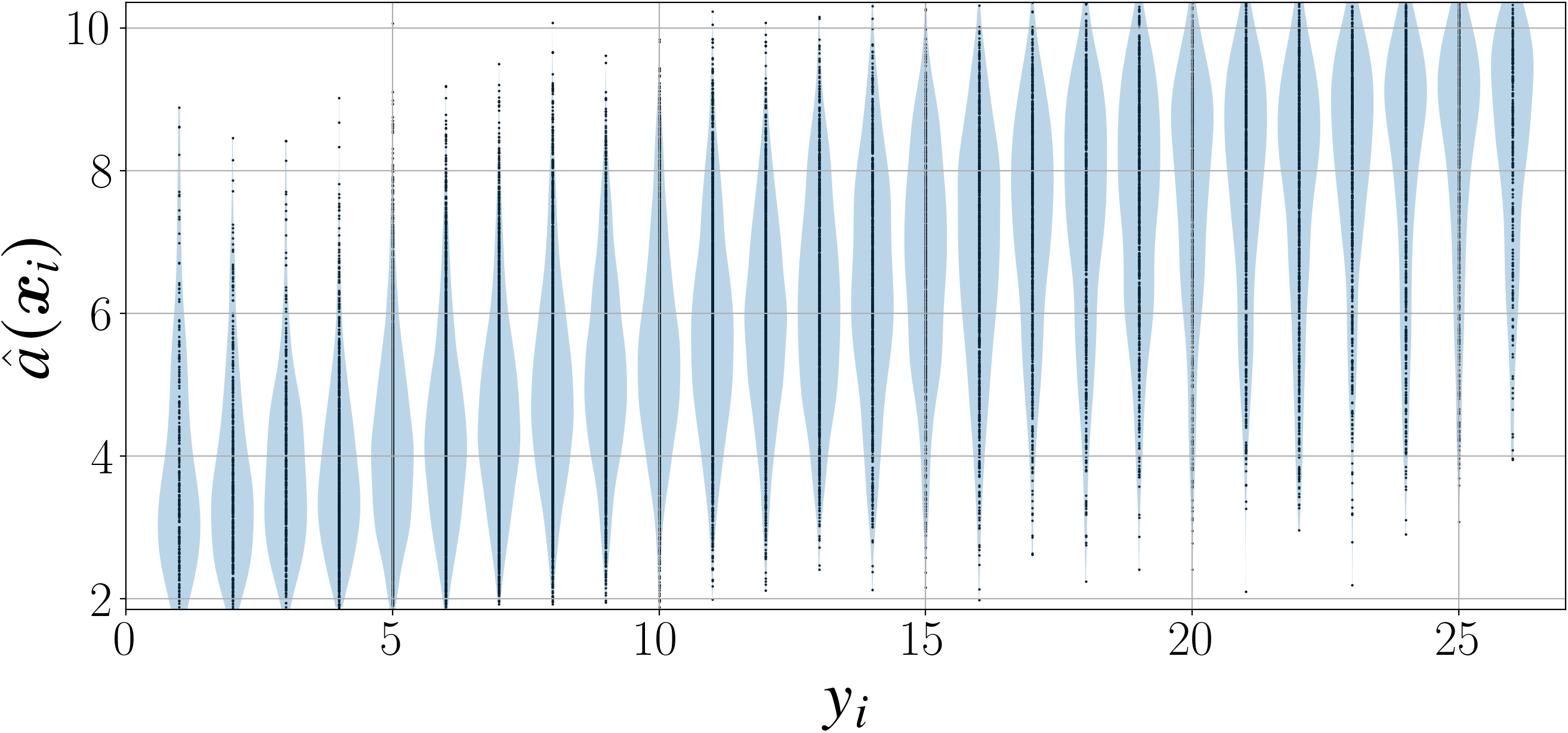}\\\hskip-6pt
\rotatebox{90}{\tiny~~~~~~~~~~~~IT-N}&\hskip-6pt
\includegraphics[width=4.8cm, bb=0 0 946 442]{./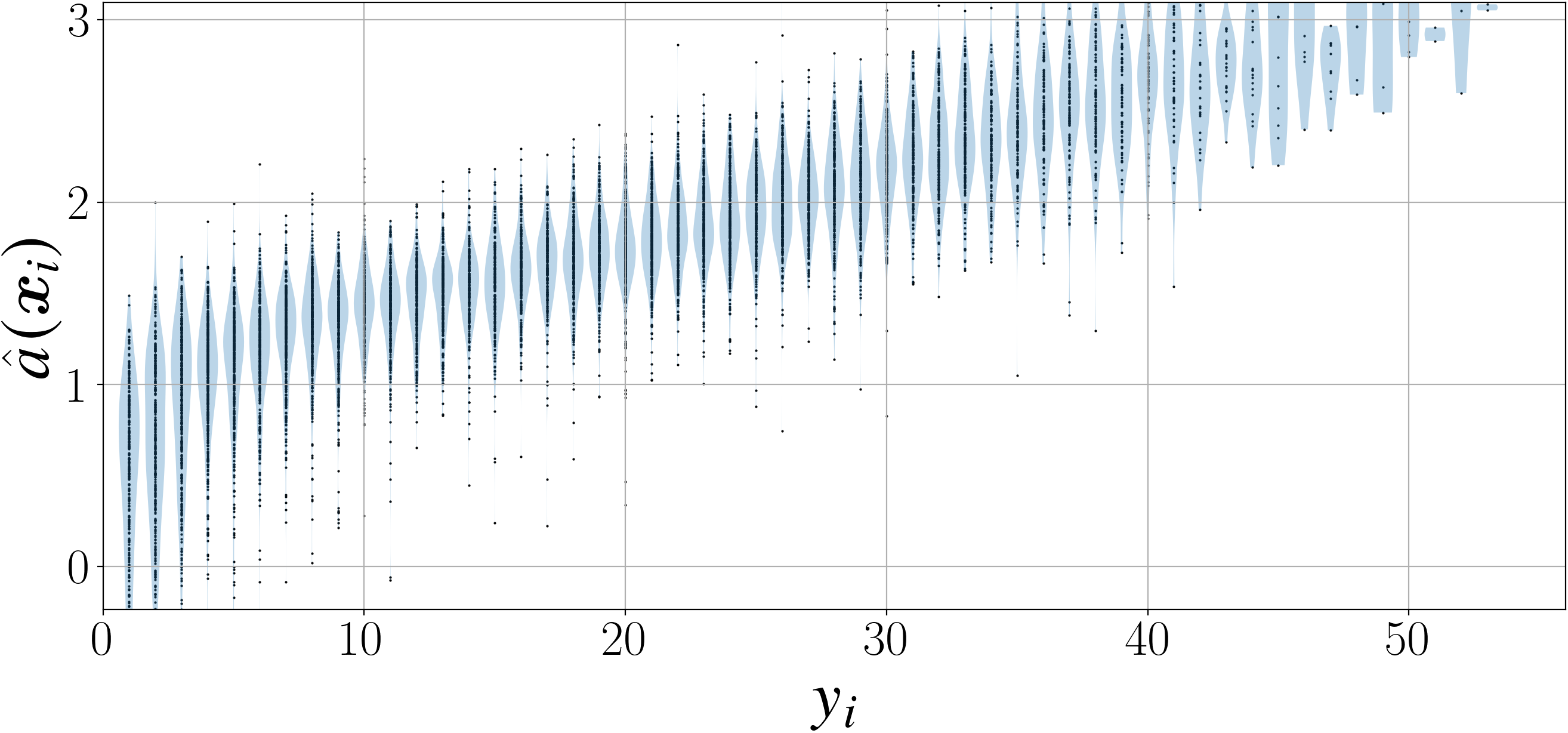}&\hskip-6pt
\includegraphics[width=4.8cm, bb=0 0 946 442]{./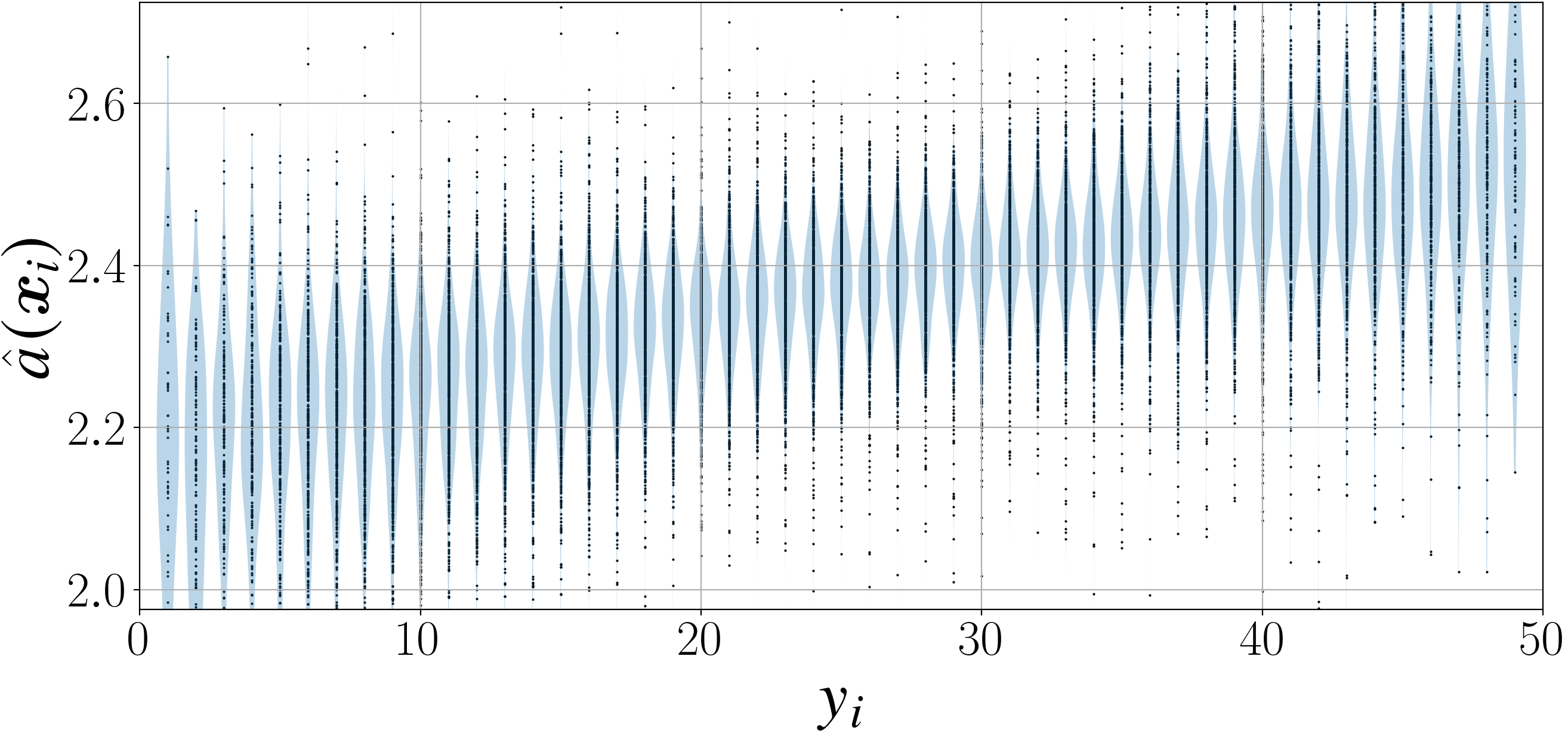}&\hskip-6pt
\includegraphics[width=4.8cm, bb=0 0 946 442]{./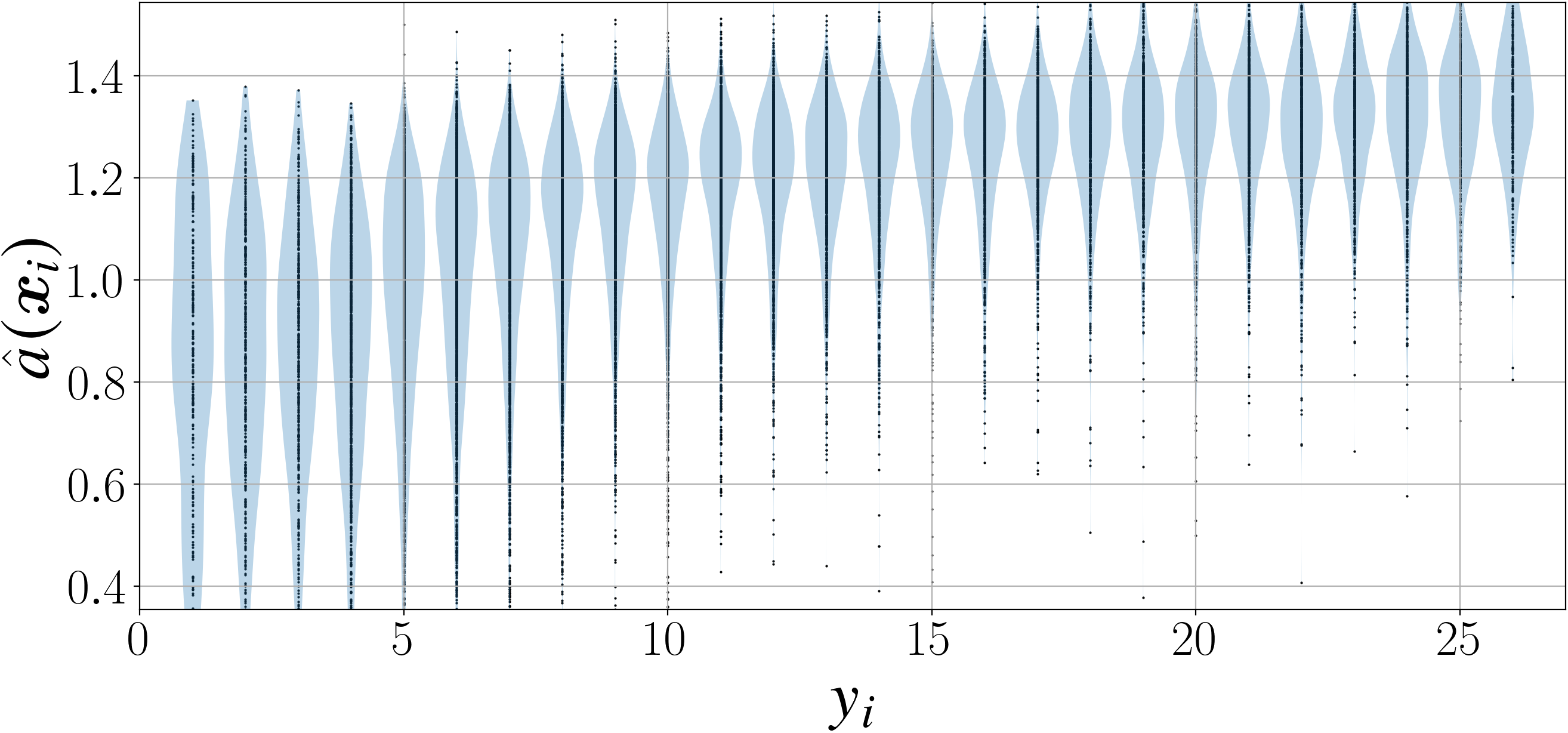}\\\hskip-6pt
\rotatebox{90}{\tiny~~~~~~~~~~~~IT-O}&\hskip-6pt
\includegraphics[width=4.8cm, bb=0 0 946 442]{./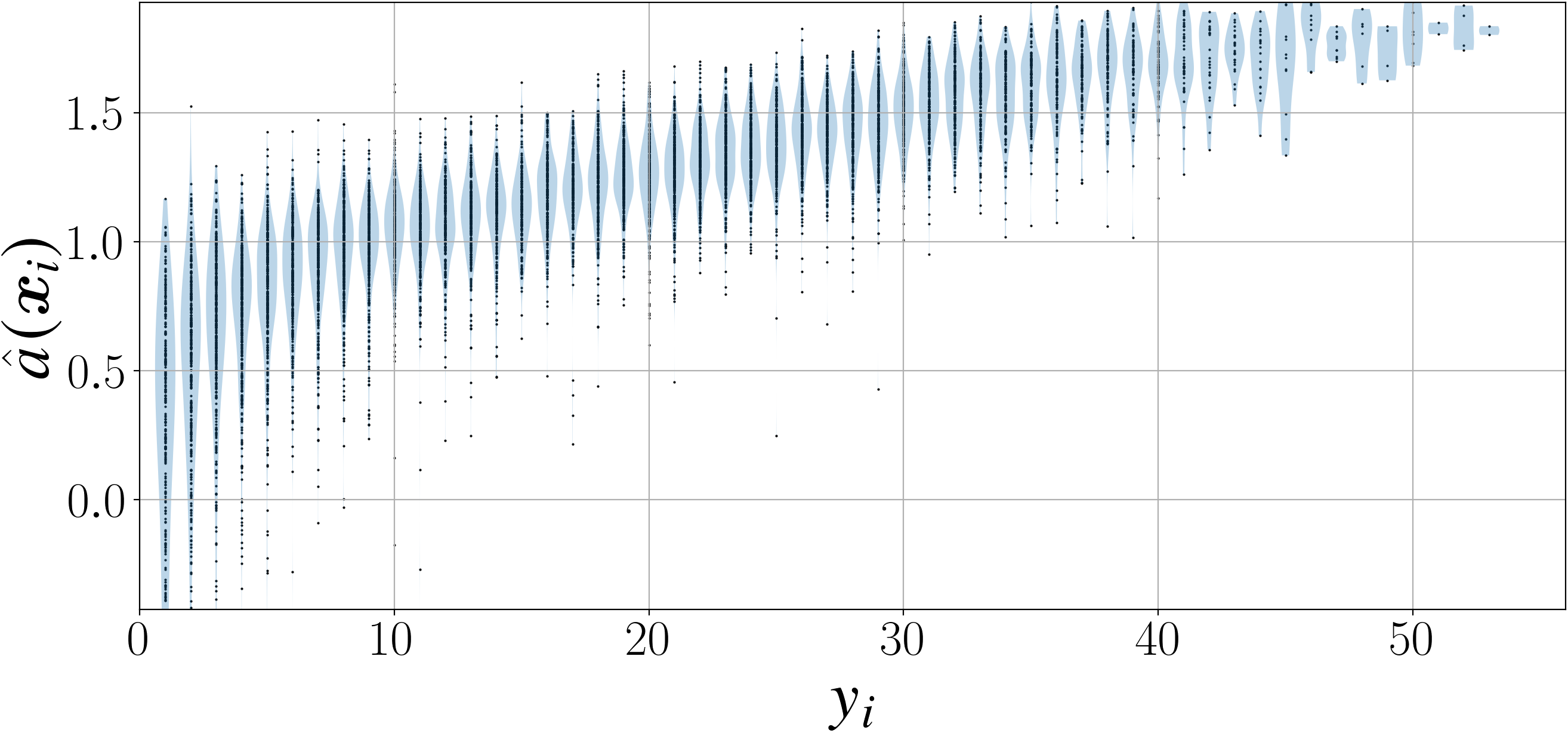}&\hskip-6pt
\includegraphics[width=4.8cm, bb=0 0 946 442]{./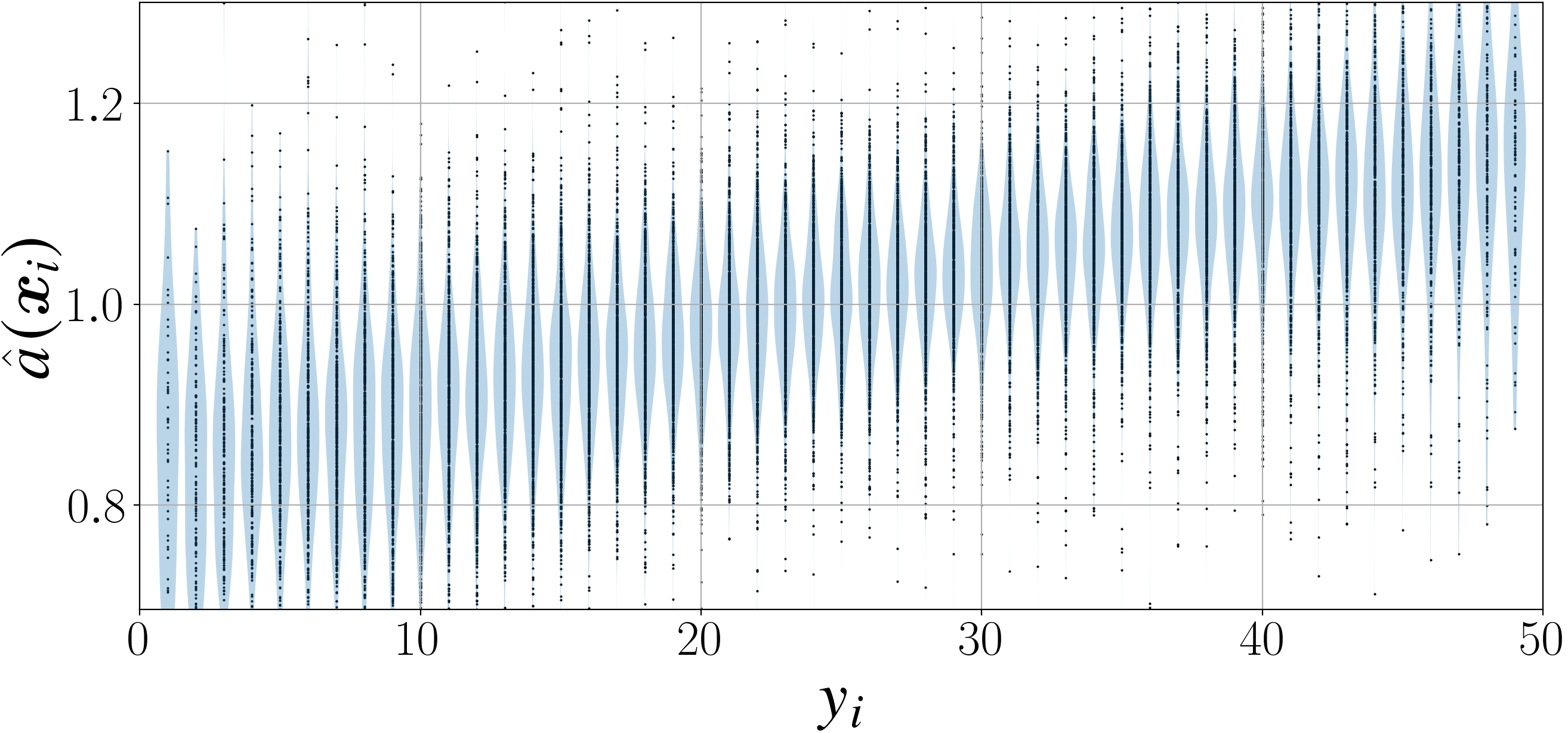}&\hskip-6pt
\includegraphics[width=4.8cm, bb=0 0 946 442]{./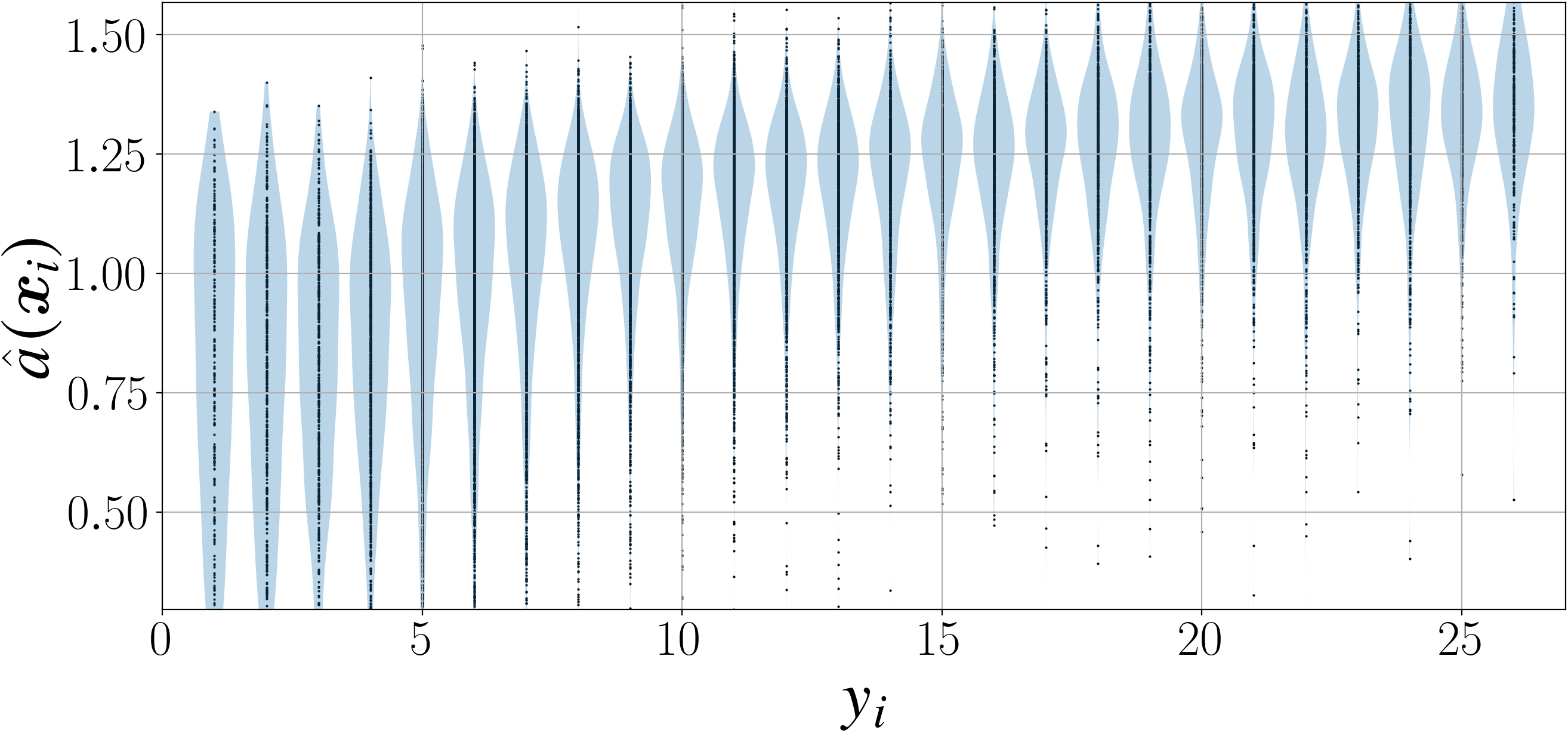}\\\hskip-6pt
&\hskip-6pt
{\tiny\hyt{q4} MORPH-2, hing}&\hskip-6pt
{\tiny\hyt{q5} CACD, hing}&\hskip-6pt
{\tiny\hyt{q6} AFAD, hing}\\\hskip-6pt
\rotatebox{90}{\tiny~~~~~~~~~~~~AT-O}&\hskip-6pt
\includegraphics[width=4.8cm, bb=0 0 946 442]{./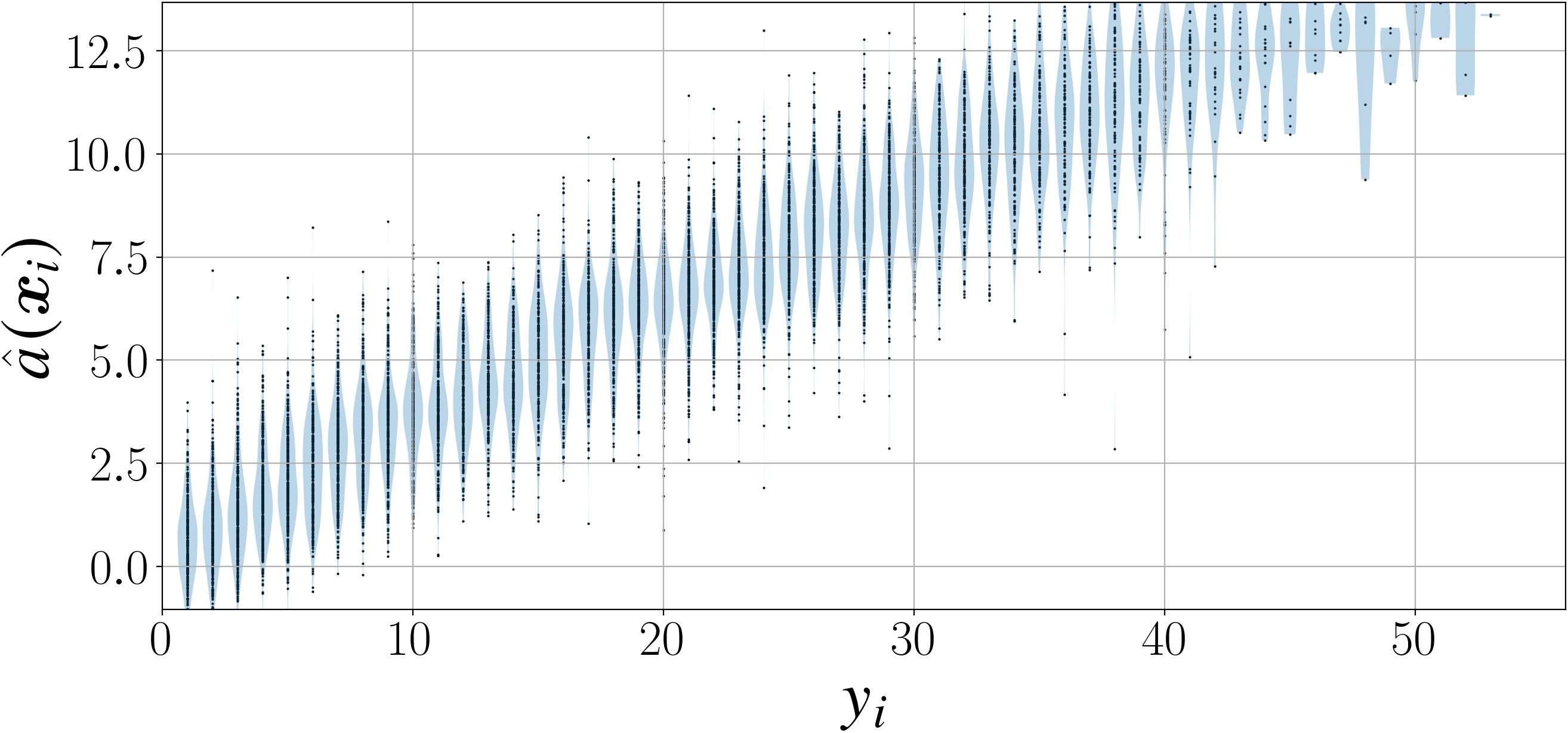}&\hskip-6pt
\includegraphics[width=4.8cm, bb=0 0 946 442]{./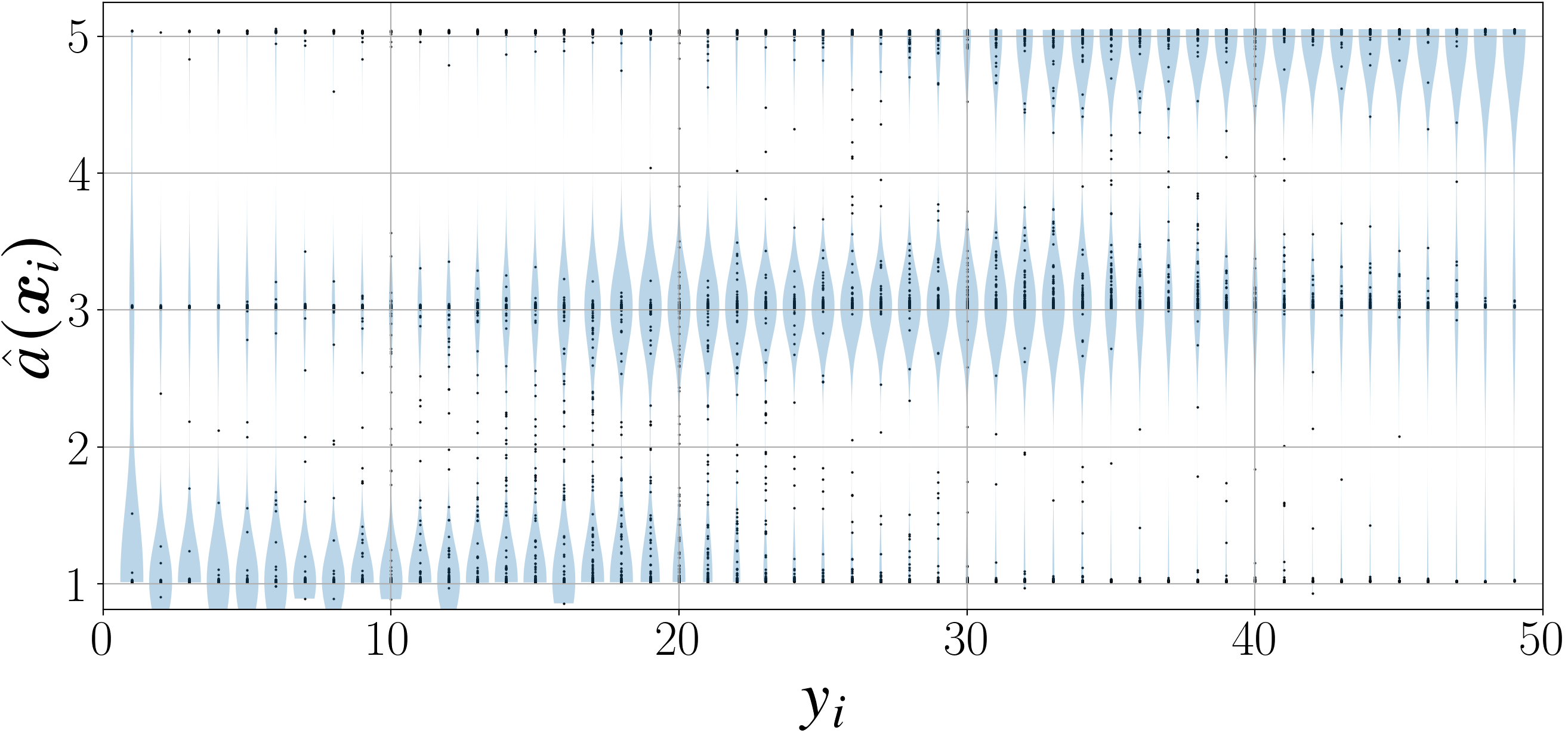}&\hskip-6pt
\includegraphics[width=4.8cm, bb=0 0 946 442]{./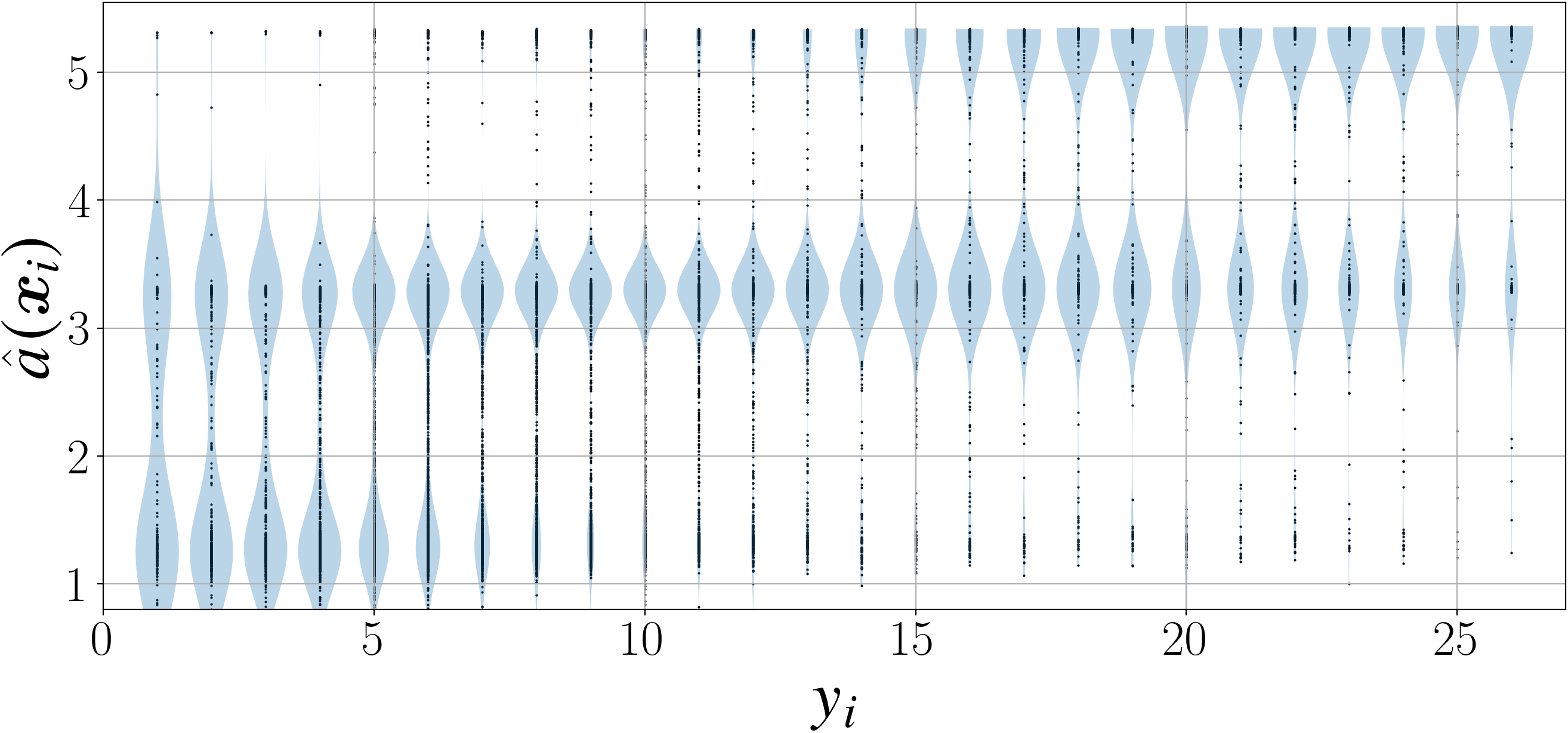}\\\hskip-6pt
\rotatebox{90}{\tiny~~~~~~~~~~~~IT-N}&\hskip-6pt
\includegraphics[width=4.8cm, bb=0 0 946 442]{./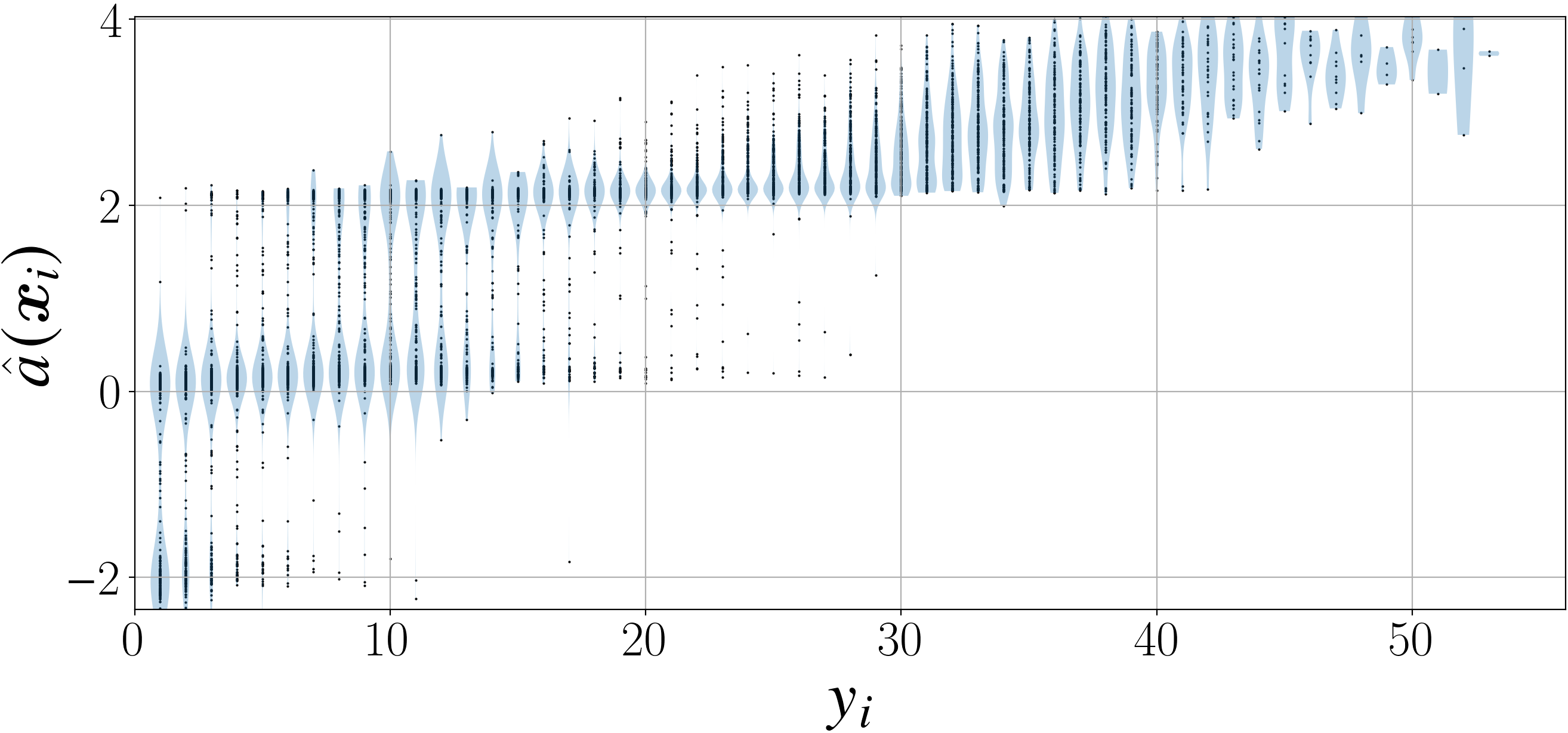}&\hskip-6pt
\includegraphics[width=4.8cm, bb=0 0 946 442]{./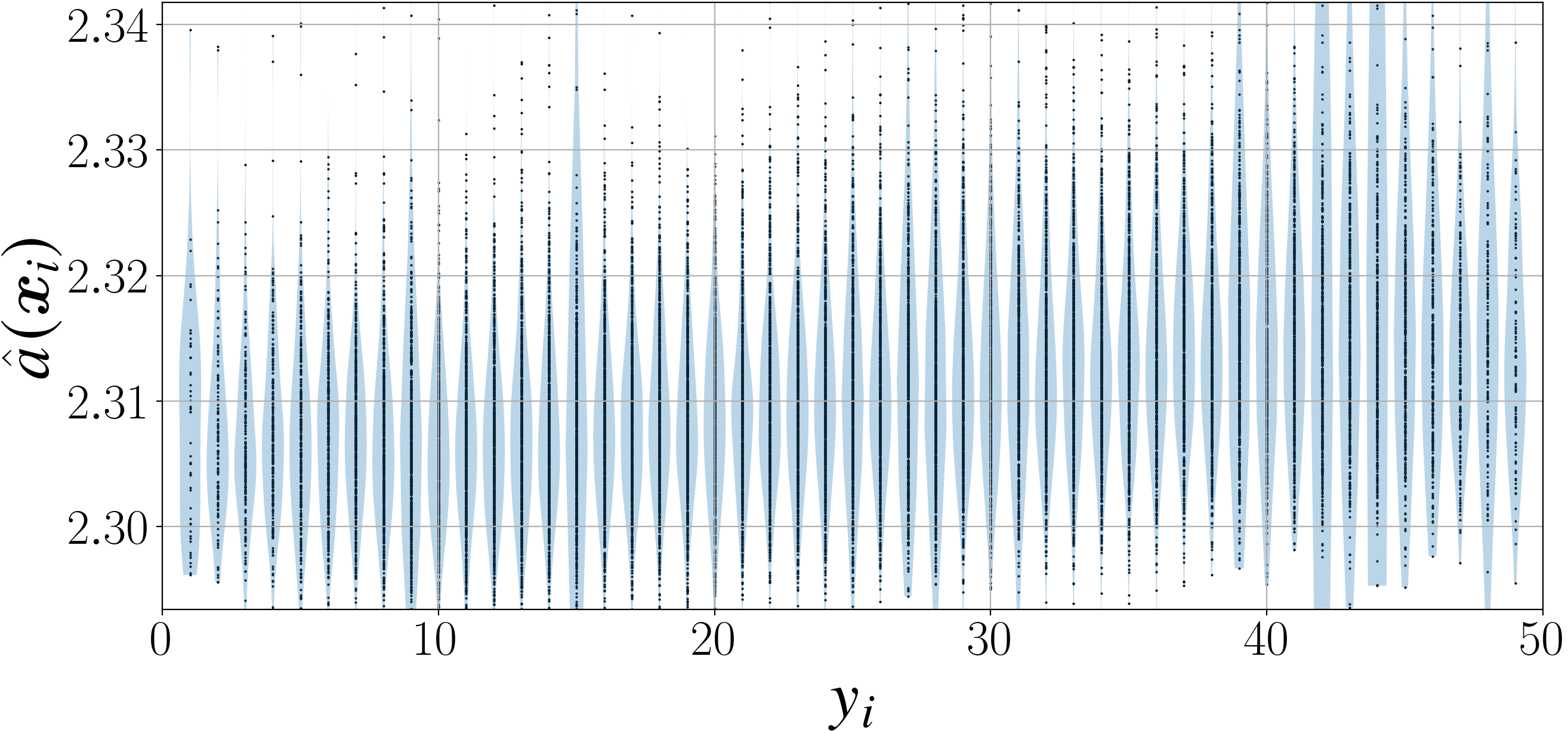}&\hskip-6pt
\includegraphics[width=4.8cm, bb=0 0 946 442]{./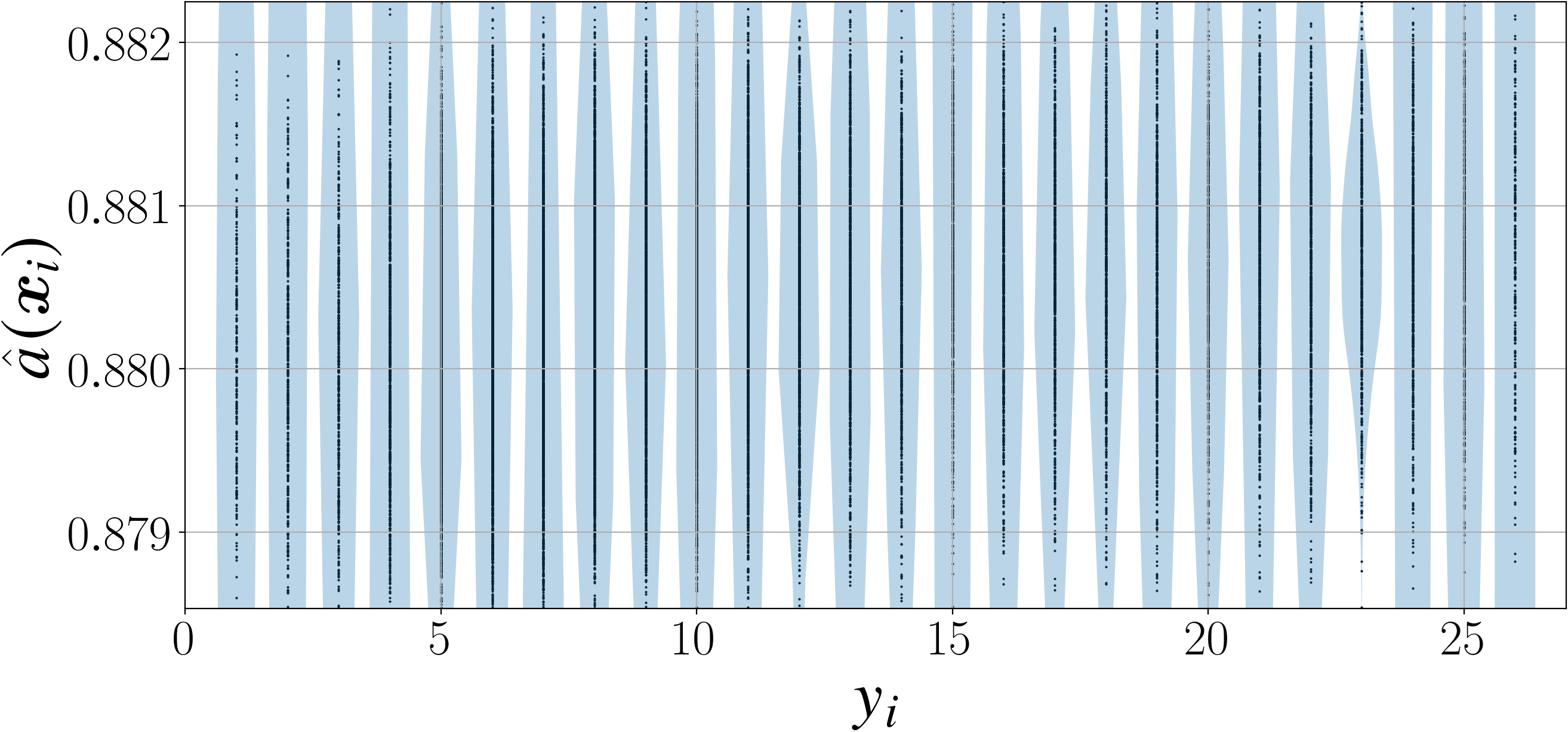}\\\hskip-6pt
\rotatebox{90}{\tiny~~~~~~~~~~~~IT-O}&\hskip-6pt
\includegraphics[width=4.8cm, bb=0 0 946 442]{./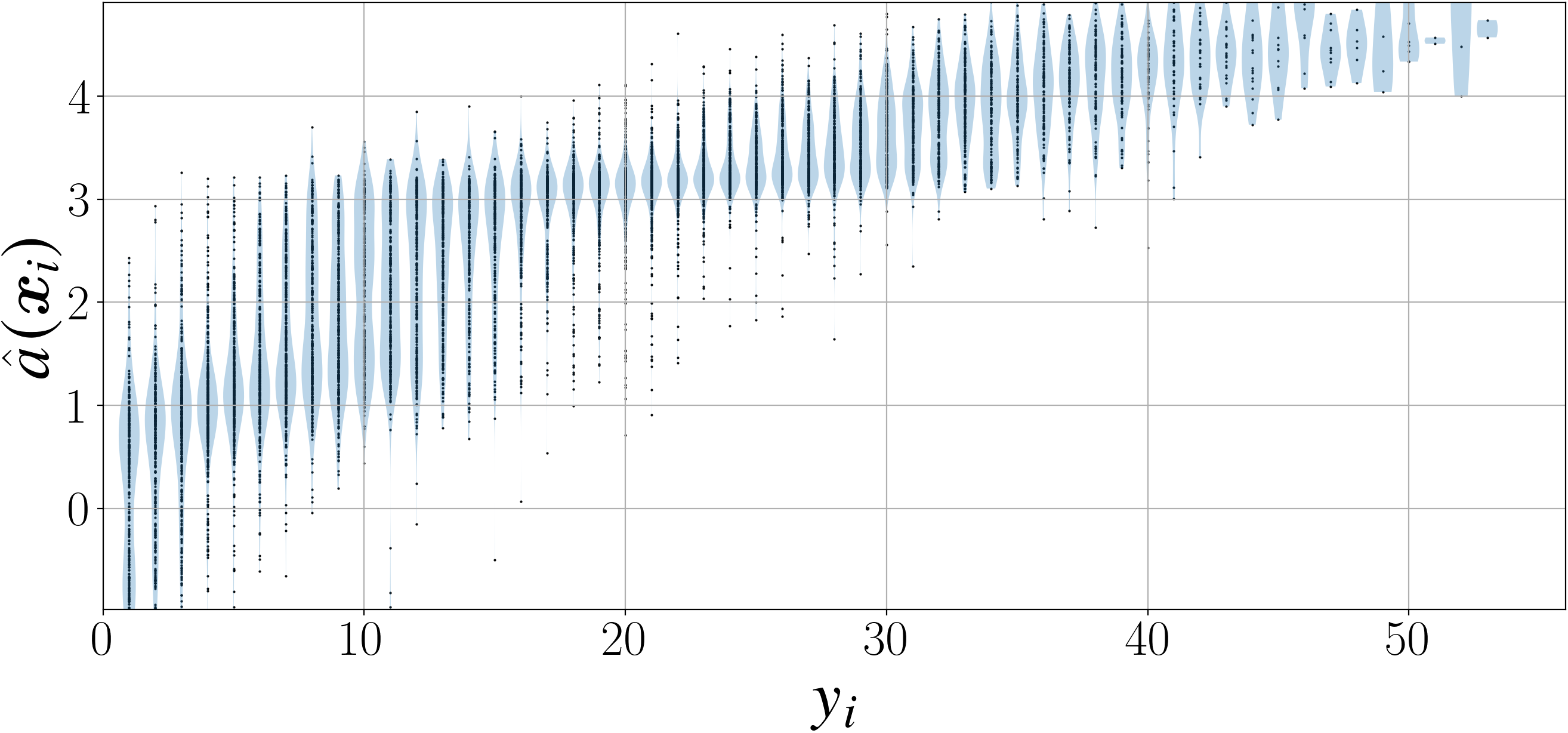}&\hskip-6pt
\includegraphics[width=4.8cm, bb=0 0 946 442]{./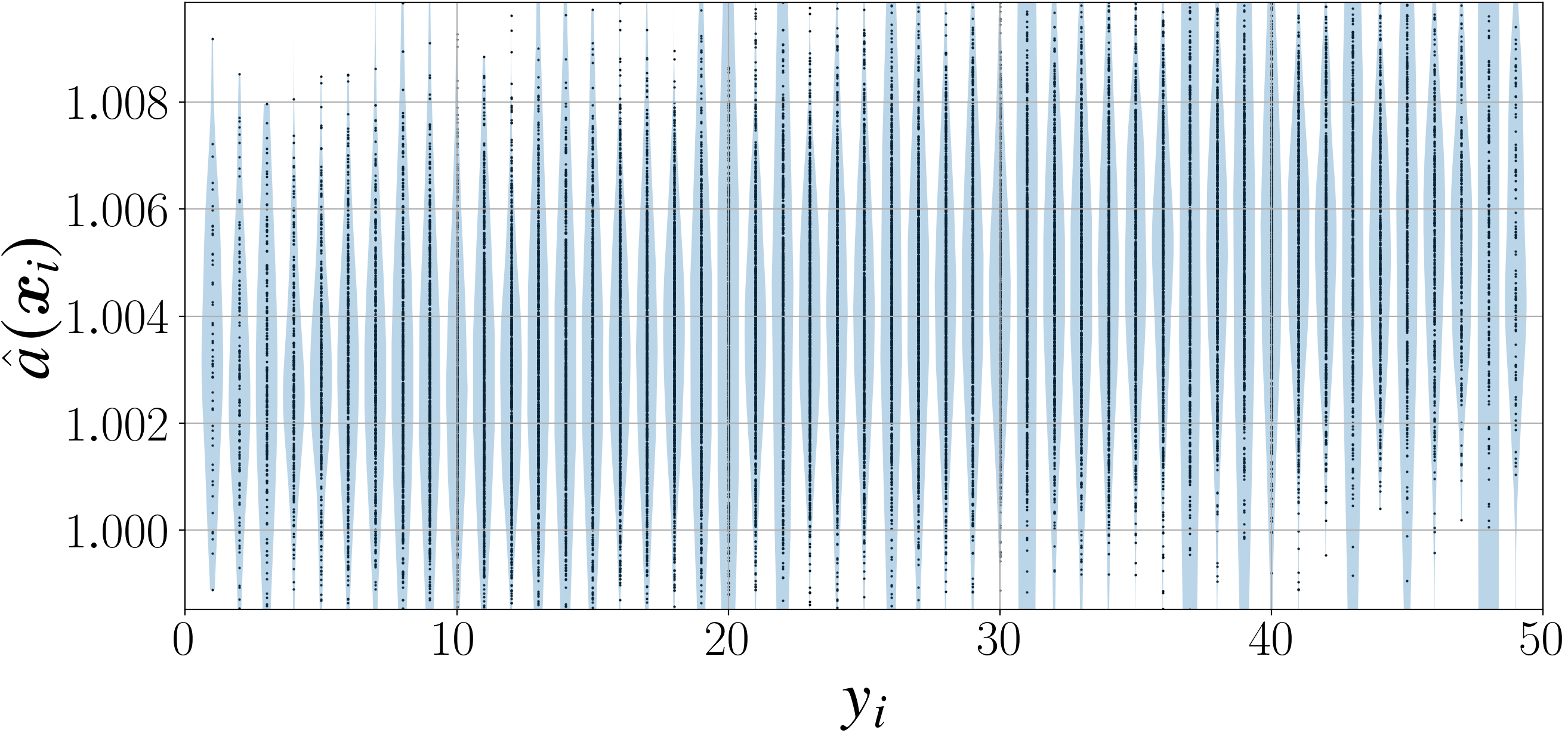}&\hskip-6pt
\includegraphics[width=4.8cm, bb=0 0 946 442]{./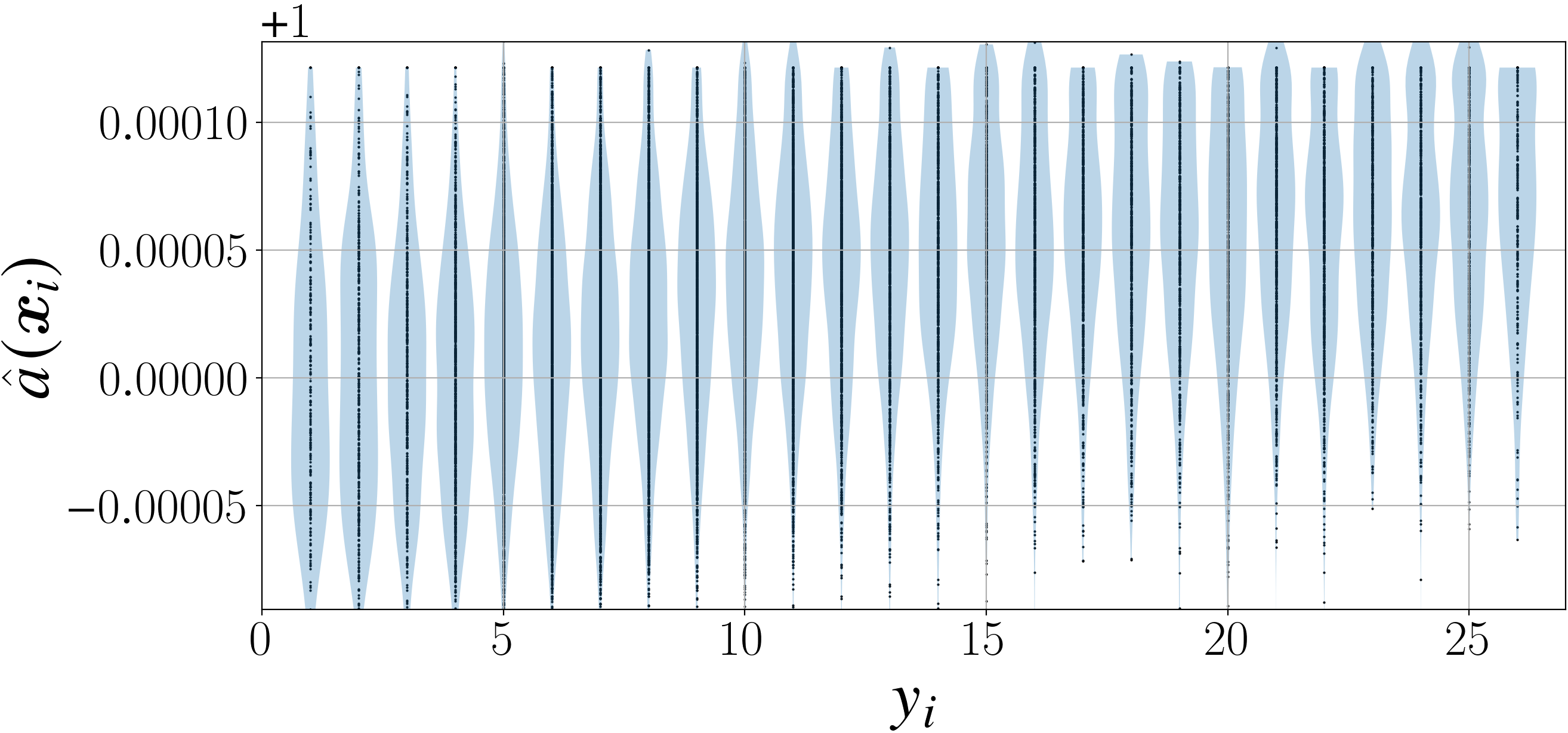}
\end{tabular}
\caption{%
Scatter and violin plots of $(y_i,\hat{a}(\bx_i))$'s for test data 
for the experiment in Section~\ref{sec:RWDE}.
The vertical axis is restricted to $[L-0.05(U-L), U+0.05(U-L)]$ with 
$1$ percentile $L$ and $99$ percentile $U$ of $\hat{a}(\bx_i)$'s.}
\label{fig:Violin}
\end{figure}

\section{Proofs of Theorems}
\label{sec:Proof}
We here provide proofs of the theoretical results presented in this paper.

\paragraph{Proof of Theorem~\protect\ref{thm:AT-BP-Ord}}
We first provide a proof of Theorem~\ref{thm:AT-BP-Ord}:
\begin{proof}[{Proof of Theorem~\ref{thm:AT-BP-Ord}}]
\textbf{Proof of \hyl{a9}~}
We first prove the relationship \hyl{a9} between the conditions \hyl{a1}--\hyl{a6}.
When $\varphi$ is non-increasing \hyl{a2}, 
$\varphi(u_1)\ge\varphi(u_2)$ and $\varphi(-u_2)\ge\varphi(-u_1)$ if $u_1<u_2$.
When $\varphi(\cdot)-\varphi(-\cdot)$ is non-increasing \hyl{a3},
$\varphi(u_1)-\varphi(-u_1)\ge\varphi(u_2)-\varphi(-u_2)$ if $u_1<u_2$.
Since both the results imply \hyl{a1},
one has `\hyl{a2} $\Rightarrow$ \hyl{a1}' and `\hyl{a3} $\Rightarrow$ \hyl{a1}'.
The statements `\hyl{a5} $\Rightarrow$ \hyl{a4}' and `\hyl{a6} $\Rightarrow$ \hyl{a4}' 
can be proved in a similar way.

In the following, we prove the population version of the statements 
\hyl{a10} and \hyl{a11} regarding the order of the optimized bias parameter vector.
Note that the surrogate risk for the AT surrogate loss is 
\begin{align}
\label{eq:ATSR}
    \begin{split}
    \bbE[\phi(a(\bX),\bb,Y)]
    &=\bbE\biggl[\sum_{y=1}^K\Pr(Y=y|\bX)\biggl\{\sum_{k=1}^{y-1}\varphi(a(\bX)-b_k)+\sum_{k=y}^{K-1}\varphi(b_k-a(\bX))\biggr\}\biggr]\\
    &=\bbE\biggl[\sum_{y=1}^{K-1}\{\Pr(Y\le y|\bX)\varphi(b_y-a(\bX))+\Pr(Y>y|\bX)\varphi(a(\bX)-b_y)\}\biggr].
    \end{split}
\end{align}

\textbf{Proof of \hyl{a10}~}
Assume that any surrogate risk minimizer $(\bar{a},\bar{\bb})$ 
has a pair $(\bar{b}_l,\bar{b}_m)$ satisfying 
$\bar{b}_l>\bar{b}_m$ with $l,m\in[K-1]$ such that $l<m$.
For $\check{\bb}=(\check{b}_k)_{k\in[K-1]}$ consisting of
\begin{align}
    \check{b}_k:=
    \begin{cases}
    \bar{b}_m&\text{if }k=l,\\
    \bar{b}_l&\text{if }k=m,\\
    \bar{b}_k&\text{otherwise},
    \end{cases}
\end{align}
one has that
\begin{align}
    \begin{split}
    &\bbE[\phi(\bar{a}(\bX),\bar{\bb},Y)]
    -\bbE[\phi(\bar{a}(\bX),\check{\bb},Y)]\\
    &=\bbE[\Pr(Y\le l|\bX)\varphi(\bar{b}_l-\bar{a}(\bX))+\Pr(Y>l|\bX)\varphi(\bar{a}(\bX)-\bar{b}_l)\\
    &\hphantom{=\bbE[}
    +\Pr(Y\le m|\bX)\varphi(\bar{b}_m-\bar{a}(\bX))+\Pr(Y>m|\bX)\varphi(\bar{a}(\bX)-\bar{b}_m)\\
    &\hphantom{=\bbE[}
    +(\text{$(\bar{b}_l,\bar{b}_m)$-independent term})]\\
    &-\bbE[\Pr(Y\le l|\bX)\varphi(\bar{b}_m-\bar{a}(\bX))+\Pr(Y>l|\bX)\varphi(\bar{a}(\bX)-\bar{b}_m)\\
    &\hphantom{=\bbE[}
    +\Pr(Y\le m|\bX)\varphi(\bar{b}_l-\bar{a}(\bX))+\Pr(Y>m|\bX)\varphi(\bar{a}(\bX)-\bar{b}_l)\\
    &\hphantom{=\bbE[}
    +(\text{$(\bar{b}_l,\bar{b}_m)$-independent term})]\\
    &=\bbE[\Pr(Y\in\{l+1,\ldots,m\}|\bX)\\
    &\hphantom{=\bbE[}
    \times\{\varphi(\bar{a}(\bX)-\bar{b}_l)-\varphi(\bar{b}_l-\bar{a}(\bX))
    -\varphi(\bar{a}(\bX)-\bar{b}_m)+\varphi(\bar{b}_m-\bar{a}(\bX))\}].
    \end{split}
\end{align}
From the assumption \hyl{a8}, 
there exists a point $\bx\in\bbR$ such that $\Pr(Y\in\{l+1,\ldots,m\}|\bX=\bx)>0$.
The condition \hyl{a1} with $u_1=\bar{a}(\bX)-\bar{b}_l$ and 
$u_2=\bar{a}(\bX)-\bar{b}_m$ ($u_1<u_2$ since $\bar{b}_l>\bar{b}_m$)
implies $\varphi(\bar{a}(\bX)-\bar{b}_l)-\varphi(\bar{b}_l-\bar{a}(\bX))
-\varphi(\bar{a}(\bX)-\bar{b}_m)+\varphi(\bar{b}_m-\bar{a}(\bX))\ge0$.
Therefore, one has $\bbE[\phi(\bar{a}(\bX),\check{\bb},Y)]\le\bbE[\phi(\bar{a}(\bX),\bar{\bb},Y)]$.
Also, as changing the bias parameter vector from $\bar{\bb}$ to $\check{\bb}$,
the total number of pairs of bias parameters 
validating the order is reduced by at least one.
Therefore, by repeating the above operations sequentially for pairs of bias parameters validating the order,
one can finally obtain the bias parameter vector satisfying the order condition
without increasing the surrogate risk.
This result contradicts to the assumption that any $\bar{\bb}$ has
a pair $(\bar{b}_l,\bar{b}_m)$ satisfying $\bar{b}_l>\bar{b}_m$ with $l<m$.
Thus, there exists a surrogate risk minimizer $(\bar{a},\bar{\bb})$ 
satisfying $\bar{b}_l\le\bar{b}_m$ for any $l,m\in[K-1]$ such that $l<m$.

\textbf{Proof of \hyl{a11}~}
This can be proved in the same way as the above proof.
\end{proof}

\paragraph{Generalization of Theorem~\protect\ref{thm:AT-BP-Ord}}
Moreover, Theorem~\ref{thm:AT-BP-Ord} can be generalized
to a regularized version as follows:
\begin{corollary}
\label{cor:ATBiasOrdCor}
Let $\calA\subseteq\{a:\bbR^d\to\bbR\}$, 
$\calB_0^\ord\subseteq\calB\subseteq\calB_0$, 
and a real-valued functional $\Lambda$ satisfy
$\Lambda(a,\bb,\bx,y)\ge\Lambda(a,\pi(\bb),\bx,y)$ 
for any $a\in\calA$, $\bb\in\calB$, $\bx\in\bbR^d$, $y\in[K]$ 
and sort $\pi(\bb)=(b_{i_1},\ldots,b_{i_{K-1}})^\top$, where 
$b_{i_1}\le\ldots\le b_{i_{K-1}}$ with $i_1,\ldots,i_{K-1}\in[K-1]$ 
satisfying $i_l\neq i_m$ if $l\neq m$.
Then, \hyl{a10} and \hyl{a11} hold with
\begin{align}
\label{eq:Reguab}
\begin{split}
    &(\hat{a},\hat{\bb})\in\underset{a\in\calA,\bb\in\calB}{\argmin}\,
    \frac{1}{n}\sum_{i=1}^n\{\phi(a(\bx_i),\bb,y_i)+\Lambda(a,\bb,\bx_i,y_i)\},\\
    &(\bar{a},\bar{\bb})\in\underset{a\in\calA,\bb\in\calB}{\argmin}\,
    \bbE[\phi(a(\bX),\bb,Y)+\Lambda(a,\bb,\bX,Y)]
\end{split}
\end{align}
instead of \eqref{eq:Estiab} and \eqref{eq:Idolab}.
\end{corollary}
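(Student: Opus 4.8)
The plan is to reduce Corollary~\ref{cor:ATBiasOrdCor} to the argument already used for Theorem~\ref{thm:AT-BP-Ord}, by splitting the regularized objective into its surrogate-risk part and its $\Lambda$ part and controlling each separately under the sorting map $\bb\mapsto\pi(\bb)$. Fix any minimizer $(\bar a,\bar\bb)$ in \eqref{eq:Reguab} (existence being assumed just as in the unregularized case). First I would recall from the proof of Theorem~\ref{thm:AT-BP-Ord} that, with the 1DT held fixed at $\bar a$, replacing an inversion $(\bar b_l,\bar b_m)$ ($l<m$, $\bar b_l>\bar b_m$) by the swapped pair does not increase the surrogate risk $\bbE[\phi(\bar a(\bX),\cdot,Y)]$ (resp.\;its empirical counterpart) when $\varphi$ satisfies \hyl{a1}, and strictly decreases it when $\varphi$ satisfies \hyl{a4} and the conditional probability of the block $\{l+1,\ldots,m\}$ is positive. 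Performing such swaps along a bubble-sort schedule turns $\bar\bb$ into its fully sorted version $\pi(\bar\bb)$ with the surrogate-risk part only going down. Second, the hypothesis on $\Lambda$ gives directly $\Lambda(\bar a,\pi(\bar\bb),\bx,y)\le\Lambda(\bar a,\bar\bb,\bx,y)$ for every $\bx,y$, hence the $\Lambda$ part also only goes down, in expectation and in the empirical average alike. Adding the two pieces shows the regularized objective at $(\bar a,\pi(\bar\bb))$ is at most that at $(\bar a,\bar\bb)$; since $\pi(\bar\bb)$ is feasible in $\calB$ (it lies in $\calB_0^\ord\subseteq\calB$ after the usual renormalization of the first component via the translation invariance $\phi(a(\bx),\bb,y)=\phi(a(\bx)+c,\bb+c\bm{1}_{K-1},y)$, exactly as in the proof of Theorem~\ref{thm:AT-BP-Ord}), $(\bar a,\pi(\bar\bb))$ is itself a minimizer and satisfies the order condition, which is \hyl{a10}.

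For \hyl{a11} I would argue by contradiction using the strict variant. If some minimizer $(\bar a,\bar\bb)$ had an inversion, then at least one adjacent-inversion swap occurs in the bubble-sort schedule, and by \hyl{a4} together with \hyl{a7} (resp.\;\hyl{a8}) --- which ensure the relevant single-class conditional probability is strictly positive on a set of positive probability (resp.\;that some $y_i$ hits the relevant index) --- that swap strictly decreases the surrogate risk, while all other swaps do not increase it and, by hypothesis, $\Lambda$ does not increase under the full sort. Thus $(\bar a,\pi(\bar\bb))$ has a strictly smaller regularized objective, contradicting minimality; hence every minimizer is already ordered.

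The one genuinely new point --- and the place where the regularized proof cannot simply copy the unregularized one line by line --- is that the hypothesis on $\Lambda$ is phrased only for the comparison of $\bb$ with its fully sorted $\pi(\bb)$, not for a single adjacent transposition, so the ``one inversion at a time'' induction on the combined objective used in Theorem~\ref{thm:AT-BP-Ord} is unavailable. The remedy is precisely the decoupling above: telescope the swaps for the $\phi$ part (each step monotone by \hyl{a1}), invoke sort-monotonicity once for the $\Lambda$ part, and only then recombine. The other item to watch is feasibility of $\pi(\bar\bb)$ against the normalization $b_1=0$ built into $\calB_0$; this is absorbed by translation invariance of $\phi$ as above, and the statement is cleanest when $\calA$ is closed under adding constants so that the renormalized 1DT stays in $\calA$.
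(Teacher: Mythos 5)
Your core argument is the one the paper intends: the corollary is stated right after the proof of Theorem~\ref{thm:AT-BP-Ord} without a separate proof because it is meant to follow by exactly the reduction you describe --- run the inverted-pair swap argument on the $\phi$ part (telescoped along a sorting sequence, non-increasing under \hyl{a1}, strictly decreasing at the first swap under \hyl{a4} with \hyl{a7} or \hyl{a8}), and handle the regularizer in one shot via the assumed inequality $\Lambda(a,\bb,\bx,y)\ge\Lambda(a,\pi(\bb),\bx,y)$. Your remark that this decoupling is forced, since the hypothesis on $\Lambda$ compares $\bb$ only with its full sort and so an inversion-by-inversion induction on the combined objective is unavailable, is correct and is the only genuinely new ingredient relative to the Theorem~\ref{thm:AT-BP-Ord} proof.

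The one step you added that does not hold up is the ``renormalization of the first component via translation invariance.'' The paper's proof of Theorem~\ref{thm:AT-BP-Ord} performs no such renormalization, so it is not ``exactly as in'' that proof; more importantly, it is unsound under the corollary's hypotheses: translation invariance is a property of $\phi$ only, while $\Lambda$ is not assumed invariant under $\bb\mapsto\bb+c\bm{1}_{K-1}$ (the $\|\bb\|_2^2$ regularizer mentioned immediately after the corollary is not), and $\calA$ is not assumed closed under adding constants, so the shifted pair $(\bar{a}+c,\pi(\bar{\bb})+c\bm{1}_{K-1})$ need be neither feasible nor objective-non-increasing. If you drop that patch and compare $(\bar{a},\bar{\bb})$ with $(\bar{a},\pi(\bar{\bb}))$ directly, you are doing what the paper implicitly does, at the price of the same unaddressed subtlety that the paper's own proof glosses over: if an inversion involves $\bar{b}_1=0$ (i.e.\ some $\bar{b}_k<0$), then $\pi(\bar{\bb})$ has a negative first component and lies outside $\calB\subseteq\calB_0$, so the competitor is not feasible without an additional assumption (e.g.\ $\calA$ closed under constants together with translation invariance of $\Lambda$). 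You were right to notice the issue; just be aware that your proposed remedy requires more than the corollary assumes, and that the gap is inherited from Theorem~\ref{thm:AT-BP-Ord} rather than introduced by your reduction.
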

As the regularization term $\Lambda(a,\bb,\bx,y)$,
\citet{shashua2003ranking} use $\Lambda(a,\bb,\bx,y)\propto\|\bw\|_2^2$ for
a linear 1DT $a(\bx)=\bw^\top\bx+w_0$ (with $\bw\in\bbR^d$ and $w_0\in\bbR$),
and \citet{lin2012reduction} add the term proportional to $\|\bb\|_2^2$,
where $\|\cdot\|_2$ is the $L_2$ norm.
Corollary~\ref{cor:ATBiasOrdCor} covers these methods.

\paragraph{Proof of Theorem~\protect\ref{thm:AT-Consistency}}
Theorem~\ref{thm:AT-Consistency} can be proved
by the proof strategy by \citet{yamasaki2022optimal}:
\begin{proof}[{Proof of Theorem~\ref{thm:AT-Consistency}}]
\textbf{Proof of \hyl{b1}~}
\citet[Theorem 2]{yamasaki2022optimal} showed 
the statement for the Logistic-AT loss.
Thus, we here describe the proof of
the statement for the Exponential-AT loss.
According to \eqref{eq:ATSR},
one has that, under the assumption \eqref{eq:CL}, 
\begin{align}
\begin{split}
    &\bbE[\phi(a(\bX),\bb,Y)]\\
    &=\bbE\biggl[\sum_{y=1}^{K-1}
    \biggl\{\frac{1}{1+e^{-(\tilde{b}_y-\tilde{a}(\bX))}}\varphi(b_y-a(\bX))
    +\frac{1}{1+e^{(\tilde{b}_y-\tilde{a}(\bX))}}\varphi(a(\bX)-b_y)\biggr\}\biggr].
\end{split}
\end{align}
Then, we consider the following optimization problem conditioned on $\bX=\bx$:
\begin{align}
\label{eq:CondOpt}
\begin{split}
    &\min_{a(\bx)\in\bbR, \bb\in\calB_0^\ord}L(a(\bx),\bb)\\
    &\text{with~}
    L(a(\bx),\bb):=
    \sum_{y=1}^{K-1}\biggl\{
    \frac{1}{1+e^{-(\tilde{b}_y-\tilde{a}(\bx))}}e^{-(b_y-a(\bx))}
    +\frac{1}{1+e^{(\tilde{b}_y-\tilde{a}(\bx))}}e^{(b_y-a(\bx))}\biggr\}.
\end{split}
\end{align}
The derivatives of the function $L$ become
\begin{align}
\begin{split}
    \frac{\partial L}{\partial a(\bx)}
    &=\sum_{y=1}^{K-1}\biggl\{
    \frac{1}{1+e^{-(\tilde{b}_y-\tilde{a}(\bx))}}e^{-(b_y-a(\bx))}
    -\frac{1}{1+e^{(\tilde{b}_y-\tilde{a}(\bx))}}e^{(b_y-a(\bx))}\biggr\},\\
    \frac{\partial L}{\partial b_y}
    &=-\frac{1}{1+e^{-(\tilde{b}_y-\tilde{a}(\bx))}}e^{-(b_y-a(\bx))}
    +\frac{1}{1+e^{(\tilde{b}_y-\tilde{a}(\bx))}}e^{(b_y-a(\bx))}.
\end{split}
\end{align}
One can find $\frac{\partial L}{\partial a(\bx)}=\frac{\partial L}{\partial b_1}=\cdots=\frac{\partial L}{\partial b_{K-1}}=0$
at $(a(\bx),\bb)=(\tilde{a}(\bx)/2, \tilde{\bb}/2)$.
Since $\tilde{a}(\bx)/2\in\bbR$ and $\tilde{\bb}/2\in\calB_0^\ord$
and \eqref{eq:CondOpt} is a convex optimization problem,
this pair is a unique solution of \eqref{eq:CondOpt}.
This solution is valid for any $\bx$ in the support of the probability distribution of $\bX$,
so it is ensured that $(\bar{a}(\bX),\bar{\bb})=(\tilde{a}(\bX)/2, \tilde{\bb}/2)$ almost surely.

\textbf{Proof of \hyl{b2}~}
The statement \hyl{b2} can be proved directly 
from \citet[Theorems 3 and 4]{yamasaki2022optimal}:
In general, the conditional task risk at $\bX=\bx$ can be written as
\begin{align}
    \bbE[\ell(f(\bx),Y)]
    =\sum_{y=1}^K \Pr(Y=y|\bX=\bx)\ell(f(\bx), y).
\end{align}
Thus, under the assumption \eqref{eq:CL} of this theorem,
the following labeling function becomes optimal 
in minimizing the task risk:
\begin{align}
\label{eq:Sta-labeling}
    h_\cl(u;\hat{\bb})
    \in\underset{k\in[K]}{\argmin}\,
    \sum_{y=1}^K P_\cl(y;u,\bar{\bb})\ell(k,y).
\end{align}
\citet[Theorems 3 and 4]{yamasaki2022optimal} analyzed
the labeling function \eqref{eq:Sta-labeling},
and showed its relationships to the threshold labeling.
Refer to those theorems for a detailed proof of this statement.
\end{proof}

\paragraph{Proof of Theorem~\protect\ref{thm:POCL-shape}}
We describe the proof of Theorem~\ref{thm:POCL-shape} \hyl{c6} and \hyl{c7}
in the following:
\begin{proof}[{Proof of Theorem~\ref{thm:POCL-shape}}]
\textbf{Proof of \hyl{c1}--\hyl{c5}~}
The results \hyl{c1} and \hyl{c2} can be shown from simple calculations,
and the results \hyl{c3}--\hyl{c5} are shown in \citet[Lemma 1]{yamasaki2022optimal}.
Therefore, we omit the proof for these results.

\textbf{Proof of \hyl{c6}~}
\hyl{c3}--\hyl{c5} show this statement:
On has 
\begin{align}
    P_\cl(k;u,\bb)
    =P_\cl\bigl(k;\tfrac{b_{k-1}+b_k}{2}-\bigl\{\tfrac{b_{k-1}+b_k}{2}-u\bigr\},\bb\bigr)
    =P_\cl\bigl(k;\tfrac{b_{k-1}+b_k}{2}-\bigl|\tfrac{b_{k-1}+b_k}{2}-u\bigr|,\bb\bigr)
\end{align}
from \hyl{c3}.
Under the assumption,
it holds that $|\tfrac{b_{1}+b_2}{2}-u|\ge\cdots\ge|\tfrac{b_{m-1}+b_m}{2}-u|$ 
and $|\tfrac{b_{m-1}+b_m}{2}-u|\le\cdots\le|\tfrac{b_{K-2}+b_{K-1}}{K-1}-u|$.
These results, together with \hyl{c4} and \hyl{c5}, show \hyl{c6}.

\textbf{Proof of \hyl{c7}~}
Consider the comparison
\begin{align}
\begin{split}
    &P_\cl(1;u,\bb)
    =\frac{1}{1+e^{-(b_1-u)}}
    =\frac{1}{1+e^{u}}\\
    &\gtrless 
    P_\cl(2;u,\bb)
    =\frac{1}{1+e^{-(b_2-u)}}-\frac{1}{1+e^{-(b_1-u)}}
    =\frac{1}{1+e^{u-\Delta}}-\frac{1}{1+e^{u}}
\end{split}
\end{align}
with $b_1=0$.
Then, one has that $\Delta_1=-\log\{(1-e^{-u})/2\}$ if $u>0$
and $\Delta_1=0$ if $u<0$.
Although $\Delta_2$ cannot be represented explicitly,
the similar calculation shows that
$\Delta_2=-\log\{(1-e^{u-(K-2)\Delta_2})/2\}$ if $u<(K-2)\Delta_2$,
and $\Delta_2=0$ if $u>(K-2)\Delta_2$.
\end{proof}

\paragraph{Proof of Theorem~\protect\ref{thm:ATSQ-Consistency}}
The following is a proof of Theorem~\ref{thm:ATSQ-Consistency}.
\begin{proof}[{Proof of Theorem~\ref{thm:ATSQ-Consistency}}]
According to \eqref{eq:ATSR}, one has
\begin{align}
    \begin{split}
    &\bbE[\phi(a(\bX),\bb,Y)]\\
    &=\bbE\biggl[\sum_{y=1}^{K-1}\bigl[\Pr(Y\le y|\bX)\{1-(b_y-a(\bX))\}^2+\Pr(Y>y|\bX)\{1+(b_y-a(\bX))\}^2\bigr]\biggr].
    \end{split}
\end{align}
The stationary condition with respect to $a(\bx)$,
\begin{align}
    \begin{split}
    &\frac{\partial \bbE[\phi(a(\bX),\bb,Y)]}{\partial a(\bx)}\biggr|_{a(\bX)=\bar{a}(\bX),\bb=\bar{\bb}}\\
    &=\sum_{y=1}^{K-1}\bigl[\Pr(Y\le y|\bX=\bx)\{1-(\bar{b}_y-\bar{a}(\bx))\}-\Pr(Y>y|\bX=\bx)\{1+(\bar{b}_y-\bar{a}(\bx))\}\bigr]\\
    &=(K-1)\bar{a}(\bx)-\biggl\{(K-1)+\sum_{y=1}^{K-1}\bar{b}_y-2\sum_{y=1}^{K-1}\Pr(Y\le y|\bX=\bx)\biggr\}=0,
    \end{split}
\end{align}
shows 
\begin{align}
\label{eq:ATSQA}
    \bar{a}(\bx)=1+\frac{1}{K-1}\sum_{y=1}^{K-1}\bar{b}_y-\frac{2}{K-1}\sum_{y=1}^{K-1}\Pr(Y\le y|\bX=\bx).
\end{align}
Also, the stationary condition with respect to $b_y$,
\begin{align}
    \begin{split}
    &\frac{\partial \bbE[\phi(a(\bX),\bb,Y)]}{\partial b_y}\biggr|_{a(\bX)=\bar{a}(\bX),\bb=\bar{\bb}}\\
    &=\bbE\bigl[\Pr(Y\le y|\bX)\{1-(\bar{b}_y-\bar{a}(\bX))\}-\Pr(Y>y|\bX)\{1+(\bar{b}_y-\bar{a}(\bX))\}\bigr]\\
    &=\bbE\bigl[\{2\Pr(Y\le y|\bX)-1\}-(\bar{b}_y-\bar{a}(\bX))\bigr]
    =\bigl\{2\Pr(Y\le y)-1+\bbE\bigl[\bar{a}(\bX)\bigr]\bigr\}-\bar{b}_y=0,
    \end{split}
\end{align}
shows 
\begin{align}
\label{eq:ATSQB}
    \bar{b}_y
    =\frac{1}{K-1}\sum_{k=1}^{K-1}\bar{b}_k+2\Pr(Y\le y)-\frac{2}{K-1}\sum_{k=1}^{K-1}\Pr(Y\le k).
\end{align}
On can see that $(\bar{a},\bar{\bb})=(\tilde{a},\tilde{\bb})$
with $(\tilde{a},\tilde{\bb})$ in \eqref{eq:ATSQSRM}
satisfies \eqref{eq:ATSQA} and \eqref{eq:ATSQB}.
Since the corresponding the surrogate risk 
minimization problem is a convex optimization problem,
$(\bar{a},\bar{\bb})=(\tilde{a},\tilde{\bb})$ is its unique solution
under the assumption $(\tilde{a},\tilde{\bb})\in\calA\times\calB$.
\end{proof}

\paragraph{Proof of Theorem~\protect\ref{thm:IT-Consistency}}
Theorem~\ref{thm:IT-Consistency} can be proved
similarly to Theorem~\ref{thm:AT-Consistency}:
\begin{proof}[{Proof of Theorem~\ref{thm:IT-Consistency}}]
\textbf{Proof of \hyl{f1}~}
The surrogate risk for the IT loss $\phi$ based on $\varphi$ is
\begin{align}
\label{eq:ITSR}
\begin{split}
    &\bbE[\phi(a(\bX),\bb,Y)]\\
    &=\bbE\biggl[\sum_{y=1}^{K-1}\bigl\{\Pr(Y=y|\bX)\varphi(b_y-a(\bX))+\Pr(Y=y+1|\bX)\varphi(a(\bX)-b_y)\bigr\}\biggr].
\end{split}
\end{align}
Under the assumption on $\Pr(Y=y|\bX=\bx)$, one has
\begin{align}
    \begin{split}
    &\Pr(Y=y|\bX=\bx)\varphi(b_y-a(\bx))+\Pr(Y=y+1|\bX=\bx)\varphi(a(\bx)-b_y)\\
    &=c_y\biggl\{\frac{1}{1+e^{-(\tilde{b}_y-\tilde{a}(\bx))}}\varphi(b_y-a(\bx))
    +\frac{1}{1+e^{(\tilde{b}_y-\tilde{a}(\bx))}}\varphi(a(\bx)-b_y)\biggr\}
    \end{split}
\end{align}
with a positive constant $c_y$,
because of Theorem~\ref{thm:POACL-shape} \hyl{g1}.
Letting $R(a(\bx),\bb):=\bbE[\phi(a(\bx),\bb,Y)]$,
the derivatives of the function $R$ with $\varphi=\varphi_\logi$ are
\begin{align}
\begin{split}
    \frac{\partial R}{\partial a(\bx)}
    &=\sum_{y=1}^{K-1}c_y\biggl\{\frac{1}{1+e^{-(\tilde{b}_y-\tilde{a}(\bx))}}\frac{1}{1+e^{(b_y-a(\bx))}}
    -\frac{1}{1+e^{(\tilde{b}_y-\tilde{a}(\bx))}}\frac{1}{1+e^{-(b_y-a(\bx))}}\biggr\},\\
    \frac{\partial R}{\partial b_y}
    &=-c_y\biggl\{\frac{1}{1+e^{-(\tilde{b}_y-\tilde{a}(\bx))}}\frac{1}{1+e^{(b_y-a(\bx))}}
    -\frac{1}{1+e^{(\tilde{b}_y-\tilde{a}(\bx))}}\frac{1}{1+e^{-(b_y-a(\bx))}}\biggr\},
\end{split}
\end{align}
and those with $\varphi=\varphi_\expo$ are
\begin{align}
\begin{split}
    \frac{\partial R}{\partial a(\bx)}
    &=\sum_{y=1}^{K-1}c_y\biggl\{\frac{1}{1+e^{-(\tilde{b}_y-\tilde{a}(\bx))}}e^{-(b_y-a(\bx))}
    -\frac{1}{1+e^{(\tilde{b}_y-\tilde{a}(\bx))}}e^{(b_y-a(\bx))}\biggr\},\\
    \frac{\partial R}{\partial b_y}
    &=-c_y\biggl\{\frac{1}{1+e^{-(\tilde{b}_y-\tilde{a}(\bx))}}e^{-(b_y-a(\bx))}
    -\frac{1}{1+e^{(\tilde{b}_y-\tilde{a}(\bx))}}e^{(b_y-a(\bx))}\biggr\}.
\end{split}
\end{align}
Substituting $(a(\bx),\bb)=(\tilde{a}(\bx), \tilde{\bb})$ 
into these derivatives with $\varphi=\varphi_\logi$ 
or $(a(\bx),\bb)=(\tilde{a}(\bx)/2, \tilde{\bb}/2)$
into these derivatives with $\varphi=\varphi_\expo$, 
one can find $\frac{\partial R}{\partial a(\bx)}=\frac{\partial R}{\partial b_1}
=\cdots=\frac{\partial R}{\partial b_{K-1}}=0$.
Therefore, the optimization problem $\min_{a(\bx)\in\bbR, \bb\in\calB_0}R(a(\bx),\bb)$ 
conditioned on $\bX=\bx$, which is a convex optimization problem,
has a unique solution, 
$(\tilde{a}(\bx), \tilde{\bb})$ when $\varphi=\varphi_\logi$ or
$(\tilde{a}(\bx)/2, \tilde{\bb}/2)$ when $\varphi=\varphi_\expo$.
This solution is valid for any $\bx$ in the support of the probability distribution of $\bX$,
so the proof of the statement \hyl{f1} is concluded.


\textbf{Proof of \hyl{f2}~}
Since the statement for the Exponential-IT loss 
can be proved in the same way, 
here we describe the proof for the Logistic-IT loss only.
Since $\argmin_{f(\bx)\in[K]}\bbE[\ell(f(\bx),Y)]=
\argmax_{k\in[K]}P_\acl(k;\bar{a}(\bx),\bar{\bb})$ 
when $\ell=\ell_\zo$ due to \eqref{eq:ACL}
and the threshold labeling $h_\thr(u;\bt)$ is non-decreasing in $u$,
it is sufficient to show that
$\argmax_{k\in[K]}P_\acl(k;u_1,\bar{\bb})\le
\argmax_{k\in[K]}P_\acl(k;u_2,\bar{\bb})$ if $u_1\le u_2$
for \hyl{f2}.
This inequality is guaranteed if it can be shown that 
there is no pair $(l,m)\in[K]^2$ such that $l<m$ and 
$P_\acl(l;u_1,\bar{\bb})\le P_\acl(m;u_1,\bar{\bb})$ and
$P_\acl(l;u_2,\bar{\bb})\ge P_\acl(m;u_2,\bar{\bb})$ with $u_1, u_2\in\bbR$ such that $u_1<u_2$
(imagine $l=\argmax_{k\in[K]}P_\acl(k;u_1,\bar{\bb})$
and $m=\argmax_{k\in[K]}P_\acl(k;u_2,\bar{\bb})$).
From Theorem~\ref{thm:POACL-shape} \hyl{g1}, one has that
$P_\acl(m;u_1,\bar{\bb})/P_\acl(l;u_1,\bar{\bb})=e^{-\sum_{k=l}^{m-1}(\bar{b}_k-u_1)}<
P_\acl(m;u_2,\bar{\bb})/P_\acl(l;u_2,\bar{\bb})=e^{-\sum_{k=l}^{m-1}(\bar{b}_k-u_2)}
=e^{-\sum_{k=l}^{m-1}(b_k-u_1)}\cdot e^{(m-l)(u_2-u_1)}$.
This result shows the absence of the above-described pair $(l,m)$,
which concludes the proof of \hyl{f2}.

\textbf{Proof of \hyl{f3}~}
This is obvious from Theorem~\ref{thm:POACL-shape} \hyl{g2}.

\textbf{Proof of \hyl{f4}~}
This is clear if considering Karush-Kuhn-Tucker conditions
for the constrained optimization problem.
\end{proof}

\paragraph{Proof of Theorem~\protect\ref{thm:POACL-shape}}
Theorem~\ref{thm:POACL-shape} is on 
structural properties of the ACL model $(P_\acl(y;u,\bb))_{y\in[K]}$,
and can be proved as follows:
\begin{proof}[{Proof of Theorem~\ref{thm:POACL-shape}}]
\textbf{Proof of \hyl{g1}~}
For each $y\in[K-1]$, one has
\begin{align}
\begin{split}
    \frac{P_\acl(y+1;u,\bb)}{P_\acl(y;u,\bb)}
    &=\frac{e^{-\sum_{k=1}^{y}(b_k-u)}}{\sum_{l=1}^Ke^{-\sum_{k=1}^{l-1}(b_k-u)}}\biggr/
    \frac{e^{-\sum_{k=1}^{y-1}(b_k-u)}}{\sum_{l=1}^Ke^{-\sum_{k=1}^{l-1}(b_k-u)}}\\
    &=e^{-(b_y-u)}=
    \biggl(\frac{1}{1+e^{(b_y-u)}}\biggr)\biggr/\biggl(\frac{1}{1+e^{-(b_y-u)}}\biggr).
\end{split}
\end{align}

\textbf{Proof of \hyl{g2}~}
When $b_1\le\cdots\le b_{m-1}\le u\le b_m\le\cdots\le b_{K-1}$,
one has $b_1-u\le\cdots\le b_{m-1}-u\le 0\le b_m-u\le\cdots\le b_{K-1}-u$,
which, together with \hyl{g1}, shows
\begin{align}
\label{eq:ACLUni1}
    \begin{split}
    &\frac{P_\acl(2;u,\bb)}{P_\acl(1;u,\bb)},\ldots,
    \frac{P_\acl(m;u,\bb)}{P_\acl(m-1;u,\bb)}\ge e^{-0}=1,\\
    &\frac{P_\acl(m+1;u,\bb)}{P_\acl(m;u,\bb)},\ldots,
    \frac{P_\acl(K;u,\bb)}{P_\acl(K-1;u,\bb)}\le e^{-0}=1.
    \end{split}
\end{align}
This result implies that $(P_\acl(y;u,\bb))_{y\in[K]}$ is unimodal with a mode $m$.

\textbf{Proof of \hyl{g3}~}
\hyl{g3} gives a more general condition than \hyl{g2} 
for the ACL model $(P_\acl(y;u,\bb))_{y\in[K]}$ to be unimodal.
Letting $l_1=\max(\{1\}\cup\{k\in[K-1]\mid b_k\le u\})$ 
and $l_2=\min(\{k\in[K-1]\mid b_k\ge u\}\cup\{K\})$,
one has 
(i) $b_{l_1}-u\le 0$, 
(ii) $b_{l_1+1}-u,\ldots,b_{K-1}-u\ge 0$ if $l_1\le K-2$,
(iii) $b_1-u,\ldots,b_{l_2-1}-u\le 0$ if $l_2\ge2$, 
and (iv) $b_{l_2}-u\ge 0$.
When $l_1=l_2=m$, 
(iii) shows $b_1-u,\ldots,b_{m-1}-u\le 0$,
(iv) shows $b_m-u\ge 0$,
(ii) shows $b_{m+1}-u,\ldots,b_{K-1}-u\ge 0$.
Also, when $m=l_1<l_2$,
(iii) shows $b_1-u,\ldots,b_{m-1}-u(,\ldots,b_{l_2-1}-u)\le 0$,
(i) shows $b_m-u\ge 0$, and
(ii) shows $b_{m+1},\ldots,b_{K-1}-u\ge 0$.
In both the cases, \eqref{eq:ACLUni1} holds, and hence,
$(P_\acl(y;u,\bb))_{y\in[K]}$ becomes unimodal with a mode $m$.

\textbf{Proof of \hyl{g4}~}
\hyl{g4} gives a sufficient condition that
the ACL model $(P_\acl(y;u,\bb))_{y\in[K]}$ is not unimodal.
When $b_l-u<0<b_k-u$ with $k<l$,
it holds that 
$P_\acl(k;u,\bb)>P_\acl(k+1;u,\bb)$ and
$P_\acl(l;u,\bb)<P_\acl(l+1;u,\bb)$,
namely, the ACL model $(P_\acl(y;u,\bb))_{y\in[K]}$ is not unimodal.
\end{proof}

\paragraph{Proof of Theorem~\protect\ref{thm:ITSQ-Consistency}}
The following is a proof of Theorem~\ref{thm:ITSQ-Consistency}.
\begin{proof}[{Proof of Theorem~\ref{thm:ITSQ-Consistency}}]
According to \eqref{eq:ITSR}, one has
\begin{align}
    \begin{split}
    &\bbE[\phi(a(\bX),\bb,Y)]\\
    &=\bbE\biggl[\sum_{y=1}^{K-1}\bigl[\Pr(Y=y|\bX)\{1-(b_y-a(\bX))\}^2+\Pr(Y=y+1|\bX)\{1+(b_y-a(\bX))\}^2\bigr]\biggr].
    \end{split}
\end{align}
The stationary condition with respect to $a(\bx)$,
\begin{align}
    &\frac{\partial \bbE[\phi(a(\bX),\bb,Y)]}{\partial a(\bx)}\biggr|_{a(\bX)=\bar{a}(\bX),\bb=\bar{\bb}}\nonumber\\
    &=\sum_{y=1}^{K-1}\bigl[\Pr(Y=y|\bX=\bx)\{1-(\bar{b}_y-\bar{a}(\bx))\}-\Pr(Y=y+1|\bX=\bx)\{1+(\bar{b}_y-\bar{a}(\bx))\}\bigr]\\
    &=\Pr(Y=1|\bX=\bx)-\Pr(Y=K|\bX=\bx)
    -\sum_{y=1}^{K-1}\Pr(Y\in\{y,y+1\}|\bX=\bx)(\bar{b}_y-\bar{a}(\bx))=0,\nonumber
\end{align}
shows 
\begin{align}
\label{eq:ITSQA}
    \bar{a}(\bx)
    =\frac{\sum_{y=1}^{K-1}\Pr(Y\in\{y,y+1\}|\bX=\bx)\bar{b}_y-\Pr(Y=1|\bX=\bx)+\Pr(Y=K|\bX=\bx)}{\sum_{y=1}^{K-1}\Pr(Y\in\{y,y+1\}|\bX=\bx)}.
\end{align}
The stationary condition with respect to $b_y$,
\begin{align}
    &\frac{\partial \bbE[\phi(a(\bX),\bb,Y)]}{\partial b_y}\biggr|_{a(\bX)=\bar{a}(\bX),\bb=\bar{\bb}}\nonumber\\
    &=\bbE\bigl[\Pr(Y=y|\bX)\{1-(\bar{b}_y-\bar{a}(\bX))\}-\Pr(Y=y+1|\bX)\{1+(\bar{b}_y-\bar{a}(\bX))\}\bigr]\\
    &=\Pr(Y=y)-\Pr(Y=y+1)-\Pr(Y\in\{y,y+1\})\bar{b}_y+\bbE[\Pr(Y\in\{y,y+1\}|\bX)\bar{a}(\bX)]=0,\nonumber
\end{align}
shows 
\begin{align}
\label{eq:ITSQB}
    \bar{b}_y=\frac{\Pr(Y=y)-\Pr(Y=y+1)+\bbE[\Pr(Y\in\{y,y+1\}|\bX)\bar{a}(\bX)]}{\Pr(Y\in\{y,y+1\})}.
\end{align}
On can see that $(\bar{a},\bar{\bb})=(\tilde{a},\tilde{\bb})$
with $(\tilde{a},\tilde{\bb})$ in \eqref{eq:ITSQSRM}
satisfies \eqref{eq:ITSQA} and \eqref{eq:ITSQB}.
Since the corresponding the surrogate risk 
minimization problem is a convex optimization problem,
$(\bar{a},\bar{\bb})=(\tilde{a},\tilde{\bb})$ is its unique solution
under the assumption $(\tilde{a},\tilde{\bb})\in\calA\times\calB$.
\end{proof}

\paragraph{Proof of Theorem~\protect\ref{thm:PLAIT}}
Theorem~\ref{thm:PLAIT} is trivial because of the functional form of the AT and IT loss functions.
We omit its proof.

\paragraph{Theorem~\protect\ref{thm:FBLAIT} and its Proof for Flat-Bottom (FB) Loss}
Many of the function $\varphi$ used in the AT and IT losses are 
non-increasing and zero on the side greater than a certain point,
for example, $\varphi_\hing$, $\varphi_\ramp$, $\varphi_\sqhi$, and $\varphi_\smhi$.
We say that such a function is \emph{flat-bottom} (\emph{FB}):
\begin{definition}
\label{def:PL}
If a function $\varphi:\bbR\to\bbR$ is non-increasing and satisfies that 
$\varphi(u)>0$ for $u\in(-\infty,c)$ and $\varphi(u)=0$ for $u\in[c,\infty)$ with some $c\in\bbR$,
we say that $\varphi$ is FB with the edge $c$.
\end{definition}

For the surrogate risk minimizer with AT and IT losses based on 
a FB function $\varphi$, we found the following result,
which we used in Example~\ref{ex:3hingit}:
\begin{theorem}
\label{thm:FBLAIT}
Let $\calA=\{a:\bbR^d\to\bbR\}$ and $\calB=\calB_0^\ord$.
If $\phi$ is an AT loss \eqref{eq:AT} or IT loss \eqref{eq:IT} 
with $\varphi$ that is FB with the edge $c>0$
(e.g., $\varphi=\varphi_\hing, \varphi_\ramp, \varphi_\smhi, \varphi_\sqhi$ with $c=1$),
then there exists $\hat{\bb}$ defined by \eqref{eq:Estiab}
that satisfies $\hat{b}_{k+1}-\hat{b}_k\in[0,2c]$ for all $k\in[K-2]$,
and there exists $\bar{\bb}$ defined by \eqref{eq:Idolab}
that satisfies $\bar{b}_{k+1}-\bar{b}_k\in[0,2c]$ for all $k\in[K-2]$.
\end{theorem}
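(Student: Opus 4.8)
The plan is a \emph{gap-compression} argument in the spirit of the swap argument behind Theorem~\ref{thm:AT-BP-Ord}, exploiting the flat-bottom structure of $\varphi$ (non-increasing, $\varphi(u)=0$ for $u\ge c$, and $\varphi(u)>0$ for $u<c$). Because $\calA=\{a:\bbR^d\to\bbR\}$ is unrestricted, once $\bb$ is fixed the surrogate risk decouples over $\bx$, so a minimizer $(\bar a,\bar\bb)$ realizing \eqref{eq:Idolab} has $\bar a(\bx)\in\argmin_{v\in\bbR}\bbE[\phi(v,\bar\bb,Y)\mid\bX=\bx]$ for almost every $\bx$, and likewise $\hat a(\bx_i)$ minimizes the conditional empirical risk in \eqref{eq:Estiab}. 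The lower bound $\bar b_{k+1}-\bar b_k\ge 0$ is automatic from $\bb\in\calB_0^\ord$, so the task is to convert any minimizer into one whose interior gaps are at most $2c$.

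First I would reduce to removing one oversized gap at a time. With $b_0:=-\infty$ and $b_K:=+\infty$, suppose $\bar b_{m+1}-\bar b_m=2c+\delta$ for some $\delta>0$ and some $m\in[K-2]$. I define $\bb'$ by $b'_j:=\bar b_j$ for $j\le m$ and $b'_j:=\bar b_j-\delta$ for $j\ge m+1$; this keeps $b'_1=0$, stays in $\calB_0^\ord$ (the modified gap becomes exactly $2c$ and all other gaps are unchanged), and strictly decreases the number of interior gaps exceeding $2c$. The accompanying 1DT is $a'(\bx):=\bar a(\bx)$ if $\bar a(\bx)\le\bar b_m+c$, $a'(\bx):=\bar a(\bx)-\delta$ if $\bar a(\bx)\ge\bar b_{m+1}-c$, and $a'(\bx):=\bar b_m+c$ if $\bar a(\bx)$ lies in the ``dead zone'' $(\bar b_m+c,\bar b_{m+1}-c)$, which under $\bb'$ collapses to the single point $\bar b_m+c=b'_{m+1}-c$.

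The heart of the argument is a pointwise inequality $\bbE[\phi(a'(\bx),\bb',Y)\mid\bX=\bx]\le\bbE[\phi(\bar a(\bx),\bar\bb,Y)\mid\bX=\bx]$, from which the total surrogate risk does not increase, so $(a',\bb')$ is again a minimizer; iterating over the finitely many oversized gaps then gives the claim, and the empirical case is identical with $\bbE[\cdot\mid\bX=\bx]$ replaced by the per-point empirical conditional risk. I would prove this inequality by expanding the conditional risk via \eqref{eq:ATSR} (AT loss) or \eqref{eq:ITSR} (IT loss) and checking, case by case on the regime of $\bar a(\bx)$ and on the label $y$, that every summand of \eqref{eq:AT} or \eqref{eq:IT} that is affected either (i) keeps the value $0$ because its argument stays $\ge c$ --- this is exactly where one uses that $\bar b_{m+1}-(\bar b_m+c)=c+\delta>c$ before compression and equals $c$ after --- or (ii) sees its argument weakly increase, so by monotonicity of $\varphi$ the summand weakly decreases. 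The main obstacle is the bookkeeping in this case analysis: one must verify that compressing the gap at index $m$ never pushes another summand's argument below $c$, and for the IT loss one must separately track the labels $y=m+1$ (whose loss is $0$ both before and after) and $y\ge m+2$. No positivity assumptions on $\Pr(Y=y)$ are needed, since we only claim that the risk does not increase rather than a strict improvement.
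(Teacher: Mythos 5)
Your proposal is correct and takes essentially the same route as the paper's proof, which performs exactly this gap compression (shift $\bar{b}_{l+1},\ldots,\bar{b}_{K-1}$ down by $\delta=\bar{b}_{l+1}-\bar{b}_l-2c$, map 1DT values in the dead zone to $\bar{b}_l+c$) and verifies, in the same three regimes of $\bar{a}(\bx)$, that every $\varphi$-term either stays at $0$ (flat bottom) or weakly decreases (monotonicity), the paper phrasing the conclusion as a contradiction where you iterate explicitly over the finitely many oversized gaps. One minor slip in your sketch: for the IT loss with $y=m+1$ the summand $\varphi(\bar{a}(\bx)-\bar{b}_m)$ need not be zero when $\bar{a}(\bx)\le\bar{b}_m+c$, but it is unaffected by the compression in that regime, so your term-by-term inequality still holds.
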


\begin{proof}[{Proof of Theorem~\ref{thm:FBLAIT}}]
Here, we describe a proof for the population version only.

Assume any surrogate risk minimizer $(\bar{a},\bar{\bb})$ satisfies
\begin{align}
\label{eq:BiasAsm}
    \bar{b}_{l+1}-\bar{b}_l>2c\text{~for some~}l\in[K-2]
\end{align}
as assumption for proof by contradiction,
and let $\delta=\bar{b}_{l+1}-\bar{b}_l-2c$ and
\begin{align}
\begin{split}
    &\check{a}(\bx):=
    \begin{cases}
    \bar{a}(\bx)&\text{for~}\bx\text{~s.t.~}\bar{a}(\bx)\le\bar{b}_l+c,\\
    \bar{b}_l+c&\text{for~}\bx\text{~s.t.~}\bar{b}_l+c<\bar{a}(\bx)<\bar{b}_{l-1}-c,\\
    \bar{a}(\bx)-\delta&\text{for~}\bx\text{~s.t.~}\bar{a}(\bx)\ge\bar{b}_{l-1}-c,
    \end{cases}\\
    &\check{b}_1:=\bar{b}_1,\cdots,\check{b}_l:=\bar{b}_l,\;
    \check{b}_{l+1}:=\bar{b}_{l+1}-\delta,\ldots,\check{b}_{K-1}:=\bar{b}_{K-1}-\delta.
\end{split}
\end{align}
The surrogate risk $\bbE[\phi(a(\bX),\bb,Y)]$ for the AT or IT loss $\phi$ based on $\varphi$ 
is a weighted mean of $\varphi(a(\bx)-b_k)$'s and $\varphi(b_k-a(\bx))$'s
with non-negative weights.
In the following, 
by studying the change in $\varphi(a(\bx)-b_k)$'s and $\varphi(b_k-a(\bx))$'s 
when changing from $(a,\bb)=(\bar{a},\bar{\bb})$ to $(a,\bb)=(\check{a},\check{\bb})$,
we prove that $\bbE[\phi(\bar{a}(\bX),\bar{\bb},Y)]\ge\bbE[\phi(\check{a}(\bX),\check{\bb},Y)]$.

\textbf{Consider the case $\bar{a}(\bx)\le\bar{b}_l+c$~}
Since $\check{a}(\bx)=\bar{a}(\bx)$ and $\check{b}_k=\bar{b}_k$ for $k=1,\ldots,l$, 
one has that
\begin{align}
    \varphi(\bar{a}(\bx)-\bar{b}_k)=\varphi(\check{a}(\bx)-\check{b}_k)\text{~and~}
    \varphi(\bar{b}_k-\bar{a}(\bx))=\varphi(\check{b}_k-\check{a}(\bx))\text{~for~}k=1,\ldots,l.
\end{align}
Since $\bar{a}(\bx)-\bar{b}_k<\bar{a}(\bx)-\{\bar{b}_k-\delta\}=\check{a}(\bx)-\check{b}_k$ for $k=l+1,\ldots,K-1$
and $\varphi$ is non-increasing, one has that
\begin{align}
    \varphi(\bar{a}(\bx)-\bar{b}_k)\ge\varphi(\check{a}(\bx)-\check{b}_k)\text{~for~}k=l+1,\ldots,K-1.
\end{align}
One has that $\bar{a}(\bx)<\bar{b}_{l+1}-c$ since $\bar{b}_l+c<\bar{b}_{l+1}-c$,
and that $\bar{a}(\bx)\le\{\bar{b}_{l+1}-\delta\}-c=\check{b}_{l+1}-c$ since $\bar{b}_l+c=\{\bar{b}_{l+1}-\delta\}-c$.
These imply that
\begin{align}
    c<\bar{b}_{l+1}-\bar{a}(\bx)\le\cdots\le\bar{b}_{K-1}-\bar{a}(\bx),\text{~and~}
    c\le\check{b}_{l+1}-\bar{a}(\bx)\le\cdots\le\check{b}_{K-1}-\bar{a}(\bx),
\end{align}
from the assumption $\bar{b}_{l+1}\le\cdots\le\bar{b}_{K-1}$ and $\check{b}_{l+1}\le\cdots\le\check{b}_{K-1}$.
Since $\varphi(u)=0$ for any $u\ge c$, one has that
\begin{align}
    \varphi(\bar{b}_k-\bar{a}(\bx))=\varphi(\check{b}_k-\check{a}(\bx))=0\text{~for~}k=l+1,\ldots,K-1.
\end{align}

\textbf{Consider the case $\bar{b}_{l+1}-c\le\bar{a}(\bx)$~}
Since $c\le\bar{a}(\bx)-\bar{b}_l\le\cdots\le\bar{a}(\bx)-\bar{b}_1$ because 
$\bar{b}_{l+1}-c\le\bar{a}(\bx)$ and $\bar{b}_1+c\le\cdots\le\bar{b}_l+c\le\bar{b}_{l+1}-c$,
$c\le\{\bar{a}(\bx)-\delta\}-\bar{b}_l\le\cdots\le\{\bar{a}(\bx)-\delta\}-\bar{b}_1$ because 
$\bar{b}_{l+1}-c\le\bar{a}(\bx)$ and $\bar{b}_1+c+\delta\le\cdots\le\bar{b}_l+c+\delta=\bar{b}_{l+1}-c$,
and $\varphi(u)=0$ for any $u\ge c$, one has that
\begin{align}
    \varphi(\bar{a}(\bx)-\bar{b}_k)=\varphi(\check{a}(\bx)-\check{b}_k)=0\text{~for~}k=l+1,\ldots,K-1.
\end{align}
Since $\bar{b}_k-\bar{a}(\bx)<\bar{b}_k-\{\bar{a}(\bx)-\delta\}=\check{b}_k-\check{a}(\bx)$ for $k=1,\ldots,l$
and $\varphi$ is non-increasing, one has that 
\begin{align}
    \varphi(\bar{b}_k-\bar{a}(\bx))\ge\varphi(\check{b}_k-\check{a}(\bx))\text{~for~}k=1,\ldots,l.
\end{align}
Since $\check{a}(\bx)=\bar{a}(\bx)-\delta$ and $\check{b}_k=\bar{b}_k-\delta$ for $k=l+1,\ldots,K-1$, 
one has that
\begin{align}
    \varphi(\bar{a}(\bx)-\bar{b}_k)=\varphi(\check{a}(\bx)-\check{b}_k)\text{~and~}
    \varphi(\bar{b}_k-\bar{a}(\bx))=\varphi(\check{b}_k-\check{a}(\bx))\text{~for~}k=l+1,\ldots,K-1.
\end{align}

\textbf{Consider the case $\bar{b}_l+c<\bar{a}(\bx)<\bar{b}_{l+1}-c$~}
If letting $\epsilon=\bar{a}(\bx)-\{\bar{b}_l+c\}$, 
then it holds that $\check{a}(\bx)=\bar{a}(\bx)-\epsilon$.
Since $c\le\bar{a}(\bx)-\bar{b}_l\le\cdots\le\bar{a}(\bx)-\bar{b}_1$,
one has that 
$c=\{\bar{a}(\bx)-\epsilon\}-\bar{b}_l\le\cdots\le\{\bar{a}(\bx)-\epsilon\}-\bar{b}_1$
and
\begin{align}
    \varphi(\bar{a}(\bx)-\bar{b}_k)=\varphi(\check{a}(\bx)-\check{b}_k)=0\text{~for~}k=1,\ldots,l.
\end{align}
Since $\bar{b}_k-\bar{a}(\bx)<\bar{b}_k-\{\bar{a}(\bx)-\epsilon\}=\check{b}_k-\check{a}(\bx)$ for $k=1,\ldots,l$
and $\varphi$ is non-increasing, one has that 
\begin{align}
    \varphi(\bar{b}_k-\bar{a}(\bx))\ge\varphi(\check{b}_k-\check{a}(\bx))\text{~for~}k=1,\ldots,l.
\end{align}
Since $\bar{a}(\bx)-\bar{b}_k<\{\bar{a}(\bx)-\epsilon\}-\{\bar{b}_k-\delta\}=\check{a}(\bx)-\check{b}_k$ for $k=l+1,\ldots,K-1$ because $\epsilon<\delta$,
and since $\varphi$ is non-increasing, one has that 
\begin{align}
    \varphi(\bar{a}(\bx)-\bar{b}_k)\ge\varphi(\check{a}(\bx)-\check{b}_k)\text{~for~}k=l+1,\ldots,K-1.
\end{align}
Since $c<\bar{b}_{l+1}-\bar{a}(\bx)\le\cdots\le\bar{b}_{K-1}-\bar{a}(\bx)$,
one has that 
$c=\{\bar{b}_{l+1}-\delta\}-\{\bar{a}(\bx)-\epsilon\}\le\cdots\le\{\bar{b}_{K-1}-\delta\}-\{\bar{a}(\bx)-\epsilon\}$
and
\begin{align}
    \varphi(\bar{b}_k-\bar{a}(\bx))=\varphi(\check{b}_k-\check{a}(\bx))=0\text{~for~}k=l+1,\ldots,K-1.
\end{align}

Correcting these pieces,
one can see $\bbE[\phi(\bar{a}(\bX),\bar{\bb},Y)]\ge\bbE[\phi(\check{a}(\bX),\check{\bb},Y)]$,
which contradicts the optimality of $(\bar{a},\bar{\bb})$ and 
shows that the assumption \eqref{eq:BiasAsm} is false.
This concludes the proof of this theorem.
\end{proof}


\end{document}